\newcommand{\cmark}{{\color{PineGreen}\ding{51}}}%
\newcommand{\xmark}{{\color{BrickRed}\ding{55}}}%
\definecolor{bgcolor}{rgb}{1, 0.96, 0.80}
\definecolor{bgcolor2}{rgb}{1, 0.96, 0.80}
\newcommand{\makecellnew}[1]{{\renewcommand{\arraystretch}{0.8}\begin{tabular}{c} #1 \end{tabular}}}
\newcommand{\eqdef}{\coloneqq} 
\newcommand{\cO}{{\cal O}}
\newcommand{\cC}{{\cal C}}
\newcommand{\cD}{{\cal D}}
\newcommand{\R}{\mathbb{R}}
\newcommand{\E}{\mathbb{E}}
\newcommand{\cL}{\mathcal{L}}
\newcommand{\mI}{\mathrm{I}}
\newcommand{\ML}{{\bf L}}
\newcommand{\MI}{{\bf I}}
\newcommand{\Mdiag}{{\bf Diag}}
\newcommand{\MP}{{\bf P}}
\newcommand{\MC}{{\bf C}}
\newcommand{\MM}{{\bf M}}
\newcommand{\ME}{{\bf E}}
\newcommand{\MD}{{\bf D}}
\newcommand{\MU}{{\bf U}}
\newcommand{\MA}{{\bf A}}
\newcommand{\MS}{{\bf S}}
\newcommand{\MB}{{\bf B}}
\newcommand{\MZ}{{\bf Z}}
\newcommand{\MQ}{{\bf Q}}
\def\Pr{{\rm Prob}}
\def\prox{{\rm prox}}
\DeclareMathOperator*{\argmin}{arg\,min}
\DeclareMathOperator*{\diag}{{\rm Diag}}
\DeclareMathOperator*{\rank}{{\rm rank}}
\DeclareMathOperator*{\tr}{{\rm tr}}
\DeclareMathOperator*{\range}{{\rm Range}}
\newcommand{\ve}[2]{\langle #1 , #2 \rangle}
\newcommand{\overbar}[1]{\mkern 1.5mu\overline{\mkern-1.5mu#1\mkern-1.5mu}\mkern 1.5mu}
\newcommand{\half}{{\nicefrac{1}{2}}}
\newcommand{\phalf}{{\dagger\nicefrac{1}{2}}}
\def\<{\left\langle}
\def\>{\right\rangle}
\def\[{\left[}
\def\]{\right]}
\def\({\left(}
\def\){\right)}
\theoremstyle{plain}
\newtheorem{theorem}{Theorem}  
\newtheorem{lemma}[theorem]{Lemma} 
\newtheorem{proposition}[theorem]{Proposition} 
\newtheorem{remark}{Remark} 
\theoremstyle{definition}
\newtheorem{assumption}{Assumption} 
\newtheorem{definition}{Definition}
\theoremstyle{remark}
\begin{document}

\title{\bf Smoothness Matrices Beat Smoothness Constants: \\ Better  Communication Compression Techniques for Distributed Optimization}

\author{Mher Safaryan$^1$ \qquad Filip Hanzely$^2$ \qquad Peter Richt\'arik$^1$ \qquad \\
\phantom{XXX} \\
 $^1$King Abdullah University of Science and Technology (KAUST), Thuwal, Saudi Arabia \\
 $^2$Toyota Technological Institute at Chicago (TTIC), Chicago, USA
}

\date{February 14, 2021}

\maketitle

\begin{abstract}
Large scale distributed optimization has become the default tool for the training of supervised machine learning models with a large number of parameters and training data. Recent advancements in the field provide several mechanisms for speeding up the training, including {\em compressed communication}, {\em variance reduction} and {\em acceleration}. 
However, none of these methods is capable of exploiting the inherently rich data-dependent smoothness structure of the local losses beyond standard smoothness constants. In this paper, we argue that when training supervised models,  {\em smoothness matrices}---information-rich generalizations of the ubiquitous smoothness constants---can and should be exploited for further dramatic gains, both in theory and practice. In order to further alleviate the communication burden inherent in distributed optimization, we propose a novel communication sparsification strategy that can take full advantage of the smoothness matrices associated with local losses. To showcase the power of this tool, we describe how our sparsification technique can be adapted to three distributed optimization algorithms---DCGD \citep{KFJ}, DIANA \citep{MGTR} and ADIANA \citep{AccCGD}---yielding significant savings in terms of communication complexity.  The new methods always outperform the baselines, often dramatically so.
\end{abstract}

\tableofcontents

\section{Introduction}\label{sec:intro}

With the desire to build and train high quality machine learning models comes an increased appetite for larger models, both in terms of the number of parameters encoding them, and in the amount of data required to train them. In the big data regime, the data is partitioned among many parallel machines, which then cooperatively train  a single global model, usually orchestrated by a central server. Distributed training is cast as the distributed optimization problem \begin{equation}\label{main-opt-problem-dist}
\min\limits_{x\in\R^d} f(x) + R(x), \qquad  f(x)\eqdef \frac{1}{n}\sum \limits_{i=1}^n f_i(x),
\end{equation}
where $d$ is the number of parameters of model $x\in\R^d$,  $n$ is the number of machines participating in the training, $f_i(x)$ is the loss associated with the data stored on machine $i\in[n] \eqdef \{1,2,\dots,n\}$, $f(x)$ is the empirical loss, and $R(x)$ is a regularizer. 
Ample research over the past two decades has shown that first-order methods are highly scalable and as a result are the methods of choice for distributed optimization problems \citep{FOM-in-DS}. In particular, a substantial amount of work has been devoted to speeding up the training process by developing efficient methods empowered with techniques such as {\em compressed communication}, {\em variance reduction} and {\em acceleration}.

\subsection{Compressed communication}
In distributed training, compute nodes have to communicate with each other, often via a central server, in order to be able to maintain consensus and jointly train a single global model. However,  communication of the information pertaining to local progress, which is typically contained in gradient(s) distilled from local data, is almost invariably {\em the} key bottleneck in distributed training systems \citep{grace}.  One  popular way to address this issue is to reduce the number of bits encoding the vector/tensor to be transferred via the help of a lossy {\em compression operator}. Numerous {\em unbiased} gradient compression operators have been proposed for this purpose, including several types of sparsifications \citep{WSLCPW,99KFP,alistarh2018convergence} and quantizations \citep{AGLTV,ZLKALZ,HHHSCR,EC-QSGD}. 
Certain (classes of) {\em biased} compression operators have been proposed as well, including low-rank approximation \citep{vogels}, sign-based compressors \citep{SFDLY,BWAA,sign_descent_2019} and contractive compressors \citep{karimireddy2019error, stich2019, DoubleSqueeze2019, biased2020, GKKR}.

\subsection{Variance reduction}
A marked issue that needs to be addressed by successful distributed optimization methods has to do with the (potential) ``dissimilarity'' of the local loss functions $f_1,\dots,f_n$, which in turn is due to the heterogeneity of the training data defining these functions. The higher the dissimilarity, the harder it is for the devices to find the minimizer of \eqref{main-opt-problem-dist}. This issue exists even  in the unregularized case ($R\equiv 0$). Indeed, while in this case $\frac{1}{n} \sum_i \nabla f_i(x^*)=0$ if $x^*$ is a minimizer of $f$, this does not mean that the individual gradients, $\nabla f_1(x^*), \dots, \nabla f_n(x^*)$, are all zero. This shows that local gradient information alone  is not enough for any node to ``realize'' that a solution has been found, which encourages further, in this case unnecessary, iterations.  If unaddressed properly, an algorithm is forced to use smaller learning rates, and this leads to unnecessarily slow convergence. On the other hand, when a fixed learning rate is used, the rate is fast, but convergence stops in a potentially large neighborhood\footnote{In the $R\equiv 0$ case, this neighborhood is proportional to the {\em variance of the local gradients at the optimum}: $\frac{1}{n}\sum_{i=1}^n \|\nabla f_i(x^*)\|^2$.} of the optimum $x^*$. This issue is exacerbated further by the extra noise coming from gradient compression. Indeed, this noise prevents methods such as Distributed Compressed Gradient Descent (DCGD) \citep{KFJ} from converging to $x^*$ with a constant learning rate  even in the interpolation regime characterized by the identities $\nabla f_i(x^*) = 0$ for all $i$. Fortunately, these issues can be resolved via carefully designed variance reduction techniques \citep{VR-Review2020}. In particular, the first variance reduction mechanism for removing the variance coming from compression operators in distributed training is due to \citet{MGTR}, embodied in their DIANA algorithm.  The method was initially analyzed for ternary quantization only \citep{WXYWWCL}, and later generalized to handle a general class of unbiased compression operators \citep{DIANA-VR,GKKR}.

\subsection{Acceleration}
To speed up distributed training even further, it is often possible to employ Nesterov's acceleration technique \citep{NestAcc, Nest-ILCO} in concert with gradient compression and variance reduction. For instance, \citet{AccCGD} developed the ADIANA method, which adds acceleration on top of a variant of DIANA that relies on the computation of  full-batch gradients on all nodes. The resulting method offers provable speedups in convex and strongly convex regimes. Another example is the method ECLK of \citet{Qian2020ErrorCD}, which
employs compressed communication via any (possibly biased) compressor satisfying a certain contraction property in combination with a slightly different variance reduction technique known as error compensation \citep{stich2019, karimireddy2019error}, while acceleration is offered by a loopless variant of the accelerated method Katyusha \citep{Katyusha,L-Katyusha}.

\subsection{Further tricks}
Numerous other techniques are often used to improve some other aspects of  distributed training, including implementing multiple local gradient steps before communication \citep{Stich-LocalSGD,SCAFFOLD,WPSDBMSS}, asynchronous communication protocols \citep{AgarwalDuchi,LHLL,RRWN}, in-network aggregation \citep{switchML}, and performing the distributed training in a decentralized peer-to-peer manner without the reliance on an orchestrating server \citep{KSJ,AYS}. However, in this work, we do not explore these directions and focus on the three techniques described before, namely, compressed communication, variance reduction and acceleration.

\section{Mining for Smoothness Information}

\subsection{One size fits all}
Arguably, one of the most ubiquitous, if not {\em the} most ubiquitous, assumptions used in the literature on first-order optimization methods is that of {\em $L$-smoothness} \citep{Nest-ILCO}. A differentiable function $\phi:\R^d\to \R$ is said to be $L$-smooth if there exists a constant $L\geq 0$ such that  \begin{equation}\label{eq:L-smoothness}\phi(x) \leq \phi(y) + \langle \nabla \phi(x), x-y \rangle + \frac{L}{2} \|x-y\|^2\end{equation} holds for all $x,y\in \R^d$. However, most works in the area of finite-sum distributed optimization use it very crudely: they assume that all local loss functions $f_i$ as well as their average, $f=\frac{1}{n}\sum_i f_i$, share the same smoothness constant $L$ \citep{DoubleSqueeze2019,MB-vs-Local-SGD,Stich-LocalSGD}. This is crude because much information is lost this way. Indeed, assuming that each $f_i$ is $L_i$-smooth, it is well known that $f$ is $L_f$-smooth with $L_f$ satisfying the inequality $L_f \leq \frac{1}{n}\sum_i L_i$. In the light of this, the above assumption is crude as it effectively replaces the values $L_1, \dots, L_n$ and $L_f$ with a single parameter $L$ satisfying $L \geq \max \{L_1, \dots, L_n\}$. Since the stepsizes and convergence rates of first-order methods depend on the smoothness constant(s) employed, convergence analysis relying on such crude approximation may be significantly suboptimal, and the methods too slow when implemented following the theory.

\subsection{``According to the work of their hands'' (Lam 3:64)}
Significant theoretical and practical improvement can often be obtained when taking account of all the smoothness constants involved, avoiding the practice of replacing them all with a single crude bound. Such analyses are more rare, but fairly common. For example, \citep{Nsync,ACD-HanzRich}.

\subsection{``Like treasure hidden in a field, which a man found and covered up'' (Mat 13:44)}

The starting point of this paper is the observation that there is a hitherto untapped richness of smoothness information that {\em can} be used to construct {\em better distributed optimization algorithms and obtain better theory. } This information is available, but hidden from sight, and is based on the notion of {\em matrix smoothness}. 

\begin{definition}[Matrix Smoothness] \label{def:SM} We say that a differentiable function $\phi:\R^d\to \R$ is $\ML$-smooth if there exists a symmetric positive semidefinite matrix  $\ML\succeq 0$ such that
\begin{equation}\label{eq:ML-smoothness}
\phi(x) \le \phi(y) + \<\nabla \phi(y), x-y\> + \frac{1}{2}\|x-y\|^2_{\ML}
\end{equation}
holds for all $x,y\in \R^d$. 
\end{definition}
The standard $L$-smoothness condition \eqref{eq:L-smoothness} is obtained as a special case of \eqref{eq:ML-smoothness} for matrices of the form $\ML=L \MI$, where
$\MI$ is the identity matrix. Function $f_i$ appearing in \eqref{main-opt-problem-dist} is often the average loss over the training data stored on node $i$, i.e., 
\begin{equation}\label{eq:89fd8g90f8h0fd}
f_i(x) = \frac{1}{m_i}\sum \limits_{m=1}^{m_i} \phi_{im}(\MA_{im}x),
\end{equation}
where $\MA_{im}\in \R^{d_{im} \times d}$ is a data matrix, and $\phi_{im}:\R^{d_{im}}\to \R$ is a differentiable function (e.g., the loss over all but the last linear layer of a NN). The following simple result from \citet{QuRich16-2},  used therein in the context of randomized coordinate descent methods, states that if the loss functions $\phi_{im}$ are smooth in the standard scalar sense, then $f_i$ is smooth in the matrix sense.
\begin{lemma}\label{ex:matrix-smooth}
Assume that each $\phi_{im}$ is $\lambda_{im}$-smooth. Then the function $f_i$ defined in  \eqref{eq:89fd8g90f8h0fd} is $\ML_i$-smooth with
\begin{equation} \label{eq:ub98fd_090f9dhfi}
\ML_i = \frac{1}{m_i} \sum \limits_{m=1}^{m_i} \lambda_{im} \MA_{im}^\top \MA_{im}.
\end{equation}
\end{lemma}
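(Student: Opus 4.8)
The plan is to derive the matrix-smoothness inequality \eqref{eq:ML-smoothness} for $f_i$ directly from the scalar $\lambda_{im}$-smoothness of the outer functions $\phi_{im}$ composed with the linear maps $\MA_{im}$, and then average over $m$. First I would fix arbitrary $x,y\in\R^d$ and, for each $m\in\{1,\dots,m_i\}$, write down the scalar smoothness inequality \eqref{eq:L-smoothness} for $\phi_{im}$ evaluated at the pair of points $\MA_{im}x$ and $\MA_{im}y$ in $\R^{d_{im}}$:
\begin{equation*}
\phi_{im}(\MA_{im}x)\le \phi_{im}(\MA_{im}y)+\<\nabla\phi_{im}(\MA_{im}y),\MA_{im}(x-y)\>+\frac{\lambda_{im}}{2}\|\MA_{im}(x-y)\|^2.
\end{equation*}

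Next I would rewrite the last two terms so that they live in $\R^d$ rather than $\R^{d_{im}}$. For the linear term, moving $\MA_{im}$ to the other side of the inner product and invoking the chain rule gives $\<\nabla\phi_{im}(\MA_{im}y),\MA_{im}(x-y)\>=\<\MA_{im}^\top\nabla\phi_{im}(\MA_{im}y),x-y\>=\<\nabla(\phi_{im}\circ\MA_{im})(y),x-y\>$. For the quadratic term, $\|\MA_{im}(x-y)\|^2=(x-y)^\top\MA_{im}^\top\MA_{im}(x-y)=\|x-y\|^2_{\MA_{im}^\top\MA_{im}}$. Summing the resulting inequalities over $m=1,\dots,m_i$ and dividing by $m_i$, the left-hand side becomes $f_i(x)$, the first term on the right becomes $f_i(y)$, the linear terms collapse to $\<\nabla f_i(y),x-y\>$ by linearity of the gradient applied to \eqref{eq:89fd8g90f8h0fd}, and the quadratic terms combine into $\frac12\|x-y\|^2_{\ML_i}$ with $\ML_i=\frac{1}{m_i}\sum_{m=1}^{m_i}\lambda_{im}\MA_{im}^\top\MA_{im}$. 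This is exactly \eqref{eq:ML-smoothness} for $f_i$ with the claimed matrix.

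It remains to check that $\ML_i$ is admissible in Definition \ref{def:SM}, i.e.\ symmetric positive semidefinite: each $\MA_{im}^\top\MA_{im}$ is symmetric and satisfies $v^\top\MA_{im}^\top\MA_{im}v=\|\MA_{im}v\|^2\ge0$, the weights $\lambda_{im}/m_i$ are nonnegative since smoothness constants are nonnegative, and a nonnegative combination of symmetric PSD matrices is again symmetric PSD. I do not foresee a genuine obstacle here; the only points demanding a little care are the correct use of the chain rule for the composition $\phi_{im}\circ\MA_{im}$ and the reminder that $\|\cdot\|_{\ML_i}$ is to be read as the seminorm induced by a (possibly singular, e.g.\ when $\sum_m d_{im}<d$) PSD matrix, so that positive definiteness is needed nowhere in the argument.
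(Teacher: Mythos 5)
Your proof is correct. The paper itself does not prove this lemma---it cites it from Qu and Richt\'arik (2016)---but the argument you give is the standard one and is complete: apply the scalar smoothness inequality for $\phi_{im}$ at the transformed points $\MA_{im}x$ and $\MA_{im}y$, use the chain rule $\nabla(\phi_{im}\circ\MA_{im})(y)=\MA_{im}^\top\nabla\phi_{im}(\MA_{im}y)$ to push the inner product back to $\R^d$, recognize $\|\MA_{im}(x-y)\|^2=\|x-y\|^2_{\MA_{im}^\top\MA_{im}}$, and average over $m$. Your remark that positive definiteness is never needed (only PSD) is also the right observation, and matches Definition~\ref{def:SM}, which only requires $\ML\succeq 0$. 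One small point worth flagging, though it does not affect your argument: the paper's displayed scalar smoothness condition \eqref{eq:L-smoothness} has an apparent typo ($\nabla\phi(x)$ where $\nabla\phi(y)$ is intended, as in \eqref{eq:ML-smoothness}); you correctly used the standard form with $\nabla\phi_{im}(\MA_{im}y)$.
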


In cases where the local functions $f_i$ are of the form  \eqref{eq:89fd8g90f8h0fd}---and it is clear this structure is ubiquitous---there is a lot of potentially useful information contained in the matrix smoothness ``constant'' $\ML_i$. If we were to use the scalar smoothness constant of $f_i$ instead, we would be effectively tossing this richness away, and replacing it with $L_i = \lambda_{\max}(\ML_i)$; the largest eigenvalue of $\ML_i$.   This seems wasteful. As we show in this work, it is. However, we offer a fix. 

\section{Motivation and Contributions} To the best of our knowledge, {\em none} of the current distributed optimization methods, including the methods DCGD \citep{KFJ}, DIANA \citep{MGTR} and \mbox{ADIANA} \citep{AccCGD} discussed in Section~\ref{sec:intro}, are  capable of exploiting the inherently rich data-dependent smoothness structure of the local losses beyond standard smoothness constants.
To this effect, we impose the following assumption throughout the paper:
\begin{assumption}\label{asm:Li-smooth-convex}
The  functions $f_i\colon\R^d\to\R$ are differentiable, convex, lower bounded\footnote{Lower boundedness of $f_i(x)$ can be dropped if $\ML_i\succ0$ is positive definite. This part of the assumption is not a restriction in applications as all loss function are lower bounded.} and $\ML_i$-smooth. Moreover, $f$ is $\ML$-smooth. Let $L \eqdef \lambda_{\max}(\ML)$ be the (standard) smoothness constant of $f$.\end{assumption}

In this paper, we  argue that when training supervised models,  {\em smoothness matrices} (see Definition~\ref{def:SM})---information-rich generalizations of the classical and ubiquitous smoothness constants---can and should be exploited for further dramatic gains, both in theory and practice.  

\begin{table}[t]
\caption{Original and proposed new methods.}
\label{tbl:new-methods}
\begin{center}
\scriptsize
\begin{sc}
\begin{tabular}{@{\hskip 0.01in}cccc@{\hskip 0.01in}}
\toprule
\makecell{Original} & DCGD   & DIANA & ADIANA \\
\midrule
\makecell{{\bf NEW}}
& \makecell{DCGD+ \\ (Alg.\ref{alg:dskgd})}
& \makecell{DIANA+ \\ (Alg.\ref{alg:DIANA+})}
& \makecell{ADIANA+ \\ (Alg.\ref{alg:aDIANA+})}
\\
\midrule
Proximal
& \cmark
& \cmark
& \cmark
\\
Distributed
& \cmark
& \cmark
& \cmark
\\
\makecell{Variance  Reduced}
& \xmark
& \cmark
& \cmark
\\
Accelerated
& \xmark
& \xmark
& \cmark
\\
\bottomrule
\end{tabular}
\end{sc}
\end{center}
\end{table}

{ 

    \begin{table*}[t]
    \scriptsize
    \addtolength{\tabcolsep}{-3pt} 
        \centering
        \caption{Summary of theoretical results obtained in this work with hidden $\log\frac{1}{\varepsilon}$ factors and constants. Below $n$ is the number of machines, $d$ is the number of parameters of model, $L_{\max} = \max_i L_i,\; L_i = \lambda_{\max}(\ML_i)$ and the expected smoothness constant $\widetilde{\cL}_{\max}$ is defined in (\ref{def:tilde-cL-max}). The variance of generic compression operator used in the original methods is denoted by $\omega$. In case of sparsification, we have $\omega=\nicefrac{d}{\tau}-1 = \cO(n)$ when the expected size of selected coordinates is $\tau=\cO(\nicefrac{d}{n})$. Parameters $\nu_1,\nu_2$ and $\nu$ describing distribution of matrices $\ML_i$ are defined in (\ref{def:nu}).}
        \label{tbl:summary}
        \renewcommand{\arraystretch}{1.7}
        \begin{tabular}{|c|c|c|c|}
        \hline
        \makecellnew{Regime}
            & \makecellnew{$\nabla f_i(x^*) \equiv 0$}
            & \makecellnew{arbitrary $\nabla f_i(x^*)$}
            & \makecellnew{arbitrary $\nabla f_i(x^*)$} \\
        \hline
            \hline
            \makecellnew{\bf Original \\ \bf Methods}
            & \makecellnew{\bf DCGD \\ \citep{KFJ}}
            & \makecellnew{\bf DIANA \\ \citep{MGTR}}
            & \makecellnew{\bf ADIANA \\ \citep{AccCGD}} \\
\hline
\makecellnew{Iteration \\ Complexity}
&
$\frac{L}{\mu} + \frac{\omega L_{\max}}{n\mu}$
&
$\omega + \frac{L_{\max}}{\mu} + \frac{\omega L_{\max}}{n\mu}$
&
$
\left\{\begin{smallmatrix}
\omega + \omega\sqrt{\frac{L_{\max}}{n\mu}} & \;\text{if}\; n \le \omega \\
\omega + \sqrt{\frac{L_{\max}}{\mu}} + \sqrt{\omega\sqrt{\frac{\omega L_{\max}}{n\mu}}\sqrt{\frac{L_{\max}}{\mu}}} & \;\text{if}\; n > \omega
\end{smallmatrix}\right.
$ \\
\hline
\makecellnew{Iteration \\ Complexity \\ $\omega = \cO(n)$}
&
$\frac{L_{\max}}{\mu}$
&
$n + \frac{L_{\max}}{\mu}$
&
$
n + n\sqrt{\frac{L_{\max}}{n\mu}} \equiv n + \sqrt{n \frac{L_{\max}}{\mu}}
$ \\
\hline
            \hline
            \rowcolor{bgcolor}
            \makecellnew{\bf New \\ \bf Methods}
            & \makecellnew{\bf DCGD+ \\ (Algorithm \ref{alg:dskgd})}
            & \makecellnew{\bf DIANA+ \\ (Algorithm \ref{alg:DIANA+})}
            & \makecellnew{\bf ADIANA+ \\ (Algorithm \ref{alg:aDIANA+})} \\
\hline
            \rowcolor{bgcolor}
\makecellnew{Iteration \\ Complexity}
&
$\frac{L}{\mu} + \frac{\widetilde{\cL}_{\max}}{n\mu}$
&
$\omega_{\max} + \frac{L}{\mu} + \frac{\widetilde{\cL}_{\max}}{n\mu}$
&
$
\left\{\begin{smallmatrix}
\omega_{\max} + \sqrt{\omega_{\max}\frac{\widetilde{\cL}_{\max}}{n\mu}} & \;\text{if}\; nL \le \widetilde{\cL}_{\max} \\
\omega_{\max} + \sqrt{\frac{L}{\mu}} + \sqrt{\omega_{\max}\sqrt{\frac{\widetilde{\cL}_{\max}}{n\mu}}\sqrt{\frac{L}{\mu}}} & \;\text{if}\; nL > \widetilde{\cL}_{\max}
\end{smallmatrix}\right.
$ \\
\hline
         \rowcolor{bgcolor}
\makecellnew{Iteration \\ Complexity \\ $\omega = \cO(n)$}
&
\makecellnew{
    \\
    $\frac{L_{\max}}{{\color{red} n}\mu} + \frac{L_{\max}}{{\color{red} d}\mu}$ \\
    \\[-8pt]
    $\begin{smallmatrix} (\text{if}\; \nu,\;\nu_1 \;\text{are}\; \cO(1)) \end{smallmatrix}$
}
&
\makecellnew{
    \\
    $n + \frac{L_{\max}}{{\color{red} n}\mu} + \frac{L_{\max}}{{\color{red} d}\mu}$ \\
    \\[-8pt]
    $\begin{smallmatrix} (\text{if}\; \nu,\;\nu_1 \;\text{are}\; \cO(1)) \end{smallmatrix}$
}
&
\makecellnew{
    $
    \left\{\begin{smallmatrix}
    n + n\( \frac{L_{\max}}{n\mu} \)^{{\color{red} \nicefrac{1}{4}}} & \;\text{if}\; nL \le \widetilde{\cL}_{\max} \\
    n + \sqrt{\frac{L_{\max}}{{\color{red}n}\mu}} + \( n\frac{L_{\max}}{\mu} \)^{{\color{red} \nicefrac{3}{8}}}   & \;\text{if}\; nL > \widetilde{\cL}_{\max}
    \end{smallmatrix}\right.
    $ \\
    $\begin{smallmatrix} (\text{if}\; \nu,\nu_2 \;\text{are}\; \cO(1) \;\text{and}\; {\color{blue}\nicefrac{L_{\max}}{\mu} \;\text{is}\; \cO(n d^2)}) \end{smallmatrix}$
}
\\
\hline
         \rowcolor{bgcolor}
\makecellnew{Reference}
&
{\small Theorem~\ref{thm:dist-prox-skgd-better}, Remark~\ref{rem:ibcd}}
&
{\small Theorem~\ref{thm:DIANA+}, Remark~\ref{rem:isega}}
&
{\small Theorem~\ref{thm:aDIANA+}, Remark~\ref{rem:adiana}} \\
\hline
\hline
         \rowcolor{bgcolor2}
\makecellnew{Speedup \\ factor (up to)}
&
${\color{red} \min(n,d)}$
&
${\color{red} \min(n,d)}$
&
\makecellnew{
    $
    \left\{\begin{smallmatrix}
    {\color{red}\sqrt{d}} & \;\text{if}\; nL \le \widetilde{\cL}_{\max} \;\text{and}\; {\color{blue}\nicefrac{L_{\max}}{\mu} = \cO(n d^2)} \\
    {\color{red}\sqrt{\min(n,d)}}   & \;\text{if}\; nL > \widetilde{\cL}_{\max} \;\text{and}\; {\color{blue}\nicefrac{L_{\max}}{\mu} = \cO(n d^2)}
    \end{smallmatrix}\right.
    $
}
\\
        \hline
        \end{tabular}   
    \end{table*}
}

\subsection{Unbiased diagonal sketches}
We study unbiased diagonal sketches, defined as follows:
\begin{definition}[Unbiased diagonal sketch] Let $S$ be a random subset of the set of coordinates/features of the model $x\in \R^d$ we wish to train, i.e., $S\subseteq [d]\eqdef \{1,2,\dots,d\}$. Let $S$ be {\em proper}, i.e., $p_j \eqdef \Pr(j\in S)>0$ for all coordinates $j\in[d]$. We now define a random diagonal matrix (sketch) $\MC = \MC_S \in \R^{d\times d}$  via
\begin{equation}\label{sketch-matrix-C}
\MC = \diag(\mathrm{c}_1, \dots, \mathrm{c}_d), \quad 
\mathrm{c}_j =
\begin{cases} \nicefrac{1}{p_j} & \;\text{if}\;j\in S,\phantom{00} \\ 0\phantom{00}   & \text{otherwise}. \end{cases}
\end{equation}

\end{definition} 

Note that given a vector $x = (x_1,\dots, x_d)\in \R^d$, we have
$$
(\MC x)_j = \begin{cases} x_j/p_j & \text{if} \qquad j\in S \\ 0 & \text{if} \qquad j\notin S \end{cases}.
$$
So, we can control the sparsity level of the product $\MC x$ by engineering the properties of the random set $S$. Also note that $\E [\MC x] = x$ for all $x$.

\subsection{Data-dependent sparsification operators}
In order to further alleviate the communication burden inherent in distributed optimization, we further propose  {\em data-dependent sparsification operators}  that can take full advantage of the smoothness matrices $\ML_i$ associated with the local losses $f_i$. To the best of our knowledge, this is in sharp contrast with the design of all existing tractable compression techniques used in distributed training, which are proposed independently of the training data, and typically based on intuitive or information-theoretic principles.

With each node $i$ we associate an   unbiased diagonal matrix $\MC_i$ of the form~\eqref{sketch-matrix-C}. We use this and the smoothness matrix of $f_i$ to define a sparsification technique, described next. 

\begin{definition}[Data-dependent sparsification] 
In situations when the $i$-th node wished to communicate local gradient $\nabla f_i(x)$, we ask the node to send the sparse (=compressed) vector $\MC_i\ML_i^\phalf \nabla f_i(x)$ to the server instead.  The server then constructs (=decompresses) an unbiased estimator of  $\nabla f_i(x)$  as follows: 
\begin{equation}\label{compression_our}
g_i(x) = \ML_i^\half\MC_i\ML_i^\phalf \nabla f_i(x),
\end{equation}
where $\ML_i^\phalf$ denotes the square root of the Moore-Penrose pseudoinverse of $\ML_i$.
\end{definition}

Notable differences of  our proposed communication protocol  when compared with standard sparsification techniques are: i) we use the smoothness matrix $\ML_i$, ii) the compressed vector $\MC_i\ML_i^\phalf \nabla f_i(x)$ is not unbiased, iii) we devise a separate decompression mechanism \eqref{compression_our}, also involving $\ML_i$, and this enforces effective unbiasedness.

%

%

\subsection{Matrix-smoothness-aware redesign of 3 distributed methods}
To showcase the power of our approach, we demonstrate how our matrix-smoothness-aware sparsification technique \eqref{compression_our}  can be adapted to DCGD, DIANA and ADIANA, in each case leading to significant communication savings. 
 By doing so, we show that matrix smoothness can be effectively used to speed up communication compression, variance reduction and acceleration, respectively. 
This results  in three novel methods: DCGD+, DIANA+, and ADIANA+; see Table~\ref{tbl:new-methods}.

\subsection{Dramatic improvements in complexity results}
We perform complexity analyses for our methods and derive convergence rates under matrix smoothness\footnote{The closest to our result is work of \citet{GJS-HR} and their ISEGA method which is able to  exploit {\em diagonal} smoothness matrices.  To the best of our knowledge, we are the first to fully exploit smoothness matrices of arbitrary structure, and elevate them as a new tool at the disposal of algorithm designers.}
(see Assumption \ref{eq:ML-smoothness}) and strong convexity assumptions (see Theorems \ref{thm:dist-prox-skgd-better}, \ref{thm:DIANA+} and \ref{thm:aDIANA+}). We show that new methods always outperform the originals/baselines, and often dramatically so. 

To illustrate the potential of our sparsification technique \eqref{compression_our} embedded in the new methods, let all machines $i\in[n]$ use sketches $\MC_i$ induced by independent\footnote{Sampling $S_i$ is called independent if $p_{i;jl} \eqdef \Pr(\{j,l\}\subseteq S_i) = p_{i;j} p_{i;l}$ for all $j,l\in[d]$.} samplings $S_i$ with probabilities $p_{i;j} \eqdef \Pr(j\in S_i)$. Then we show that, with optimized probabilities $p_{i;j}$, DCGD+ can be $\cO(\min(n,d))$ times faster then DCGD (see Remark ~\ref{rem:ibcd}) and DIANA+ can be $\cO(\min(n,d))$ times faster than DIANA (see Remark~\ref{rem:isega}), depending on the distribution of $\ML_i$.
For the accelerated method, we highlight improvements when condition numbers of subproblems are $\cO(n d^2)$. We show that \mbox{ADIANA+} can be faster than the original ADIANA by a factor of $\cO(\sqrt{d})$ in high compression regime, and by a factor of $\cO(\sqrt{\min(n,d)})$ in low compression regime (see Remark~\ref{rem:adiana}). Main theoretical results are summarized in Table \ref{tbl:summary}.


\subsection{Single node case} 
Specializing our theory to the single machine setting ($n=1$), we design new non-distributed algorithms providing an alternative viewpoint to randomized coordinate descent methods (see Appendix \ref{rcd-as-skgd}).

\subsection{Lower bounds} 
Using matrices as linear compression operators, we further investigate the trade-off between communicated bits and variance induced by the compression (see Appendix \ref{sec:lower-bounds}).

\subsection{Experiments} 
We conduct numerical experiments using LibSVM datasets~\citep{chang2011libsvm}, confirming the effectiveness and superiority of our sparsification protocol (\ref{compression_our}) over the standard sparsification scheme (see Section~\ref{sec:experiments}).

\section{New Communication-Efficient Distributed Methods \\ Exploiting Matrix Smoothness}

Consider the distributed optimization problem (\ref{main-opt-problem-dist}) with the smoothness Assumption  \ref{asm:Li-smooth-convex} and for strongly convex $f$.

\begin{assumption}[$\mu$-convexity]\label{asm:mu-convex}
$f\colon\R^d\to\R$ is $\mu$-convex for some $\mu>0$, i.e., $$
f(x) \ge f(y) + \<\nabla f(x), x-y\> + \frac{\mu}{2}\|x-y\|^2
$$ for all $x,y\in\R^d$.
\end{assumption}




Below we present our new distributed methods, redesigned for matrix smoothness, and their convergence guarantees. Each node $i\in[n]$ generates diagonal sketches $\MC_i$ independently from others via an arbitrary sampling $S_i$ and, togther with its smoothness matrix $\ML_i$, composes the compression matrix $\MC_i\ML_i^{\dagger\nicefrac{1}{2}}$. Probability matrices $\MP_i$ and $\widetilde{\MP}_i$ associated with the sampling $S_i$ and sketch $\MC_i$ are defined as follows
\begin{align}
\begin{split}\label{def:P}
\MP_i &= (p_{i;jl})_{jl=1}^d, \qquad p_{i;jl} = \Pr(\{j,l\}\subseteq S_i), \\
\widetilde{\MP}_i &  = (\widetilde{p}_{i;jl})_{jl=1}^d, \qquad \widetilde{p}_{i;jl} = \frac{p_{i;jl}}{p_{i;jj}p_{i;ll}}-1.
\end{split}
\end{align}

Next, we introduce the key quantity, $\widetilde{\cL}_{\max}$, describing the joint contribution of our sparsification~\eqref{compression_our} to the complexities of the three proposed methods: 
\begin{equation}\label{def:tilde-cL-max}
\widetilde{\cL}_{\max} = \max_{1\le i\le n} \widetilde{\cL}_i,
\qquad
\widetilde{\cL}_i = \lambda_{\max}(\widetilde{\MP}_i\circ\ML_i),
\end{equation}
Above,  $\circ$ stands for Hadamard (i.e. element-wise) product.

\subsection{DCGD+}

We now present our matrix-smoothness-aware sparsification technique by adapting DCGD algorithm \citep{KFJ}.

Upon receiving the current model $x^k$ from the server, each node computes $\ML_i^{\phalf} \nabla f_i(x^k)$ based on local training data and smoothness matrix. Next, sparsified updates $\MC_i^k\ML_i^{\phalf} \nabla f_i(x^k)$ are sent back to the server, which then averages decompressed updates $\ML_i^{\half}\MC_i^k\ML_i^{\dagger\nicefrac{1}{2}} \nabla f_i(x^k)$ and performs proximal step to get a new model $x^{k+1}$.

\begin{algorithm}[H]
\begin{algorithmic}[1]
\STATE \textbf{Input:} Initial point $x^0\in\R^d$, current point $x^k$, step size $\gamma$, diagonal sketch $\MC_i^k$
\STATE \textbf{on} server
\STATE \quad send $x^k$ to all nodes
\STATE \quad get sparse updates $\MC_i^k\ML_i^{\phalf} \nabla f_i(x^k)$ from each node
\STATE \quad $g^k = \frac{1}{n}\sum_{i=1}^n \ML_i^{\half}\MC_i^k\ML_i^{\phalf} \nabla f_i(x^k)$
\STATE \quad $x^{k+1} = \prox_{\gamma R}(x^k - \gamma g^k)$
\end{algorithmic}
\caption{\sc DCGD+}
\label{alg:dskgd}
\end{algorithm}

With this method we get convergence up to a neighborhood.

\begin{theorem}[see \ref{apx-thm:dist-prox-skgd-better}]\label{thm:dist-prox-skgd-better}
Let Assumptions  \ref{asm:Li-smooth-convex} and \ref{asm:mu-convex} hold and assume that each node generates its own diagonal sketch $\MC_i$ independently from others. Then, for the step-size 
$$
0<\gamma
\le \frac{1}{L+\frac{2}{n} \widetilde{\cL}_{\max}},
$$
the iterates $\{x^k\}$ of Algorithm \ref{alg:dskgd} satisfy
\begin{equation}\label{rate-dskgd}
\E\[\|x^k - x^*\|^2\] \le \(1-\gamma\mu\)^k\|x^{0}  - x^*\|^2 + \frac{2\gamma\sigma^*}{\mu n},
\end{equation}
where $\sigma^* \eqdef \frac{1}{n}\sum_{i=1}^n \widetilde{\cL}_i \|\nabla f_i(x^*)\|^2_{\ML_i^{\dagger}}$.
\end{theorem}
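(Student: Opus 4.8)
The plan is to mimic the textbook analysis of proximal stochastic gradient descent with a constant stepsize; the only ingredient specific to our setting is a second-moment (``expected smoothness'') estimate for $g^k$ that brings out $\widetilde{\cL}_{\max}$ in place of $\lambda_{\max}(\ML_i)$.

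First I would record unbiasedness. Because each $f_i$ is convex and bounded below, $\nabla f_i(x)\in\range(\ML_i)$ for every $x$: restricting \eqref{eq:ML-smoothness} to a direction $v\in\ker(\ML_i)$ gives $f_i(x+tv)\le f_i(x)+t\ve{\nabla f_i(x)}{v}$ for all $t\in\R$, and letting $t\to\pm\infty$ forces $\ve{\nabla f_i(x)}{v}=0$. Since $\MC_i$ is an unbiased diagonal sketch, $\E[\MC_i]=\MI$, so $\E[g_i(x)\mid x]=\ML_i^{\half}\ML_i^{\phalf}\nabla f_i(x)$ is the orthogonal projection of $\nabla f_i(x)$ onto $\range(\ML_i)$, i.e.\ $\nabla f_i(x)$ itself; averaging over $i$ gives $\E[g^k\mid x^k]=\nabla f(x^k)$.

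Next I would bound the conditional second moment. Conditioning on $x^k$ and using independence of the sketches $\MC_i^k$ across $i$,
$$
\E\big[\|g^k-\nabla f(x^*)\|^2\,\big|\,x^k\big]=\|\nabla f(x^k)-\nabla f(x^*)\|^2+\frac1{n^2}\sum_{i=1}^n\mathrm{Var}\big(g_i(x^k)\,\big|\,x^k\big).
$$
The key algebraic fact is $\E[\MC_i\ML_i\MC_i]=\ML_i+\widetilde{\MP}_i\circ\ML_i$: entrywise $(\E[\MC_i\ML_i\MC_i])_{jl}=\E[\mathrm{c}_j\mathrm{c}_l](\ML_i)_{jl}$, and \eqref{def:P} is arranged precisely so that $\E[\mathrm{c}_j\mathrm{c}_l]=(\widetilde{\MP}_i)_{jl}+1$ for all $j,l$ (the case $j=l$ included). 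Writing $u_i=\ML_i^{\phalf}\nabla f_i(x^k)$ and using $\nabla f_i(x^k)\in\range(\ML_i)$, so that $u_i^{\top}\ML_i u_i=\|\nabla f_i(x^k)\|^2$, I get $\E[\|g_i(x^k)\|^2\mid x^k]=\|\nabla f_i(x^k)\|^2+u_i^{\top}(\widetilde{\MP}_i\circ\ML_i)u_i$, hence $\mathrm{Var}(g_i(x^k)\mid x^k)=u_i^{\top}(\widetilde{\MP}_i\circ\ML_i)u_i\le\widetilde{\cL}_i\|\nabla f_i(x^k)\|^2_{\ML_i^{\dagger}}$. Splitting $\|\nabla f_i(x^k)\|^2_{\ML_i^{\dagger}}\le 2\|\nabla f_i(x^k)-\nabla f_i(x^*)\|^2_{\ML_i^{\dagger}}+2\|\nabla f_i(x^*)\|^2_{\ML_i^{\dagger}}$, invoking the matrix-smoothness version of the classical bound $\|\nabla\phi(x)-\nabla\phi(y)\|^2_{\ML^{\dagger}}\le 2\big(\phi(x)-\phi(y)-\ve{\nabla\phi(y)}{x-y}\big)$ (valid for convex $\ML$-smooth $\phi$ with gradients in $\range(\ML)$; proved by minimizing \eqref{eq:ML-smoothness} over $x$), and summing over $i$ with $f=\frac1n\sum_i f_i$, the variance sum is at most $\frac{4\widetilde{\cL}_{\max}}{n}D_f(x^k,x^*)+\frac{2\sigma^*}{n}$, where $D_f(x,y)\eqdef f(x)-f(y)-\ve{\nabla f(y)}{x-y}\ge0$; similarly $\|\nabla f(x^k)-\nabla f(x^*)\|^2\le 2L\,D_f(x^k,x^*)$ using $\ML$-smoothness of $f$ and $\lambda_{\max}(\ML)=L$. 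Altogether,
$$
\E\big[\|g^k-\nabla f(x^*)\|^2\,\big|\,x^k\big]\le 2\Big(L+\tfrac2n\widetilde{\cL}_{\max}\Big)D_f(x^k,x^*)+\frac{2\sigma^*}{n}.
$$

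Finally I would close the recursion. The optimality of $x^*$ reads $x^*=\prox_{\gamma R}(x^*-\gamma\nabla f(x^*))$; using nonexpansiveness of $\prox_{\gamma R}$, expanding $\|x^k-\gamma g^k-(x^*-\gamma\nabla f(x^*))\|^2$, taking conditional expectation, applying the unbiasedness above to the cross term and the bound just derived to the quadratic term, and using $\ve{\nabla f(x^k)-\nabla f(x^*)}{x^k-x^*}\ge D_f(x^k,x^*)+\tfrac\mu2\|x^k-x^*\|^2$ from Assumption~\ref{asm:mu-convex}, I obtain
$$
\E\big[\|x^{k+1}-x^*\|^2\,\big|\,x^k\big]\le(1-\gamma\mu)\|x^k-x^*\|^2-2\gamma\Big(1-\gamma\big(L+\tfrac2n\widetilde{\cL}_{\max}\big)\Big)D_f(x^k,x^*)+\frac{2\gamma^2\sigma^*}{n}.
$$
For $\gamma\le\big(L+\tfrac2n\widetilde{\cL}_{\max}\big)^{-1}$ the middle term is $\le0$ (and $\gamma\mu<1$ automatically, since $L\ge\mu$); dropping it, taking total expectation and unrolling the geometric recursion yields \eqref{rate-dskgd}. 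The main obstacle is the second step: establishing $\E[\MC_i\ML_i\MC_i]=\ML_i+\widetilde{\MP}_i\circ\ML_i$ and threading the pseudoinverses correctly (in particular that $\ML_i^{\phalf}\ML_i\ML_i^{\phalf}$ acts as the identity on $\range(\ML_i)$) — this is exactly the point of the decompression rule \eqref{compression_our}, which makes the variance scale with $\widetilde{\cL}_i=\lambda_{\max}(\widetilde{\MP}_i\circ\ML_i)$ rather than with $\lambda_{\max}(\ML_i)$; the convexity-plus-lower-boundedness argument establishing $\nabla f_i(x)\in\range(\ML_i)$ is elementary but indispensable for unbiasedness.
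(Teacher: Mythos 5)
Your proof is correct and takes essentially the same route as the paper's: the same unbiasedness argument via $\nabla f_i(x)\in\range(\ML_i)$, the same moment decomposition $\E\|g^k-\nabla f(x^*)\|^2=\|\nabla f(x^k)-\nabla f(x^*)\|^2+\frac{1}{n^2}\sum_i\|\ML_i^\phalf\nabla f_i(x^k)\|^2_{\widetilde{\MP}_i\circ\ML_i}$, and the same Bregman bounds yielding the expected-smoothness estimate $\E\|g^k-\nabla f(x^*)\|^2\le 2\bigl(L+\tfrac{2}{n}\widetilde{\cL}_{\max}\bigr)D_f(x^k,x^*)+\tfrac{2\sigma^*}{n}$. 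The only difference is cosmetic: you re-derive from scratch the range inclusion and the inequality $\|\nabla f_i(x)-\nabla f_i(y)\|^2_{\ML_i^\dagger}\le 2D_{f_i}(x,y)$ rather than citing Lemma~\ref{lem:grad-space}, and you spell out the closing prox-SGD recursion that the paper delegates to the unified framework of \citet{sigma_k}.
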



{\bf Proof technique.} 
First we show the unbiasedness of $g^k$. As smoothness matrices $\ML_i$ are not necessarily invertible, terms like $\ML_i^\half\ML_i^\phalf$ show up in the analysis and block chains of cancellations. This part is handled by the fact that gradients $\nabla f_i(x)$ of an $\ML_i$-smooth function are constraint to remain in $\range{\ML_i}$ and the mapping associated with the matrix $\ML_i^\half\ML_i^\phalf$ is identity on the subspace $\range(\ML_i)$.
Second part is the tight estimation of $\E_k\|g^k - \nabla f(x^*)\|^2$, which describes {the progress of the method} in the presence of stochasticity. Key part is getting the decomposition
\begin{equation}\label{moment-decomposition__meta}
\E_k \left[\|g^k - \nabla f(x^*)\|^2 \right]
= \|\nabla f(x^k) - \nabla f(x^*)\|^2  + \frac{1}{n^2}\sum_{i=1}^n \left\|\nabla f_i(x^k) \right\|^2_{ \ML_i^\phalf(\widetilde{\MP}_i\circ\ML_i)\ML_i^\phalf }, 
\end{equation}
which shows the exact interaction between random sketches and local smoothness. We complete the proof using the unified convergence theory of \citet{sigma_k}.

\subsection{Variance reduction: DIANA+}

Next, we apply our sparsification technique to the variance reduced method DIANA \citep{MGTR}.

In this method, each node maintains an auxiliary control vector $h_i^k$, called shift, which helps to reduce the variance coming from the sparsification. Moreover, the central server keeps track of only the averaged shift $h^k$. Then, the model $x^k$ as well as control vectors $h_i^k,\; h^k$ are updated by decompressing sparse information $\Delta_i^k$ using matrices $\ML_i$.

\begin{algorithm}[H]
\begin{algorithmic}[1]
\STATE \textbf{Input:} Initial point $x^0\in\R^d$, initial shifts $h_i^0\in\range(\ML_i)$, current point $x^k$, step size parameter $\gamma$ and $\alpha$, sketch $\MC_i^k$ and $\overbar{\MC}_i^k \eqdef \ML_i^\half\MC_i^k\ML_i^\phalf$, current shifts $h_1^k,\dots,h_n^k$ and $h^k \eqdef \frac{1}{n}\sum_{i=1}^n h_i^k$.
\STATE \textbf{on} each node
\STATE \quad get $x^k$ from the server
\STATE \quad send sparse update $\Delta_i^k = \MC_i^k\ML_i^\phalf (\nabla f_i(x^k) - h_i^k)$
\STATE \quad $\overbar{\Delta}_i^k = \ML_i^\half\Delta_i^k,\; g_i^k = h_i^k + \overbar{\Delta}_i^k, h_i^{k+1} = h_i^k + \alpha\overbar{\Delta}_i^k$
\STATE \textbf{on} server
\STATE \quad get sparse updates $\Delta_i^k$ from each node
\STATE \quad $\overbar{\Delta}^k = \frac{1}{n}\sum_{i=1}^n \overbar{\Delta}_i^k = \frac{1}{n}\sum_{i=1}^n \ML_i^\half\Delta_i^k$
\STATE \quad $g^k = \overbar{\Delta}^k + h^k = \frac{1}{n}\sum_{i=1}^n \overbar{\MC}_i^k \(\nabla f_i(x^k) - h_i^k\) + h_i^k$
\STATE \quad $x^{k+1} = \prox_{\gamma R}(x^k - \gamma g^k)$
\STATE \quad $h^{k+1} = h^k + \alpha\overbar{\Delta}^k$
\end{algorithmic}
\caption{\sc DIANA+}
\label{alg:DIANA+}
\end{algorithm}

In this case we get rid of the neighborhood and provide linear convergence to the exact solution $x^*$. We use $\widetilde{\cO}$ notation to ignore $\log\frac{1}{\varepsilon}$ factors and constants.

\begin{theorem}[see \ref{apx-thm:DIANA+}]\label{thm:DIANA+}
Let Assumptions \ref{asm:Li-smooth-convex} and \ref{asm:mu-convex} hold and assume that each node generates its own diagonal sketch $\MC_i$ independently from others. Then, for the step-size 
$$
\gamma = \frac{1}{L+\frac{6}{n} \widetilde{\cL}_{\max}},
$$
Algorithm \ref{alg:DIANA+} guarantees $\E\[\|x^k-x^*\|^2\]\le\varepsilon$ after
\begin{equation}\label{DIANA+-complexity}
\widetilde{\cO}\( \omega_{\max} + \frac{L}{\mu} + \frac{\widetilde{\cL}_{\max}}{n\mu}\)
\end{equation}
 iterations, where $\omega_{\max} = \max_{1\le i\le n}\omega_i$ and $\omega_i = \max_{1\le j\le d}\frac{1}{p_{i;j}}-1$ is the variance of compression operator induced by sketch $\MC_i$.
\end{theorem}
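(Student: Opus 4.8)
The strategy is the standard DIANA-type Lyapunov argument, but executed in the (possibly degenerate) geometry induced by the smoothness matrices $\ML_i$, and reusing as much of the DCGD+ machinery as possible. Set $h_i^* \eqdef \nabla f_i(x^*)$ and work with the Lyapunov function $\Phi^k \eqdef \|x^k-x^*\|^2 + c\gamma^2\,\tfrac1n\sum_{i=1}^n\|h_i^k-h_i^*\|^2_{\ML_i^{\dagger}}$ for a constant $c>0$ to be fixed. First I would record the subspace facts that make the pseudoinverses harmless: since each $f_i$ is convex and $\ML_i$-smooth, $\nabla f_i(x)-\nabla f_i(y)\in\range(\ML_i)$ for all $x,y$, hence $h_i^*\in\range(\ML_i)$; together with $h_i^0\in\range(\ML_i)$ and $\overbar{\Delta}_i^k=\ML_i^\half\Delta_i^k\in\range(\ML_i)$, induction gives $h_i^k\in\range(\ML_i)$ for every $k$, so the $\ML_i^{\dagger}$-norms are finite and $\ML_i^{\phalf}\ML_i\ML_i^{\phalf}$ acts as $\ML_i^{\dagger}$ on these vectors. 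The same reasoning shows $\overbar{\MC}_i^k=\ML_i^\half\MC_i^k\ML_i^{\phalf}$ is an unbiased ``identity'' on shifted gradients: $\E_k[\overbar{\MC}_i^k(\nabla f_i(x^k)-h_i^k)]=\nabla f_i(x^k)-h_i^k$, whence $\E_k[g_i^k]=\nabla f_i(x^k)$ and $\E_k[g^k]=\nabla f(x^k)$.

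Next I would establish the second-moment estimate for $g^k$. Writing $g^k-\nabla f(x^*)=(\nabla f(x^k)-\nabla f(x^*))+\tfrac1n\sum_i(\overbar{\MC}_i^k-\MI)(\nabla f_i(x^k)-h_i^k)$, unbiasedness kills the cross term and independence of the sketches across $i$ decouples the sum, producing exactly the decomposition \eqref{moment-decomposition__meta} but with $\nabla f_i(x^k)$ replaced by the shifted gradient $\nabla f_i(x^k)-h_i^k$, i.e.
\[
\E_k\!\left[\|g^k-\nabla f(x^*)\|^2\right]=\|\nabla f(x^k)-\nabla f(x^*)\|^2+\frac1{n^2}\sum_{i=1}^n\left\|\nabla f_i(x^k)-h_i^k\right\|^2_{\ML_i^{\phalf}(\widetilde{\MP}_i\circ\ML_i)\ML_i^{\phalf}}.
\]
Using $\widetilde{\MP}_i\circ\ML_i\preceq\widetilde{\cL}_i\MI$ from \eqref{def:tilde-cL-max} gives $\ML_i^{\phalf}(\widetilde{\MP}_i\circ\ML_i)\ML_i^{\phalf}\preceq\widetilde{\cL}_i\ML_i^{\dagger}\preceq\widetilde{\cL}_{\max}\ML_i^{\dagger}$, and then $\|\nabla f_i(x^k)-h_i^k\|^2_{\ML_i^{\dagger}}\le 2\|\nabla f_i(x^k)-\nabla f_i(x^*)\|^2_{\ML_i^{\dagger}}+2\|h_i^k-h_i^*\|^2_{\ML_i^{\dagger}}$, with the first term controlled by the matrix cocoercivity consequence of $\ML_i$-smoothness and convexity, $\|\nabla f_i(x^k)-\nabla f_i(x^*)\|^2_{\ML_i^{\dagger}}\le 2\big(f_i(x^k)-f_i(x^*)-\langle\nabla f_i(x^*),x^k-x^*\rangle\big)$. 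Averaging over $i$ and using $\tfrac1n\sum_i\nabla f_i(x^*)=\nabla f(x^*)$ (and optimality of $x^*$ for $f+R$) turns this into Bregman terms of $f$ plus the shift-distance terms appearing in $\Phi^k$.

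The third block is the one-step decrease. From $x^{k+1}=\prox_{\gamma R}(x^k-\gamma g^k)$, firm nonexpansiveness of the prox, $\mu$-strong convexity of $f$, and the identity $\E_k[g^k]=\nabla f(x^k)$, I would get $\E_k\|x^{k+1}-x^*\|^2\le(1-\gamma\mu)\|x^k-x^*\|^2-2\gamma\,\text{(Bregman terms)}+\gamma^2\E_k\|g^k-\nabla f(x^*)\|^2$; plugging in the bound above, the $\|\nabla f(x^k)-\nabla f(x^*)\|^2$ contribution is absorbed via $L$-smoothness of $f$, and the remaining Bregman coefficient is made non-positive precisely by the step-size restriction $\gamma\le 1/(L+\tfrac6n\widetilde{\cL}_{\max})$ (the constant $6$, rather than the $2$ of DCGD+, accounting for the extra shift-splitting factors). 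In parallel, $h_i^{k+1}-h_i^*=(h_i^k-h_i^*)+\alpha\overbar{\Delta}_i^k$ with $\overbar{\Delta}_i^k$ an unbiased estimate of $\nabla f_i(x^k)-h_i^k$ satisfying $\E_k\|\overbar{\Delta}_i^k\|^2_{\ML_i^{\dagger}}\le(\omega_i+1)\|\nabla f_i(x^k)-h_i^k\|^2_{\ML_i^{\dagger}}$ (the sketch-variance bound, valid on $\range(\ML_i)$ since $\overbar{\MC}_i^k$ is conjugate to the diagonal sketch of variance $\omega_i$ in the $\ML_i^{\dagger}$-inner product); expanding the square and choosing $\alpha\asymp 1/(\omega_{\max}+1)$ yields the usual contraction $\E_k\|h_i^{k+1}-h_i^*\|^2_{\ML_i^{\dagger}}\le(1-\alpha)\|h_i^k-h_i^*\|^2_{\ML_i^{\dagger}}+2\alpha\|\nabla f_i(x^k)-\nabla f_i(x^*)\|^2_{\ML_i^{\dagger}}$. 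Adding $c\gamma^2$ times the averaged shift inequality to the $x$-inequality and choosing $c$ so that the $\|\nabla f_i(x^k)-h_i^k\|^2_{\ML_i^{\dagger}}$ and Bregman cross-terms are dominated, I obtain $\E_k\Phi^{k+1}\le(1-\rho)\Phi^k$ with $\rho=\min\{\gamma\mu,\,\alpha/2\}$ up to constants; unrolling and using $\gamma=1/(L+\tfrac6n\widetilde{\cL}_{\max})$, $\alpha\asymp 1/(\omega_{\max}+1)$ gives $\E\|x^k-x^*\|^2\le\varepsilon$ after $\widetilde{\cO}(1/\rho)=\widetilde{\cO}\big(\omega_{\max}+\tfrac{L}{\mu}+\tfrac{\widetilde{\cL}_{\max}}{n\mu}\big)$ iterations, which is \eqref{DIANA+-complexity}. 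As in DCGD+, I would package the final assembly through the unified SGD-type analysis of \citet{sigma_k}, with the shifted gradients $\nabla f_i(x^k)-h_i^k$ playing the role of $\nabla f_i(x^k)$.

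\textbf{Anticipated main obstacle.} The delicate points are the careful handling of the Moore--Penrose pseudoinverses---verifying that every iterate-dependent vector stays in $\range(\ML_i)$ and that the two operator inequalities $\ML_i^{\phalf}(\widetilde{\MP}_i\circ\ML_i)\ML_i^{\phalf}\preceq\widetilde{\cL}_i\ML_i^{\dagger}$ and $\E_k\|\overbar{\MC}_i^k y\|^2_{\ML_i^{\dagger}}\le(\omega_i+1)\|y\|^2_{\ML_i^{\dagger}}$ hold on that subspace---together with the simultaneous tuning of the free constants $c$ and $\alpha$ so that \emph{all} Bregman-divergence coefficients are non-positive at once while the two contraction rates $\gamma\mu$ and $\alpha$ can be folded into a single geometric factor. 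Everything else is routine DIANA bookkeeping.
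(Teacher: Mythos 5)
Your proposal is correct and follows essentially the same route as the paper's own proof: establish $h_i^k\in\range(\ML_i)$ by induction to make the pseudoinverse bookkeeping sound, derive the second-moment bound $\E_k\|g^k-\nabla f(x^*)\|^2\le 2(L+\tfrac{2}{n}\widetilde{\cL}_{\max})D_f(x^k,x^*)+\tfrac{2\widetilde{\cL}_{\max}}{n}\sigma^k$ via the Hadamard-product decomposition, prove the shift contraction $\E_k[\sigma^{k+1}]\le(1-\alpha)\sigma^k+2\alpha D_f(x^k,x^*)$ using $\E[(\overbar{\MC}_i^k)^\top\ML_i^\dagger\overbar{\MC}_i^k]\preceq(\omega_i+1)\ML_i^\dagger$, and close by invoking the unified framework of \citet{sigma_k} (your Lyapunov constant $c$ is precisely their $M$, giving $A+CM=L+\tfrac{6}{n}\widetilde{\cL}_{\max}$ as you anticipate). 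No gaps.
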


{\bf Proof technique.}
The structure of the proof resembles the one for DCGD+.
With the introduced shift vectors, the unbiasedness of $g^k$ additionally requires $h_i^k\in\range(\ML_i)$. This is resolved by the initialization $h_i^0\in\range(\ML_i)$ and linear update rule for $h_i^{k+1}$ in line 5. The proof develops a decomposition similar to (\ref{moment-decomposition__meta}) with modified second term $\sigma^k \eqdef \frac{1}{n}\sum_{i=1}^n\|h_i^k - \nabla f(x^*)\|^2_{\ML_i^\dagger}$ involving shifts $h_i^k$. To avoid the neighborhood term in (\ref{rate-dskgd}) and guarantee a linear convergence for $x^k$, we make $\sigma^k$ converge linearly too.
Key technical part of the proof is to establish contracting recurrence relation for $\sigma^k$ which boils down to $\E[\overbar{\MC}_i^\top\ML_i^{\dagger}\overbar{\MC}_i] \preceq (\omega_i+1)\ML_i^{\dagger}$. The latter bound justifies the structure of $\overbar{\MC}_i$ as it filters the interaction between compression and smoothness mixed in the expectation and separates variance $\omega_i$ of compression from smoothness matrix $\ML_i$.

\begin{remark}[Variance Reduction: ISEGA+]
In Appendix \ref{apx-sec:ISEGA} we apply our redesign to another variance reduced method called ISEGA \citep{99KFP,GJS-HR}. At the core of ISEGA, the mechanism for variance reduction is based on SEGA method \citep{SEGA-FP}. The key difference between ISEGA and DIANA is that ISEGA updates the control variates $h$ more aggressively using projection instead of the mere $\alpha$-step towards the projection used in DIANA.  Formally, adapting our matrix-smoothness-aware sparsification to ISEGA, we define the update rule of control vectors $h_i^k$ as follows
\begin{equation*}
h_i^{k+1}
= \argmin_{\substack{h\in\range(\ML_i) \\ \MC_i^k\ML_i^\phalf\nabla f_i(x^k) = \MC_i^k\ML_i^\phalf h}} \|h-h_i^k\|^2_{\ML_i^\dagger} 
= h_i^k + \ML_i^\half \Mdiag(\MP_i) \MC_i^k\ML_i^\phalf (\nabla f_i(x^k) - h_i^k).
\end{equation*}
On the other hand, notice that the update rule in DIANA+ has the form
$$
h_i^{k+1} = h_i^k + \alpha \ML_i^\half \MC_i^k\ML_i^\phalf (\nabla f_i(x^k) - h_i^k)
$$
for some fixed scalar $\alpha>0$, and thus is more conservative.  Note that we choose the gradient estimator for ISEGA+ to be the same $g_i^k = h_i^k + \ML_i^\half \MC_i^k\ML_i^\phalf (\nabla f_i(x^k) - h_i^k)$.  The method is presented as Algorithm~\ref{alg:ISEGA+} in Appendix \ref{apx-sec:ISEGA}. 

In contrast to DIANA+, we can not obtain the convergence rate of ISEGA+ directly from the framework of~\citet{sigma_k}. Instead, to get the tight convergence rate, we shall cast it as an instance of GJS method~\citep{GJS-HR}.  Theorem~\ref{thm:ISEGA} provides the result -- we can see that the worst case complexity is identical to DIANA+. However, in terms of the practical performance, we expect ISEGA+ to outperform DIANA+ due to the more aggressive update rule of control variates. 
\end{remark}

\begin{remark}[Variance Reduction with Bi-directional Compression: DIANA++]
As an extension to DIANA+, in Appendix \ref{apx-thm:DIANA++} we apply our sparsification technique both for nodes and for the central server, thus compressing gradients in both directions of communication. We develop and analyze DIANA++ method (see Algorithm \ref{alg:DIANA++}), for which the central server applies compression in its turn with sketch $\MC$ independently. To converge in a linear rate, DIANA++ maintains an additional control vector, which helps to reduce the variance coming from the master's sparsification. Theorem \ref{thm:DIANA++} provides complexity result for DIANA++, which recovers the same complexity (\ref{DIANA+-complexity}) of DIANA+ if no compression is applied by the master.
\end{remark}

\subsection{Acceleration with variance reduction: ADIANA+}

Finally, we redesign the accelerated method ADIANA \citep{AccCGD} to effectively exploit local smoothness matrices.

The algorithm develops four sequences $\{x^k,y^k,z^k,w^k\}$ of models, which are layered via convex combinations, proximal steps and probabilistic assignments. In each iteration, nodes receive models $x^k$ and $w^k$ from the server, and send back sparse updates $\Delta_i^k$ and $\delta_i^k$ using local data and control vectors $h_i^k$. Then, decompressing these sparse vectors with matrices $\ML_i$, nodes update their shifts $h_i^k$ and the server updates all four models along with averaged shift $h^k$.

\begin{algorithm}[H]
\begin{algorithmic}[1]
\STATE \textbf{Input:} Initial points $x^0=y^0=z^0=w^0\in\R^d$, initial shifts $h_i^0\in\range(\ML_i)$, current point $x^k$, parameters $\gamma,\alpha,\beta,\eta,\theta_1,\theta_2,q$, sketch $\MC_i^k$ and $\overbar{\MC}_i^k \eqdef \ML_i^\half\MC_i^k\ML_i^\phalf$, current shifts $h_1^k,\cdots,h_n^k$ and $h^k=\frac{1}{n}\sum_{i=1}^n h_i^k$
\STATE \textbf{on} server
\STATE \quad $x^k = \theta_1 z^k + \theta_2 w_k + (1-\theta_1-\theta_2)y^k$
\STATE \quad send $x^k$ and $w^k$ to all nodes
\STATE \textbf{on} each node
\STATE \quad send sparse update $\Delta_i^k = \MC_i^k\ML_i^\phalf (\nabla f_i(x^k) - h_i^k)$
\STATE \quad send sparse update $\delta_i^k = \MC_i^k\ML_i^\phalf (\nabla f_i(w^k) - h_i^k)$
\STATE \quad update local gradient $\overbar{\Delta}_i^k = \ML_i^\half\Delta_i^k,\; g_i^k = h_i^k + \overbar{\Delta}_i^k$
\STATE \quad update local shift $\overbar{\delta}_i^k = \ML_i^\half\delta_i^k,\; h_i^{k+1} = h_i^k + \alpha\overbar{\delta}_i^k$
\STATE \textbf{on} server
\STATE \quad get sparse updates $\Delta_i^k$ and $\delta_i^k$ from each node
\STATE \quad $\overbar{\Delta}^k = \frac{1}{n}\sum_{i=1}^n \ML_i^\half\Delta_i^k,\; \overbar{\delta}^k = \frac{1}{n}\sum_{i=1}^n \ML_i^\half\delta_i^k$
\STATE \quad $g^k = \overbar{\Delta}^k + h^k = \frac{1}{n}\sum_{i=1}^n \overbar{\MC}_i^k \(\nabla f_i(x^k) - h_i^k\) + h_i^k$
\STATE \quad $h^{k+1} = h^k + \alpha\overbar{\delta}^k$
\STATE \quad $y^{k+1} = \prox_{\eta R}(x^k - \eta g^k)$
\STATE \quad $z^{k+1} = \beta z^k + (1-\beta)x^k + \frac{\gamma}{\eta}(y^{k+1}-x^k)$
\STATE \quad
$w^{k+1} =
\begin{cases}
y^k & \text{with probability}\quad q,\\
w^k & \text{with probability}\quad 1-q.
\end{cases}$
\end{algorithmic}
\caption{\sc ADIANA+}
\label{alg:aDIANA+}
\end{algorithm}

Clearly, the new method ADIANA+ enjoys the accelerated rate, which is strictly better then the one for DIANA+.

\begin{theorem}[see \ref{apx-thm:aDIANA+}]\label{thm:aDIANA+}
Let Assumptions   \ref{asm:Li-smooth-convex} and \ref{asm:mu-convex} hold and assume that each node generates its own diagonal sketch $\MC_i$ independently from others. Then, the iteration complexity of Algorithm \ref{alg:aDIANA+} guaranteeing $\E\[\|z^k-x^*\|^2\]\le\varepsilon$ is
\begin{equation}\label{ADIANA+-complexity}
\begin{cases}
\widetilde{\cO}\(\omega_{\max} + \sqrt{\omega_{\max}\frac{\widetilde{\cL}_{\max}}{\mu n}}\) & \text{if}\quad nL \le \widetilde{\cL}_{\max}\\
\widetilde{\cO}\(\omega_{\max} + \sqrt{\frac{L}{\mu}} + \sqrt{\omega_{\max}\sqrt{\frac{\widetilde{\cL}_{\max}}{\mu n}}\sqrt{\frac{L}{\mu}}}\) & \text{if}\quad  nL > \widetilde{\cL}_{\max}.
\end{cases}
\end{equation}
\end{theorem}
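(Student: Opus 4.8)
\textbf{Proof proposal for Theorem~\ref{thm:aDIANA+}.}
The plan is to run the analysis of the original ADIANA method of \citet{AccCGD} essentially line by line, but with every occurrence of the scalar smoothness constant of $f$ and of the scalar variance of the compression operator replaced by their matrix-smoothness refinements already developed for DCGD+ and DIANA+. Since acceleration is not captured by the unified framework of \citet{sigma_k}, this proceeds through a direct Lyapunov (potential) argument. I would start with the two estimator facts. \emph{Unbiasedness:} by induction, the initialization $h_i^0\in\range(\ML_i)$ and the linear rule $h_i^{k+1}=h_i^k+\alpha\overbar{\delta}_i^k$ with $\overbar{\delta}_i^k=\ML_i^\half\delta_i^k\in\range(\ML_i)$ keep $h_i^k\in\range(\ML_i)$ for all $k$; since $\nabla f_i(x^k),\nabla f_i(w^k)\in\range(\ML_i)$ and $\ML_i^\half\MC_i\ML_i^\phalf$ restricts in expectation to the identity on $\range(\ML_i)$, we get $\E_k[g^k]=\nabla f(x^k)$, exactly as in Theorem~\ref{thm:dist-prox-skgd-better}. \emph{Matrix variance:} following the decomposition~\eqref{moment-decomposition__meta} adapted to the shifted estimator,
$$
\E_k\!\left[\|g^k-\nabla f(x^k)\|^2\right]
= \frac{1}{n^2}\sum_{i=1}^n\big\|\nabla f_i(x^k)-h_i^k\big\|^2_{\ML_i^\phalf(\widetilde{\MP}_i\circ\ML_i)\ML_i^\phalf}
\le \frac{\widetilde{\cL}_{\max}}{n^2}\sum_{i=1}^n\big\|\nabla f_i(x^k)-h_i^k\big\|^2_{\ML_i^\dagger},
$$
using $\widetilde{\MP}_i\circ\ML_i\preceq\widetilde{\cL}_i\MI\preceq\widetilde{\cL}_{\max}\MI$. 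Writing $D_i(x)\eqdef f_i(x)-f_i(x^*)-\<\nabla f_i(x^*),x-x^*\>\ge0$, the matrix-smoothness inequality $\|\nabla f_i(x)-\nabla f_i(x^*)\|^2_{\ML_i^\dagger}\le 2D_i(x)$ (valid for convex $\ML_i$-smooth $f_i$) together with $\sigma^k\eqdef\frac{1}{n}\sum_i\|h_i^k-\nabla f_i(x^*)\|^2_{\ML_i^\dagger}$ reduces the right-hand side to $\cO\!\big(\tfrac{\widetilde{\cL}_{\max}}{n}\big)\big(\tfrac{1}{n}\sum_i D_i(x^k)+\sigma^k\big)$; the identical bound with $w^k$ in place of $x^k$ controls the noise entering the shift update $\delta_i^k$.

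Next I would assemble the three one-step inequalities of the accelerated iteration, mirroring the ADIANA proof. First, a prox-progress inequality: using $L$-smoothness of $f$, the step $y^{k+1}=\prox_{\eta R}(x^k-\eta g^k)$ and $\E_k[g^k]=\nabla f(x^k)$, bound $\E_k[(f+R)(y^{k+1})-(f+R)(x^*)]$ above by a linear model of $f$ at $x^k$ plus $\tfrac{\eta}{2}\E_k\|g^k-\nabla f(x^k)\|^2$ minus $\tfrac{1}{2\eta}\E_k\|y^{k+1}-x^k\|^2$. Second, a coupling inequality for $z^{k+1}=\beta z^k+(1-\beta)x^k+\tfrac{\gamma}{\eta}(y^{k+1}-x^k)$: expanding $\|z^{k+1}-x^*\|^2$ and invoking $\mu$-strong convexity produces $\E_k\|z^{k+1}-x^*\|^2\le(1-\beta)\|z^k-x^*\|^2$, a negative multiple of $\<\nabla f(x^k),x^k-x^*\>$, and remainders controlled by the prox term. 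Third, the checkpoint and shift recursions: $\E_k[(f+R)(w^{k+1})-(f+R)(x^*)]=(1-q)\big((f+R)(w^k)-(f+R)(x^*)\big)+q\big((f+R)(y^k)-(f+R)(x^*)\big)$, and, expanding $\E_k\|h_i^{k+1}-\nabla f_i(x^*)\|^2_{\ML_i^\dagger}$ with the key identity $\E_k[\overbar{\MC}_i^\top\ML_i^\dagger\overbar{\MC}_i]\preceq(\omega_i+1)\ML_i^\dagger$ (the one driving the DIANA+ proof), a contraction $\E_k[\sigma^{k+1}]\le(1-\alpha)\sigma^k+\cO(\alpha)\big(\tfrac{1}{n}\sum_i D_i(w^k)\big)+\cO(\alpha^2\omega_{\max})(\cdots)$. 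The crucial structural point, identical to Theorem~\ref{thm:DIANA+}, is that $\omega_{\max}$ enters only through the checkpoint probability $q$ and the shift step $\alpha$, while $\widetilde{\cL}_{\max}$ enters only through the estimator variance.

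I would then combine the four quantities into a Lyapunov function
$$
\Psi^k \eqdef \|z^k-x^*\|^2 + c_1\big((f+R)(y^k)-(f+R)(x^*)\big) + c_2\,\frac{\widetilde{\cL}_{\max}}{\mu n}\,\sigma^k + c_3\big((f+R)(w^k)-(f+R)(x^*)\big),
$$
and choose the weights $c_1,c_2,c_3$ and the parameters $\gamma,\alpha,\beta,\eta,\theta_1,\theta_2,q$ so that: the inner products $\<\nabla f(x^k),x^k-x^*\>$ cancel between the coupling and prox inequalities; all Bregman terms $D_i(x^k)$, $D_i(w^k)$ and $(f+R)(y^k)-(f+R)(x^*)$ recombine with non-positive coefficients; and the geometric decay factors of the four blocks are balanced. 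Taking (as in ADIANA, adapted to our constants) $q,\alpha\asymp 1/\omega_{\max}$, $\theta_2$ of the order of the effective compression fraction $\tfrac{q\,\widetilde{\cL}_{\max}}{nL+\widetilde{\cL}_{\max}}$, $\eta\asymp\min\{1/L,\ n/(\theta_2\widetilde{\cL}_{\max})\}$, $\theta_1$ as large as the accelerated balance allows, and $\beta=1-\gamma\mu$ with $\gamma\asymp\theta_1\eta/\theta_2$, yields $\E_k[\Psi^{k+1}]\le(1-\rho)\Psi^k$ with $\rho\asymp\min\{\theta_1,\ 1/\omega_{\max}\}$. Since $\|z^k-x^*\|^2\le\Psi^k$, unrolling gives $\E\|z^k-x^*\|^2\le(1-\rho)^k\Psi^0$, hence iteration complexity $\widetilde{\cO}(1/\rho)=\widetilde{\cO}(\omega_{\max}+1/\theta_1)$. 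Substituting the optimized $\theta_1$ and splitting on whether the compression-induced term $\widetilde{\cL}_{\max}/n$ dominates $L$ reproduces exactly the two branches of \eqref{ADIANA+-complexity}: in the regime $nL\le\widetilde{\cL}_{\max}$ one gets $\omega_{\max}+\sqrt{\omega_{\max}\widetilde{\cL}_{\max}/(\mu n)}$, and in the regime $nL>\widetilde{\cL}_{\max}$ one gets $\omega_{\max}+\sqrt{L/\mu}+\sqrt{\omega_{\max}\sqrt{\widetilde{\cL}_{\max}/(\mu n)}\sqrt{L/\mu}}$.

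The main obstacle is the parameter-tuning and sign bookkeeping in the Lyapunov step: there are seven coupled algorithm parameters plus three potential weights, and one must simultaneously (i) make every Bregman and shift-variance term recombine with the right sign, (ii) keep the \emph{accelerated} square-root scaling rather than degenerating to the DIANA+ rate, and (iii) get this right in \emph{both} regimes of \eqref{ADIANA+-complexity}. This is precisely the most delicate part of the original ADIANA argument, and here every scalar constant must be threaded through consistently as $\widetilde{\cL}_{\max}$ — and, inside matrix norms, as $\widetilde{\MP}_i\circ\ML_i$ — without loosening the constants on which the final rate depends. A secondary, purely technical nuisance is keeping the pseudoinverse facts — $\ML_i^\half\ML_i^\phalf$ is the orthogonal projector onto $\range(\ML_i)$, and $\nabla f_i(\cdot)-h_i^k\in\range(\ML_i)$ — visibly in force at every cancellation when $\ML_i$ is singular, so that none of the above inequalities are vacuous.
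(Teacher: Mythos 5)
Your high-level architecture (Lyapunov potential combining $\|z^k-x^*\|^2$, functional gaps at $y^k$ and $w^k$, and a shift-variance term, then parameter balancing to get the two regimes) matches the paper, but your choice of the shift-variance quantity is wrong in a way the paper explicitly warns against, and it breaks the cancellation the acceleration depends on. You work with $\sigma^k \eqdef \frac{1}{n}\sum_i\|h_i^k-\nabla f_i(x^*)\|^2_{\ML_i^\dagger}$, decomposing $\|\nabla f_i(x^k)-h_i^k\|^2_{\ML_i^\dagger}$ into $\|\nabla f_i(x^k)-\nabla f_i(x^*)\|^2_{\ML_i^\dagger}$ plus $\sigma^k$ and bounding the former by $2D_{f_i}(x^k,x^*)$. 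The difficulty is that the prox-progress inequality in the accelerated step (the analogue of Lemma~\ref{lem:2-adiana}) does not produce a negative multiple of $D_f(x^k,x^*)$ to absorb this; what it produces are negative multiples of $\frac{1}{n}\sum_i\|\nabla f_i(w^k)-\nabla f_i(x^k)\|^2_{\ML_i^\dagger}$ and $\frac{1}{n}\sum_i\|\nabla f_i(y^k)-\nabla f_i(x^k)\|^2_{\ML_i^\dagger}$. Converting $D_f(x^k,x^*)$ into terms controllable by the Lyapunov blocks via $x^k=\theta_1 z^k+\theta_2 w^k+(1-\theta_1-\theta_2)y^k$ and convexity costs a factor of $1/\theta_1$, which destroys exactly the square-root scaling you say you want to preserve. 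This is why the paper's proof replaces $\sigma^k$ with the moving-reference quantity $H^k\eqdef\frac{1}{n}\sum_i\|\nabla f_i(w^k)-h_i^k\|^2_{\ML_i^\dagger}$ (cf. the Proof technique remark after Theorem~\ref{thm:aDIANA+}): the algorithm's shift update $\delta_i^k=\MC_i^k\ML_i^\phalf(\nabla f_i(w^k)-h_i^k)$ contracts $H^k$ directly, Lemma~\ref{lem:4-adiana} expresses the estimator variance in terms of $\|\nabla f_i(w^k)-\nabla f_i(x^k)\|^2_{\ML_i^\dagger}$ and $H^k$, and Lemma~\ref{lem:5-adiana} produces the $(1-\alpha/2)$ contraction for $H^k$ plus exactly the $\|\nabla f_i(\cdot)-\nabla f_i(x^k)\|^2_{\ML_i^\dagger}$ terms that the negative credit from Lemma~\ref{lem:2-adiana} can absorb. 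With $\sigma^k$ there is no such alignment; you get a $D_f(x^k,x^*)$ term orphaned on the wrong side of the recursion.

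Secondary, but also consequential: your parameter choices are not the ones the paper (or ADIANA) uses. In the paper $\theta_2=\tfrac12$ is fixed (not a compression-dependent quantity), $\eta=\min\{\tfrac{1}{2L},\tfrac{n}{64\widetilde{\cL}_{\max}(2q(\omega_{\max}+1)+1)^2}\}$, and crucially $q=\min\{1,\tfrac{\max(1,\sqrt{nL/(32\widetilde{\cL}_{\max})}-1)}{2(1+\omega_{\max})}\}$ — i.e.\ $q$ itself carries the regime split between $nL\lessgtr\widetilde{\cL}_{\max}$. Your choice $q\asymp 1/\omega_{\max}$ alone does not recover the two-branch expression \eqref{ADIANA+-complexity}. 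Your treatment of the pseudoinverse/range issues and your estimator variance bound $\E_k\|g^k-\nabla f(x^k)\|^2\le\frac{\widetilde{\cL}_{\max}}{n^2}\sum_i\|\nabla f_i(x^k)-h_i^k\|^2_{\ML_i^\dagger}$ are correct and match Lemma~\ref{lem:4-adiana} before the decomposition — the gap is entirely in which reference vector you subtract from $h_i^k$ and which parameter schedule you plug in.
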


{\bf Proof technique.}
The additional difficulty that acceleration brings on top of variance reduction is the modified term $H^k \eqdef \frac{1}{n}\sum_{i=1}^n \|h_i^k - \nabla f_i(w^k)\|^2_{\ML_i^{\dagger}}$ controlling variance reduction process. The subtlety of $H^k$ in contrast to $\sigma^k$ is gradients $\nabla f_i(w^k)$ which are not fixed. Key technical part is to reduce contracting property of $H^k$ into upper bounding $\E[(\MI-\alpha\overbar{\MC}_i)^{\top}\ML_i^{\dagger}(\MI-\alpha\overbar{\MC}_i)]$ by $(1-\alpha)\ML_i^\dagger$ as quadratic forms in the subspace $\range(\ML_i)$.

\section{Improvements Over the Original Methods}\label{sec:compare}

To compare the proposed methods with originals and highlight improvement factors, we choose independent sampling for all nodes. For Algorithms \ref{alg:dskgd} and \ref{alg:DIANA+}, we optimize probabilities of the samplings based on the complexities we found.

\subsection{Parameters describing distribution of $\ML_{i}$}
Define parameters $\nu$ and $\nu_{s}$ describing the distribution of local smoothness matrices $\ML_i$ as follows
\begin{equation}\label{def:nu}
\nu \eqdef \frac{\sum_{i=1}^n L_i}{\max_{i\in[n]} L_i}, \quad
\nu_s \eqdef \max_{i\in[n]}\frac{\sum_{j=1}^d \ML^{\nicefrac{1}{s}}_{i;j}}{\max_{j\in[d]}\ML^{\nicefrac{1}{s}}_{i;j}},
\end{equation}
where $L_i = \lambda_{\max}(\ML_i)$ and $s=1$ or $s=2$. Let $L_{\max} \eqdef \max_{1\le i\le n} L_i$.
Note that parameters $\nu\in[1,n]$ and $\nu_s\in[1,d]$ describe the distribution over the nodes and coordinates respectively.
If $\ML_i$ are distributed uniformly, then $\nu=n$ and $\nu_s=d$. On the other extreme, when the distribution is extremely non-uniform, we have $\nu\ll n$ and $\nu_s\ll d$. These parameters are used to highlight the range of iteration complexities new methods can provide.

\subsection{Importance sampling for DCGD+}
Let $\tau=\E\[|S_i|\] = \sum_{j=1}^d p_{i;j}$ be the expected mini-batch size for the samplings $S_i$, where $p_{i;j} = p_{i;jj}$.
Notice that convergence rate of Algorithm \ref{alg:dskgd} depends on $\widetilde{\cL}_{\max} = \max_{1\le i\le n} \widetilde{\cL}_i$. Since each node $i\in[n]$ generates its own diagonal sketch $\MC_i$ independently from others, each node can optimize $\widetilde{\cL}_i = \lambda_{\max}(\widetilde{\MP}_i\circ\ML_i)$ independently based on local smoothness matrix $\ML_i$. In general, minimizing $\lambda_{\max}(\widetilde{\MP}_i\circ\ML_i)$ with respect to probability matrix $\widetilde{\MP}_i$ is hard. However, when each node uses an independent sampling, which means $p_{i;jl} = p_{i;j}p_{i;l}$ if $j\ne l$, then
\begin{equation}\label{cL-ind-sampling}
\lambda_{\max}(\widetilde{\MP}_i\circ\ML_i) = \max_{1\le j\le d}\(\frac{1}{p_{i;j}}-1\)\ML_{i;j},
\end{equation}
for which we can find the optimal probabilities $p_{i;j}$. To minimize the maximum term in (\ref{cL-ind-sampling}), we should have $\(\nicefrac{1}{p_{i;j}}-1\)\ML_{i;j} = \rho_i$ for some $\rho_i\ge0$. Then the solution is
\begin{equation}\label{ind-sampling-probs}
p_{i;j} = \frac{\ML_{i;j}}{\ML_{i;j}+\rho_i},
\end{equation}
where $\rho_i\ge0$ is the unique solution to $\sum_{j=1}^d\frac{\ML_{i;j}}{\ML_{i;j}+\rho_i}=\tau$.
The latter does not allow closed form solution for $\rho_i$. However, since $\rho_i$ is the root of strictly monotone and one dimensional function, it can be computed numerically using one dimensional solvers. Thus, we can efficiently compute the optimal probabilities (\ref{ind-sampling-probs}). 

\begin{proposition}[Optimality]
The independent sampling with probabilities (\ref{ind-sampling-probs}) is the optimal independent sampling for the rate (\ref{rate-dskgd}).
\end{proposition}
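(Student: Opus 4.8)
The plan is to reduce the claim to a single‑node deterministic min--max problem over marginal probabilities and then solve it in closed form. Fix node $i$ and its communication budget $\tau=\E|S_i|=\sum_{j=1}^d p_{i;j}$. I would first argue that the only way node $i$'s sampling enters the guarantee~(\ref{rate-dskgd}) is through $\widetilde{\cL}_i=\lambda_{\max}(\widetilde{\MP}_i\circ\ML_i)$, and that~(\ref{rate-dskgd}) is monotonically improved by decreasing $\widetilde{\cL}_i$: a smaller $\widetilde{\cL}_i$ can only enlarge the admissible step-size range $\gamma\le(L+\tfrac{2}{n}\widetilde{\cL}_{\max})^{-1}$ (shrinking the contraction factor $(1-\gamma\mu)^k$) and can only decrease $\sigma^{*}=\tfrac{1}{n}\sum_{i}\widetilde{\cL}_i\|\nabla f_i(x^*)\|^2_{\ML_i^{\dagger}}$ (shrinking the neighborhood $\tfrac{2\gamma\sigma^{*}}{\mu n}$), with node $i$ affecting only $\widetilde{\cL}_i$ and independently of the other nodes. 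Hence ``optimal for the rate'' means: among all independent samplings $S_i$ with $\E|S_i|=\tau$, minimize $\widetilde{\cL}_i$. By~(\ref{cL-ind-sampling}), for an independent sampling $\widetilde{\MP}_i\circ\ML_i$ is diagonal and $\widetilde{\cL}_i=\max_{1\le j\le d}(\tfrac{1}{p_{i;j}}-1)\ML_{i;j}$; since every $p\in(0,1]^d$ is the marginal vector of the independent sampling whose coordinates are independent Bernoulli$(p_j)$, the task becomes
\begin{equation*}
\min\Bigl\{\,\max_{1\le j\le d}\bigl(\tfrac{1}{p_j}-1\bigr)\ML_{i;j}\ :\ p\in(0,1]^d,\ \textstyle\sum_{j=1}^d p_j=\tau\,\Bigr\}.
\end{equation*}
Here we may assume $\ML_{i;j}>0$ for all $j$: since $\ML_i\succeq0$, a zero diagonal entry forces the $j$-th row of $\ML_i^{\phalf}$ to vanish, so coordinate $j$ carries no information and contributes $0$ to the maximum for any $p_{i;j}$; such coordinates are dropped, with $\tau$ at most the number of those that remain.

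Next I would prove a matching lower bound by a threshold (water-filling) argument. For any feasible $p$ with objective value $M$, each inequality $(\tfrac{1}{p_j}-1)\ML_{i;j}\le M$ rearranges to $p_j\ge\ML_{i;j}/(\ML_{i;j}+M)$; summing over $j$ gives $\tau\ge g(M)$, where $g(M)\eqdef\sum_{j=1}^d\ML_{i;j}/(\ML_{i;j}+M)$. The map $g\colon[0,\infty)\to(0,d]$ is continuous and strictly decreasing with $g(0)=d\ge\tau$ and $g(M)\to0$, so there is a unique $\rho_i\ge0$ with $g(\rho_i)=\tau$ --- exactly the $\rho_i$ appearing in~(\ref{ind-sampling-probs}) --- and then $g(M)\le\tau=g(\rho_i)$ forces $M\ge\rho_i$. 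Thus every independent sampling with $\E|S_i|=\tau$ satisfies $\widetilde{\cL}_i\ge\rho_i$. For tightness, set $p^{*}_j=\ML_{i;j}/(\ML_{i;j}+\rho_i)$: it is feasible ($p^{*}_j\in(0,1]$ and $\sum_j p^{*}_j=g(\rho_i)=\tau$) and satisfies $(\tfrac{1}{p^{*}_j}-1)\ML_{i;j}=\rho_i$ for every $j$, so its objective value is exactly $\rho_i$. This identifies~(\ref{ind-sampling-probs}) as a minimizer, and it is the unique one: any feasible $p$ with objective $\rho_i$ obeys $p_j\ge p^{*}_j$ for all $j$ while $\sum_j p_j=\sum_j p^{*}_j$, forcing $p=p^{*}$.

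The genuinely substantive step is the first one --- pinning down what ``optimal for the rate'' should mean and checking that minimizing $\widetilde{\cL}_i$ subject to the fixed communication budget is the correct surrogate, i.e.\ that no trade-off among the terms of~(\ref{rate-dskgd}) can make a sampling with a larger $\widetilde{\cL}_i$ preferable. Once the problem is cast in the displayed min--max form, the rest is routine; the only points that need a little care are the degenerate diagonal entries of $\ML_i$ and the boundary case $\tau=d$, where $\rho_i=0$ and $p^{*}\equiv1$ (full, uncompressed communication), which is consistent with there being nothing left to optimize.
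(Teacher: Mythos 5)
Your proof is correct and follows essentially the same approach as the paper: for independent sampling, $\widetilde{\cL}_i$ reduces to $\max_j(\nicefrac{1}{p_{i;j}}-1)\ML_{i;j}$, and the optimizer equalizes all the terms, yielding the closed form~(\ref{ind-sampling-probs}) with $\rho_i$ fixed by the budget constraint. The paper asserts this balancing step without justification; your threshold/water-filling lower bound (summing $p_j\ge \ML_{i;j}/(\ML_{i;j}+M)$ to force $M\ge\rho_i$), together with the uniqueness and the handling of zero diagonal entries, simply supplies the rigor the paper leaves implicit rather than constituting a different route.
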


\begin{remark}[Improvement over DCGD \citep{KFJ}]\label{rem:ibcd}
With probabilities (\ref{ind-sampling-probs}) we show in Appendix \ref{apx-rem:ibcd} that
\begin{align}\label{L-bound-01}
\begin{split}
\frac{L}{\mu} + \frac{\widetilde{\cL}_{\max}}{n\mu}
\le
    \(\frac{\nu}{n} + \frac{\nu_1}{\tau n}\) \frac{L_{\max}}{\mu}.
\end{split}
\end{align}
In the interpolation regime (i.e. $\nabla f_i(x^*)=0$ for all $i\in[n]$), the iteration complexity of DCGD is $\widetilde{\cO}(\frac{L}{\mu}+\frac{\omega L_{\max}}{n\mu})$ for general compression operator with variance parameter $\omega$. If we specialize compression to sparsification with $\tau=\nicefrac{d}{n}$ entries (which gives $\omega = \nicefrac{d}{\tau}-1=n-1$), we get $\widetilde{\cO}(\frac{L_{\max}}{\mu})$.
Notice that, in this regime, Theorem \ref{thm:dist-prox-skgd-better} also provides linear convergence with iteration complexity $\widetilde{\cO}( \frac{L}{\mu} + \frac{\widetilde{\cL}_{\max}}{n\mu})$. Based on (\ref{L-bound-01}), it is bounded by $\widetilde{\cO}( (\frac{\nu}{n} + \frac{\nu_1}{d}) \frac{L_{\max}}{\mu})$, which is always better than $\widetilde{\cO}(\frac{L_{\max}}{\mu})$ and can be as small as $\widetilde{\cO}(\frac{L_{\max}}{\min(n,d)\mu})$. Hence, for mini-batch $\tau=\nicefrac{d}{n}$, DCGD+ (Algorithm \ref{alg:dskgd}) guarantees the same $\widetilde{\cO}(\frac{L_{\max}}{\mu})$ complexity in the worst case, but could provide up to $\min(n,d)$ times speedup.
\end{remark}

\subsection{Importance sampling for DIANA+}
To find optimal probabilities for DIANA+, we minimize $\omega_{\max}+\frac{\widetilde{\cL}_{\max}}{\mu n}$ part of the complexity (\ref{DIANA+-complexity}). Definitions of $\widetilde{\cL}_{\max}$ and $\omega_{\max}$ imply that it is equivalent to minimize
\begin{equation}\label{opt-prob-diana}
\max_{1\le j\le d}\(\frac{1}{p_{i;j}}-1\)\ML'_{i;j}, \quad \ML'_{i;j} \eqdef \frac{\ML_{i;j}}{\mu n} + 1,
\end{equation}
which can be solved in the same way as (\ref{cL-ind-sampling}) yielding
\begin{equation}\label{ind-sampling-probs-diana}
p_{i;j} = \frac{\ML'_{i;j}}{\ML'_{i;j}+\rho'_i} = \frac{\ML_{i;j} + \mu n}{\ML_{i;j}+(1+\rho'_i)\mu n}.
\end{equation}
\begin{proposition}[Optimality]
The independent sampling with probabilities (\ref{ind-sampling-probs-diana}) is the optimal\footnote{In the sense that it minimizes a quantity, which is the complexity of DIANA+ up to some constant factor.} independent sampling for the complexity (\ref{DIANA+-complexity}).
\end{proposition}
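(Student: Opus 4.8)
The plan is to reduce the claim to a one-dimensional, per-node minimization and then settle it by a direct feasibility argument, mirroring the treatment of \eqref{cL-ind-sampling}--\eqref{ind-sampling-probs} for DCGD+. First I would note that in the complexity \eqref{DIANA+-complexity} the term $L/\mu$ is the smoothness constant of $f$ and does not depend on the samplings, so minimizing \eqref{DIANA+-complexity} over independent samplings with a prescribed expected mini-batch size $\tau=\E[|S_i|]$ amounts to minimizing $\omega_{\max}+\widetilde{\cL}_{\max}/(n\mu)=\max_i\bigl(\omega_i+\widetilde{\cL}_i/(n\mu)\bigr)$. Since the sketches $\MC_i$ are generated independently and both $\omega_i$ and $\widetilde{\cL}_i$ depend only on the marginals $p_{i;j}=\Pr(j\in S_i)$, the outer $\max_i$ decouples and it suffices to minimize $\omega_i+\widetilde{\cL}_i/(n\mu)$ for each $i$ subject to $\sum_{j=1}^d p_{i;j}=\tau$ and $p_{i;j}\in(0,1]$.

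Next I would rewrite the per-node objective. For a diagonal sketch one always has $\omega_i=\max_j(1/p_{i;j}-1)$, and for an independent sampling \eqref{cL-ind-sampling} gives $\widetilde{\cL}_i=\max_j(1/p_{i;j}-1)\ML_{i;j}$. Using the elementary two-sided bound $\max_j(a_j+b_j)\le\max_j a_j+\max_j b_j\le 2\max_j(a_j+b_j)$ for nonnegative $a_j,b_j$, minimizing $\omega_i+\widetilde{\cL}_i/(n\mu)$ is equivalent, up to the universal constant $2$, to minimizing the single-max surrogate \eqref{opt-prob-diana},
\begin{equation*}
\Psi_i(p)\;\eqdef\;\max_{1\le j\le d}\Bigl(\tfrac{1}{p_{i;j}}-1\Bigr)\ML'_{i;j},\qquad \ML'_{i;j}=\tfrac{\ML_{i;j}}{\mu n}+1.
\end{equation*}
Crucially $\ML'_{i;j}\ge 1>0$ for every $j$, so the construction stays well defined even when $\ML_i$ is singular or has zero diagonal entries; this is exactly the role of the shift $+1$, which encodes the contribution of $\omega_i$.

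The core step is to minimize $\Psi_i$. For any feasible $p$ set $M=\Psi_i(p)$; then $(1/p_{i;j}-1)\ML'_{i;j}\le M$ for every $j$, i.e. $p_{i;j}\ge q_j(M)\eqdef \ML'_{i;j}/(\ML'_{i;j}+M)$, and summing yields $\tau=\sum_j p_{i;j}\ge\sum_j q_j(M)$. The map $M\mapsto\sum_j q_j(M)$ is continuous and strictly decreasing on $[0,\infty)$, with value $d$ at $M=0$ and limit $0$ as $M\to\infty$; hence for $\tau\in(0,d]$ there is a unique $\rho'_i\ge 0$ with $\sum_j q_j(\rho'_i)=\tau$, and $\sum_j q_j(M)\le\tau$ forces $M\ge\rho'_i$. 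Thus $\Psi_i(p)\ge\rho'_i$ for every feasible $p$. On the other hand the sampling \eqref{ind-sampling-probs-diana}, i.e. $p_{i;j}=q_j(\rho'_i)=\ML'_{i;j}/(\ML'_{i;j}+\rho'_i)$, is feasible ($p_{i;j}\in(0,1]$ because $\rho'_i\ge 0$, and $\sum_j p_{i;j}=\tau$ by the choice of $\rho'_i$) and attains $\Psi_i=\rho'_i$, so it is optimal for $\Psi_i$ and therefore, up to the constant $2$, for $\omega_i+\widetilde{\cL}_i/(n\mu)$. Taking the maximum over $i$ finishes the argument.

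I expect the only genuinely delicate point to be the reduction from the two-term quantity $\omega_i+\widetilde{\cL}_i/(n\mu)$ to the single surrogate $\Psi_i$: this is where ``optimal'' must be understood up to a constant factor (as recorded in the footnote to the proposition), and one should make sure the edge cases---$\tau=d$ (then $\rho'_i=0$, full sampling, no compression) and coordinates with $\ML_{i;j}=0$---are covered by the normalization $\ML'_{i;j}\ge 1$. Everything else is the same one-dimensional monotonicity argument already used to derive \eqref{ind-sampling-probs}.
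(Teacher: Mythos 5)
Your proposal is correct and follows essentially the same route as the paper: drop the sampling-independent term $L/\mu$, reduce $\omega_{\max}+\widetilde{\cL}_{\max}/(n\mu)$ to the per-node single-max surrogate $\Psi_i(p)=\max_j(1/p_{i;j}-1)\ML'_{i;j}$ up to a constant factor (the paper's eq.~(\ref{eq-01}) records this as a $\Theta$-equivalence, and your explicit two-sided bound $\max_j(a_j+b_j)\le\max_j a_j+\max_j b_j\le 2\max_j(a_j+b_j)$ justifies it), and then minimize $\Psi_i$ under $\sum_j p_{i;j}=\tau$. Your monotonicity/feasibility argument for the last step is a welcome tightening of the paper's heuristic ``equalize all terms'' justification, but the approach is the same.
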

\begin{remark}[Improvement over DIANA \citep{MGTR,DIANA-VR}]\label{rem:isega}
Here we compare DIANA+ against the original DIANA method, which has iteration complexity $\widetilde{\cO}(n+\frac{L_{\max}}{\mu})$ when each node sparsifies with $\tau=\nicefrac{d}{n}$ entries. With probabilities (\ref{ind-sampling-probs-diana}) we upper bound the complexity (\ref{DIANA+-complexity}) in Appendix \ref{apx-rem:isega} as follows
\begin{align}\label{L-bound-05}
\begin{split}
\omega_{\max} + \frac{L}{\mu} + \frac{\widetilde{\cL}_{\max}}{\mu n}
\le
    \frac{2d}{\tau} + \(\frac{\nu}{n} + \frac{2\nu_1}{\tau n}\) \frac{L_{\max}}{\mu}.
\end{split}
\end{align}
Therefore, with $\tau=\nicefrac{d}{n}$, DIANA+ (Algorithm \ref{alg:DIANA+}) guarantees the same $\widetilde{\cO}(n+\frac{L_{\max}}{\mu})$ complexity in the worst case, but could provide up to $\min(n,d)$ times speedup with iteration complexity $\widetilde{\cO}(n+\frac{L_{\max}}{\min(n,d)\mu})$.
\end{remark}

\subsection{Independent sampling for ADIANA+}
Clearly, if we sparsify with uniform probabilities $p_{i;j} = \nicefrac{\tau}{d}$, then Algorithm \ref{alg:aDIANA+} recovers the rate of ADIANA. 

\begin{remark}[Improvement over ADIANA \citep{AccCGD}]\label{rem:adiana}
To show that the rate could be significantly better in some cases, consider the following choice

\begin{equation}\label{ind-sampling-probs-adiana}
p_{i;j}
= \sqrt{\frac{ \ML'_{i;j} }{\ML'_{i;j} + \rho''_i}},
\quad
\ML'_{i;j} = \frac{\ML_{i;j}}{\mu n} + 1,
\end{equation}
where $\rho''_i$ is determined uniquely from $\sum_{j=1}^d p_{i;j} = \tau$. Then, with these probabilities and for $\nicefrac{L_{\max}}{\mu} = \cO(n d^2)$, we show in Appendix \ref{apx-rem:adiana} that
$$
\frac{L}{\mu} \le \frac{\nu L_{\max}}{n\mu},
\quad
\omega_{\max} = \cO\(\frac{\nu_2 d}{\tau}\),
\quad
\frac{\cL_{\max}}{\mu n} = \cO\(\frac{\nu_2 d}{\tau} \sqrt{\frac{L_{\max}}{n\mu}}\).
$$
Furthermore, assuming both $\nu$ and $\nu_2$ are $\cO(1)$, choosing $\tau=\nicefrac{d}{n}$ we get
$$
\frac{L}{\mu} \le \cO\(\frac{L_{\max}}{n\mu}\),
\quad
\omega_{\max} = \cO\(n\),
\quad
\frac{\cL_{\max}}{\mu n} = \cO\(\sqrt{\frac{n L_{\max}}{\mu}}\).
$$
Then, the complexity (\ref{ADIANA+-complexity}) of ADIANA+ reduces to
$$
\begin{cases}
    n + n\( \frac{L_{\max}}{n\mu} \)^{\nicefrac{1}{4}} & \;\text{if}\; nL \le \widetilde{\cL}_{\max}, \\
    n + \sqrt{\frac{L_{\max}}{n\mu}} + \( n\frac{L_{\max}}{\mu} \)^{\nicefrac{3}{8}}   & \;\text{if}\; nL > \widetilde{\cL}_{\max},
\end{cases}
$$
which, compared to the complexity of ADIANA with $\omega=\cO(n)$ compression, gives $\sqrt{d}$ times improvement in the first case and $\sqrt{\min(n,d)}$ times improvement in the second case (ignoring the first summand $n$ of the complexities).
\end{remark}

\section{Experiments} \label{sec:experiments}

In this section we numerically compare the proposed matrix-smoothness-aware sparsification strategy (\ref{compression_our}) with the usual sparsification scheme.

\subsection{Experimental Setup}
We devise three different experiments on logistic regression with LibSVM data~\citep{chang2011libsvm}. In particular,  the objective is given as 

\begin{equation*}
f_{i}(x) \eqdef  \frac{1}{m_i} \sum_{j=1}^{m_i}\log \left(1+\exp\left((\MA_{im})_{j,:}x\cdot  (b_{im})_{j}\right) \right)
 +\frac{\mu}{2} \| x\|^2,
\end{equation*}
where $\MA_{im}\in \R^{d_{im}\times d}$ is the data matrix with corresponding labels $b_{im}\in \R^{d_{im}}$. In our case, we did split the randomly reshuffled datasets into equal chunks among workers in each case so that $m_i = m_j$ for all $i,j\leq n$.  The data matrix $\MA$ was normalized so that each datapoint has a norm equal to $\frac12$.  Lastly, we have chosen $\mu = 10^{-3}$ for all experiments.

For each of the datasets, we have selected a specific number of workers given by Table~\ref{tbl:datasets}.  Each of the method was run with theory supported parameters with an exception of the ADIANA+, where we have omitted several constant factors for the sake of practicality.

{\footnotesize
\begin{table}[!h]
\caption{Datasets.}
\label{tbl:datasets}
\begin{center}
\begin{tabular}{|c|c|c|c|c|}
\hline
Dataset & \# datapoints & $d$ &  $n$ & $m_i$  \\
\hline
\hline
\texttt{a1a} & 1 605    & 123 &  107  & 15\\ \hline
\texttt{mushrooms} & 8 124 & 112 & 12  & 677 \\ \hline
\texttt{phishing} & 11 055  &   68 & 11  &  1 005  \\ \hline
\texttt{madelon} & 2 000& 500& 4 & 500 \\ \hline
\texttt{duke} &44 & 7 129& 4 & 11\\ \hline
\texttt{a8a} &22 696 & 123& 8 & 2837\\ \hline
\end{tabular}
\end{center}
\end{table}
}


\subsection{Variance reduction with new sparsification and importance sampling}

We now comment on the experiment illustrated in Figure~\ref{fig:sampling}.
We examine three sparsification schemes (two variants of our strategy and the usual sparsification not aware of smoothness matrices) and their influence on convergence using six different datasets.
Considered schemes are
i) DIANA+ with importance sampling (\ref{ind-sampling-probs-diana}),
ii) DIANA+ with uniform sampling,
and
iii) DIANA with uniform sampling, i.e., uniform sparsification unaware of smoothness matrices.
In all three cases we fixed the sampling size $\tau=1$.

As expected, Figure~\ref{fig:sampling} confirms our theoretical findings.  First,  it demonstrates that our sparsification~\eqref{compression_our} always outperforms the naive/direct sparsification, sometimes by a large margin. Second, it shows the benefit of importance sampling (\ref{ind-sampling-probs-diana}) over the uniform sampling.

\begin{figure}[!h]
\centering
\begin{minipage}{0.3\textwidth}
  \centering
\includegraphics[width =  \textwidth ]{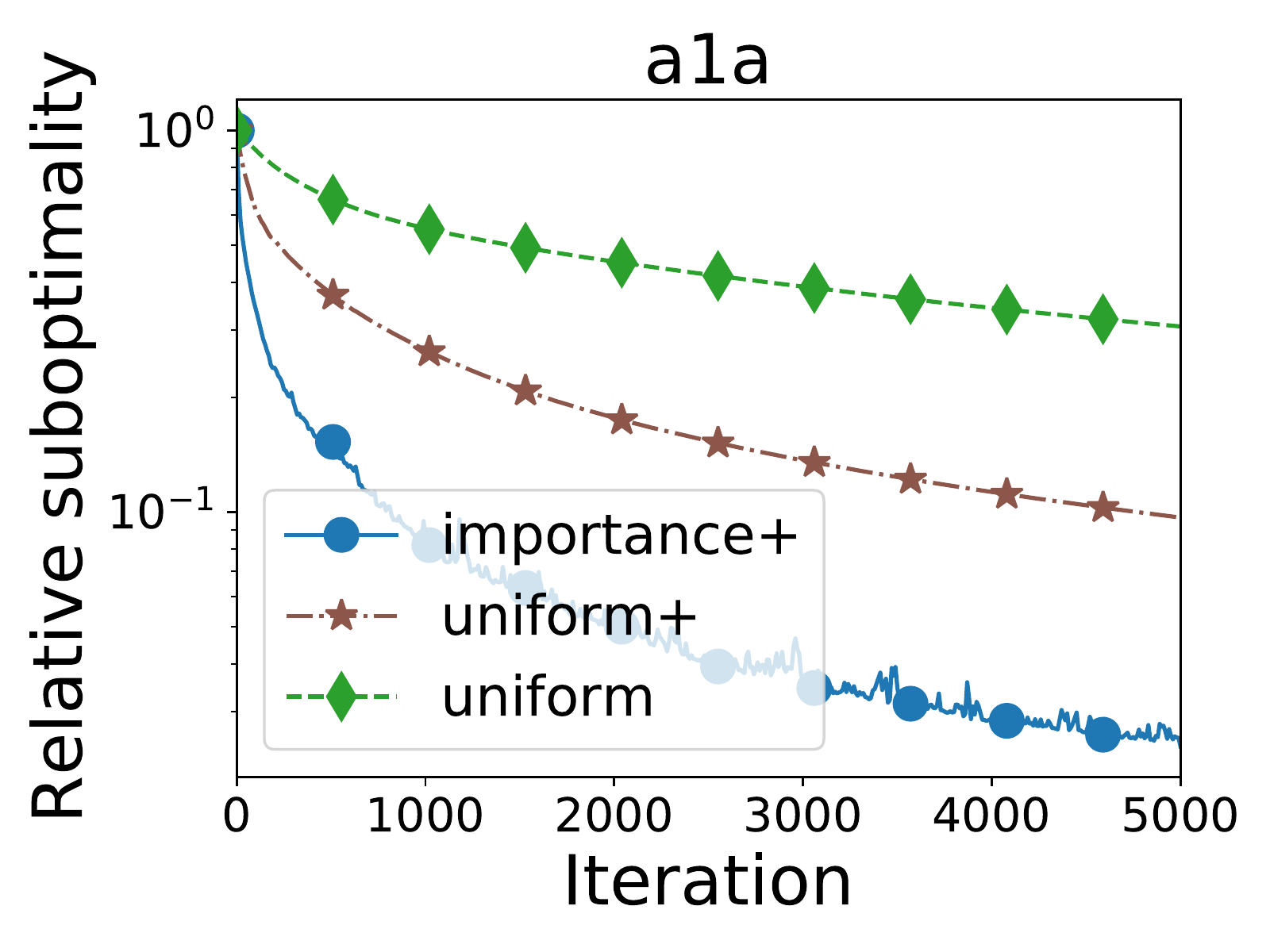}
\end{minipage}
\begin{minipage}{0.3\textwidth}
  \centering
\includegraphics[width =  \textwidth ]{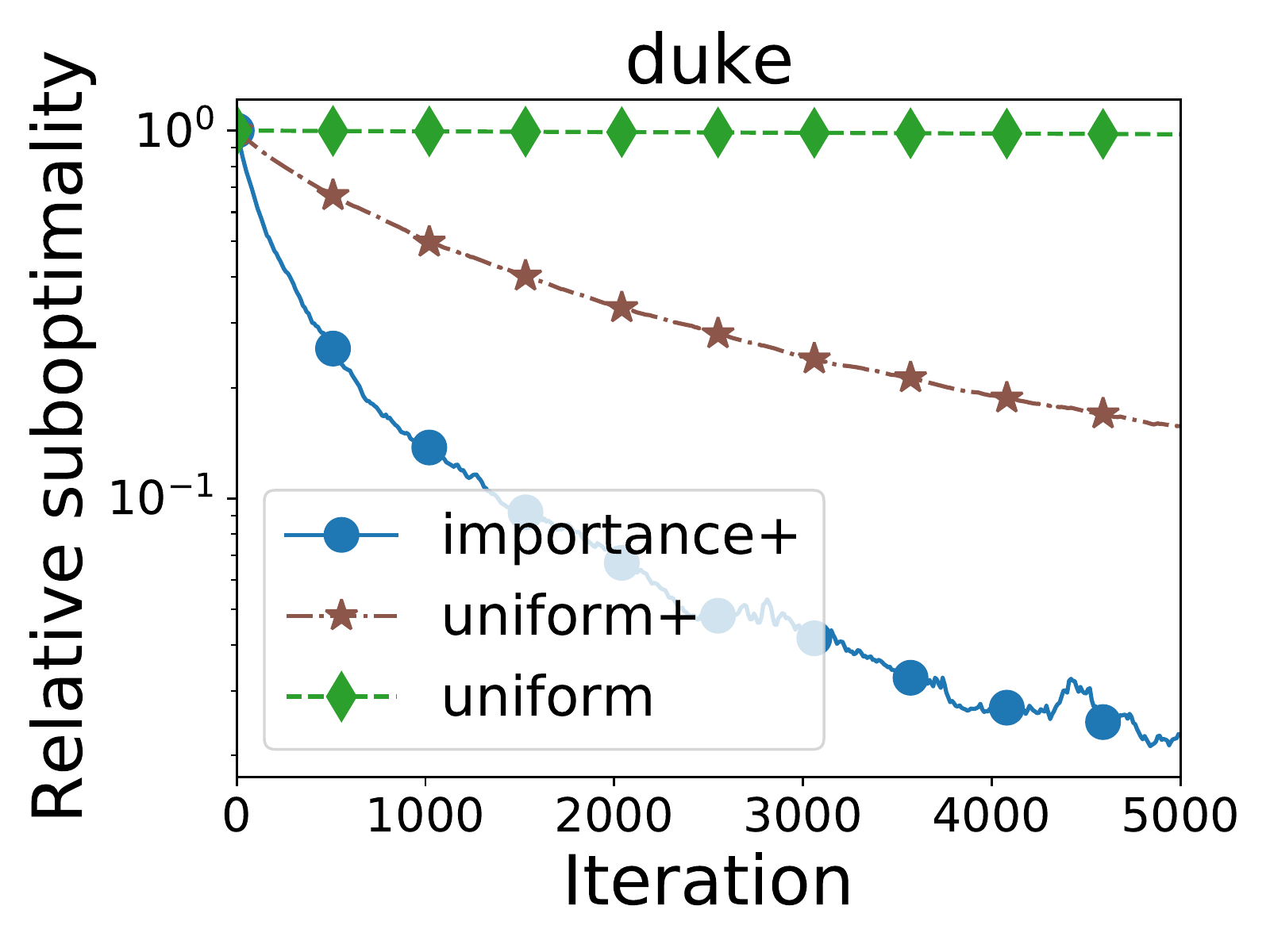}
\end{minipage}%
\begin{minipage}{0.3\textwidth}
  \centering
\includegraphics[width =  \textwidth ]{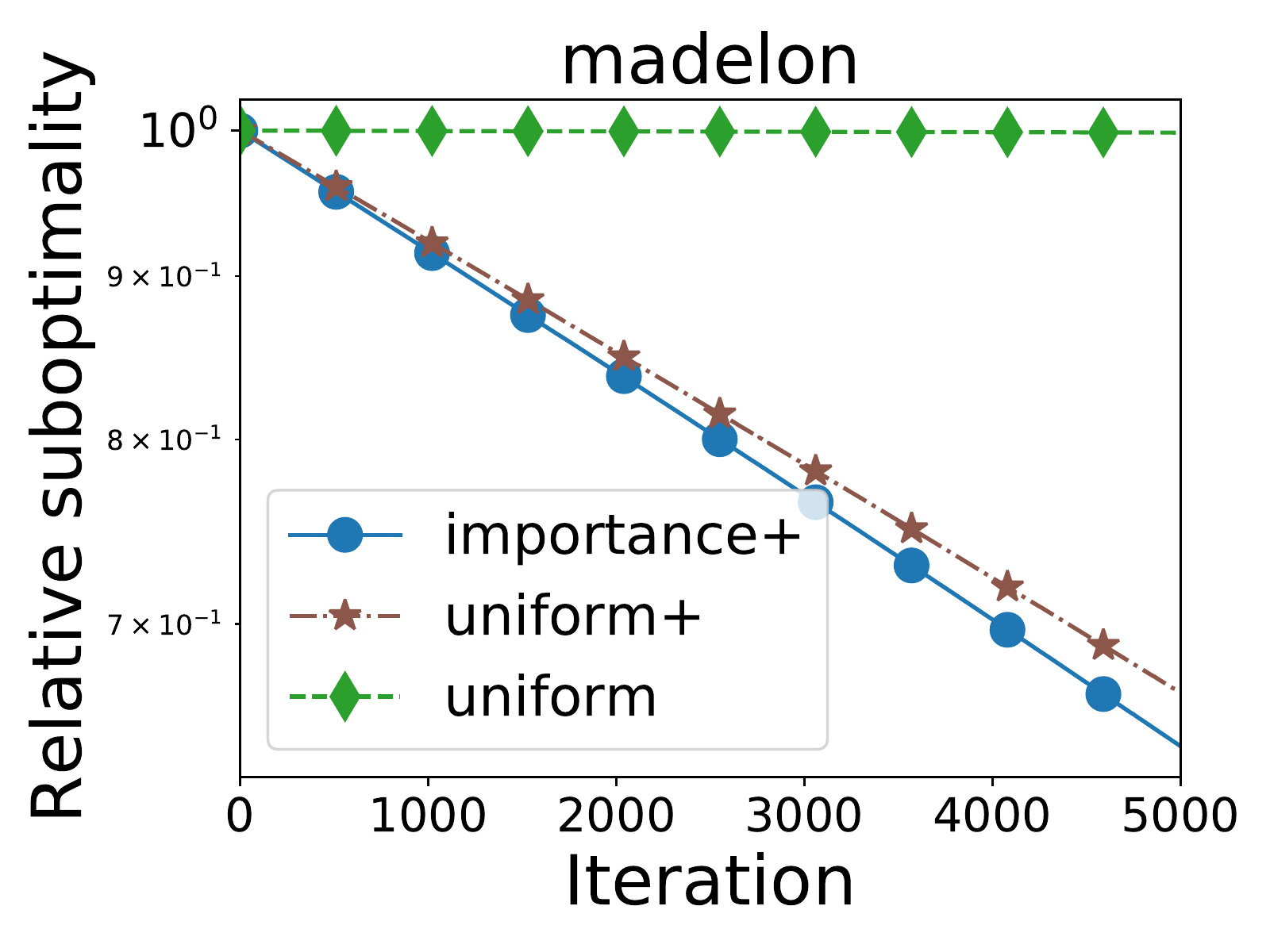}
\end{minipage}
\\
\begin{minipage}{0.3\textwidth}
  \centering
\includegraphics[width =  \textwidth ]{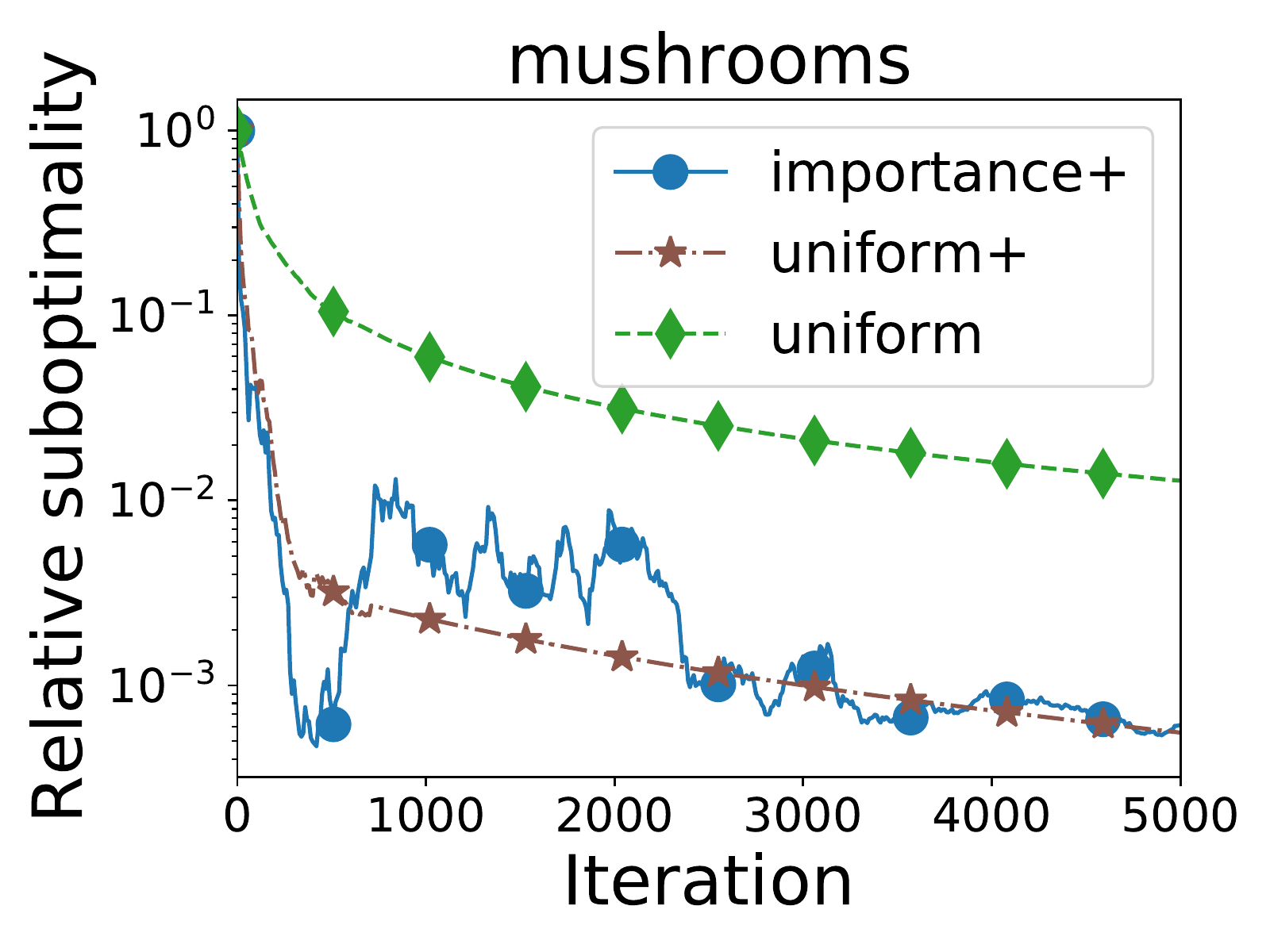}
\end{minipage}%
\begin{minipage}{0.3\textwidth}
  \centering
\includegraphics[width =  \textwidth ]{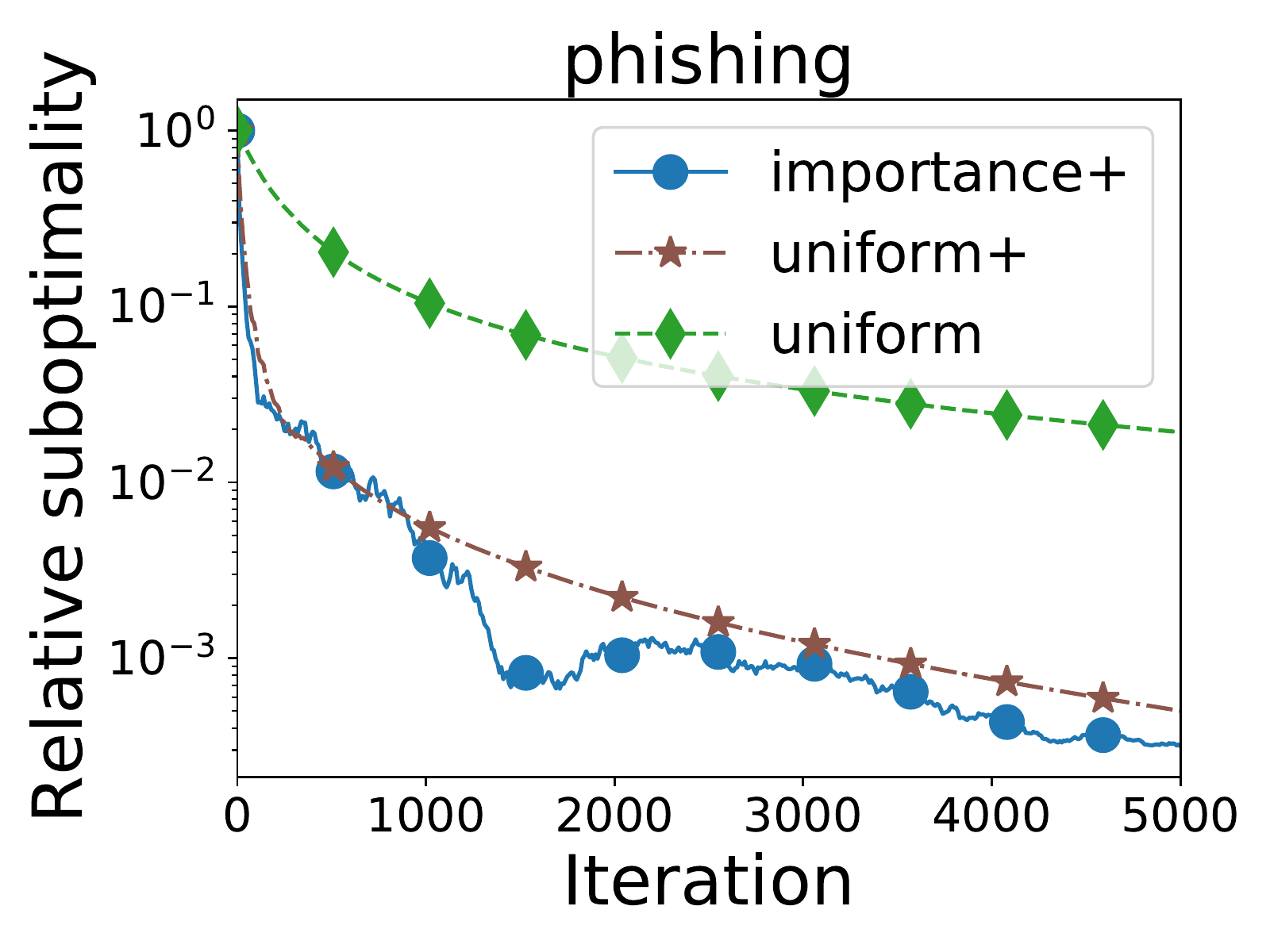}
\end{minipage}
\begin{minipage}{0.3\textwidth}
  \centering
\includegraphics[width =  \textwidth ]{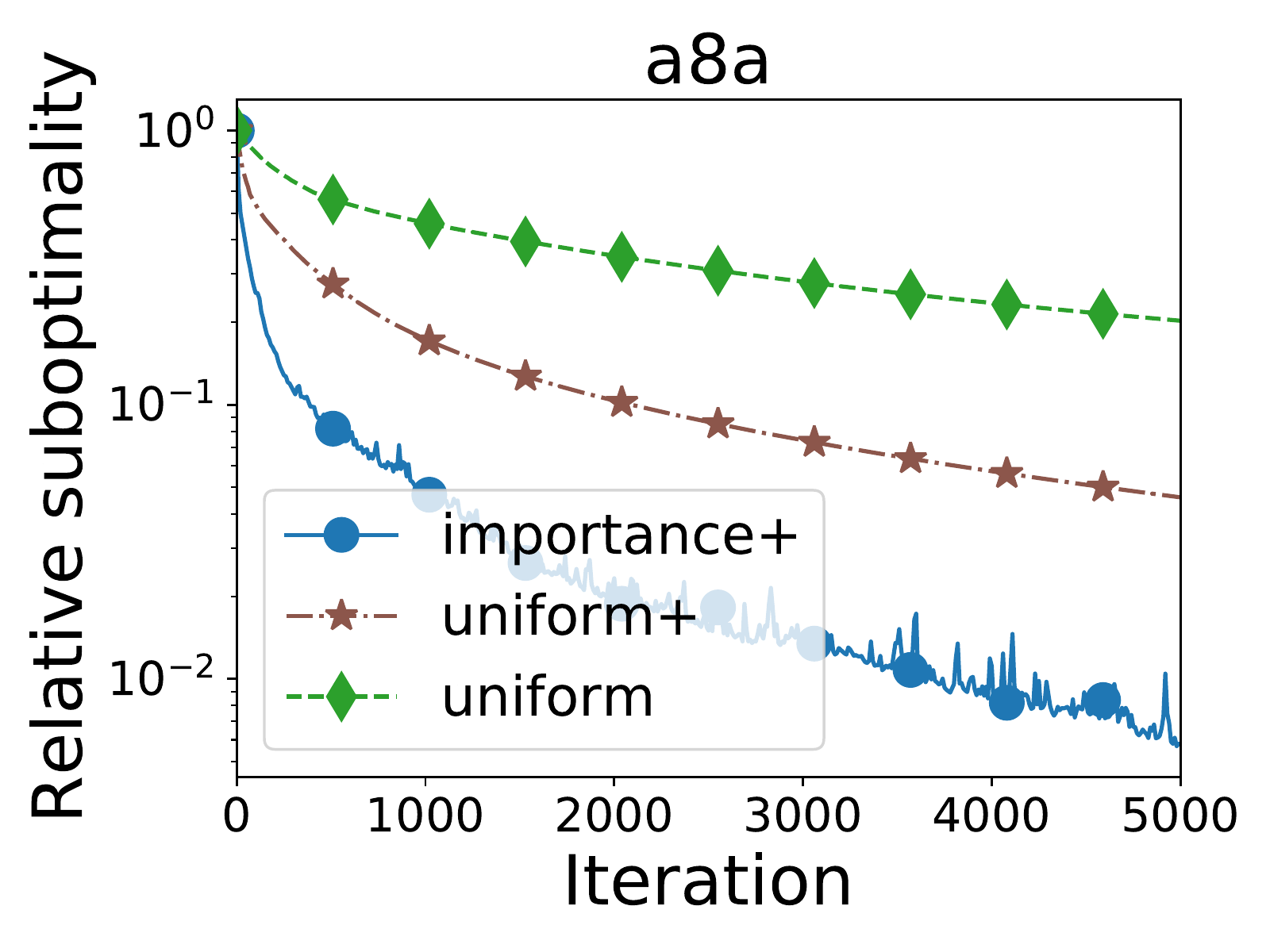}
\end{minipage}%
\caption{
Comparison of our sparsification strategy of size $\tau=1$ for DIANA+ (Algorithm~\ref{alg:DIANA+}) using  i) importance sampling with probabilities (\ref{ind-sampling-probs-diana}),  ii) uniform sampling with $p_i =(\frac1d, \frac1d,  \dots \frac1d)^\top$ and iii) DIANA~\citep{MGTR} using standard sparsification scheme with uniform sampling. All methods are run with stepsizes as dictated by  theory.}
\label{fig:sampling}
\end{figure}

\subsection{The proposed and usual sparsification techniques for the 3 distributed methods}

In the second experiment depicted in Figure~\ref{fig:acc_vr2},  we compare six different methods: well-established DCGD, DIANA, ADIANA and our methods DCGD+, DIANA+, ADIANA+, all with uniform sampling for $\tau=1$.  In order to highlight the importance of the variance reduction, in this experiment we choose the starting point to be close to the optimum.

\begin{figure}[!h]
\centering
\begin{minipage}{0.3\textwidth}
  \centering
\includegraphics[width =  \textwidth ]{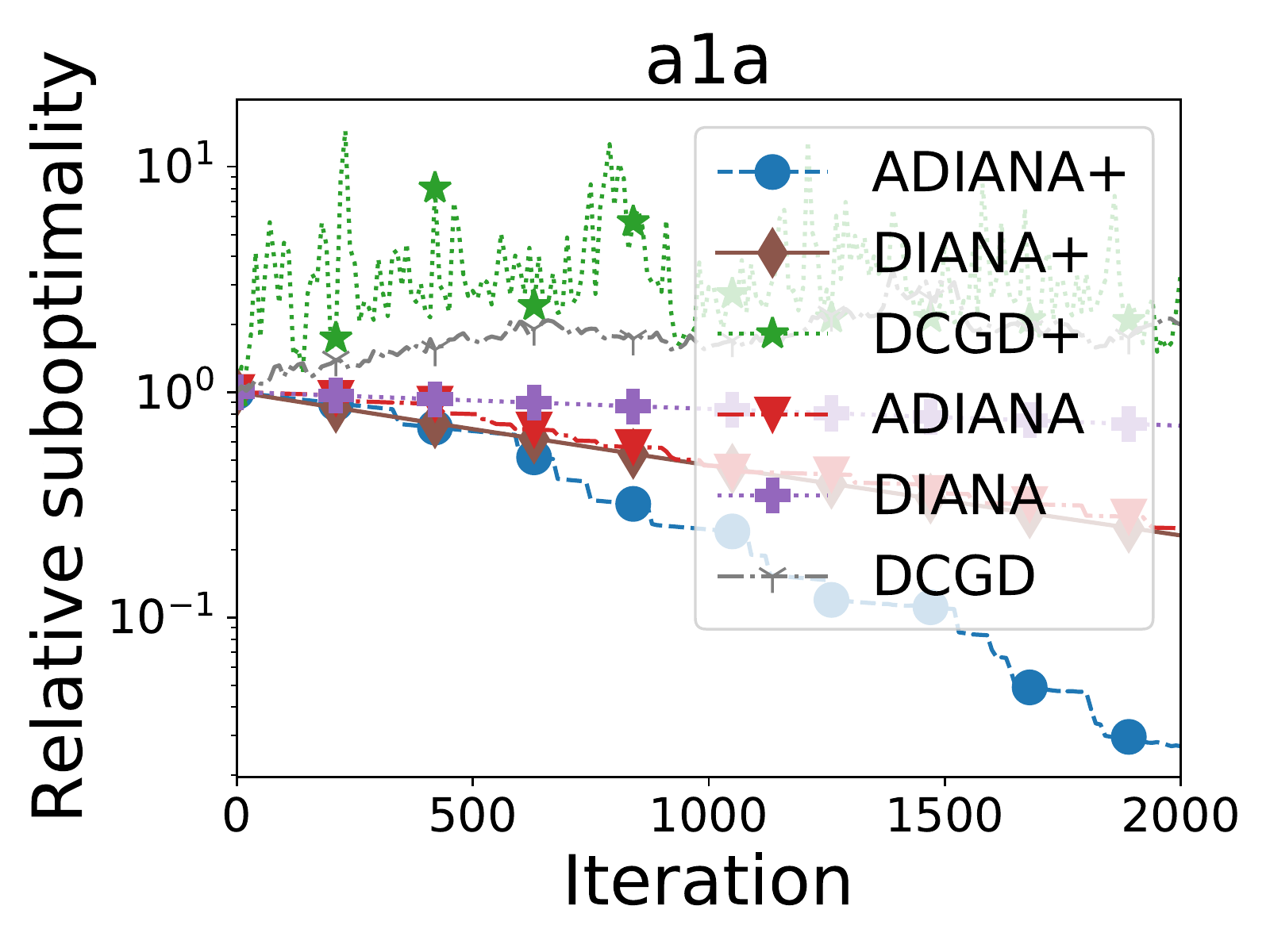}
\end{minipage}
\begin{minipage}{0.3\textwidth}
  \centering
\includegraphics[width =  \textwidth ]{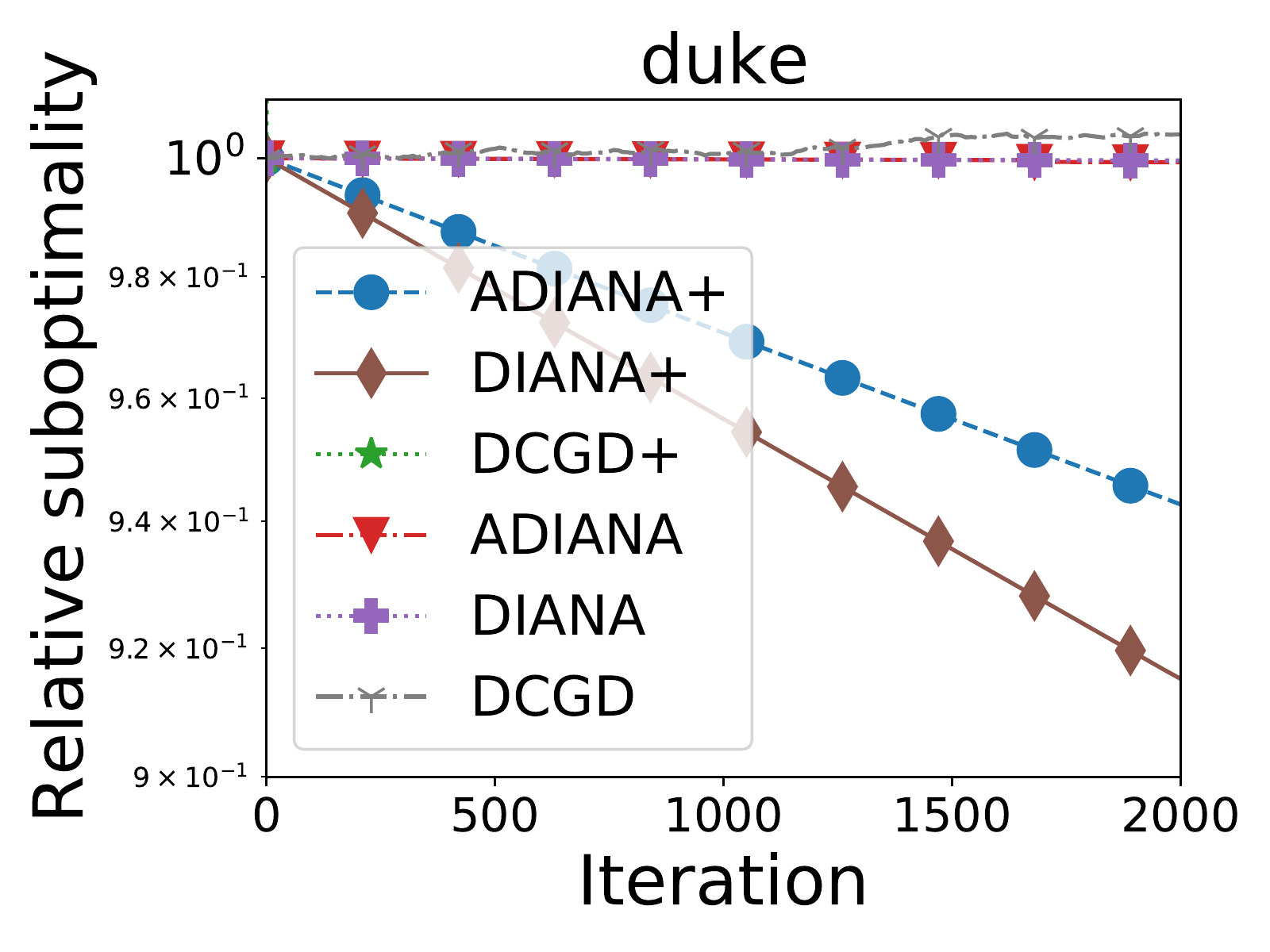}
\end{minipage}
\begin{minipage}{0.3\textwidth}
  \centering
\includegraphics[width =  \textwidth ]{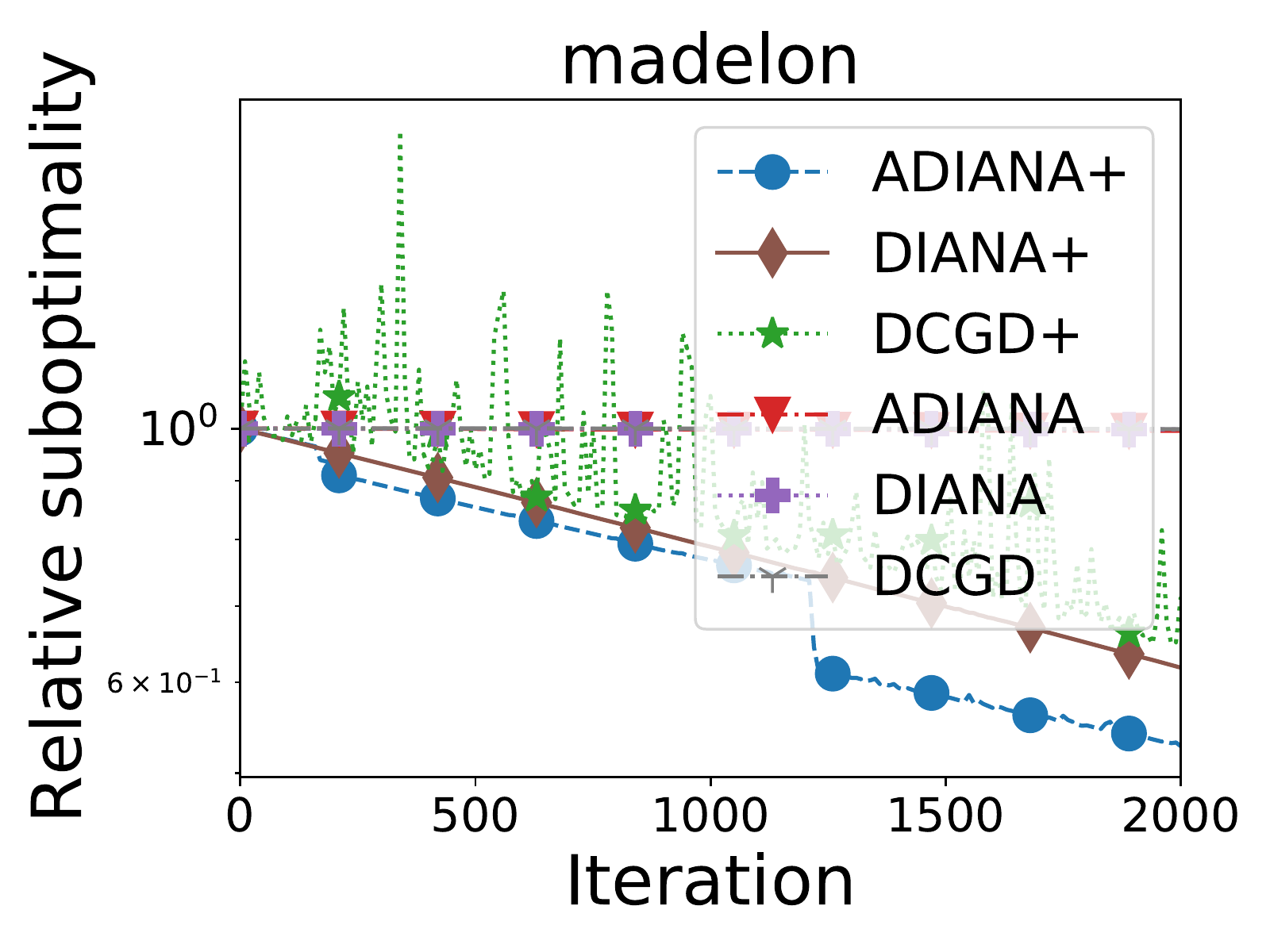}
\end{minipage}
\\
\begin{minipage}{0.3\textwidth}
  \centering
\includegraphics[width =  \textwidth ]{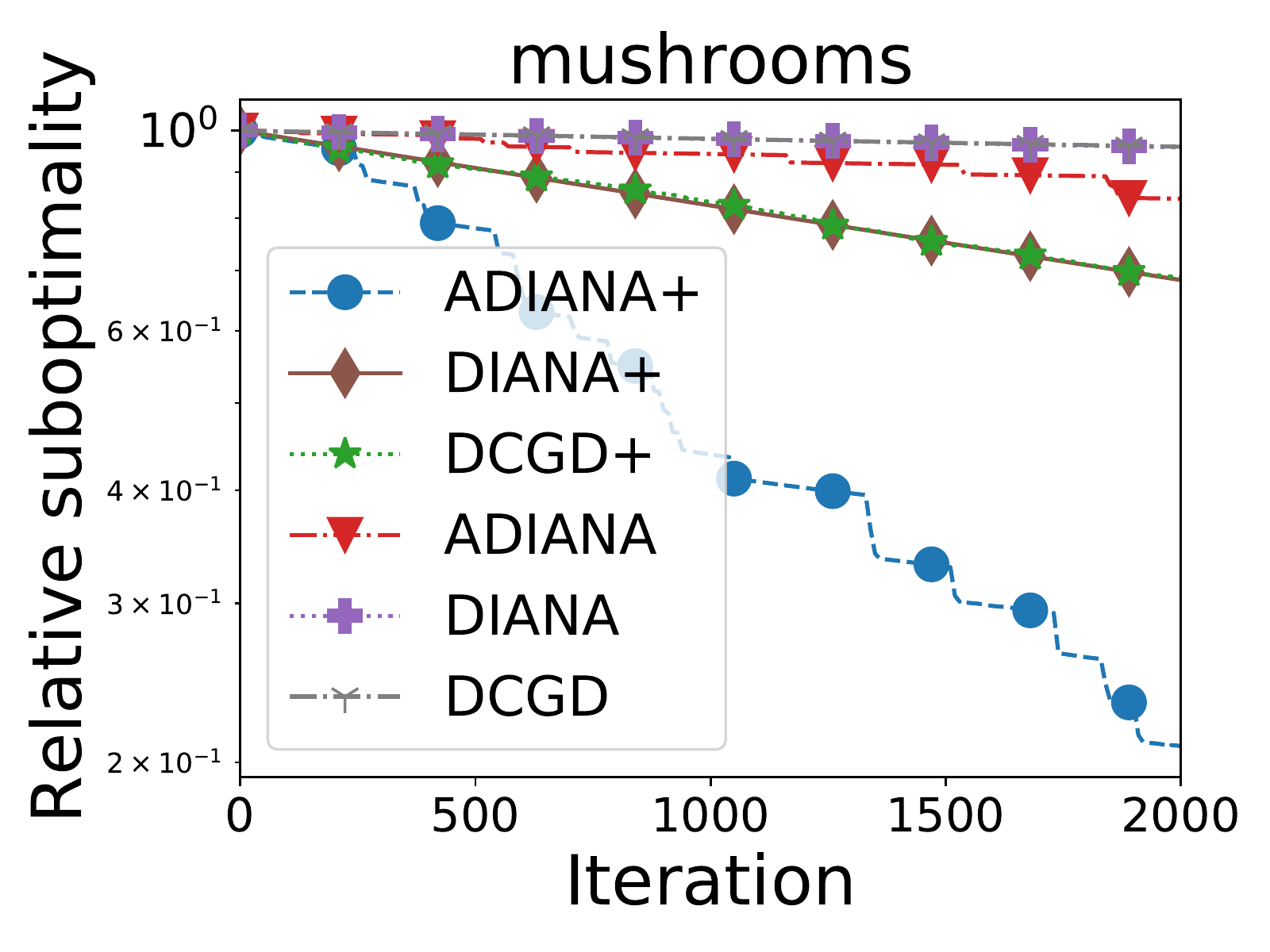}
\end{minipage}
\begin{minipage}{0.3\textwidth}
  \centering
\includegraphics[width =  \textwidth ]{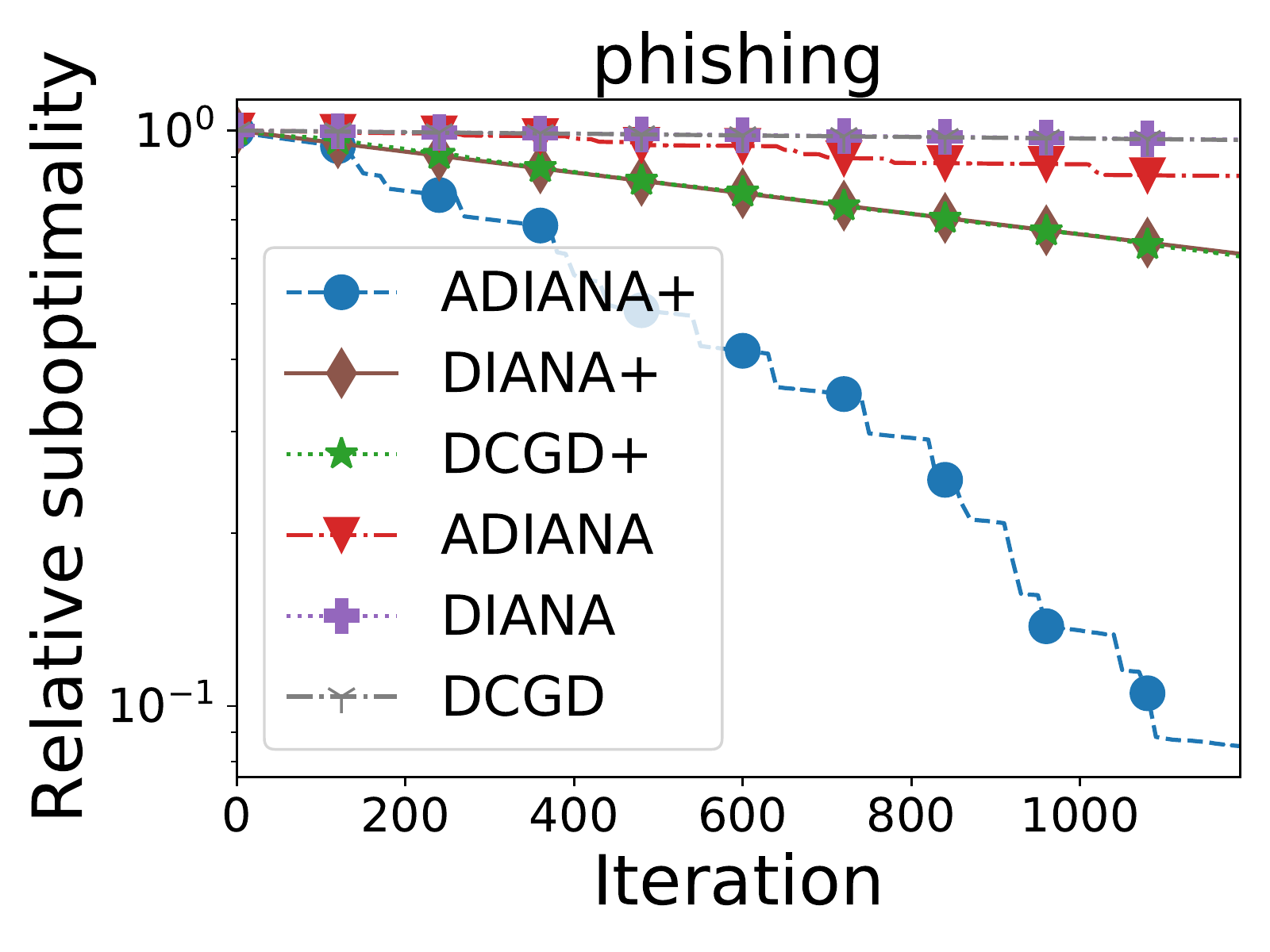}
\end{minipage}
\begin{minipage}{0.3\textwidth}
  \centering
\includegraphics[width =  \textwidth ]{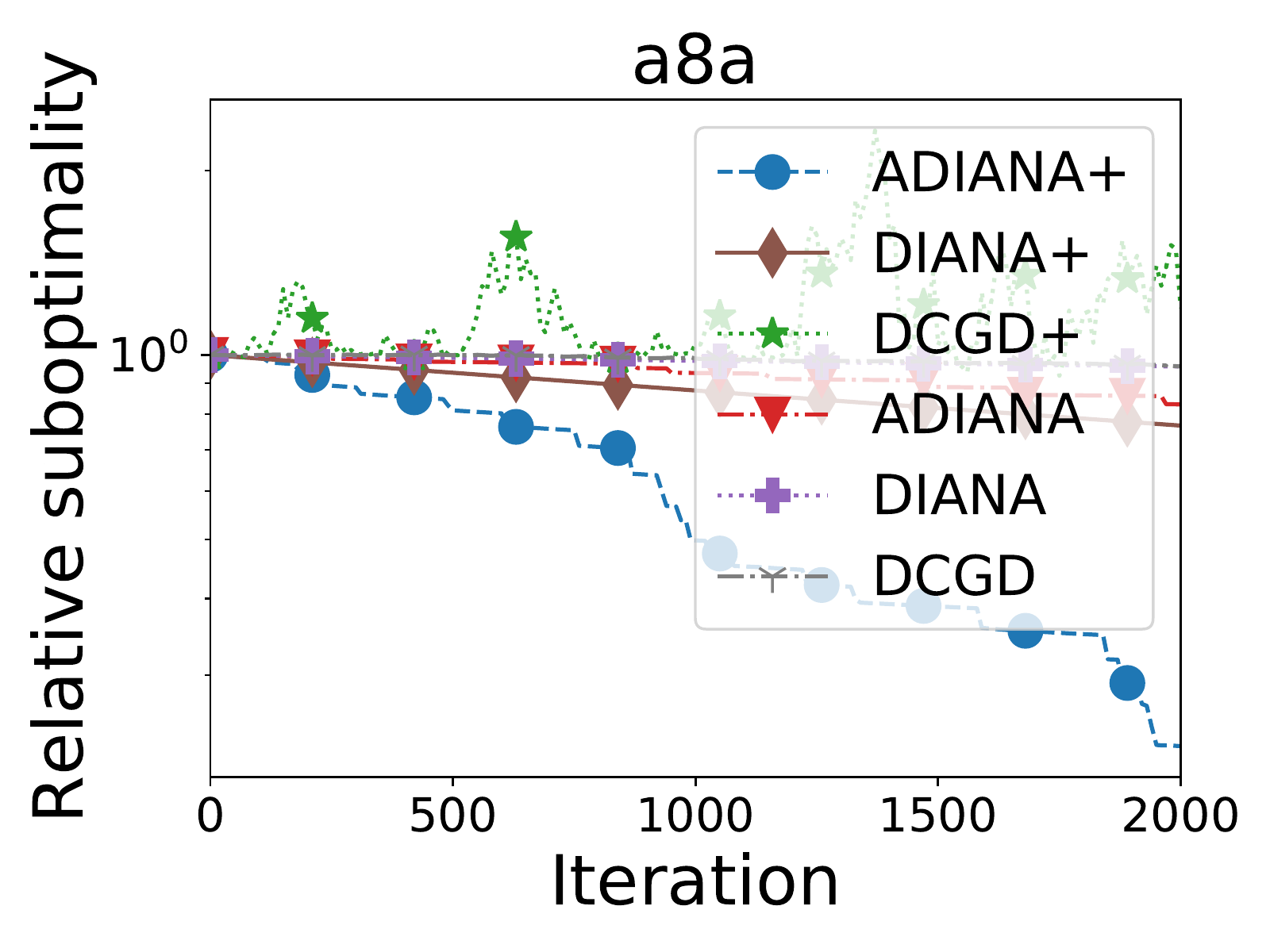}
\end{minipage}%
\caption{Comparison of the three original methods DCGD \citep{KFJ}, DIANA \citep{MGTR} and ADIANA \citep{AccCGD} with the proposed new methods DCGD+ (Alg. \ref{alg:dskgd}), DIANA+ (Alg. \ref{alg:DIANA+}) and ADIANA+ (Alg. \ref{alg:aDIANA+}). All six methods use uniform sampling with single mini-batch size $\tau=1$.}   
\label{fig:acc_vr2}
\end{figure}

Figure~\ref{fig:acc_vr2} demonstrates the following: i) methods with matrix-aware sparsification (i.e.,  DCGD+, DIANA+, ADIANA+) always outperform their baselines (i.e., DCGD, DIANA, ADIANA) ii) acceleration almost always outperforms the non-accelerated variant, often dramatically so and iii) variance reduction never hurts the convergence,  but often stabilizes the oscillation of the non-variance reduced counterpart.


\subsection{The effect of sparsification level $\tau$ on the convergence rate}

In this experiment, we study the effect of sparsification level $\tau$ on the convergence rate.  Informally speaking, our theory suggests that the sparsification does not hurt the convergence rate unless $\tau$ is smaller than some constant. The value of such constant depends on various factors such as the type of sampling and the specific smoothness structure of the objective.  

To contrast this with known results,~\citet{99KFP} show that the sparsification does not hurt ISEGA significantly (a method with sparsification unaware of smoothness matrix) as soon as $\tau n\geq d$. Addmitedly, ~\citet{99KFP} assume identical smoothness constants for both $f$ and $f_i$, so such a conclusion is slightly imprecise.  In our case,  ignoring the $\tilde{\omega}_{\max} $ factor, the rate is dominated by the sparsification factors only if $L = \cO\left( \frac{\tilde{\cL}_{\max}}{n}\right)$.

The results are presented in Fugure~\ref{fig:minibatch} (Iteration vs Residual) and Fugure~\ref{fig:minibatch2} (Communication vs Residual). As expected, we see that the sparsification only hurts the iteration complexity when $\tau$ is below certain treshold which is smaller for the uniform sampling compared to the importance sampling. Consequently, DIANA+ is capable of significantly reducing the worker->server communication at no cost in terms of the total iteration complexity.

\begin{figure}[!h]
\centering
\begin{minipage}{0.33\textwidth}
  \centering
\includegraphics[width =  \textwidth ]{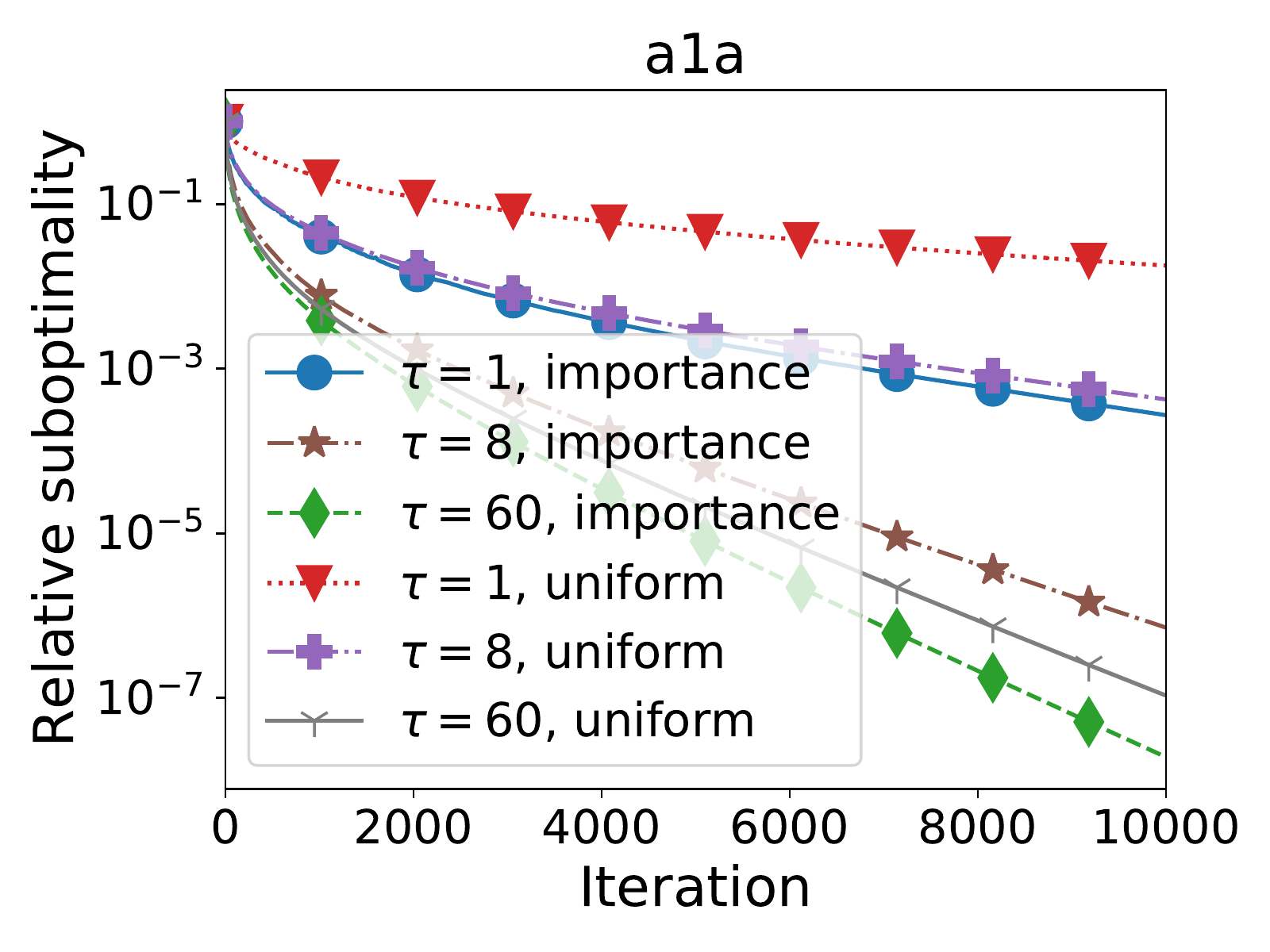}
\end{minipage}%
\begin{minipage}{0.33\textwidth}
  \centering
\includegraphics[width =  \textwidth ]{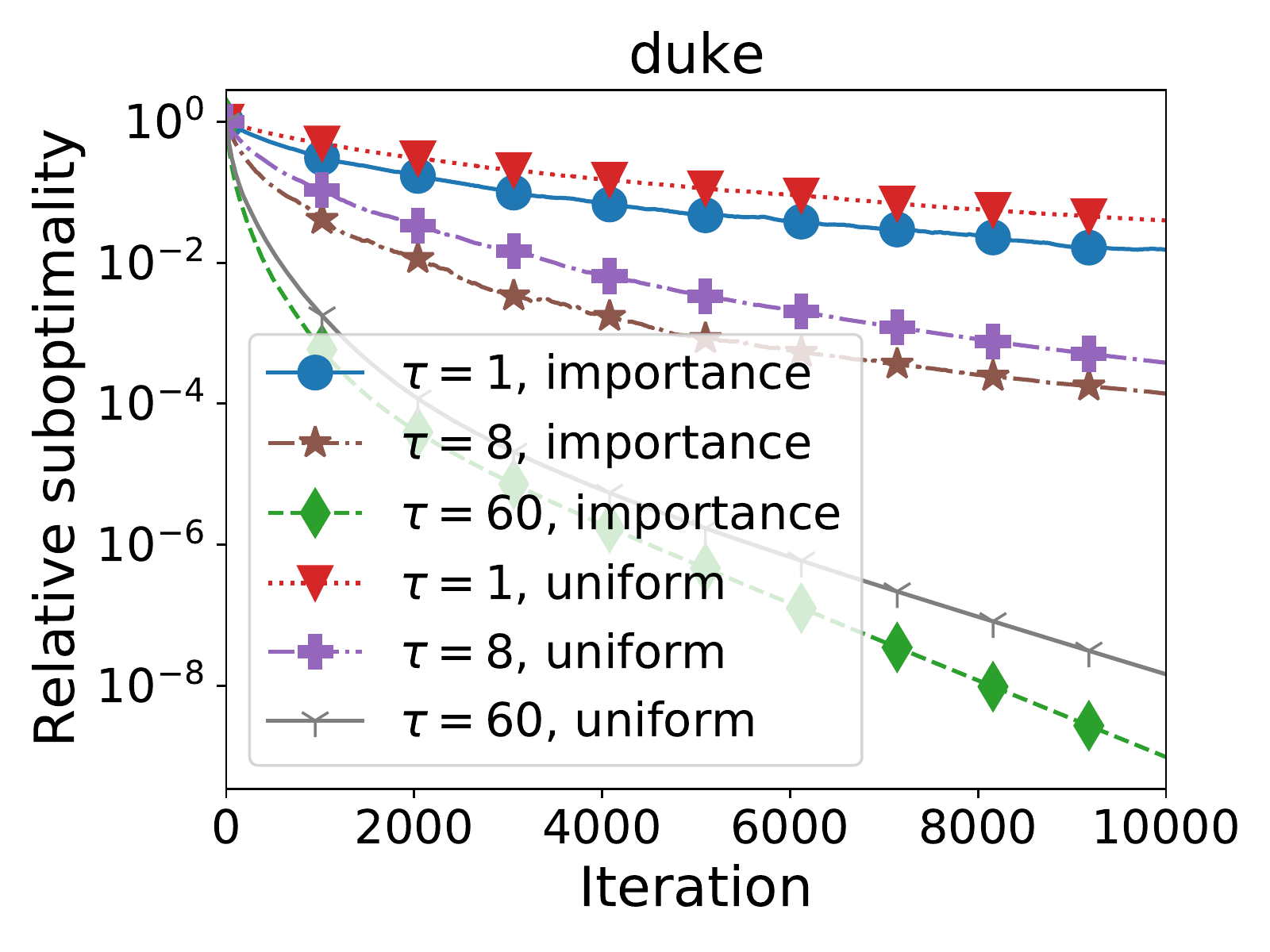}
\end{minipage}%
\begin{minipage}{0.33\textwidth}
  \centering
\includegraphics[width =  \textwidth ]{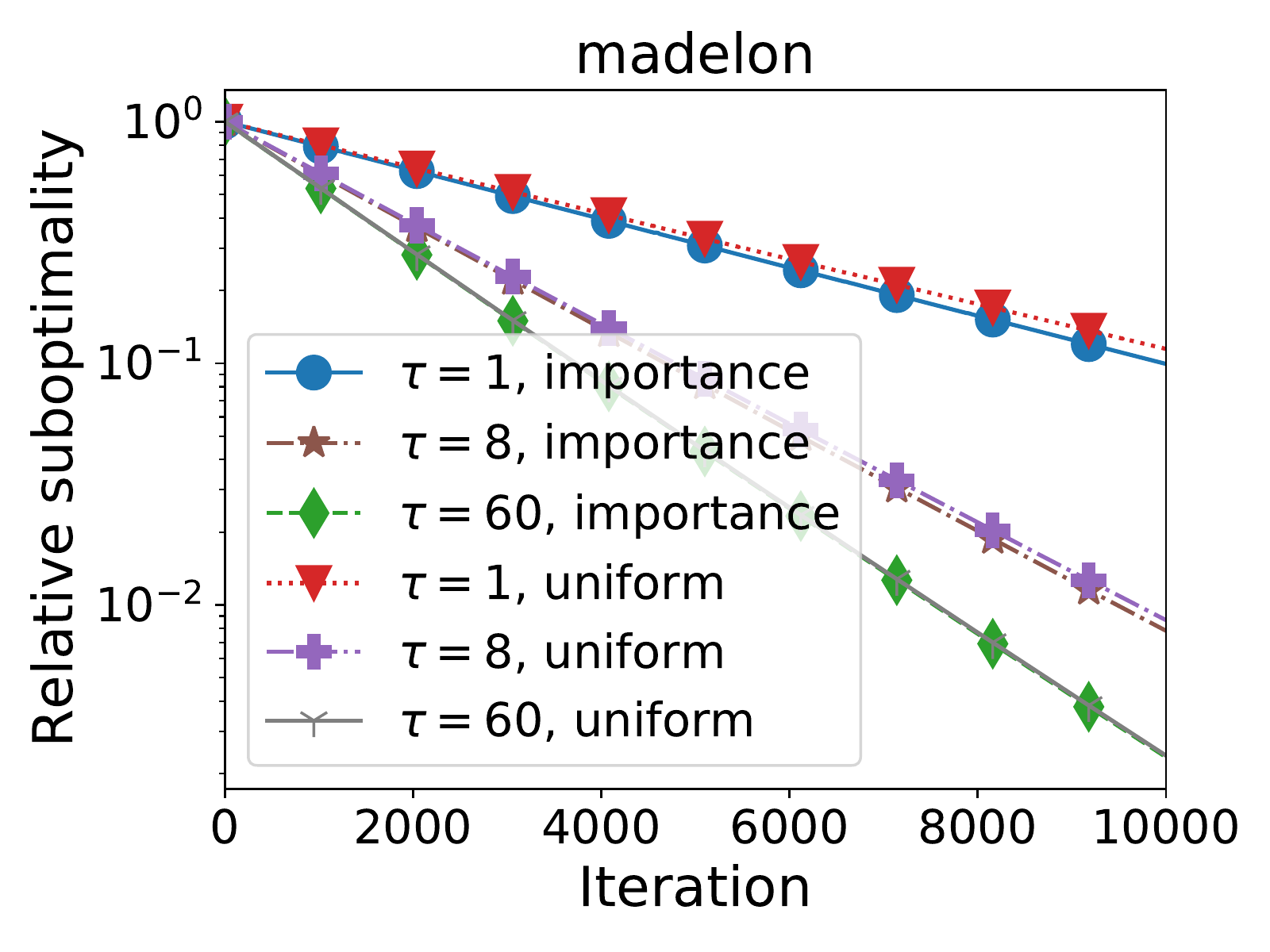}
\end{minipage}%
\\
\begin{minipage}{0.33\textwidth}
  \centering
\includegraphics[width =  \textwidth ]{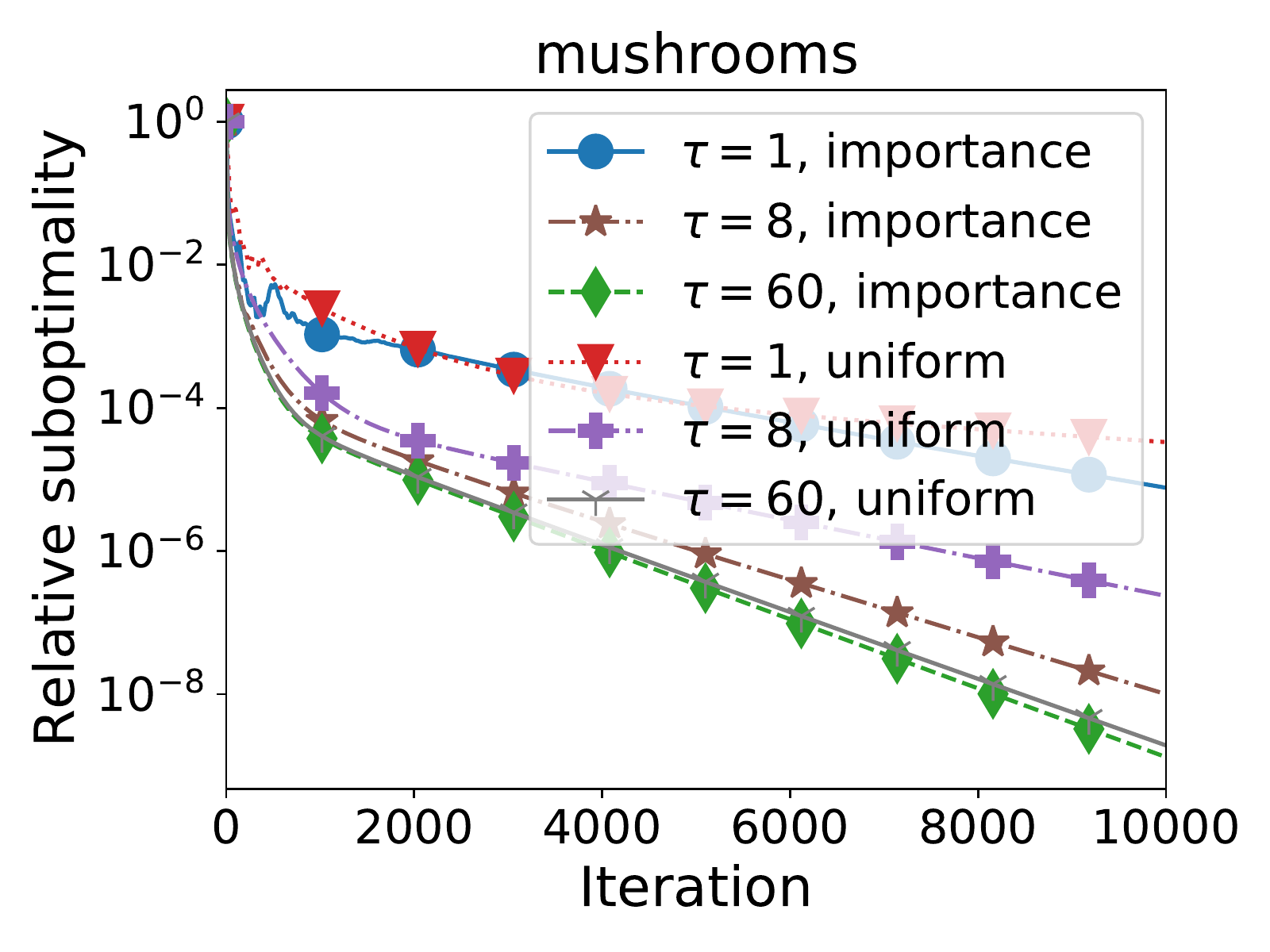}
\end{minipage}%
\begin{minipage}{0.33\textwidth}
  \centering
\includegraphics[width =  \textwidth ]{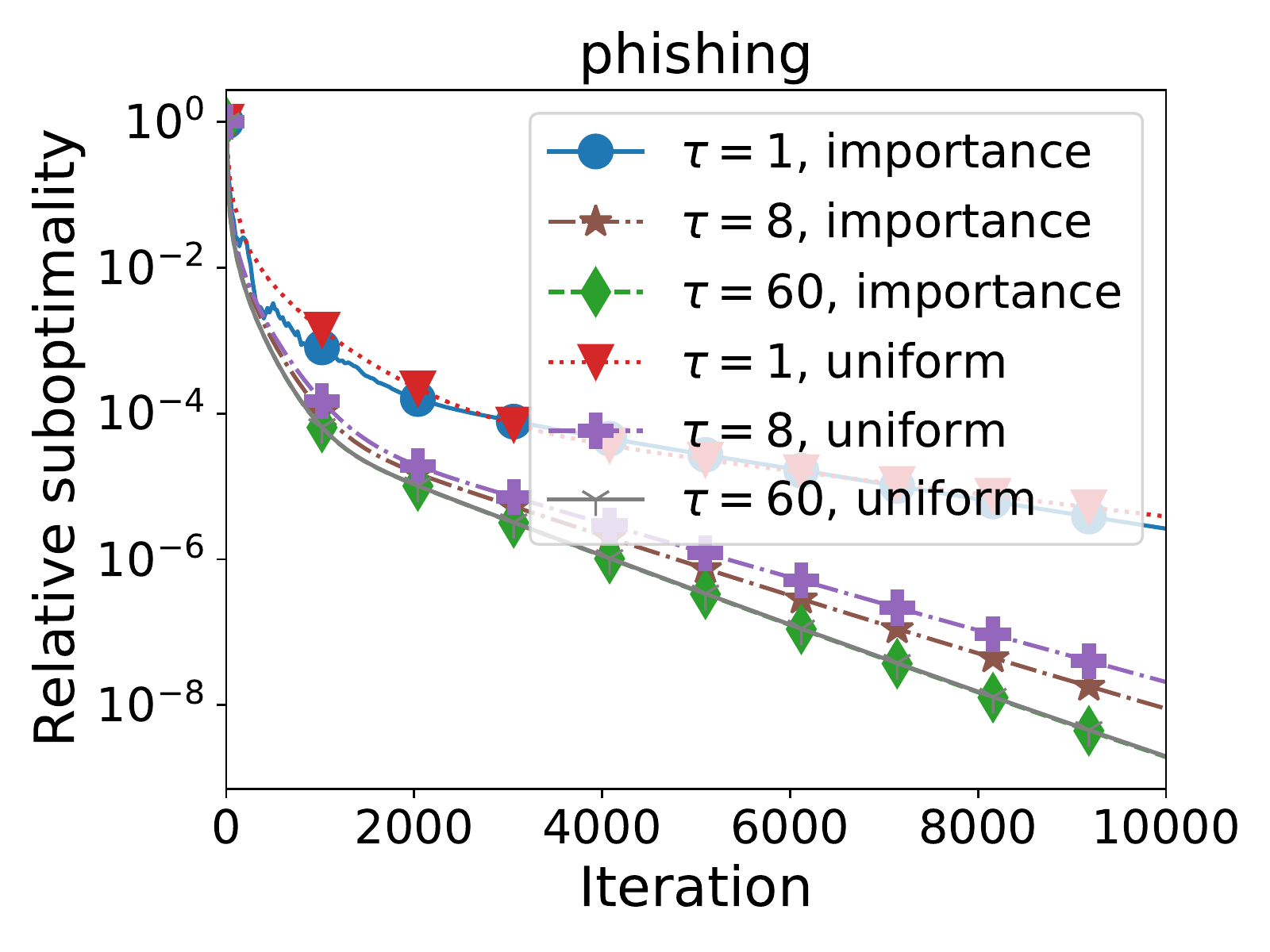}
\end{minipage}%
\begin{minipage}{0.33\textwidth}
  \centering
\includegraphics[width =  \textwidth ]{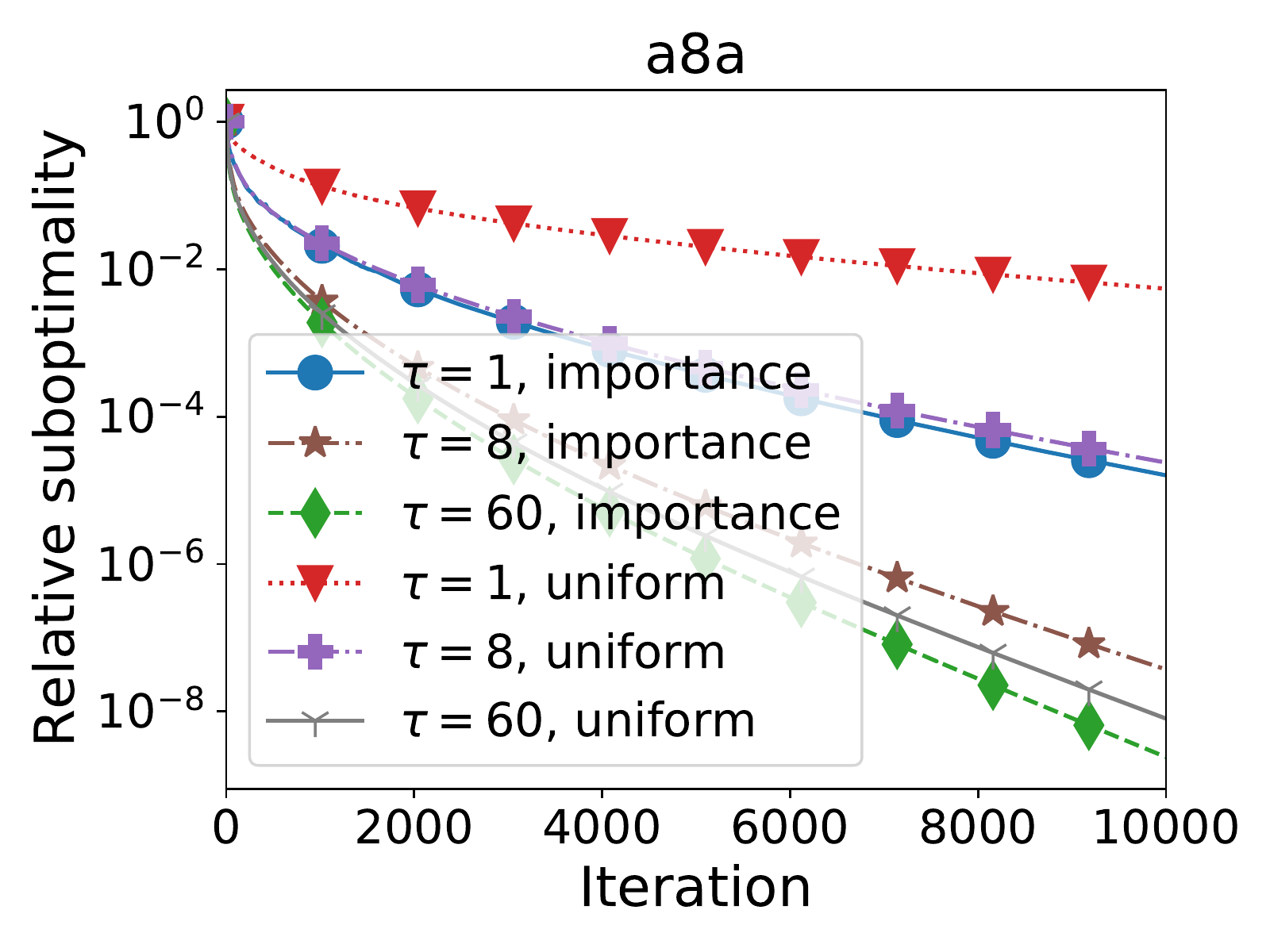}
\end{minipage}%
\caption{Effect of $\tau$ on the convergence speed of DIANA+ (Algorithm~\ref{alg:DIANA+}).}   
\label{fig:minibatch}
\end{figure}

\begin{figure}[!h]
\centering
\begin{minipage}{0.33\textwidth}
  \centering
\includegraphics[width =  \textwidth ]{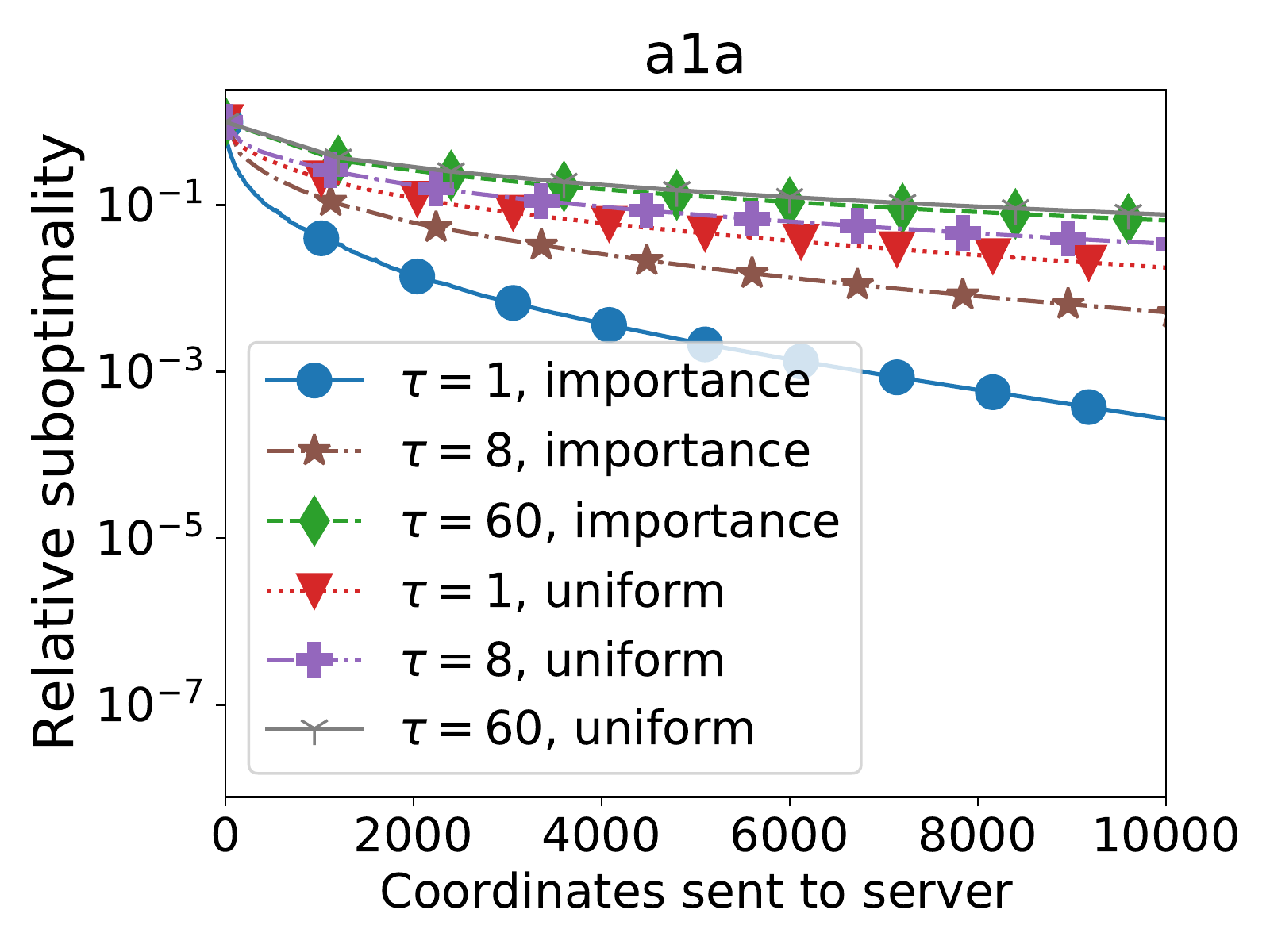}
\end{minipage}%
\begin{minipage}{0.33\textwidth}
  \centering
\includegraphics[width =  \textwidth ]{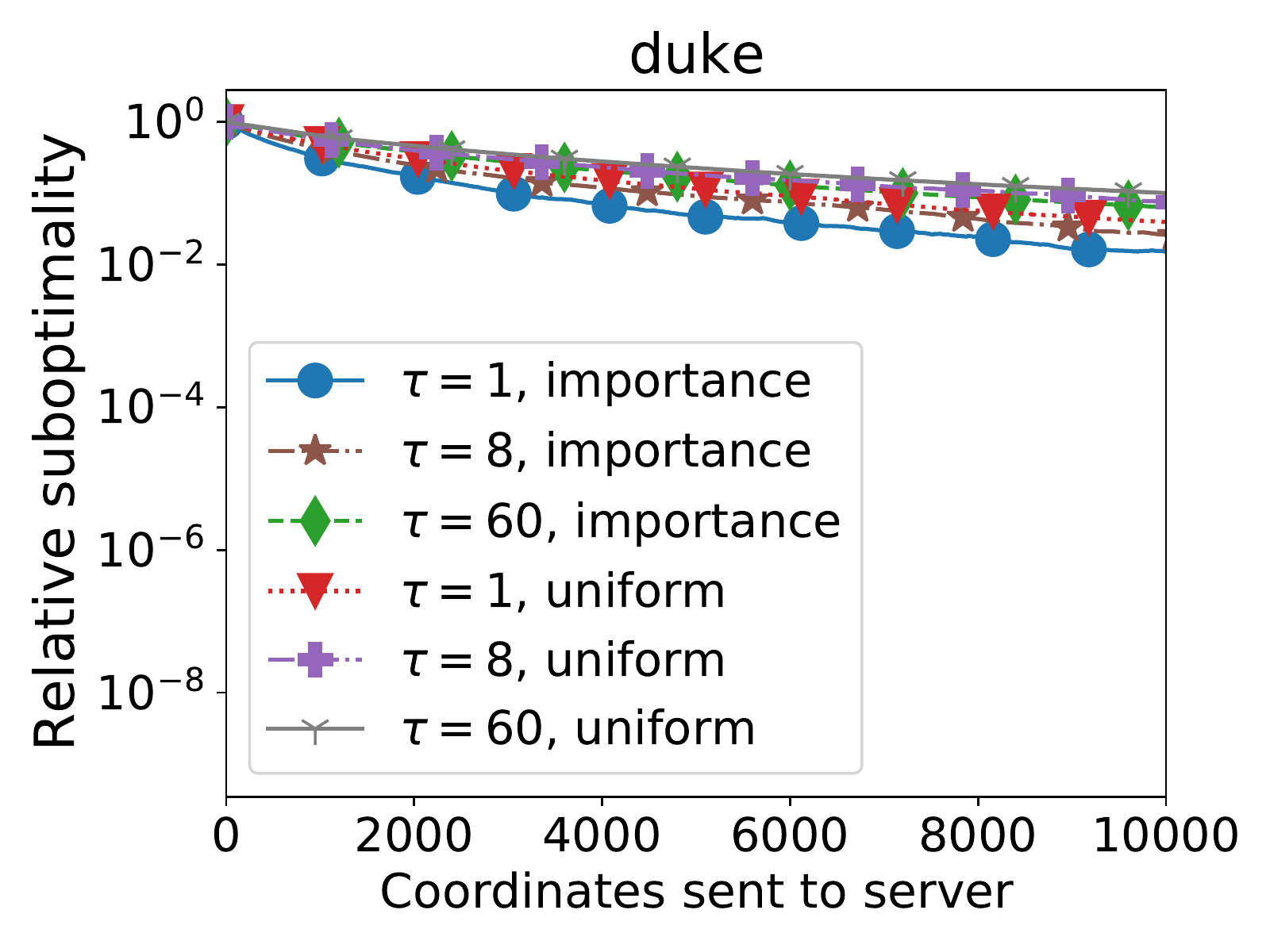}
\end{minipage}%
\begin{minipage}{0.33\textwidth}
  \centering
\includegraphics[width =  \textwidth ]{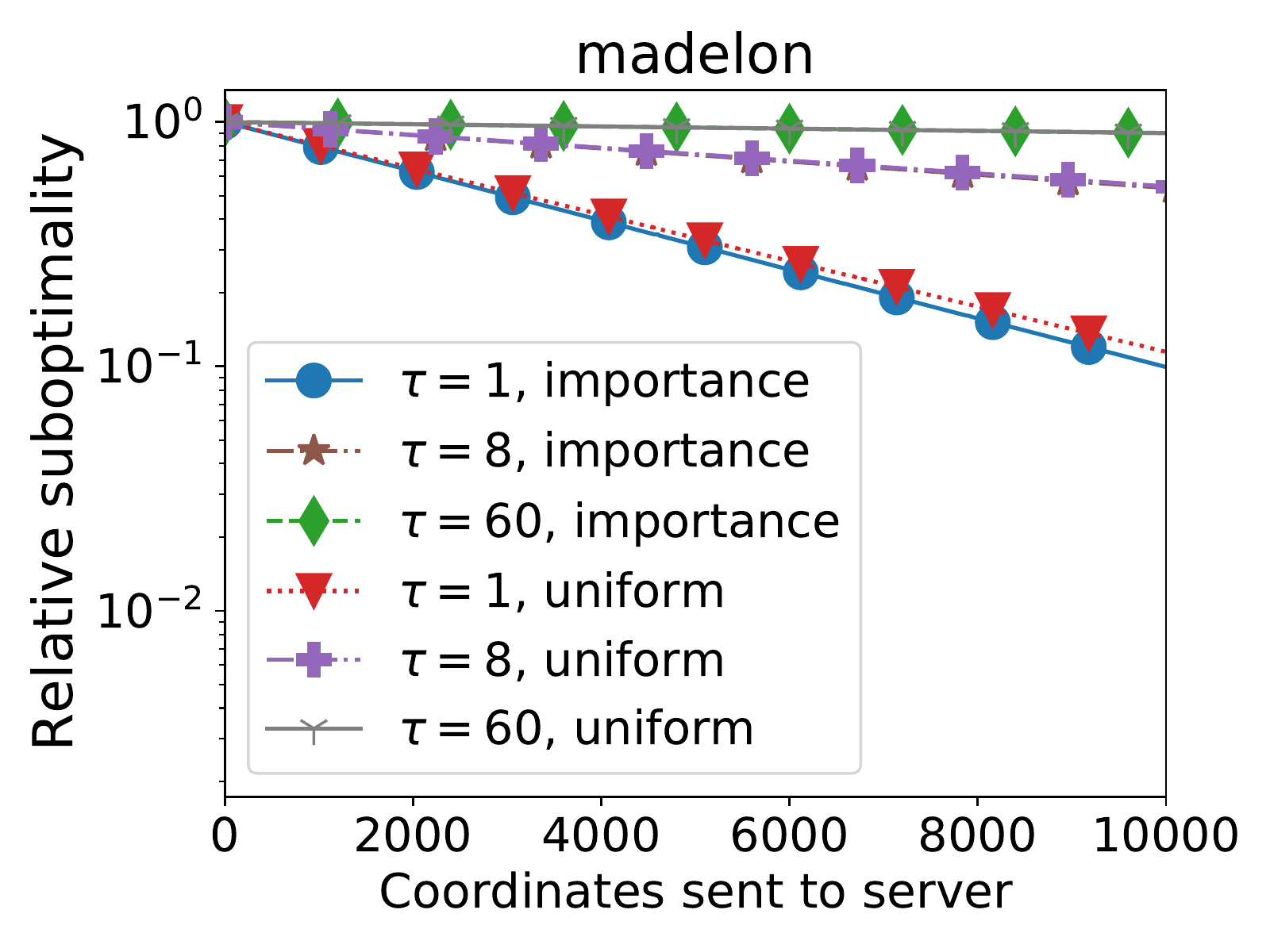}
\end{minipage}%
\\
\begin{minipage}{0.33\textwidth}
  \centering
\includegraphics[width =  \textwidth ]{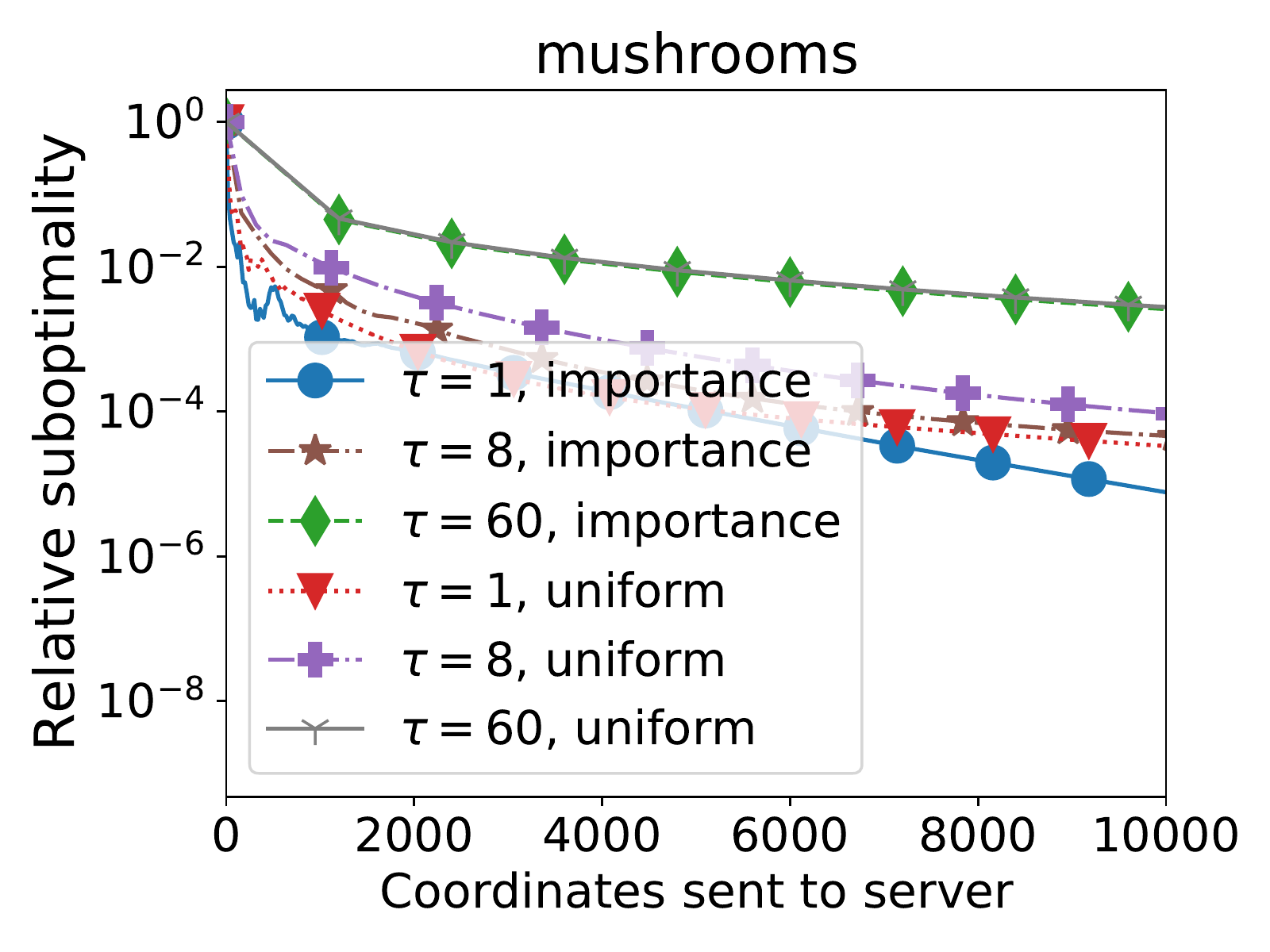}
\end{minipage}%
\begin{minipage}{0.33\textwidth}
  \centering
\includegraphics[width =  \textwidth ]{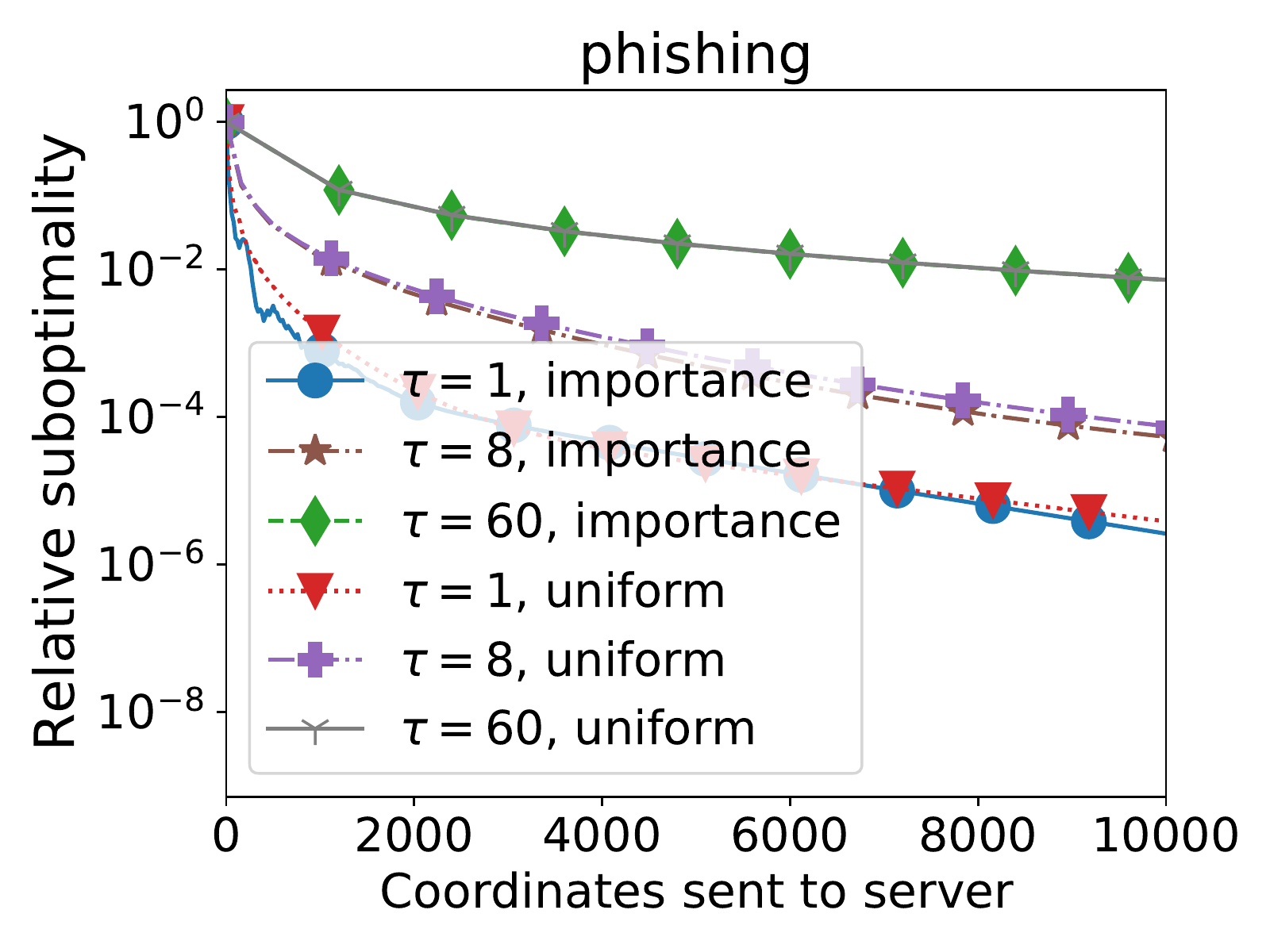}
\end{minipage}%
\begin{minipage}{0.33\textwidth}
  \centering
\includegraphics[width =  \textwidth ]{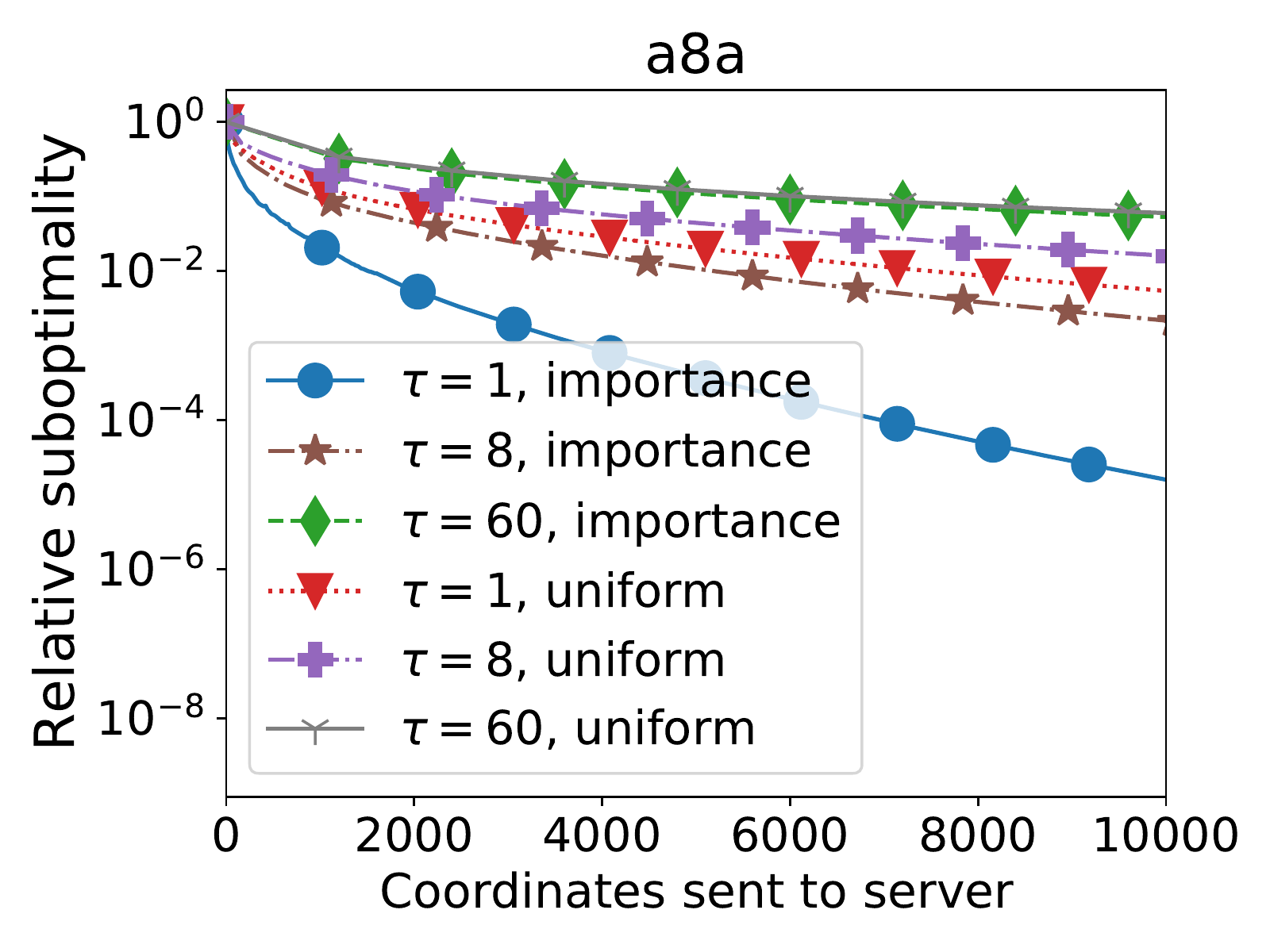}
\end{minipage}%
\caption{Same as Figure~\ref{fig:minibatch}, but $x$-axis corresponds to the coordinates sent to the server instead of the iteration.}   
\label{fig:minibatch2}
\end{figure}

\section{Conclusions,  Extensions and Future Work}

In this paper we have proposed a novel gradient sparsification technique for distributed optimization and demonstrated that it allows one to properly exploit the smoothness structure of the local objective. We have shown that the proposed matrix-smoothness-aware sparsification can be coupled with both the variance reduction and acceleration, providing further speedup in terms of the convergence rate and the total bits transmitted from workers to server.  Next, we list possible extensions of our work that we believe can or should be done in the future: 

\begin{itemize}
\item {\bf Subsampling the local objective.}  While DCGD+, DIANA+ and ADIANA+ all require an access to the full local gradient from each machine at every iteration, we believe this requirement can be easily dropped.  In particular, the local objective can be further subsampled and extra variance reduction can be employed on top of these methods, similarly to as done for ISAEGA~\citep{GJS-HR}.
\item {\bf Greedy sparsification.}  Notice that the sparsified local gradient can be seen as a randomized coordinate descent estimator of a given machine. However, greedy coordinate descent was shown to outperform randomized coordinate descent in certain scenarios~\citep{nutini2017let}. Therefore, one might pose a question whether a greedy sparsification might work for distributed optimization. 

\item {\bf Bi-directional sparsification.} As we also mention in Section~\ref{sec:limit}, one drawback of our approach\footnote{In fact, this is a drawback of the vast majority of compression methods from the literature. A notable exception is DoubleSqueeze~\citep{DoubleSqueeze2019} which compresses the server$\rightarrow$worker communication too.} is that only worker$\rightarrow$server communication is sparse.  It would be very interesting to develop a bi-directional sparsification capable of properly exploiting the smoothness matrices. For this matter, in Section \ref{apx-thm:DIANA++} we develop and analyze DIANA++ method employing bi-directional matrix-smoothness-aware sparsification and twofold variance reduction.

\item {\bf Weakly convex and non-convex cases.} While we state our theory for the strongly convex case (i.e., Assumpiton~\ref{asm:mu-convex}), it can be rather easily extended to weakly convex case (i.e., $\mu=0$). However, obtaining an efficiennt smoothness matrix aware sparsification for non-convex optimization remains an open problem. 
\end{itemize}

\section{Limitations}\label{sec:limit}

Next,  we discuss main limitations of our approach. 

\begin{itemize}
\item The server is required to store matrices $\ML_i^{\half}$ for all machines $i\in[n]$ and multiply them by sparse updates $\MC_i^k\ML_i^\phalf\nabla f_i(x^k)$ in each iteration.  Therefore, our method is not expected to be practical when $d$ is large and matrices $\ML_i$ are not of a special structure so that they are cheap to store and so that $\MC_i^k\ML_i^\phalf\nabla f_i(x^k)$ can be evaluated cheaply.\footnote{For example, if $\ML_i$ is of rank $r$, for all $i$, we require extra $\cO(ndr)$ storage and $\cO(ndr)$ flops at the server at each iteration.}. On the other hand, our strategy is still practical when i) $d$ is small or ii)  $\ML_i$ is of a special structure such as low rank or diagonal. In particular, diagonal $\ML_i$ requires only $\cO(\tau)$ extra computation per each node (which is negligible), while attaining a rate which is never worse compared to the naive sparsification. 

\item Except DIANA++ method presented in Section \ref{apx-thm:DIANA++}, we sparsify only the communication from the workers to server. Sparsifying workers$\rightarrow$server communication only is very common in the area of distributed optimization as the workers$\rightarrow$server communication is significantly more expensive compared to the server$\rightarrow$workers communication. Such a phenomenon can be assigned to the fact that the server is broadcasting the same vector to all workers, and thus the server$\rightarrow$workers communication can be implemented more efficiently. 
\end{itemize}

\begin{remark}
The overhead that comes from the computation of $\ML_i^\phalf\nabla f_i(x^k)$ is not an issue in general. Given that $\ML_i$ is of rank $r$,  one requires $\cO\left(d^2r\right)$ flops to precompute SVD of $\ML_i$. Given that SVD of $\ML_i$ is known, the evaluation of $\ML_i^\phalf\nabla f_i(x^k)$ takes only $\cO\left(d^2r\right)$ flops. While the cost of computing $\nabla f_i(x^k)$ varies depending on the application, we can expect it to takes at least  $\Omega\left(d^2r\right)$ flops for the application of generalized linear models (i.e., logistic regression).  Next, we shall mention that evaluating $\ML_i^\phalf\nabla f_i(x^k)$ comes at $\cO\left(d\right)$ cost when $\ML_i$ is diagonal.
\end{remark}

\clearpage
\bibliography{references}
\bibliographystyle{plainnat}

\clearpage

\appendix
\part*{Appendix}

\section{Table of Frequently Used Notation}

\begin{table*}[ht!]
\scriptsize
    \centering
    \caption{Notation used throughout the paper}\label{table:notations}
    \renewcommand{\arraystretch}{1.7}
    \begin{tabular}{|c|c|c|}
        \hline
        \hline        
       {\bf Symbol }& {\bf Description } & {\bf Reference} \\
        \hline
        $d$ & dimension of the model $x\in\R^d$ &  (\ref{main-opt-problem}) \\
        \hline
        $\mu$ & strong convexity parameter of $f$  &  Asm. \ref{asm:mu-convex} \\
        \hline
        $\ML$ & smoothness matrix of $f$ &  Asm. \ref{asm:Li-smooth-convex} \\
        \hline
        $\ML_{ij}$ & the element at $i$th row and $j$th column of $\ML$ &  - \\
        \hline
        $\ML_i$ & smoothness matrix of $f_i$ &  Asm. \ref{asm:Li-smooth-convex} \\
        \hline        
        $L_i$ & smoothness constant of $f_i(x)$, i.e., $L_i = \lambda_{\max}(\ML_i)$ &  - \\
        \hline
        $L$ & smoothness constant of $f$, i.e., $L=\lambda_{\max}(\ML)$ &  - \\
        \hline
        $S$ & random sampling (subset) of coordinates $[d]\eqdef\{1,2,\dots,d\}$ &  - \\
        \hline
        $p_{jl},\; p_j$ & $p_{jl} \eqdef \Pr\(\{j,l\}\subseteq S\), \; p_j \eqdef p_{jj}$ &  - \\
        \hline
        $\MP$ & the probability matrix $(p_{jl})_{j,l=1}^d$ associated with random sampling $S$ &  (\ref{def:P}) \\
        \hline
        $v_i$ & ESO parameters associated with $f$ and $S$ jointly &  - \\
        \hline
        $\MC$ & diagonal sketch matrix with $i$th random variable $\mathrm{c}_i = \nicefrac{1}{p_i}$ if $i\in S$ and $0$ otherwise &  (\ref{sketch-matrix-C}) \\
        \hline
        $\omega$ & variance of general compression operator $\cC$, i.e. $\E\[\|\cC(x)-x\|^2\] \le \omega\|x\|^2,\; \forall x\in\R^d$ &  - \\
        \hline
        $\overbar{\MC},\; \overbar{\MC}_i^k$ & $\overbar{\MC} \eqdef \ML^{\half}\MC\ML^{\phalf}, \; \overbar{\MC}_i^k = \ML_i^{\half}\MC_i^k\ML_i^{\phalf}$ &  - \\
        \hline
        $\MI,\; \ME$ & the identity matrix and the matrix with all entries equal to $1$ &  - \\
        \hline
        $\overbar{\MP},\; \widetilde{\MP}$ & $\overbar{\MP}=\Mdiag\(\nicefrac{1}{p}\)\MP\Mdiag\(\nicefrac{1}{p}\)$ with entries $\overbar{p}_{ij} = \frac{p_{ij}}{p_ip_j}$, and $\widetilde{\MP} = \overbar{\MP} - \ME$ &  (\ref{def:P}) \\
        \hline
        $\overbar{\cL},\; \widetilde{\cL}$ & expected smoothness constants $\overbar{\cL} = \lambda_{\max}(\overbar{\MP}\circ\ML), \; \widetilde{\cL} = \lambda_{\max}(\widetilde{\MP}\circ\ML)$ &  - \\
        \hline
        $n$ & number of parallel machines in distributed setting &  (\ref{main-opt-problem-dist}) \\
        \hline
        $\MC_i, \MP_i, \overbar{\MP}_i, \widetilde{\MP}_i$ & diagonal sketch matrix and probability matrices for $i$th worker &  (\ref{sketch-matrix-C}), (\ref{def:P}) \\
        \hline
        $p_{i;j}, \; \overbar{p}_{i;j}, \; \widetilde{p}_{i;j}$ & $j$-th diagonal element of $\MP_i,\; \overbar{\MP}_i,\; \widetilde{\MP}_i$ &  - \\
        \hline
        $\omega_i$ & variance of compression operator induced by $\MC_i$, i.e. $\omega_i = \max_{1\le j\le d}\frac{1}{p_{i;j}}-1$ &  - \\
        \hline
        $\omega_{\max}$ & $\max_{1\le i \le n}\omega_i = \max_{1\le i \le n}\max_{1\le j\le d}\frac{1}{p_{i;j}}-1$ &  (\ref{DIANA+-complexity}) \\
        \hline
        $\overbar{\cL}_i,\; \widetilde{\cL}_i$ & expected smoothness constants, $\overbar{\cL}_i = \lambda_{\max}(\overbar{\MP}_i\circ\ML_i), \; \widetilde{\cL}_i = \lambda_{\max}(\widetilde{\MP}_i\circ\ML_i)$ &  - \\
        \hline
        $\overbar{\cL}_{\max},\; \widetilde{\cL}_{\max}$ & $\overbar{\cL}_{\max} = \max_{1\le i\le n}\lambda_{\max}(\overbar{\MP}_i\circ\ML_i), \; \widetilde{\cL}_{\max} = \max_{1\le i\le n}\lambda_{\max}(\widetilde{\MP}_i\circ\ML_i)$ &  (\ref{def:tilde-cL-max}) \\
        \hline
        $\nu,\; \nu_s$ & Parameters describing distribution of $\ML_i$, \;
        $\nu \eqdef \frac{\sum_{i=1}^n L_i}{\max_{i\in[n]} L_i},
         \nu_s \eqdef \max_{i\in[n]}\frac{\sum_{j=1}^d \ML^{\nicefrac{1}{s}}_{i;j}}{\max_{j\in[d]}\ML^{\nicefrac{1}{s}}_{i;j}}$, &  (\ref{def:nu}) \\
        \hline
    \end{tabular}       
\end{table*}

\clearpage
\section{Theory in the Single Node Case: RCD as Sketched Gradient Descent (SkGD)}\label{rcd-as-skgd}



In single node setup, matrix smoothness assumption and arbitrary samplings have been considered mainly in the context of coordinate descent methods. For example, randomized sampling $S=\{j\},j\in[d]$ with arbitrary probabilities $p_j\in(0,1]$ reduces to standard {\em Randomized Coordinate Descent (RCD)} algorithms \citep{nesterov2012efficiency,RichTaka14}. Parallel and mini-batch variants arise when the sampling $S$ contains more than one coordinate \citep{L1-PCD,RichTaka16-2}.
The first coordinate descent method analyzed with arbitrary sampling and under $\ML$-smoothness assumption is the 'NSync algorithm \citep{Nsync,QuRich16-1,QuRich16-2} considered for strongly convex losses. In the same general setup, \citet{ACD-HanzRich} developed and analyzed {\em Accelerated Coordinate Descent}.
Recently, \citet{SEGA-FP} developed a variance reduced coordinate descent algorithm, {\em SEGA (SkEtched GrAdient)}, which uses general sketch matrices and handles non-separable proximal terms in contrast to traditional coordinate descent methods. This idea of gradient sketching then extended to {\em Generalized Jacobian Sketching (GJS)} algorithm providing a unified theory for first-order methods with variance reduced \citep{GJS-HR}. 

Consider the unconstrained optimization problem
\begin{equation}\label{main-opt-problem}
\min_{x\in\R^d} f(x),
\end{equation}
with very large dimension $d$ and assume that function $f$ is $\ML$-smooth. In this setting, the state-of-art methods are {\em Randomized Coordinate Descent (RCD)} type methods where in each iteration only a few coordinates get updated. Here we present new theories for RCD with arbitrary sampling paradigm, which are new and follow the idea of sketches. We will view RCD as a special case of {\em Compressed Gradient Descent (CGD)} with sketches (\ref{sketch-matrix-C}).

\subsection{`NSync}

First, we recall the first coordinate descent type algorithm, `NSync \citep{Nsync}, using arbitrary sampling. Let $S\subseteq[d]$ be an arbitrary (proper) sampling\footnote{only proper samplings are considered in this work} of coordinates such that $p_j \eqdef \Pr(j\in S)>0,\, j=1,2,\dots,d$. For a vector $h\in\R^d$, let $h_S\in\R^d$ be the vector coinciding with $h$ at coordinates $j\in S$ and zeros everywhere else. Denote by $\circ$ the Hadamard (i.e. element-wise) product. Given an arbitrary sampling $S$ and smoothness matrix $\ML$, let $v = (v_1,v_2,\dots,v_d)$ be positive constants satisfying the {\em Expected Separable Overapproximation (ESO)} inequality
\begin{equation}\label{ESO}
\MP \circ \ML \preceq \Mdiag(p\circ v),
\end{equation}
where $\MP$ is the probability matrix associated with sampling $S$ having entries $p_{jl} \eqdef \Pr(\{j,l\}\subseteq S),\; p_j = p_{jj}$. Analogous to (\ref{def:P}), let $\widetilde{\MP} = \overbar{\MP}-\ME$.

\begin{algorithm}[H]
\begin{algorithmic}[1]
\STATE \textbf{Input:} Initial point $x^0\in\R^d$, random sampling $S$, step size parameters $v$, current point $x^k$
\STATE \text{Sample random set of coordinates} $S_k\sim S$
\STATE \text{Update selected coordinates} $x^{k+1} = x^k - \frac{1}{v}\circ\nabla f(x)_{S_k}$
\end{algorithmic}
\caption{\sc `NSync \citep{Nsync}}
\label{alg:nsync}
\end{algorithm}

\begin{theorem}[`NSync, \citep{Nsync}]\label{thm:nsync}
Let Assumptions \ref{asm:Li-smooth-convex}, \ref{asm:mu-convex} hold and $v\sim\textrm{ESO}(f,S)$ be the vector of ESO parameters associated with function $f$ and sampling $S$. Then the iterates $\{x^k\}$ of `NSync converge as follows
$$
\E\[f(x^k)\]-f(x^*) \le \(1-\min_{1\le j\le d}\frac{p_j\mu}{v_j}\)^k \Delta_f,
$$
where $\Delta_f=f(x^0)-f(x^*)$.
\end{theorem}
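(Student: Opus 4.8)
The statement to be proved is the convergence of `NSync for $\mu$-strongly convex $f$ under the ESO condition \eqref{ESO}. The plan is to proceed by a standard one-step descent argument in expectation, conditioned on the current iterate $x^k$, and then to unroll the resulting linear recursion. First I would compute the conditional expectation of the progress. Because the update touches only the coordinates in $S_k$, I would write $x^{k+1}-x^k = -\frac{1}{v}\circ\nabla f(x^k)_{S_k}$, and then use the matrix-smoothness inequality \eqref{eq:ML-smoothness} applied to the pair $(x^{k+1},x^k)$: this gives $f(x^{k+1}) \le f(x^k) + \langle \nabla f(x^k), x^{k+1}-x^k\rangle + \frac12\|x^{k+1}-x^k\|_{\ML}^2$. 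Taking $\E_k[\cdot]$ and using that $\E_k[(\nabla f(x^k)_{S_k})_j] = p_j (\nabla f(x^k))_j$ turns the linear term into $-\sum_j \frac{p_j}{v_j}(\nabla_j f(x^k))^2$, while the quadratic term, after taking expectation over the sampling, becomes $\frac12 \nabla f(x^k)^\top \left(\frac{1}{v}\mathbf{1}^\top \circ \ML \circ \MP \circ \frac{1}{v}\mathbf{1}\right)\nabla f(x^k)$ — i.e. a quadratic form in $\Mdiag(1/v)(\MP\circ\ML)\Mdiag(1/v)$.

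The ESO inequality \eqref{ESO} is exactly what is needed to control this quadratic term: $\MP\circ\ML \preceq \Mdiag(p\circ v)$ implies $\Mdiag(1/v)(\MP\circ\ML)\Mdiag(1/v) \preceq \Mdiag(p/v)$, so the quadratic term is bounded by $\frac12\sum_j \frac{p_j}{v_j}(\nabla_j f(x^k))^2$, which is exactly half the magnitude of the (negative) linear term. Combining, I would obtain
\[
\E_k[f(x^{k+1})] - f(x^k) \le -\tfrac12 \sum_{j=1}^d \tfrac{p_j}{v_j}(\nabla_j f(x^k))^2.
\]
Next I would lower-bound the right-hand side using strong convexity. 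Since $\min_j \frac{p_j}{v_j} \le \frac{p_j}{v_j}$, we have $\sum_j \frac{p_j}{v_j}(\nabla_j f(x^k))^2 \ge \left(\min_j \frac{p_j}{v_j}\right)\|\nabla f(x^k)\|^2$, and $\mu$-strong convexity gives the Polyak--{\L}ojasiewicz-type bound $\|\nabla f(x^k)\|^2 \ge 2\mu(f(x^k)-f(x^*))$. Substituting yields $\E_k[f(x^{k+1})] - f(x^*) \le \left(1 - \mu\min_j \frac{p_j}{v_j}\right)(f(x^k)-f(x^*))$.

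Finally I would take full expectation, use the tower property, and iterate the contraction from $k$ down to $0$ to conclude $\E[f(x^k)]-f(x^*) \le \left(1-\min_j \frac{p_j\mu}{v_j}\right)^k \Delta_f$ with $\Delta_f = f(x^0)-f(x^*)$, as claimed. The only mildly delicate step is the bookkeeping in the quadratic term: one must carefully verify that $\E_{S}\big[(h_S)(h_S)^\top\big]$ has $(j,l)$ entry $p_{jl}h_j h_l$, so that after conjugation by $\Mdiag(1/v)$ and contraction with $\ML$ one indeed recovers the Hadamard product $\Mdiag(1/v)(\MP\circ\ML)\Mdiag(1/v)$ rather than some other matrix; once this identity is in hand, the ESO assumption closes the argument immediately. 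The PL inequality itself is elementary (minimize both sides of the strong convexity inequality over $y$), so the proof is short and the ESO condition does essentially all the work.
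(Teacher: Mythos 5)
Your proof is correct and matches the standard approach. The paper cites Theorem~\ref{thm:nsync} from \citep{Nsync} without reproducing a proof, but it does prove the exactly parallel Theorem~\ref{thm:rate-fval} for SkGD (shown equivalent to `NSync via Lemma~\ref{lem:nsync-skgd}), and that argument uses the identical template: one-step $\ML$-smoothness descent, compute $\E_k\|\mathrm{update}\|^2_\ML$ as a quadratic form in $\MP\circ\ML$ (the paper's identity \eqref{E[CLC]} is your $\E_S[(h_S)(h_S)^\top]_{jl}=p_{jl}h_jh_l$), invoke the ESO/eigenvalue bound so the quadratic term is absorbed by half the linear term, then apply the PL inequality and unroll. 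The only cosmetic difference is that the paper works with the scalar step size $\gamma$ and unbiased sketch $\MC$ (giving $\Mdiag(1/p)\MP\Mdiag(1/p)\circ\ML$), whereas you correctly work with the `NSync coordinate-wise steps $1/v_j$ (giving $\Mdiag(1/v)(\MP\circ\ML)\Mdiag(1/v)$ bounded by $\Mdiag(p/v)$); these are shown in Lemma~\ref{lem:nsync-skgd} to yield the same rate.
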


Thus, `Nsync gives an iteration complexity
\begin{equation}\label{nsync-rate-step}
\max_{1\le j \le d}\frac{v_j}{p_j\mu}\log\frac{\Delta_f}{\varepsilon}.
\end{equation}

In case of serial sampling, namely $|S|=1$ a.s., we have $\MP=\Mdiag(p_1,p_2,\dots,p_d)$. Hence ESO holds with $v_j=\ML_{jj}$ and iteration complexity becomes $\max_j\frac{\ML_{jj}}{p_j\mu}\log\frac{\Delta_f}{\varepsilon}$. This leads to the optimal probabilities $p_j=\frac{\ML_{jj}}{\sum_l \ML_{ll}}$ yielding iteration complexity $\frac{\sum \ML_{jj}}{\mu}\log\frac{\Delta_f}{\varepsilon}$.

\subsection{Sketched Gradient Descent (SkGD)}

Let us view RCD methods as a special case of {\em Compressed Gradient Descent (CGD)} with linear and diagonal sketch $\MC$ defined in (\ref{sketch-matrix-C}) and consider random sparsification operator $\cC$ induced by random diagonal sketch $\MC$, namely $\mathcal{C}(x) = \MC x,\,x\in\R^d$. Clearly, $\cC$ is an unbiased compression (i.e. $\E\[\cC(x)\] = x$) with variance $\omega=\max_{1\le j\le d}\frac{1}{p_j}-1$:
\begin{equation}\label{variance_formula}
\E\left[ \|\MC x - x\|_2^2 \right] = x^{\top} \E\left[\MC^2-\MI\right]x \le \omega \|x\|_2^2.
\end{equation}

\begin{algorithm}[H]
\begin{algorithmic}[1]
\STATE \textbf{Input:} Initial point $x^0\in\R^d$, diagonal sketch $\MC$, step size $\gamma$, current point $x^k$
\STATE $x^{k+1} = x^k - \gamma\MC\nabla f(x^k)$
\end{algorithmic}
\caption{\sc SkGD}
\label{alg:cgd-1}
\end{algorithm}

\begin{theorem}[see \ref{apx-thm:rate-fval}]\label{thm:rate-fval}
Let Assumptions \ref{asm:Li-smooth-convex}, \ref{asm:mu-convex} hold and $S$ be any proper sampling with probability matrix $\MP$. Then, for the step-size $0<\gamma\le\lambda^{-1}_{\max}\(\overbar{\MP} \circ \ML\)$, the iterates $\{x^k\}$ of Algorithm \ref{alg:cgd-1} converge as follows
$$
\E\[f(x^k)\]-f(x^*) \le \(1-\gamma\mu\)^k \Delta_f.
$$
\end{theorem}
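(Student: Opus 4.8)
\textbf{Proof plan for Theorem~\ref{thm:rate-fval}.}
The plan is to show that SkGD (Algorithm~\ref{alg:cgd-1}) is a descent-in-expectation method whose one-step progress in function value is controlled by an ESO-type inequality, and then to close the recursion using $\mu$-strong convexity. First I would compute a one-step upper bound on $f(x^{k+1})$ conditional on $x^k$. Write $x^{k+1} = x^k - \gamma\MC\nabla f(x^k)$ and invoke the matrix-smoothness of $f$ from Assumption~\ref{asm:Li-smooth-convex}, i.e. the inequality \eqref{eq:ML-smoothness} applied with $x = x^{k+1}$, $y = x^k$:
\begin{equation*}
f(x^{k+1}) \le f(x^k) - \gamma\<\nabla f(x^k), \MC\nabla f(x^k)\> + \frac{\gamma^2}{2}\|\MC\nabla f(x^k)\|^2_{\ML}.
\end{equation*}
Now take the conditional expectation over the random sketch $\MC$. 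Using $\E[\MC] = \MI$ and $\E[\MC^\top \ML\, \MC] = \overbar{\MP}\circ\ML$ (this identity, that the $(j,l)$-entry picks up the factor $\E[\mathrm{c}_j\mathrm{c}_l] = p_{jl}/(p_jp_l) = \overbar{p}_{jl}$, is the computational heart of the argument and follows directly from the definition \eqref{sketch-matrix-C} of $\MC$), we get
\begin{equation*}
\E_k\!\left[f(x^{k+1})\right] \le f(x^k) - \gamma\|\nabla f(x^k)\|^2 + \frac{\gamma^2}{2}\|\nabla f(x^k)\|^2_{\overbar{\MP}\circ\ML}.
\end{equation*}

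The step-size restriction $0 < \gamma \le \lambda_{\max}^{-1}(\overbar{\MP}\circ\ML)$ is exactly what guarantees $\gamma(\overbar{\MP}\circ\ML) \preceq \MI$, so that $\frac{\gamma^2}{2}\|\nabla f(x^k)\|^2_{\overbar{\MP}\circ\ML} \le \frac{\gamma}{2}\|\nabla f(x^k)\|^2$; plugging this in collapses the bound to
\begin{equation*}
\E_k\!\left[f(x^{k+1})\right] \le f(x^k) - \frac{\gamma}{2}\|\nabla f(x^k)\|^2.
\end{equation*}
Then I would use the standard consequence of $\mu$-strong convexity (Assumption~\ref{asm:mu-convex}), namely the Polyak--\L{}ojasiewicz inequality $\|\nabla f(x^k)\|^2 \ge 2\mu\bigl(f(x^k) - f(x^*)\bigr)$, to obtain $\E_k[f(x^{k+1})] - f(x^*) \le (1-\gamma\mu)\bigl(f(x^k) - f(x^*)\bigr)$. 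Taking full expectation and unrolling the recursion over $k$ gives the claimed rate $\E[f(x^k)] - f(x^*) \le (1-\gamma\mu)^k\Delta_f$.

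The main obstacle I anticipate is not conceptual but bookkeeping: one must be careful that $\nabla f(x^k) \in \range(\ML)$ (which holds for $\ML$-smooth $f$), so that the seminorm $\|\cdot\|_{\ML}$ and the Hadamard-product matrix $\overbar{\MP}\circ\ML$ interact cleanly and no spurious contributions from $\ker(\ML)$ appear; this is the same range-confinement phenomenon flagged in the proof technique remarks for DCGD+. A secondary subtlety is verifying the identity $\E[\MC^\top\ML\,\MC] = \overbar{\MP}\circ\ML$ entrywise, including the diagonal terms where $\E[\mathrm{c}_j^2] = 1/p_j = \overbar{p}_{jj}$, and confirming that $\overbar{\MP}\circ\ML \succeq 0$ so that $\lambda_{\max}(\overbar{\MP}\circ\ML)$ is well-defined and the step-size interval is nonempty. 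Everything else is a routine telescoping argument.
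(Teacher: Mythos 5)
Your proposal is correct and follows essentially the same route as the paper's Appendix proof: apply $\ML$-smoothness to the one-step update, take expectation using the entrywise identity $\E[\MC\ML\MC]=\overbar{\MP}\circ\ML$, use the step-size bound to absorb the quadratic term, and close with the PL inequality from $\mu$-strong convexity. One small note: the range-confinement concern you flag is not actually needed here — the bound $\|\nabla f(x^k)\|^2_{\overbar{\MP}\circ\ML}\le\lambda_{\max}(\overbar{\MP}\circ\ML)\|\nabla f(x^k)\|^2$ holds for any vector, not just those in $\range(\ML)$, so that subtlety only becomes relevant in the later theorems involving $\ML^{\phalf}$.
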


The following lemma shows that, both `NSync and SkGD provide the same theoretical guarantees.

\begin{lemma}\label{lem:nsync-skgd}
$$
\min\limits_{v\colon \MP\circ\ML \le \Mdiag(v\circ p)} \max\limits_{1\le j\le d} \frac{v_j}{p_j} = \lambda_{\max}\(\overbar{\MP}\circ\ML\).
$$
\end{lemma}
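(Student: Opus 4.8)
The plan is to show the two sides are equal by proving a matching upper and lower bound. Recall that $\overbar{\MP} = \Mdiag(1/p)\MP\Mdiag(1/p)$ has entries $\overbar{p}_{jl} = p_{jl}/(p_j p_l)$, so the ESO constraint $\MP\circ\ML \preceq \Mdiag(v\circ p)$ can be rewritten by conjugating with $\Mdiag(1/\sqrt{p})$ (which preserves the Loewner order) as $\overbar{\MP}\circ\ML \preceq \Mdiag(v/p)$, where $v/p$ denotes the vector with entries $v_j/p_j$. Indeed, $\Mdiag(1/\sqrt p)(\MP\circ\ML)\Mdiag(1/\sqrt p) = \overbar{\MP}\circ\ML$ since the Hadamard product interacts with diagonal conjugation entrywise, and $\Mdiag(1/\sqrt p)\Mdiag(v\circ p)\Mdiag(1/\sqrt p) = \Mdiag(v/p)$. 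So the feasible set in $v$ is exactly $\{v : \overbar{\MP}\circ\ML \preceq \Mdiag(v/p)\}$, and we must show the minimum of $\max_j (v_j/p_j)$ over this set equals $\lambda_{\max}(\overbar{\MP}\circ\ML)$.

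For the lower bound: for any feasible $v$, let $t = \max_j (v_j/p_j)$. Then $\Mdiag(v/p) \preceq t\MI$ entrywise on the diagonal, hence $\overbar{\MP}\circ\ML \preceq \Mdiag(v/p) \preceq t\MI$, which forces $\lambda_{\max}(\overbar{\MP}\circ\ML) \le t$. Taking the infimum over feasible $v$ gives $\lambda_{\max}(\overbar{\MP}\circ\ML) \le \min_v \max_j(v_j/p_j)$. For the upper bound (attainment): set $v_j = p_j\,\lambda_{\max}(\overbar{\MP}\circ\ML)$ for every $j$. Then $\Mdiag(v/p) = \lambda_{\max}(\overbar{\MP}\circ\ML)\MI \succeq \overbar{\MP}\circ\ML$ by definition of the largest eigenvalue of a symmetric matrix, so this $v$ is feasible, and for it $\max_j(v_j/p_j) = \lambda_{\max}(\overbar{\MP}\circ\ML)$. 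Combining the two bounds yields the claimed equality, and in fact shows the minimum is attained.

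The only point requiring a little care — and the place I expect to be the main obstacle, though a minor one — is the first step: verifying that conjugating a Hadamard product $\MP\circ\ML$ by a diagonal matrix $\Mdiag(1/\sqrt p)$ yields $\overbar{\MP}\circ\ML$ and that this conjugation preserves (and reflects) the Loewner inequality. The latter is standard ($\MB\preceq\MD$ iff $\MZ^\top\MB\MZ\preceq\MZ^\top\MD\MZ$ for invertible $\MZ$, applied with $\MZ = \Mdiag(1/\sqrt p)$, which is invertible since the sampling is proper so all $p_j>0$); the former is a direct entrywise computation: $(\Mdiag(1/\sqrt p)(\MP\circ\ML)\Mdiag(1/\sqrt p))_{jl} = \frac{1}{\sqrt{p_j}}\,p_{jl}\,\ML_{jl}\,\frac{1}{\sqrt{p_l}} = \frac{p_{jl}}{\sqrt{p_j p_l}}\,\ML_{jl}$, which is \emph{not} quite $\overbar{p}_{jl}\ML_{jl}$. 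So I would instead conjugate $\Mdiag(v\circ p)$ from the constraint by $\Mdiag(1/p)$ on a rescaled variable, or more cleanly: substitute $w_j = v_j/p_j$, so the constraint $\MP\circ\ML\preceq\Mdiag(v\circ p) = \Mdiag(p\circ w\circ p)$ becomes, after conjugation by $\Mdiag(1/p)$, the inequality $\overbar{\MP}\circ\ML\preceq\Mdiag(w)$; then $\min_v\max_j(v_j/p_j) = \min_w\{\max_j w_j : \overbar{\MP}\circ\ML\preceq\Mdiag(w)\}$, and the lower/upper bound argument above (with $w$ in place of $v/p$) goes through verbatim, giving $\lambda_{\max}(\overbar{\MP}\circ\ML)$.
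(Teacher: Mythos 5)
Your final argument is correct and takes essentially the same route as the paper: substitute $w_j = v_j/p_j$, conjugate the constraint by the invertible matrix $\Mdiag(1/p)$ to get $\overbar{\MP}\circ\ML \preceq \Mdiag(w)$, and observe that the optimal choice is $w_j \equiv \lambda_{\max}(\overbar{\MP}\circ\ML)$. The paper phrases this as reducing the vector $v$ to a scalar multiple of $p$ and then identifying the smallest admissible scalar with $\lambda_{\max}$; your two-sided bound is the same computation, and you correctly caught and repaired your initial misstep with $\Mdiag(1/\sqrt p)$.
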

\begin{proof}
If parameters $v$ satisfy ESO inequality (\ref{ESO}), then parameters defined by $$v'_i:=p_i\max_j\frac{v_j}{p_j}\ge v_i, \quad 1\le i \le d$$ also satisfy ESO inequality and give the same iteration complexity as
\begin{equation*}
\lambda := \max_i \frac{v_i}{p_i} = \max_i \frac{v'_i}{p_i}.
\end{equation*}
In particular, this implies that instead of searching for $d$ parameters $v_1,\dots,v_d$ satisfying ESO inequality $\MP\circ\ML \le \Mdiag(v\circ p)$ it suffices to find one scalar $\lambda>0$ such that $\MP\circ\ML \le \Mdiag(\lambda p\circ p)$ and set $v_i=\lambda p_i$ for all $i\in[d]$. The optimal (smallest) value of the scaling factor is
$$
\lambda
= \lambda_{\max}\( \Mdiag(\nicefrac{1}{p})(\MP\circ\ML)\Mdiag(\nicefrac{1}{p}) \)
= \lambda_{\max}\( \( \Mdiag(\nicefrac{1}{p})\MP\Mdiag(\nicefrac{1}{p}) \) \circ\ML \)
= \lambda_{\max}\(\overbar{\MP}\circ\ML\).
$$
Notice that with the choice of $v=\lambda p$, iteration complexities as well as the update rules of both methods coincide.
\end{proof}

One difference between these two methods is that, the update direction $\frac{1}{v}\circ\nabla f(x)_S$ of `NSync is biased in general as opposed to unbiased direction $\frac{1}{p}\circ\nabla f(x)_S$ of SkGD.

Note that the rate and the analysis of Theorem \ref{thm:rate-fval} is with respect to functional values (i.e. $f(x^k)-f^*$). Natural question is to develop an analysis based on iterates of the algorithm (i.e. $\|x^k-x^*\|^2$). Below, we provide such analysis under slightly different conditions on $f$ and with weighted distances. Formally, let, instead of $\ML$-smoothness and $\mu$-convexity, assume 
\begin{equation}\label{eq:strange-mu-L}
\mu \|x-x^*\|^2_{\ML} + \|\nabla f(x)\|^2 \le 2\ve{\nabla f(x)}{(x-x^*)}_{\ML}.
\end{equation}
Notice that the following is true just by combining $\ML$-smoothness and $\mu$-convexity:
\begin{equation}\label{eq:mu-L}
\mu \|x-x^*\|^2 + \|\nabla f(x)\|^2_{\ML^\dagger} \le 2\ve{\nabla f(x)}{(x-x^*)}.
\end{equation}
However, in general, inequalities (\ref{eq:strange-mu-L}) and (\ref{eq:mu-L}) are not equivalent.

\begin{theorem}\label{thm:rate-wdist}
Let instead of $\ML$-smoothness and $\mu$-convexity assume (\ref{eq:strange-mu-L}) holds. Then, for the step-size $0<\gamma\le\lambda^{-1}_{\max}\(\overbar{\MP} \circ \ML\)$, the iterates $\{x^k\}$ of Algorithm \ref{alg:cgd-1} converge as follows
$$
\E\[\|x^k - x^*\|^2_{\ML}\] \le \(1-\gamma\mu\)^k \|x^0 - x^*\|^2_{\ML}.
$$
\end{theorem}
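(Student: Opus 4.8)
The plan is to prove a one-step contraction in the $\ML$-(semi)norm and then unroll the recursion. Write $r^k \eqdef x^k - x^*$ and $g^k \eqdef \nabla f(x^k)$, and let $\MC^k$ denote the diagonal sketch \eqref{sketch-matrix-C} drawn at iteration $k$. From the update $x^{k+1} = x^k - \gamma\MC^k g^k$ I would first expand the weighted squared distance using only the symmetry of $\ML$ (so that the identity is valid even when $\ML$ is merely positive semidefinite):
\begin{equation*}
\|r^{k+1}\|^2_{\ML} = \|r^k\|^2_{\ML} - 2\gamma\ve{\MC^k g^k}{r^k}_{\ML} + \gamma^2\|\MC^k g^k\|^2_{\ML}.
\end{equation*}
Next I would take the conditional expectation over $\MC^k$ given $x^k$, using two elementary moment identities for the sketch. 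Since $\E[\mathrm{c}_j]=1$ we have $\E[\MC^k]=\MI$, hence $\E\[\ve{\MC^k g^k}{r^k}_{\ML}\] = \ve{g^k}{r^k}_{\ML}$; and since $\E[\mathrm{c}_j\mathrm{c}_l] = p_{jl}/(p_jp_l)$ for all $j,l\in[d]$ (with $p_{jj}=p_j$), we get $\E[\MC^k\ML\MC^k] = \overbar{\MP}\circ\ML$, so that $\E\[\|\MC^k g^k\|^2_{\ML}\] = \|g^k\|^2_{\overbar{\MP}\circ\ML}$.

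The third step is to absorb the second-moment term via the step-size restriction. Because $\overbar{\MP} = \E[\mathrm{c}\mathrm{c}^\top]\succeq 0$, the Schur product theorem gives $\overbar{\MP}\circ\ML\succeq 0$, and the bound $\gamma\le\lambda_{\max}^{-1}(\overbar{\MP}\circ\ML)$ is exactly $\overbar{\MP}\circ\ML\preceq\tfrac1\gamma\MI$; therefore $\gamma^2\|g^k\|^2_{\overbar{\MP}\circ\ML}\le\gamma\|g^k\|^2$. Combining the three steps gives
\begin{equation*}
\E_k\[\|r^{k+1}\|^2_{\ML}\] \le \|r^k\|^2_{\ML} - 2\gamma\ve{g^k}{r^k}_{\ML} + \gamma\|g^k\|^2.
\end{equation*}
Now I would invoke assumption \eqref{eq:strange-mu-L} in the rearranged form $\|g^k\|^2 \le 2\ve{g^k}{r^k}_{\ML} - \mu\|r^k\|^2_{\ML}$, which cancels the cross term exactly and leaves $\E_k\[\|r^{k+1}\|^2_{\ML}\] \le (1-\gamma\mu)\|r^k\|^2_{\ML}$. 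Taking total expectation and applying the tower property recursively over $k$ yields the stated rate.

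The argument is essentially routine; the only places requiring care are the evaluation of $\E[\MC^k\ML\MC^k]$ (correctly distinguishing diagonal from off-diagonal entries and recognizing the outcome as $\overbar{\MP}\circ\ML$) and the verification that $\overbar{\MP}\circ\ML\succeq0$, which is what licenses applying the spectral step-size bound termwise. Structurally this is the iterate-based, $\ML$-weighted counterpart of the functional-value analysis of Theorem~\ref{thm:rate-fval}, and replacing the pair ($\ML$-smoothness, $\mu$-convexity) by \eqref{eq:strange-mu-L} is precisely what makes the $\ML$-weighted cross term cancel cleanly, since the naive combination \eqref{eq:mu-L} would instead produce a term in $\|g^k\|^2_{\ML^\dagger}$, which does not match $\|g^k\|^2$ arising from the sketch variance.
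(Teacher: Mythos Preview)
Your proposal is correct and follows essentially the same approach as the paper's own proof: expand the $\ML$-weighted squared distance, use $\E[\MC]=\MI$ for the cross term and $\E[\MC\ML\MC]=\overbar{\MP}\circ\ML$ for the quadratic term (the paper cites this as identity \eqref{E[CLC]}), apply the step-size bound, and then invoke \eqref{eq:strange-mu-L}. Your added remarks on why $\overbar{\MP}\circ\ML\succeq 0$ and on the contrast with \eqref{eq:mu-L} are helpful commentary but not required for the argument.
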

\begin{proof}
Consider the improvement of the algorithm in a single iteration $x^+ = x - \gamma\MC\nabla f(x)$.
\begin{eqnarray}
\E\[\|x^+ - x^*\|^2_{\ML}\]
&=& \E\[ \|x  - x^* - \gamma\MC\nabla f(x)\|^2_{\ML} \]  \notag \\
&=& \|x  - x^*\|^2_{\ML} - 2\gamma\ve{x-x^*}{\nabla f(x)}_{\ML} + \gamma^2\E\[ \|\MC\nabla f(x)\|^2_{\ML} \] \notag \\
&= & \|x  - x^*\|^2_{\ML} - 2\gamma\ve{x-x^*}{\nabla f(x)}_{\ML} + \gamma^2\|\nabla f(x)\|^2_{\E\[\MC\ML\MC\]} \notag \\
&\overset{(\ref{E[CLC]})}{=}&  \|x  - x^*\|^2_{\ML} - 2\gamma\ve{x-x^*}{\nabla f(x)}_{\ML} + \gamma^2 \|\nabla f(x)\|^2_{\overbar{\MP}\circ\ML}\notag  \\
&\le & \|x  - x^*\|^2_{\ML} - 2\gamma\ve{x-x^*}{\nabla f(x)}_{\ML} + \gamma^2\lambda_{\max}(\overbar{\MP}\circ\ML) \|\nabla f(x)\|^2 \notag \\
&\le& \|x  - x^*\|^2_{\ML} - 2\gamma\ve{x-x^*}{\nabla f(x)}_{\ML} + \gamma \|\nabla f(x)\|^2 \notag \\
&\overset{(\ref{eq:strange-mu-L})}{\le}& \(1-\gamma\mu\)\|x  - x^*\|^2_{\ML}.\notag 
\end{eqnarray}
\end{proof}

\subsection{CGD+}

Here we introduce a new variant of CGD with non-diagonal matrix $\overbar{\MC} \eqdef \ML^{\nicefrac{1}{2}}\MC\ML^{\dagger\nicefrac{1}{2}}$, which works with any proximable regularizer $R(x)$. In this case the method converges to the neighborhood of the solution. Recall that the proximal operator is defined as followsL:
\begin{equation}\label{weighted-prox}
\prox_{R}(x) = \argmin_{u\in\R^d}\(R(u) + \frac{1}{2}\|u-x\|^2\).
\end{equation}

Define expected smoothness constants
$$
\overbar{\cL} = \lambda_{\max}(\overbar{\MP}\circ\ML), \quad \widetilde{\cL} = \lambda_{\max}(\widetilde{\MP}\circ\ML).
$$

The following lemma reveals the relationship between these constants.

\begin{lemma}\label{lem:PoL}
Let $L = \lambda_{\max}(\ML)$. Then $L \le \overbar{\cL} \le L + \widetilde{\cL}$.
\end{lemma}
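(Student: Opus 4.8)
\textbf{Proof proposal for Lemma~\ref{lem:PoL}.} The plan is to exploit the algebraic identity $\overbar{\MP} = \widetilde{\MP} + \ME$ together with the behavior of the Hadamard product against the matrix $\ME$ of all ones. First I would write $\overbar{\MP}\circ\ML = (\widetilde{\MP}+\ME)\circ\ML = \widetilde{\MP}\circ\ML + \ME\circ\ML = \widetilde{\MP}\circ\ML + \ML$, using that $\ME\circ\ML = \ML$ since $\ME$ has all entries equal to $1$. From this we get $\overbar{\cL} = \lambda_{\max}(\overbar{\MP}\circ\ML) = \lambda_{\max}(\ML + \widetilde{\MP}\circ\ML)$.

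For the upper bound $\overbar{\cL}\le L + \widetilde{\cL}$, I would invoke Weyl's inequality (subadditivity of the largest eigenvalue over symmetric matrices): $\lambda_{\max}(\ML + \widetilde{\MP}\circ\ML) \le \lambda_{\max}(\ML) + \lambda_{\max}(\widetilde{\MP}\circ\ML) = L + \widetilde{\cL}$. This needs that both $\ML$ and $\widetilde{\MP}\circ\ML$ are symmetric, which holds because $\ML$ is symmetric and $\widetilde{\MP}$ is symmetric (being built from the symmetric matrix $\MP$ of pairwise inclusion probabilities together with the symmetric rank-one-type corrections), so their Hadamard product is symmetric.

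For the lower bound $L \le \overbar{\cL}$, the key fact is that $\widetilde{\MP}\circ\ML \succeq 0$, so that $\overbar{\MP}\circ\ML = \ML + \widetilde{\MP}\circ\ML \succeq \ML$, whence $\lambda_{\max}(\overbar{\MP}\circ\ML)\ge\lambda_{\max}(\ML) = L$ by monotonicity of $\lambda_{\max}$ under the Loewner order. To see $\widetilde{\MP}\circ\ML\succeq0$: the matrix $\widetilde{\MP}$ has entries $\widetilde{p}_{jl} = \frac{p_{jl}}{p_{jj}p_{ll}}-1$, which is exactly (a scaled version of) the covariance structure $\E[\MC\MC^\top] - \ME$ up to the identity correction; more directly, $\overbar{\MP}\circ\ML$ arises as $\E[\overbar{\MC}^\top(\cdot)\overbar{\MC}]$-type expression applied to $\ML$, and one can check $\widetilde{\MP}\circ\ML$ equals the covariance matrix of the random vector $\MC\ML^{\phalf}g$ evaluated appropriately, hence is PSD; alternatively, $\widetilde{\MP}\circ\ML = \E[(\MC-\MI)\ML(\MC-\MI)]$ in the relevant subspace which is manifestly a sum of PSD terms. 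I would verify whichever of these identifications is cleanest given the paper's earlier formulas.

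\emph{Main obstacle.} The routine part is the eigenvalue inequalities; the one step requiring care is justifying $\widetilde{\MP}\circ\ML\succeq 0$ rigorously from the definition of $\widetilde{\MP}$ in~\eqref{def:P} rather than hand-waving it as a ``covariance.'' The clean route is to observe that for any proper sampling, $\E[\MC\MZ\MC]$ for symmetric $\MZ\succeq0$ equals $\overbar{\MP}\circ\MZ$ (this is the identity the paper labels (E[CLC]) and uses in Theorem~\ref{thm:rate-wdist}), and subtracting the mean $\MZ = \ME\circ\MZ$ leaves $\E[(\MC-\MI)\MZ(\MC-\MI)] = \widetilde{\MP}\circ\MZ \succeq 0$ since it is an expectation of PSD matrices $(\MC-\MI)\MZ(\MC-\MI)$. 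Applying this with $\MZ=\ML$ gives $\widetilde{\MP}\circ\ML\succeq0$ and the lemma follows.
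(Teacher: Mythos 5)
Your proof is correct, and it shares the paper's skeleton exactly: write $\overbar{\MP}\circ\ML = \ML + \widetilde{\MP}\circ\ML$, show $\widetilde{\MP}\circ\ML\succeq 0$, then get the lower bound from monotonicity of $\lambda_{\max}$ under the Loewner order and the upper bound from subadditivity (Weyl / convexity) of $\lambda_{\max}$. The only place you diverge is the justification of the one nontrivial fact $\widetilde{\MP}\circ\ML\succeq 0$. The paper routes this through its Lemma~\ref{lem:transform}, which is stated for the non-diagonal sketch $\overbar{\MC}=\ML^{\half}\MC\ML^{\phalf}$ and equates an expectation of Gram matrices with $\ML^{\half}\ML^{\phalf}(\widetilde{\MP}\circ\ML)\ML^{\phalf}\ML^{\half}$; strictly speaking this only exhibits the restriction of $\widetilde{\MP}\circ\ML$ to $\range(\ML)$ as PSD, and the paper's phrase ``hence $\widetilde{\MP}\circ\ML$ is positive semidefinite'' leaves the extension to all of $\R^d$ implicit. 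Your route is shorter and also strictly cleaner on this point: from $\E[\MC]=\MI$ and $\E[\MC\ML\MC]=\overbar{\MP}\circ\ML$ you get the direct identity $\E[(\MC-\MI)\ML(\MC-\MI)] = \overbar{\MP}\circ\ML - \ML = \widetilde{\MP}\circ\ML$, and each $(\MC-\MI)\ML(\MC-\MI) = \bigl((\MC-\MI)\ML^{\half}\bigr)\bigl((\MC-\MI)\ML^{\half}\bigr)^{\top}$ is manifestly PSD, so the expectation is PSD on all of $\R^d$ with no pseudo-inverses involved. The earlier paragraphs of your write-up that appeal to ``covariance structure'' would need to be dropped in a final version in favor of the displayed identity in your ``Main obstacle'' paragraph, which is the argument that actually closes the gap.
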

\begin{proof}
First, positive semi-definiteness of $\MP$ was proved in Theorem 3.1 \citep{QuRich}. As $\Mdiag(\nicefrac{1}{p})$ is positive definite, then $\overbar{\MP}$ is positive semi-definite too. Since Hadamard product $\circ$ preserves positive semi-definiteness, we have that $\overbar{\MP}\circ\ML\succeq0$. It follows from Lemma \ref{lem:transform} that
\begin{equation*}
\E\[\ML^{\half} \(\overbar{\MC} - \MI\)^{\top}\(\overbar{\MC} - \MI\) \ML^{\half}\]
=
\ML^{\half}\ML^{\phalf} (\widetilde{\MP} \circ \ML) \ML^{\phalf}\ML^{\half}.
\end{equation*}

Hence the left hand side as well as $\widetilde{\MP}\circ\ML$ are symmetric and positive semidefinite. In particular, $\overbar{\MP}\circ\ML\succeq\ML$. Hence $L = \lambda_{\max}(\ML) \le \lambda_{\max}(\overbar{\MP}\circ\ML) = \overbar{\cL}$. The upper bound follows from the convexity of $\lambda_{\max}$ as $\overbar{\cL} = \lambda_{\max}(\overbar{\MP}\circ\ML) = \lambda_{\max}(\ML + \widetilde{\MP}\circ\ML) \le \lambda_{\max}(\ML) + \lambda_{\max}(\widetilde{\MP}\circ\ML) = L + \widetilde{\cL}$.
\end{proof}

\begin{algorithm}[H]
\begin{algorithmic}[1]
\STATE \textbf{Input:} Initial point $x^0\in\R^d$, sketch matrix $\overbar{\MC}=\ML^{\nicefrac{1}{2}}\MC\ML^{\dagger\nicefrac{1}{2}}$, step size $\gamma$, current point $x^k$
\STATE $x^{k+1} = \prox_{\gamma R}\(x^k - \gamma\overbar{\MC}\nabla f(x^k)\)$
\end{algorithmic}
\caption{\sc CGD+}
\label{alg:prox-cgd-2}
\end{algorithm}

With the new sketch $\overbar{\MC}$ in Algorithm \ref{alg:prox-cgd-2} we able to perform the analysis with respect to iterates in standard norm, under strong convexity and $\ML$-smoothness, allowing any proximable regularizer.

\begin{theorem}[see \ref{apx-thm:rate-dist-opt2}]\label{thm:rate-dist-opt2}
Let Assumptions \ref{asm:Li-smooth-convex}, \ref{asm:mu-convex} hold and $S$ be a sampling with probability matrix $\MP$. Then, for the step-size $0<\gamma\le\nicefrac{1}{2\overbar{\cL}}$, the iterates $\{x^k\}$ of Algorithm \ref{alg:prox-cgd-2} converge as follows
$$
\E\[\|x^k - x^*\|^2\] \le \(1-\gamma\mu\)^k\|x^0  - x^*\|^2 + \frac{2\gamma\widetilde{\cL}}{\mu} \|\nabla f(x^*)\|^2_{\ML^{\dagger}}.
$$
\end{theorem}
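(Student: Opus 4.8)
The plan is to mirror the three‑part strategy sketched for DCGD+ (Theorem~\ref{thm:dist-prox-skgd-better}): (a) verify unbiasedness of the compressed gradient, (b) establish a tight second‑moment estimate matching the template of a standard SGD‑type convergence result, and (c) unroll the resulting one‑step recursion (equivalently, quote the unified convergence theory of~\citet{sigma_k}). Throughout write $g^k = \overbar{\MC}^k\nabla f(x^k)$ with $\overbar{\MC}^k = \ML^{\half}\MC^k\ML^{\phalf}$, and $D_f(x,x^*) \eqdef f(x)-f(x^*)-\langle \nabla f(x^*), x-x^*\rangle \ge 0$.

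First I would record the one‑step inequality. Since $x^*$ minimizes $f+R$ it is a fixed point, $x^* = \prox_{\gamma R}(x^*-\gamma\nabla f(x^*))$, and $\prox_{\gamma R}$ is nonexpansive, so $\|x^{k+1}-x^*\|^2 \le \|(x^k-x^*)-\gamma(g^k-\nabla f(x^*))\|^2$. Expanding, taking the conditional expectation $\E_k$ and using $\E_k[g^k]=\nabla f(x^k)$ gives
\begin{equation*}
\E_k\|x^{k+1}-x^*\|^2 \le \|x^k-x^*\|^2 - 2\gamma\langle x^k-x^*,\nabla f(x^k)-\nabla f(x^*)\rangle + \gamma^2\,\E_k\|g^k-\nabla f(x^*)\|^2 .
\end{equation*}
The unbiasedness here is where the pseudoinverse bookkeeping enters: $\ML$‑smoothness forces $\nabla f(x^k)\in\range(\ML)$, hence $\ML^{\half}\ML^{\phalf}\nabla f(x^k)=\nabla f(x^k)$, and with $\E[\MC^k]=\MI$ this yields $\E_k[g^k]=\nabla f(x^k)$. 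Using the same range argument together with $\E[\MC\ML\MC]=\overbar{\MP}\circ\ML$ and the transformation lemma invoked in the proof of Lemma~\ref{lem:PoL}, I would obtain the moment decomposition
\begin{equation*}
\E_k\|g^k-\nabla f(x^*)\|^2 = \|\nabla f(x^k)-\nabla f(x^*)\|^2 + \|\nabla f(x^k)\|^2_{\ML^{\phalf}(\widetilde{\MP}\circ\ML)\ML^{\phalf}},
\end{equation*}
the single‑node analogue of~\eqref{moment-decomposition__meta}.

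The remaining work is to distil this into a bound of the shape $\E_k\|g^k-\nabla f(x^*)\|^2 \le 2\overbar{\cL}\,D_f(x^k,x^*) + 2\widetilde{\cL}\,\|\nabla f(x^*)\|^2_{\ML^{\dagger}}$. The variance term is at most $\widetilde{\cL}\|\nabla f(x^k)\|^2_{\ML^{\dagger}}$ via $\widetilde{\MP}\circ\ML\preceq\widetilde{\cL}\MI$; the first term is controlled by cocoercivity under matrix smoothness, $\|\nabla f(x^k)-\nabla f(x^*)\|^2_{\ML^{\dagger}}\le 2D_f(x^k,x^*)$, combined with $\|v\|^2\le L\|v\|^2_{\ML^{\dagger}}\le\overbar{\cL}\|v\|^2_{\ML^{\dagger}}$ for $v\in\range(\ML)$ (using $L\le\overbar{\cL}$ from Lemma~\ref{lem:PoL}). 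Feeding this estimate and $\mu$‑strong convexity, in the form $\langle x^k-x^*,\nabla f(x^k)-\nabla f(x^*)\rangle \ge D_f(x^k,x^*)+\frac{\mu}{2}\|x^k-x^*\|^2$, back into the one‑step inequality, the Bregman terms cancel precisely when $\gamma\le\frac{1}{2\overbar{\cL}}$, leaving $\E_k\|x^{k+1}-x^*\|^2 \le (1-\gamma\mu)\|x^k-x^*\|^2 + 2\gamma^2\widetilde{\cL}\|\nabla f(x^*)\|^2_{\ML^{\dagger}}$; taking total expectation, iterating and summing the geometric series gives the claimed rate with neighbourhood $\frac{2\gamma\widetilde{\cL}}{\mu}\|\nabla f(x^*)\|^2_{\ML^{\dagger}}$. (Alternatively, once the second‑moment estimate is in hand, the conclusion follows from the framework of~\citet{sigma_k}, exactly as for DCGD+.)

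The hard part will be the constant chasing in the second‑moment estimate. Because $\ML$ is generally singular, one must repeatedly argue that $\ML^{\half}\ML^{\phalf}$ collapses to the identity wherever it appears — every gradient lies in $\range(\ML)$, but cross terms such as $\E[(\overbar{\MC}^k)^{\top}(\overbar{\MC}^k-\MI)]$ mix the range projector with $\widetilde{\MP}\circ\ML$. Moreover, sharpening the coefficient of $D_f(x^k,x^*)$ down to exactly $\overbar{\cL}$ — rather than a looser $\overbar{\cL}+\widetilde{\cL}$ or $L+2\widetilde{\cL}$ — requires handling the cross term between $\nabla f(x^k)-\nabla f(x^*)$ and $\nabla f(x^*)$ carefully; a blunt application of Young's inequality after splitting $\|\nabla f(x^k)\|^2_{\ML^{\dagger}}$ around $\nabla f(x^*)$ only yields a more restrictive step‑size condition than the stated $\gamma\le\frac{1}{2\overbar{\cL}}$, so this is the step that has to be done tightly.
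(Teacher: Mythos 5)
Your plan gets the scaffolding right — nonexpansiveness of $\prox$, unbiasedness via the $\range(\ML)$ observation, a second-moment estimate, cocoercivity plus strong monotonicity to close — and, to your credit, you correctly flag the one step where the argument is delicate. But you then stop exactly at that step, and the decomposition you chose to build from is not the one the paper uses; it will not give the stated constant. Starting from the exact (DCGD+-style) moment identity
$$
\E_k\|g^k-\nabla f(x^*)\|^2 \;=\; \|\nabla f(x^k)-\nabla f(x^*)\|^2 \;+\; \big\|\ML^{\phalf}\nabla f(x^k)\big\|^2_{\widetilde{\MP}\circ\ML}
$$
and then splitting the variance term around $\nabla f(x^*)$ with Young's inequality produces a coefficient $L+2\widetilde{\cL}$ on $\|\nabla f(x^k)-\nabla f(x^*)\|^2_{\ML^\dagger}$. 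This is genuinely a different constant from $2\overbar{\cL}$ — from Lemma~\ref{lem:PoL} one has $L\le\overbar{\cL}\le L+\widetilde{\cL}$, and depending on how $\ML$ and $\widetilde{\MP}\circ\ML$ interact, $L+2\widetilde{\cL}$ can sit above or below $2\overbar{\cL}$ — so the step-size threshold $\gamma\le\nicefrac{1}{2\overbar{\cL}}$ in the theorem does not fall out of your route, and no obvious "tightening" of the cross term rescues it.

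The paper sidesteps the whole issue by never invoking the exact decomposition. It applies Young's inequality one line earlier, inside the expectation, to
$$
\overbar{\MC}\nabla f(x)-\nabla f(x^*)\;=\;\overbar{\MC}\big(\nabla f(x)-\nabla f(x^*)\big)\;+\;\big(\overbar{\MC}-\MI\big)\nabla f(x^*),
$$
giving $\E\|\overbar{\MC}\nabla f(x)-\nabla f(x^*)\|^2\le 2\E\|\overbar{\MC}(\nabla f(x)-\nabla f(x^*))\|^2+2\E\|(\overbar{\MC}-\MI)\nabla f(x^*)\|^2$. The first summand is a raw second moment (no subtraction of its mean), and since $\nabla f(x)-\nabla f(x^*)\in\range(\ML)$ it reduces to $\|\ML^\phalf(\nabla f(x)-\nabla f(x^*))\|^2_{\overbar{\MP}\circ\ML}\le\overbar{\cL}\,\|\nabla f(x)-\nabla f(x^*)\|^2_{\ML^\dagger}$ — the $\overbar{\cL}$ appears because you are not recentering, so the Hadamard matrix is $\overbar{\MP}\circ\ML$, not $\widetilde{\MP}\circ\ML$. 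The second summand is the compressor's variance at the fixed vector $\nabla f(x^*)$, which Lemma~\ref{lem:transform} bounds by $\widetilde{\cL}\,\|\nabla f(x^*)\|^2_{\ML^\dagger}$. Plugging these in and splitting $-2\gamma\langle x-x^*,\nabla f(x)-\nabla f(x^*)\rangle$ evenly — half absorbed against $2\gamma^2\overbar{\cL}\|\nabla f(x)-\nabla f(x^*)\|^2_{\ML^\dagger}$ via the cocoercivity consequence of~\eqref{bregman-smooth}, half lower-bounded by $\gamma\mu\|x-x^*\|^2$ via strong monotonicity — is precisely where $2\gamma\overbar{\cL}\le 1$ enters and the recursion contracts. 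This split-before-decomposing move is the missing idea in your plan. (Incidentally, the "$\gamma\le\nicefrac{1}{2\widetilde{\cL}}$" appearing inside the paper's Appendix~\ref{apx-thm:rate-dist-opt2} is a typo for $\gamma\le\nicefrac{1}{2\overbar{\cL}}$; the theorem statement is the correct one.)
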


\begin{table}[t]
\caption{Original and proposed new methods for both single node and distributed setups.}
\label{tbl:new-methods2}
\vskip 0.15in
\begin{center}
\begin{small}
\begin{sc}
\begin{tabular}{@{\hskip 0.01in}cccccc@{\hskip 0.01in}}
\toprule
\makecellnew{Original} & `NSync & CGD & DCGD   & DIANA & ADIANA \\
\midrule
\makecellnew{{\bf NEW}}
& \makecellnew{SkGD \\ (Alg.\ref{alg:cgd-1})}
& \makecellnew{CGD+ \\ (Alg.\ref{alg:prox-cgd-2})}
& \makecellnew{DCGD+ \\ (Alg.\ref{alg:dskgd})}
& \makecellnew{DIANA+ \\ (Alg.\ref{alg:DIANA+})}
& \makecellnew{ADIANA+ \\ (Alg.\ref{alg:aDIANA+})}
\\
\midrule
Proximal
& \xmark
& \cmark
& \cmark
& \cmark
& \cmark
\\
Distributed
& \xmark
& \xmark
& \cmark
& \cmark
& \cmark
\\
\makecellnew{Variance  Reduced}
& \xmark
& \xmark
& \xmark
& \cmark
& \cmark
\\
Accelerated
& \xmark
& \xmark
& \xmark
& \xmark
& \cmark
\\
\bottomrule
\end{tabular}
\end{sc}
\end{small}
\end{center}
\vskip -0.1in
\end{table}

{\footnotesize
    
    \begin{table*}[t]
        \centering
        \caption{Complexity  of new methods with hidden log factors and constants.}\label{table:notations}
        \renewcommand{\arraystretch}{1.7}
        \begin{tabular}{|c|l|}
            \hline
            Method & Iteration Complexity \\
            \hline
            SkGD (Algorithm \ref{alg:cgd-1}) & $\frac{\overbar{\cL}}{\mu}$  \\
            \hline
            CGD+ (Algorithm \ref{alg:prox-cgd-2}) & $\frac{\overbar{\cL}}{\mu} + \frac{\widetilde{\cL}}{\mu^2\varepsilon}$  \\
            \hline
            DCGD+ (Algorithm \ref{alg:dskgd}) & $\frac{L}{\mu} + \frac{\widetilde{\cL}_{\max}}{\mu n} + \frac{\widetilde{\cL}_{\max}}{\mu^2n\varepsilon}$  \\
            \hline
            DIANA+ (Algorithm \ref{alg:DIANA+}) & $\omega_{\max} + \frac{L}{\mu} + \frac{\widetilde{\cL}_{\max}}{\mu n}$  \\
            \hline
            ADIANA+ (Algorithm \ref{alg:aDIANA+}) &
            $
            \begin{cases}
            \omega_{\max} + \sqrt{\omega_{\max}\frac{\widetilde{\cL}_{\max}}{\mu n}} & \text{if}\quad nL \le \widetilde{\cL}_{\max}\\
            \omega_{\max} + \sqrt{\frac{L}{\mu}} + \sqrt{\omega_{\max}\sqrt{\frac{\widetilde{\cL}_{\max}}{\mu n}}\sqrt{\frac{L}{\mu}}} & \text{if}\quad  nL > \widetilde{\cL}_{\max}.
            \end{cases}
            $  \\
            \hline
        \end{tabular}       
    \end{table*}
}

\clearpage

\section{Lower Bounds for Sketches as Linear Compression Operators}\label{sec:lower-bounds}

Here we investigate general sketch matrices $\MS$ as a linear compression operators. The motivation of this is to understand the trade-off between communication and variance of linear compressors. The notation, used in this section only, slightly deviates from the paper but otherwise is consistent throughout the section.

Consider compression of vectors $x\in\R^d$ allowing approximation error in exchange for less bits of communication. Let compression operator $\cC\colon\R^d\to\R^d$ be composed of some linear encoder $E(x)=\MS x$ with $s\times d$ sketch matrix $\MS$ and an arbitrary decoder $D\colon\R^s\to\R^d$, so that $\cC(x) = D(\MS x)$. Throughout we consider the space $\R^d$ equipped with an inner product together with its induced norm given by some symmetric and positive definite matrix $\MB$ of size $d\times d$ as follows
$$
\<x, y\>_{\MB} = x^{\top}\MB y, \quad \|x\|_\MB = \sqrt{\<x, x\>_\MB}, \quad x,y\in\R^d.
$$
In general, we let matrix $\MS$, number of rows $s$ and decoder $D$ to be random, while the matrix $\MB$ will be fixed throughout the analysis. Since we consider only linear encoders, we may assume $\|x\|_\MB=1$.

\subsection{Fixed sketches}

We first analyze the case where the sketch matrix $\MS$ is fixed and hence the compression operator $\cC$ is deterministic. The analysis then we will lead us on a more usefull result for random sketches. The decoder $D$ receiving vector $y=\MS x$ should be able to reconstruct $\hat{x}=D(y)$ so to minimize the squared error
$$
\alpha(\MS) \eqdef \sup_{\|x\|_\MB=1} \|\cC(x)-x\|^2_\MB = \sup_{\|x\|_\MB=1} \|D(\MS x)-x\|^2_\MB \le 1.
$$

The following lemma shows the optimal strategy for the decoder and possible values for $\alpha(\MS)$.
\begin{lemma}\label{lem:fixed-sketch}
For a fixed sketch $\MS$ the optimal reconstruction from $y=\MS x$ is
\begin{equation}\label{D*-fixed}
D^*(y) = \MS^{\dagger_{\MB}} y \equiv \MB^{-1}\MS^{\top}\(\MS \MB^{-1}\MS^{\top}\)^{\dagger}y,
\end{equation}
where $\cdot^\dagger$ indicates the Moore–Penrose inverse of a matrix. Furthermore, if $\ker(\MS) = \{0\}$ then $\alpha(\MS)=0$ as in this case $D^*(\MS x)=x$ for any $x\in\R^d$. Otherwise, if $\ker(\MS)\ne\{0\}$, then $\alpha(\MS)=1$.
\end{lemma}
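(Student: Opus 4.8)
\textbf{Proof plan for Lemma~\ref{lem:fixed-sketch}.} The plan is to work in the $\MB$-inner product throughout, since all norms and adjoints are taken with respect to $\MB$. First I would observe that the encoder $x\mapsto \MS x$ only retains information about the component of $x$ orthogonal (in the $\MB$-sense) to $\ker(\MS)$: if $x_1,x_2$ differ by an element of $\ker(\MS)$, then $\MS x_1 = \MS x_2$ and no decoder can distinguish them. Consequently, for the worst-case error I would decompose an arbitrary unit vector $x = x_{\parallel} + x_{\perp}$, where $x_{\parallel}\in\ker(\MS)^{\perp_{\MB}}$ and $x_{\perp}\in\ker(\MS)$, and note that the decoder output $D(\MS x)$ depends only on $x_{\parallel}$.

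Next I would establish optimality of $D^*(y) = \MS^{\dagger_{\MB}}y$. The key point is that for $y$ in the range of $\MS$, the formula $\MS^{\dagger_{\MB}}y = \MB^{-1}\MS^\top(\MS\MB^{-1}\MS^\top)^\dagger y$ returns precisely the minimum-$\MB$-norm preimage of $y$ under $\MS$, which equals the $\MB$-orthogonal projection of any preimage onto $\ker(\MS)^{\perp_{\MB}}$. A clean way to see this is to change variables $z = \MB^{1/2}x$, turning $\MS$ into $\MS\MB^{-1/2}$ acting on the standard Euclidean space, where the classical Moore--Penrose pseudoinverse does exactly this job; transforming back recovers the stated $\MS^{\dagger_{\MB}}$. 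With $D^*(\MS x) = x_{\parallel}$, the reconstruction error is $\|D^*(\MS x)-x\|_\MB^2 = \|x_{\perp}\|_\MB^2$, and I would argue that no decoder can do better: since $D$ is a function of $\MS x = \MS x_{\parallel}$ alone, for fixed $x_{\parallel}$ its output is some fixed vector $w$ independent of $x_{\perp}$, and $\sup_{x_{\perp}}\|w-(x_{\parallel}+x_{\perp})\|_\MB^2 \ge \|x_{\perp}^*\|_\MB^2$ for an adversarial choice, so $D^*$ attains the minimax value.

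Finally I would read off the dichotomy. If $\ker(\MS)=\{0\}$, then every $x$ has $x_{\perp}=0$, so $D^*(\MS x)=x$ and $\alpha(\MS)=0$. If $\ker(\MS)\ne\{0\}$, pick a unit vector $x$ lying entirely in $\ker(\MS)$ (possible since the kernel is a nontrivial subspace and $\|\cdot\|_\MB$ is a genuine norm on it); then $\MS x = 0$, so $D(\MS x) = D(0)$ is a single fixed vector $w$ for \emph{every} decoder, and taking $x$ and $-x$ shows $\max(\|w-x\|_\MB^2, \|w+x\|_\MB^2)\ge \|x\|_\MB^2 = 1$. Combined with the universal bound $\alpha(\MS)\le 1$ (which holds because $D\equiv 0$ already achieves $\sup_{\|x\|_\MB=1}\|{-x}\|_\MB^2 = 1$), this forces $\alpha(\MS)=1$. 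The main obstacle I anticipate is bookkeeping the $\MB$-weighted adjoints and projections correctly---in particular verifying that $\MB^{-1}\MS^\top(\MS\MB^{-1}\MS^\top)^\dagger$ really is the $\MB$-orthogonal projector onto $\ker(\MS)^{\perp_\MB}$ composed with a section of $\MS$---rather than any genuinely deep step; the change of variables $z=\MB^{1/2}x$ should make this routine.
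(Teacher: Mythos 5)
Your argument is correct and follows the same essential route as the paper's proof: the $\MB$-orthogonal decomposition $x = x_\parallel + x_\perp$ with $x_\parallel \in \ker(\MS)^{\perp_\MB}$ is precisely the paper's minimal-$\MB$-norm solution $x^{\dagger_\MB}$, your adversarial pair $x_\parallel \pm x_\perp$ is exactly the paper's pair $(x, x^S)$ with $x^S = 2x^{\dagger_\MB} - x$, and your change of variables $z = \MB^{1/2}x$ is the same mechanism the paper uses (implicitly) in writing $\MS^{\dagger_\MB} = \MB^{-1/2}(\MS\MB^{-1/2})^\dagger$. The parallelogram-law lower bound you invoke is equivalent to the paper's $\max(\|\hat{x}-x\|_\MB^2,\|\hat{x}-x^S\|_\MB^2) \ge \tfrac14\|x^S-x\|_\MB^2$ step, and the treatment of the two kernel cases is identical.
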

\begin{proof}
Let $\ker(\MS) = \{z\colon \MS z=0\}$ be the kernel of $\MS$ and $x^{\dagger_\MB}=\MS^{\dagger_\MB} y$ be the minimal $\MB$-norm solution to the system $\MS z=y$ so that the set of all solutions is $x^{\dagger_\MB}+\ker(\MS)$:
$$
x^{\dagger_\MB} = \argmin_{x\colon \MS x=y} \|x\|_\MB^2 = \MS^{\dagger_\MB} y = \MB^{-\nicefrac{1}{2}}\(\MS\MB^{-\nicefrac{1}{2}}\)^{\dagger}y,
$$
Denote by
$$
\hat{S}(x) \eqdef \(x^{\dagger_\MB}+\ker(\MS)\)\cap \{z\in\R^d \colon \|z\|_\MB=1\}
$$
the intersection of the affine set of solutions and the unit sphere. Notice that initial vector $x\in\hat{S}(x)$ as it has unit $\MB$-norm and satisfies $\MS x=y$. Now the cost of sending $\MS x$ instead of original $x$, is the uncertainty that the decoder has to deal with by estimating the original vector within the set $\hat{S}$ so to minimize $\alpha$. We first show that $x^S \eqdef 2x^{\dagger_\MB}-x\in\hat{S}(x)$, which is equivalent to
$$
x^{\dagger_\MB} -  x\in \ker(\MS) \quad\text{and}\quad \|2x^{\dagger_\MB} -  x\|^2_\MB = 1.
$$
The first claim follows from the fact that both $x$ and $x^{\dagger_\MB}$ are solutions to $\MS z=y$, namely $\MS x^{\dagger_\MB} = y = \MS x$. Expanding the square in the second claim we get $\<x^{\dagger_\MB},x^{\dagger_\MB}-x\>_\MB=0$ which holds as $x^{\dagger_\MB}$ is the minimal $\MB$-norm solution. Therefore the vector $y$ the decoder receives does not differentiate between $x$ and $x^S$.
This implies that for any choice of $\hat{x}$ of the decoder
\begin{equation*}
\max\(\|\hat{x}-x\|_\MB^2,\|\hat{x}-x^S\|_\MB^2\) \ge \tfrac{1}{4}\(\|\hat{x}-x\|_\MB+\|\hat{x}-x^S\|_\MB\)^2 \ge \tfrac{1}{4} \|x^S-x\|_\MB^2 = \|x^{\dagger_\MB}-x\|_\MB^2
\end{equation*}
squared-error is unavoidable for the couple $x,x^S$ and the optimal choice is $\hat{x}=x^{\dagger_\MB}$. Thus, the optimal decoding strategy to $y=\MS x$ is $D^*(y) = x^{\dagger_\MB}$ given in (\ref{D*-fixed}).
Now, if $\ker(\MS)\ne\{0\}$ then we could pick the initial vector $x$ from the kernel space, i.e. $x\in\ker(\MS)$ and $\|x\|_\MB=1$. Then we would have $x^{\dagger_\MB}=0$ and hence the minimal squared-error $\alpha(\MS)=1$. On the other hand, if $\ker(S)=\{0\}$, then $x^{\dagger_\MB}=x$ as the system $\MS z=y$ has unique solution.
\end{proof}

To conclude for fixed sketches, notice that, $x$ and $x^S$ are in symmetry in this analysis. Indeed, if the initial vector was $x^S$ as opposed to $x$, then $\MS x=\MS x^S$, hence $x^{S\dagger_\MB}=x^{\dagger_\MB}$ and $x^{SS}=x$. Therefore, the analysis of Lemma \ref{lem:fixed-sketch} leads to the following lower bound for any decoder $D$ and initial vector $x\in\R^d$
\begin{equation}\label{fixed-sketch-lb}
\max_{z = x,x^S}\|\cC(z)-z\|^2_\MB \ge \|x^{\dagger_\MB}-x\|^2_\MB = 1 - \|x^{\dagger_\MB}\|^2_\MB = 1 - \|\MZ x\|^2_\MB,
\end{equation}
where we used orthogonality $\<x^{\dagger_\MB},x^{\dagger_\MB}-x\>_\MB=0$ and defined the random matrix $\MZ=\MZ(\MS)$ via
$$
\MZ \eqdef \MS^{\dagger_\MB}\MS = \MB^{-\nicefrac{1}{2}}\(\MS\MB^{-\nicefrac{1}{2}}\)^{\dagger}\MS = \MB^{-1}\MS^{\top}\(\MS\MB^{-1}\MS^{\top}\)^{\dagger}\MS.
$$

\subsection{Random sketches}

Now we turn to the general case when sketch matrix $\MS$ is random and drawn from some distribution $\cD$, to which both encoder and decoder have access. The number of rows $s$ of $\MS$ can also be random. In this case, the decoder $D$ upon receiving random vector $y=\MS x$ should estimate possibly randomized $\hat{x}=D(y)$ so to minimize the expected square error
\begin{equation}\label{def:alpha-D}
\alpha(\cD) \eqdef \sup_{\|x\|_\MB=1} \E\[\|\cC(x)-x\|^2_\MB\] \le 1,
\end{equation}
where $\cC(x) = D(\MS x)$ is a random mapping with a source of randomness coming from the distribution $\cD$ and decoder $D$. Below we prove a lower bound for $\alpha(\cD)$.

\begin{theorem}
Let $\cD$ be some distribution over $s\times d$ matrices $\MS$ allowing variable number of rows $s\in[d]$. Then for any (possibly randomized) compression operator $\cC(x) = D(\MS x)$ with i.i.d. samples $\MS\sim\cD$ and $x\in\R^d$ the following lower bound holds
\begin{equation}\label{up-sketches-simple}
\alpha(\cD) + \E_{\cD}\[\nicefrac{r}{d}\] \ge 1,
\end{equation}
where $r = \rank(\MS)$ is the number of independent rows in $\MS$.
\end{theorem}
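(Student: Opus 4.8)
The plan is to reduce the random-sketch case to the fixed-sketch analysis of Lemma~\ref{lem:fixed-sketch}, averaging over the distribution $\cD$ and then over a cleverly chosen distribution of inputs $x$. First I would fix a realization $\MS$ and recall from \eqref{fixed-sketch-lb} that for the pair $x, x^S = 2x^{\dagger_\MB}-x$ we have $\max_{z=x,x^S}\|\cC(z)-z\|_\MB^2 \ge 1 - \|\MZ(\MS)x\|_\MB^2$, where $\MZ(\MS) = \MS^{\dagger_\MB}\MS$ is the $\MB$-orthogonal projection onto $\range(\MS^\top)$ (in the $\MB$-geometry), of rank $r = \rank(\MS)$. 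The key point is that a deterministic decoder cannot distinguish $x$ from $x^S$ given $y = \MS x = \MS x^S$; for a randomized decoder the same conclusion holds in expectation, since $\E[\|\cC(x)-x\|_\MB^2 \mid y] $ and $\E[\|\cC(x^S)-x^S\|_\MB^2 \mid y]$ are governed by the same conditional law of $\hat x$, and at least one of the two expected squared errors is $\ge \tfrac14\|x - x^S\|_\MB^2 = \|x^{\dagger_\MB}-x\|_\MB^2 = 1 - \|\MZ x\|_\MB^2$.

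Next I would remove the asymmetry between $x$ and $x^S$ by randomizing the input. The cleanest device: draw $x$ uniformly from the unit $\MB$-sphere $\{\|x\|_\MB = 1\}$ (equivalently, $x = \MB^{-1/2}u$ with $u$ uniform on the Euclidean unit sphere $\sphere^{d-1}$), independently of $\MS$. By the symmetry $x \leftrightarrow x^S$ noted after Lemma~\ref{lem:fixed-sketch}, and since $x$ and $x^S$ have the same distribution when $x$ is uniform on the $\MB$-sphere (because $x \mapsto 2\MZ x - x$ is a $\MB$-isometry — it is the reflection through $\range(\MS^\top)$), we get
\begin{equation*}
\sup_{\|x\|_\MB=1}\E\[\|\cC(x)-x\|_\MB^2\] \;\ge\; \E_{\MS}\E_{x}\[\|\cC(x)-x\|_\MB^2\] \;\ge\; \E_{\MS}\E_x\[1 - \|\MZ(\MS)x\|_\MB^2\].
\end{equation*}
Here the supremum dominates the average over $x$, and the averaged one-sided bound becomes two-sided because averaging over the uniform $x$ symmetrizes the "$\max$ over $\{x,x^S\}$'' into a genuine expectation. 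The last step is the computation $\E_x\[\|\MZ(\MS)x\|_\MB^2\] = \E_u\[\|\Pi u\|^2\]$ where $\Pi = \MB^{1/2}\MZ\MB^{-1/2}$ is an ordinary Euclidean orthogonal projection of rank $r$; for $u$ uniform on $\sphere^{d-1}$ this equals $r/d = \rank(\MS)/d$. Taking $\E_{\MS\sim\cD}$ gives $\alpha(\cD) \ge 1 - \E_\cD[r/d]$, which is \eqref{up-sketches-simple}.

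The main obstacle I anticipate is handling the randomized decoder correctly: one must argue that conditioning on the received message $y=\MS x$, the decoder's output distribution is the same whether the true input was $x$ or $x^S$, so that the triangle-inequality argument $\max(\|\hat x - x\|_\MB, \|\hat x - x^S\|_\MB) \ge \tfrac12\|x-x^S\|_\MB$ can be passed through the expectation; a small technical wrinkle is that $x^S$ depends on $\MS$ (through $x^{\dagger_\MB}$), so the symmetrization must be done \emph{after} conditioning on $\MS$, using that for fixed $\MS$ the reflection $x\mapsto x^S$ preserves the uniform law on the $\MB$-sphere. A secondary, purely mechanical point is verifying that $\MB^{1/2}\MS^{\dagger_\MB}\MS\MB^{-1/2}$ is indeed an orthogonal projection of rank $\rank(\MS)$ and that $\E_{u\sim\sphere^{d-1}}\|\Pi u\|^2 = \tr(\Pi)/d = r/d$; both follow from $\E_u[uu^\top] = \tfrac1d\MI$ and are routine.
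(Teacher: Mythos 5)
Your proof is correct, and it follows essentially the same skeleton as the paper's: reduce to the fixed-sketch bound \eqref{fixed-sketch-lb}, express the per-realization error floor via the projection $\MZ(\MS)=\MS^{\dagger_\MB}\MS$, and exploit the trace--rank identity for idempotents. The main presentational divergence is in how the supremum over $x$ is lower-bounded. The paper takes
$$
1-\alpha(\cD) \;\le\; \inf_{\|x\|_\MB=1} \E_\cD\bigl[\|\MZ x\|_\MB^2\bigr] \;=\; \lambda_{\min}\bigl(\E_\cD[\MZ]\bigr) \;\le\; \tfrac{1}{d}\tr\bigl(\E_\cD[\MZ]\bigr) \;=\; \E_\cD\bigl[\tfrac{r}{d}\bigr],
$$
while you replace the infimum by the average over $x$ uniform on the $\MB$-sphere, which yields $\E_x\E_\cD\|\MZ x\|_\MB^2 = \E_\cD[\tr(\Pi)/d] = \E_\cD[r/d]$ in one step. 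These are logically the same move (your averaging realizes the ``$\lambda_{\min}\le$ average of eigenvalues'' step as an explicit expectation), so neither is stronger. Where your write-up arguably adds value is in the symmetrization: the paper's sentence ``the decoder cannot avoid the error $1-\|\MZ x\|_\MB^2$ for any initial $x$'' compresses a step that is not quite immediate from \eqref{fixed-sketch-lb}, which only bounds the \emph{max} over $\{x,x^S\}$; your argument that $x\mapsto x^S = (2\MZ-\MI)x$ is a $\MB$-isometric reflection preserving the uniform law on the $\MB$-sphere (conditionally on $\MS$) makes the passage from the max to the expectation on $x$ alone explicit and airtight, including for randomized decoders. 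In short: same theorem, same core identities, with your version being marginally more careful on the decoder/symmetry point and the paper's being marginally more explicit about the $\lambda_{\min}$ variational characterization.
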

\begin{proof}
Based on the lower bound (\ref{fixed-sketch-lb}) obtained from the deterministic case, decoder cannot avoid the error $1 - \|\MZ x\|_\MB^2$ even in the case of knowing what sketch the encoder used. Therefore minimal expected error $1 - \E_{\MS\sim\cD}\|\MZ x\|_\MB^2$ is unavoidable for any initial $x$. This leads to the following bound
\begin{eqnarray*}
1-\alpha(\cD)
&\le & \inf_{\|x\|_\MB=1} \E_{\cD}\[\|\MZ x\|_\MB^2\] \\
&= &  \inf_{\|x\|_\MB=1} \E_{\cD}\[x^{\top} \MZ^{\top}\MB\MZ x\] \\
&\overset{z = \MB^{\nicefrac{1}{2}}x}{=}& \inf_{\|z\|=1} \E_{\cD}\[z^{\top} \MB^{-\nicefrac{1}{2}} \MZ^{\top}\MB\MZ \MB^{-\nicefrac{1}{2}} z\] \\
&=& \inf_{\|z\|=1} z^{\top}\E_{\cD}\[ \MB^{-\nicefrac{1}{2}} \MZ^{\top}\MB\MZ \MB^{-\nicefrac{1}{2}} \]z \\
&=& \lambda_{\min}\(\E_{\cD}\[ \MB^{-\nicefrac{1}{2}} \MZ^{\top}\MB\MZ \MB^{-\nicefrac{1}{2}} \]\) \\
&=& \lambda_{\min}\(\E_{\cD}\[ \MB^{-1} \MZ^{\top}\MB\MZ \]\) \\
&= & \lambda_{\min}\(\E_{\cD}\[ \MB^{-1}\MS^{\top}\(\MS\MB^{-1}\MS^{\top}\)^{\dagger}\MS \]\) \\
&=& \lambda_{\min}\(\E_{\cD}\[ \MZ \]\),
\end{eqnarray*}
where the expectation is with respect to $\MS\sim\cD$. Thus, we obtained the following lower bound:
\begin{equation}\label{up-sketches}
\alpha(\cD) + \lambda_{\min}\(\E_{\cD}\[\MS^{\dagger_\MB}\MS\]\) \ge 1.
\end{equation}
To prove the inequality (\ref{up-sketches-simple}), it is enough to establish the following upper bound for the minimal eigenvalue
$$
\lambda_{\min}\(\E_{\cD}\[\MZ\]\) \le \E_{\cD}\[\nicefrac{r}{d}\].
$$

We follow the proof of Lemma 4.2 of \citet{GowRic} to prove this inequality. It can be easily checked that, using the properties of pseudo-inverse, $\MZ=\MS^{\dagger_\MB}\MS$ is an idempotent matrix
for any $\MS$, namely $\MZ^2=\MZ$. This implies that all eigenvalues of $\MZ$ are either $0$ or $1$ as they must satisfy the same relation $\lambda^2=\lambda$. Trace $\tr(\MZ)$ of such matrices coincides with the number of non-zero eigenvalues, which also shows the rank:
\begin{equation}\label{trace=rank}
\tr(\MZ) = \sum_{i=1}^d \lambda_i\(\MZ\) = \#\{i\in[d]\colon \lambda_i\(\MZ\)\ne 0\} = \rank(\MZ).
\end{equation}
From the properties of pseudo-inverse it follows that $\rank(\MA^\dagger \MA)=\rank(\MA^\dagger)=\rank(\MA)$ for any matrix $\MA$. Hence
\begin{align*}
\rank(\MZ) = \rank(\MS^{\dagger_\MB}\MS) &= \rank\(\MB^{-\nicefrac{1}{2}}\(\MS\MB^{-\nicefrac{1}{2}}\)^{\dagger}\MS\) \\
&= \rank\(\(\MS\MB^{-\nicefrac{1}{2}}\)^{\dagger}\MS\MB^{-\nicefrac{1}{2}}\) = \rank\(\MS\MB^{-\nicefrac{1}{2}}\) = \rank\(\MS\) = r.
\end{align*}
Combining with (\ref{trace=rank}) we get $\tr(\MZ) = r$. The purpose of expressing the rank as a trace is that in contrast to rank, trace and expectation operators are commutative, which basically follows from the linearity of the expectation:
\begin{equation}\label{trace-expectation}
\tr\(\E_{\cD}[\MZ]\) = \E_{\cD}\[\tr(\MZ)\].
\end{equation}
Using (\ref{trace=rank}), (\ref{trace-expectation}) and $\tr(\MZ)=r$, we conclude
\begin{equation*}
\lambda_{\min}\(\E_{\cD}\[\MZ\]\) \le \frac{1}{d}\sum_{i=1}^d \lambda_i\(\E_{\cD}\[\MZ\]\) = \frac{\tr\(\E_{\cD}\[\MZ\]\)}{d} = \frac{\E_{\cD}\[\tr\(\MZ\)\]}{d} = \frac{\E_{\cD}[r]}{d},
\end{equation*}
which completes the proof.
\end{proof}


\subsection{Optimal sketches}

With the knowledge of this new lower bound, here we construct a distribution $\cD$ of sketches that will achieve equality in (\ref{up-sketches-simple}). Let $\MB=\MQ\bm{\Lambda}\MQ^{\top}$ be the eigendecomposition of the symmetric matrix $\MB$, where $\bm{\Lambda}$ is diagonal with eigenvalues and $\MQ$ is orthogonal with eigenvectors as columns. Let $\MC$ be the diagonal sketch of size $d\times d$ corresponding to random sparsification with probabilities $p=(p_i)_{i=1}^d$, namely
\begin{equation*}
\MC = \diag(c), \quad 
c_i =
\begin{cases}
1 & \text{with prob.}\quad p_i,\\
0 & \text{with prob.}\quad 1-p_i.
\end{cases}
\end{equation*}
Define a distribution $\cD=\cD_p$ of sketches as $\MS = \MC\MQ^{\top}$ and notice that
$$
\E_{\cD}\[\rank(\MS)\] = \E_{\cD}\[\rank(\MC)\] = \E_{\cD}\[\#\{i\in[d] \colon c_i=1\}\] = \E_{\cD}\[\sum_{i=1}^d c_i\] = \sum_{i=1}^d \E_{\cD}\[c_i\] = \sum_{i=1}^d p_i.
$$
Therefore, $\E_{\cD}\[\nicefrac{r}{d}\] = \frac{1}{d}\sum p_i$. With decoder $D(x)=\MQ x$ we get a compression operator $\cC(x) = \MQ\MS x$. Next,  we compute $\alpha(\cD)$ as follows
\begin{eqnarray*}
\alpha(\cD)
&=& \sup_{\|x\|_\MB=1} \E\[\|\cC(x)-x\|^2_\MB\] \\
&=& \sup_{\|x\|_\MB=1} \E\[\|\MQ\MS x-x\|^2_\MB\] \\
&= &\sup_{x^{\top} \MB x = 1} \E\[x^{\top}(\MI-\MQ\MS)^{\top}\MB(\MI-\MQ\MS)x\] \\
&=& \sup_{x^{\top} \MQ\MC\MQ^{\top} x = 1} x^{\top}\E\[(\MI-\MQ\MC\MQ^{\top})\MB(\MI-\MQ\MC\MQ^{\top})\]x \\
&=& \sup_{(\MQ^{\top}x)^{\top}\bm{\Lambda}(\MQ^{\top}x)} (\MQ^{\top}x)^{\top}\E\[(\MI-\MC)\MQ^{\top}\MB\MQ(\MI-\MC)\](\MQ^{\top}x) \\
&\overset{y=\MQ^{\top}x}{=}  & \sup_{y^{\top}\bm{\Lambda} y=1} y^{\top}\E\[(\MI-\MC)\bm{\Lambda}(\MI-\MC)\]y\\
&=& \sup_{y^{\top}\bm{\Lambda} y=1} (\bm{\Lambda}^{\nicefrac{1}{2}} y)^{\top}\E\[(\MI-\MC)^2\](\bm{\Lambda}^{\nicefrac{1}{2}} y) \\
&\overset{z=\bm{\Lambda}^{\nicefrac{1}{2}} y}{=} & \sup_{\|z\|=1} z^{\top} \cdot \Mdiag(1-p) \cdot z \\
&= & \max_{1\le i\le d} (1-p_i) = 1 - \min_{1\le i\le d} p_i.
\end{eqnarray*}
Hence $$1 \le \alpha(\cD) + \E_{\cD}\[\nicefrac{r}{d}\] = 1 - \min_{1\le i\le d} p_i + \frac{1}{d}\sum_{i=1}^d p_i,$$ and equality occurs if and only if all probabilities $p_i$ are equal to some $q\in[0,1]$. Thus, the optimal sketches are obtained by rotating the coordinate basis to the basis of eigenvectors of $\MQ$ (i.e. $x\to \MQ^{\top}x$), and then randomly sparsify coordinates with diagonal sketch matrix $\MC$ (i.e. $\MQ^{\top}x\to \MC\MQ^{\top}x=\MS x$). We summarize this result in the following theorem.

\begin{theorem}\label{thm:opt-sketches}
Let $\MB=\MQ\bm{\Lambda} \MQ^{\top}$ be the eigendecomposition of $\MB$ of induced norm, $q\in[0,1]$ and $\MC$ be random diagonal sketch corresponding to the random $q$-sparsifer. Then sketches $\MS=\MC\MQ^{\top}$ are optimal with respect to variance against rank trade-off (\ref{up-sketches-simple}) with squared error $\alpha=1-q$ and expected rank $\E[r]=q d$.
\end{theorem}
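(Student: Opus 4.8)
The statement asserts that the distribution $\cD=\cD_p$ with all probabilities equal to a common value $q$ attains equality in the lower bound \eqref{up-sketches-simple}, so the plan is simply to compute the two quantities $\E_{\cD}[\nicefrac{r}{d}]$ and $\alpha(\cD)$ for this distribution and verify they sum to exactly $1$. Since \eqref{up-sketches-simple} already guarantees $\alpha(\cD)+\E_{\cD}[\nicefrac{r}{d}]\ge 1$ for \emph{every} compression operator built from a linear encoder $\MS x$, matching this with equality is precisely the meaning of ``optimal'' in the trade-off considered here.

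First I would handle the rank. Because $\MQ$ is orthogonal (hence invertible), $\rank(\MS)=\rank(\MC\MQ^{\top})=\rank(\MC)=\#\{i\in[d]\colon c_i=1\}=\sum_{i=1}^d c_i$, and taking expectations over the Bernoulli variables $c_i$ with $\Pr(c_i=1)=q$ gives $\E_{\cD}[r]=qd$, i.e. $\E_{\cD}[\nicefrac{r}{d}]=q$.

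The substantive step is the computation of $\alpha(\cD)$ with the decoder $D(y)=\MQ y$, so that $\cC(x)=\MQ\MS x=\MQ\MC\MQ^{\top}x$ and hence $\cC(x)-x=(\MQ\MC\MQ^{\top}-\MI)x$. Plugging this into $\|\cdot\|_{\MB}^2$ with $\MB=\MQ\bm{\Lambda}\MQ^{\top}$ and performing the orthogonal change of variables $y=\MQ^{\top}x$ turns the constraint $\|x\|_{\MB}=1$ into $y^{\top}\bm{\Lambda} y=1$ and the objective into $\E[\,y^{\top}(\MI-\MC)\bm{\Lambda}(\MI-\MC)y\,]$. The key simplification is that $\MC$ is an idempotent $0/1$ diagonal matrix, so $\E[(\MI-\MC)^2]=\Mdiag(1-p)$, and after the further substitution $z=\bm{\Lambda}^{\nicefrac{1}{2}}y$ the problem collapses to $\sup_{\|z\|=1}z^{\top}\Mdiag(1-p)z=\max_{1\le i\le d}(1-p_i)=1-q$. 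Combining the two computations gives $\alpha(\cD)+\E_{\cD}[\nicefrac{r}{d}]=(1-q)+q=1$, which is equality in \eqref{up-sketches-simple}, establishing optimality; as a side observation, equality in the argument following \eqref{up-sketches} forces all $p_i$ to coincide, so this choice is essentially the unique optimizer among diagonal sparsifiers applied in the eigenbasis of $\MB$.

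The only place that needs care is the chain of substitutions in the $\alpha(\cD)$ computation: one must check that expectation and the quadratic form commute as written (linearity, finite dimension), that $\bm{\Lambda}$ is positive definite so that $\bm{\Lambda}^{\nicefrac{1}{2}}$ and the associated change of variables are legitimate, and that the supremum of a diagonal quadratic form over the Euclidean unit sphere is exactly its largest diagonal entry. None of these is deep, so the proof is in the end a bookkeeping exercise carried out on top of the lower bound \eqref{up-sketches-simple}.
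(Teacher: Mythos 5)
Your proof is correct and follows essentially the same route as the paper: compute $\E_{\cD}[\nicefrac{r}{d}]=q$ via $\rank(\MS)=\rank(\MC)$, compute $\alpha(\cD)=1-q$ via the substitutions $y=\MQ^\top x$, $z=\bm{\Lambda}^{\nicefrac{1}{2}}y$, and observe equality in \eqref{up-sketches-simple}. The only cosmetic difference is that you specialize $p_i\equiv q$ from the outset, whereas the paper carries general $p_i$ through the calculation and reads off the uniqueness condition at the end — a point you also record as a side observation.
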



\subsection{Random sketches with linear constraints}

In this part we extend the theory of compressing vectors $x\in\R^d$ with an additional linear constraint $x\in\range(\MA)$ for some $d\times d'$ matrix $\MA$. Such scenarios occur when to-be-compressed vectors are the gradients of $f(w)=\phi(\MA^{\top}w)$, for which $\nabla f(w) = \MA\nabla\phi(\MA^{\top}w)\in\range(\MA)$. Without loss of generality, we may assume that $\MA$ is of full column rank and consequently $d'=\dim\range(\MA)=\rank(\MA)$. The constraint $x\in\range(\MA)$ then can be equivalently written as $x=\MA x'$ for some $x'\in\R^{d'}$. The induced inner product and norm on $\range(\MA)$ is then given by the matrix $\MA^{\top}\MB\MA$ as
$$
\<x, y\>_\MB = \<\MA x', \MA y'\>_\MB = \<x', y'\>_{\MA^{\top}\MB\MA}, \quad x=\MA x',\,y=\MA y'.
$$
Notice that, since $\MS x = \MS\MA x'$, communication of $x\in\R^d$ with sketches $\MS$ reduces to communication of $x'\in\R^{d'}$ with sketches $\MS\MA$. Thus, the additional constraint $x\in\range(\MA)\subset\R^d$ reduces the problem to lower $d'$-dimension with sketches $\MS\MA,\MS\sim\cD$ and norm induced by $\MA^{\top}\MB\MA$.

\subsection{Variance against communication trade-off}

The obtained lower bound (\ref{up-sketches-simple}) can be easily translated in terms of the number of bits. Assuming each float takes $32$ bits to encode and there is no redundant row in $\MS$ (i.e. $s=r$), then $\MS x\in\R^r$ can be communicated with up to $b = 32r$ bits.
Therefore, the lower bound (\ref{up-sketches-simple}) can be written as
\begin{equation}\label{var-bits-lb}
\alpha + \frac{\E\[b\]}{32d} \ge 1,
\end{equation}
which (ignoring the expectation) is exponentially stronger than the lower bound $\alpha\cdot 4^{\nicefrac{b}{d}}\ge 1$ obtained for general compressors in \citep{up_kashin_2020}. We visualize the comparison of these two lower bounds in Figure \ref{fig:ups}. Furthermore, denote by $\beta\eqdef \E\[b\]/32d$ the expected communication reduction factor and recall that $\alpha$ is the portion of the expected lost of information. With this notation the above lower bound (\ref{var-bits-lb}) turns to the following simple inequality
$$
\alpha + \beta \ge 1,
$$
showing the trade-off between information lost and communication reduction for linear compressors; namely more reduction in communication leads to bigger information loss and vice versa. In one extreme, when all $32d$ bits are sent, no reduction in communication is made ($\beta=1$) and no information is lost ($\alpha=0$). In other extreme, when no bits gets transferred ($\beta=0$) we loose all information ($\alpha=1$).

To conclude this section, let us investigate the optimality of random $q$-sparsifier with respect to the lower bound (\ref{var-bits-lb}). Recall that random $q$-sparsifier is optimal with respect to (\ref{up-sketches-simple}). Let $q\in(0,1)$, and $k$ be the (random) number of non-zero entries of sparsified vector. Clearly, $\E\[k\]=qd$ and to encode any $k$-sparse vector one needs $b=32k + \log_2\binom{d}{k}$ bits. As we know from Theorem \ref{thm:opt-sketches}, the squared error $\alpha=1-q$. Therefore
$$
\alpha + \beta = 1 - q +  \frac{1}{32d}\E\[32k + \log_2\binom{d}{k}\] = 1 + \frac{1}{32d}\E\[\log_2\binom{d}{k}\] \le 1 + \frac{1}{32}\E\[H_2\(\frac{k}{d}\)\] \le 1 + \frac{H_2(q)}{32}.
$$
The first inequality follows from the following estimate (only upper bound) for binomial coefficients
$$
\frac{2^{d H_2(\tau)}}{\sqrt{8d\tau(1-\tau)}} \le \binom{d}{\tau d} \le \frac{2^{d H_2(\tau)}}{\sqrt{2\pi d\tau(1-\tau)}}, \quad 0<\tau<1,
$$
where $H_2(\tau) = -\tau\log_2\tau - (1-\tau)\log_2(1-\tau)$ is the binary entropy function in bits. The second inequality follows from concavity $H_2$ function and the Jensen's inequality. Because of the symmetry around $\tau=\nicefrac{1}{2}$ (namely $H_2(1-\tau)=H_2(\tau)$) and concavity of the function $H_2$, one can show that the maximum is achieved at $\tau=\nicefrac{1}{2}$ and $H_2(\nicefrac{1}{2})=1$. Thus, in the worst case we have $\alpha+\beta\le\nicefrac{33}{32}$ upper bound, when roughly half of the entries are chosen uniformly at random. For other values of $q$, it is even closer to the optimum; numerically $H_2\(\tau\) \approx \(4\tau\(1-\tau\)\)^{\nicefrac{3}{4}}, \; 0\le\tau\le 1$.

\begin{figure}[ht!]
\begin{center}
    \includegraphics[scale=0.6]{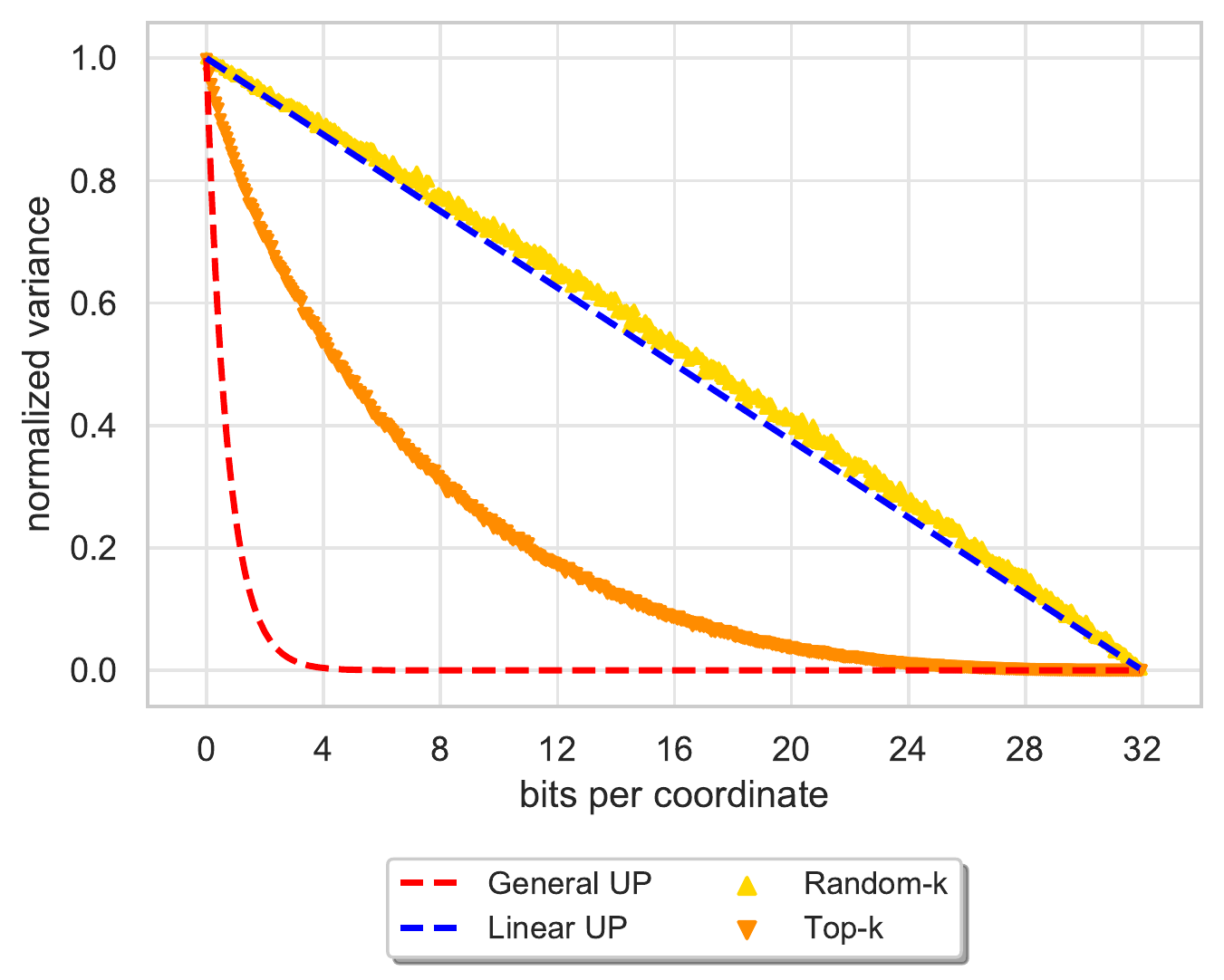}
\end{center}
\vspace{-5pt}
\caption{Comparison of general uncertainty principle $\alpha\cdot 4^{\nicefrac{b}{d}}\ge 1$ (dashed red line) of \citet{up_kashin_2020} against the new linear version (\ref{var-bits-lb}) (dashed blue line). Each color represents one compression method: yellow for usual random sparsification with uniform probabilities and orange for greedy sparsification (a.k.a Top-$k$ sparsification). Each triangle marker indicates one particular $d = 10^3$ dimensional vector randomly generated from Gaussian distribution, which subsequently gets compressed by the compression operator mentioned in the legend.}
\label{fig:ups}
\end{figure}

\clearpage

\section{Proofs}

\subsection{Proof of Theorem \ref{thm:rate-fval}}\label{apx-thm:rate-fval}

Using smoothness of $f$, we have
\begin{align}
\begin{split}\label{f-descent-muL}
\E f(x^{k+1})
&=  \E f(x^k - \gamma\MC\nabla f(x^k)) \\
&\le f(x^k) - \gamma\<\nabla f(x^k), \E\[\MC\nabla f(x^k)\]\> + \frac{\gamma^2}{2}\E\[\|\MC\nabla f(x^k)\|_{\ML}^2\] \\
&=   f(x^k) - \gamma\|\nabla f(x^k)\|^2 + \frac{\gamma^2}{2}\|\nabla f(x^k)\|_{\E\[\MC\ML\MC\]}^2 \\
&\le f(x^k) - \gamma\(2 - \gamma\lambda_{\max}\(\E\[\MC\ML\MC\]\) \) \cdot \frac{1}{2}\|\nabla f(x^k)\|^2.
\end{split}
\end{align}

Computing the expectation inside, we get
\begin{equation}\label{E[CLC]}
\E\[\MC\ML\MC\] = \E\[\(\mathrm{c}_i\mathrm{c}_j\ML_{ij}\)_{i,j=1}^d\] = \(\frac{p_{ij}\ML_{ij}}{p_ip_j}\)_{i,j=1}^d = \( \Mdiag(\nicefrac{1}{p}) \MP \Mdiag(\nicefrac{1}{p}) \) \circ \ML = \overbar{\MP} \circ \ML.
\end{equation}

Therefore, using the bound for the step size $\gamma$ and strong convexity of $f$, we get
\begin{align}
\begin{split}
\E\[f(x^{k+1}) - f(x^*)\]
&\le \(f(x^k) - f(x^*)\) - \gamma\(2 - \gamma\lambda_{\max}\(\overbar{\MP} \circ \ML\) \) \cdot \frac{1}{2}\|\nabla f(x^k)\|^2 \\
&\le \(f(x^k) - f(x^*)\) - \frac{\gamma}{2}\|\nabla f(x^k)\|^2 \\
&\le \(1-\gamma\mu\)\(f(x^k)-f(x^*)\),
\end{split}
\end{align}
repeated application of which completes the proof.

\subsection{Proof of Theorem \ref{thm:rate-dist-opt2}}\label{apx-thm:rate-dist-opt2}

The following lemmas will be useful to handle the computation with pseudo-inverses.
\begin{lemma}[Lemma E.2 and E.3 \citep{GJS-HR}]\label{lem:grad-space}
If $f$ is convex and $\ML$-smooth, then for any $x,y\in\R^d$
\begin{equation}\label{bregman-smooth}
f(y) \ge f(x) + \<\nabla f(x), y-x\> + \frac{1}{2}\|\nabla f(x) - \nabla f(y)\|^2_{\ML^\dagger}.
\end{equation}
If, in addition, $f$ is bounded below, then $\nabla f(x)\in\range(\ML^\dagger) = \range(\ML)$ for all $x\in\R^d$.
\end{lemma}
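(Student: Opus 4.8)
The plan is to prove the two assertions in turn, with the seminorm inequality \eqref{bregman-smooth} carrying most of the weight and the range statement following from it. The first move is to reduce to the \emph{positive definite} case: since $\ML\preceq\ML+\epsilon\MI$ for any $\epsilon>0$, the assumed $\ML$-smoothness of $f$ immediately implies $(\ML+\epsilon\MI)$-smoothness (enlarging the quadratic term only weakens the upper bound), and $\ML_\epsilon\eqdef\ML+\epsilon\MI$ is invertible.

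On this reduced problem I would run the textbook argument. Fix $x$ and set $\phi(z)\eqdef f(z)-\<\nabla f(x),z\>$; then $\phi$ is convex, $\ML_\epsilon$-smooth (adding a linear term changes neither property, since the linear parts cancel in the Bregman difference), and $\nabla\phi(x)=0$, so $x$ is a global minimizer of $\phi$. The $\ML_\epsilon$-smooth upper bound of $\phi$ at $y$ is a strictly convex quadratic in $z$, minimized at $z^\star=y-\ML_\epsilon^{-1}\nabla\phi(y)$ with minimal value $\phi(y)-\tfrac12\|\nabla\phi(y)\|^2_{\ML_\epsilon^{-1}}$. Chaining $\phi(x)\le\phi(z^\star)\le[\text{the upper bound at }z^\star]$ and unfolding the definition of $\phi$ (with $\nabla\phi(y)=\nabla f(y)-\nabla f(x)$) gives, for every $\epsilon>0$,
\begin{equation*}
f(y)\ \ge\ f(x)+\<\nabla f(x),y-x\>+\tfrac12\|\nabla f(y)-\nabla f(x)\|^2_{\ML_\epsilon^{-1}}.
\end{equation*}
Next I would send $\epsilon\downarrow0$. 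Writing $u\eqdef\nabla f(y)-\nabla f(x)$ and expanding in an orthonormal eigenbasis $\{q_i\}$ of $\ML$ with eigenvalues $\lambda_i\ge0$, one has $\|u\|^2_{\ML_\epsilon^{-1}}=\sum_i\<u,q_i\>^2/(\lambda_i+\epsilon)$, a sum whose every term increases monotonically as $\epsilon\downarrow0$. Its limit equals $\|u\|^2_{\ML^\dagger}$ if $u\in\range(\ML)$ and is $+\infty$ otherwise. Since the left-hand side $f(y)$ is a fixed finite real number, the divergent case is impossible; hence $\nabla f(y)-\nabla f(x)\in\range(\ML)$ for all $x,y$, and taking the limit in the displayed inequality yields exactly \eqref{bregman-smooth}.

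For the second assertion, add the hypothesis that $f$ is bounded below and fix $v\in\ker(\ML)$. For arbitrary $x$, the function $t\mapsto f(x+tv)$ has derivative $\<\nabla f(x+tv),v\>=\<\nabla f(x+tv)-\nabla f(x),v\>+\<\nabla f(x),v\>=\<\nabla f(x),v\>$, where the first inner product vanishes by the inclusion just established together with $\range(\ML)\perp\ker(\ML)$. Thus $t\mapsto f(x+tv)$ is affine with slope $\<\nabla f(x),v\>$, and lower boundedness forces that slope to be $0$. As $x$ and $v\in\ker(\ML)$ are arbitrary, $\nabla f(x)\perp\ker(\ML)$, i.e. $\nabla f(x)\in\range(\ML)=\range(\ML^\dagger)$, the last equality holding because $\ML$ is symmetric. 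The only delicate point in the whole argument is the $\epsilon\downarrow0$ passage: one must keep the inequality at fixed $\epsilon$ until the limit is taken and then argue, via the spectral decomposition of $\ML$ and monotone convergence of each spectral term, both that $\|u\|^2_{\ML_\epsilon^{-1}}$ cannot blow up and that its finite limit is precisely the $\ML^\dagger$-seminorm; the rest is routine smoothness/convexity bookkeeping and elementary linear algebra.
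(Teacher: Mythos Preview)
Your proof is correct. The paper does not supply its own proof of this lemma; it simply imports the result from \citet{GJS-HR} (their Lemmas~E.2 and~E.3), so there is no in-paper argument to compare against.

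A brief remark on your route: the $\epsilon$-regularization $\ML_\epsilon=\ML+\epsilon\MI$ is a clean way to avoid dealing with the pseudoinverse directly, and you get a useful intermediate fact for free --- namely $\nabla f(y)-\nabla f(x)\in\range(\ML)$ for \emph{all} $x,y$ from convexity and $\ML$-smoothness alone, before any lower-boundedness is assumed. This is exactly what you then feed into the affine-along-$\ker(\ML)$ argument for the second assertion. The monotone spectral limit is handled correctly: since the upper bound $f(y)-f(x)-\langle\nabla f(x),y-x\rangle$ is a fixed finite number while $\|u\|^2_{\ML_\epsilon^{-1}}$ increases as $\epsilon\downarrow 0$, blow-up is excluded and the limit equals $\|u\|^2_{\ML^\dagger}$. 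No gaps.
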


\begin{lemma}\label{lem:transform}
With $\overbar{\MC} = \ML^{\nicefrac{1}{2}}\MC\ML^{\phalf}$, the following holds
\begin{equation}\label{eq:transform}
\E\[\ML^{\nicefrac{1}{2}} \(\overbar{\MC} - \MI\)^{\top}\(\overbar{\MC} - \MI\) \ML^{\nicefrac{1}{2}}\]
=
\ML^{\nicefrac{1}{2}}\ML^{\phalf}\(\widetilde{\MP} \circ \ML\)\ML^{\phalf}\ML^{\nicefrac{1}{2}}.
\end{equation}
\end{lemma}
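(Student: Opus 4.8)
The plan is to treat this as pure matrix algebra: expand the quadratic form $(\overbar{\MC}-\MI)^{\top}(\overbar{\MC}-\MI)$, conjugate it by $\ML^{\half}$, take expectations term-by-term using only the elementary moments of the diagonal sketch $\MC$, and then collapse the result using $\widetilde{\MP}=\overbar{\MP}-\ME$ together with a few pseudoinverse/projector identities. Note that neither convexity nor lower boundedness of $f$ enters here.

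First I would record the projector identities that get used repeatedly. Writing $\Pi\eqdef\ML^{\half}\ML^{\phalf}$, one checks in the eigenbasis of $\ML$ (only the nonzero eigenvalues $\lambda$ matter, where $\lambda^{\nicefrac12}\cdot\lambda^{-\nicefrac12}=1$) that $\Pi$ is the orthogonal projector onto $\range(\ML)$, that $\Pi=\ML^{\phalf}\ML^{\half}$ as well, and that $\Pi\ML^{\half}=\ML^{\half}=\ML^{\half}\Pi$, $\ML^{\half}\ML^{\half}=\ML$, and hence $\Pi\ML=\ML\Pi=\Pi\ML\Pi=\ML$. Since $\MC$ and $\ML^{\half}$ are symmetric, $\overbar{\MC}^{\top}=\ML^{\phalf}\MC\ML^{\half}$, so
$$(\overbar{\MC}-\MI)^{\top}(\overbar{\MC}-\MI)=\ML^{\phalf}\MC\ML\MC\ML^{\phalf}-\ML^{\phalf}\MC\ML^{\half}-\ML^{\half}\MC\ML^{\phalf}+\MI.$$
Pre- and post-multiplying by $\ML^{\half}$ and applying the identities above turns these four terms into, respectively, $\Pi\MC\ML\MC\Pi$, $\Pi\MC\ML$, $\ML\MC\Pi$, and $\ML$.

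Next I would take expectations. Since $\E[\mathrm{c}_j]=p_j\cdot(1/p_j)=1$ we have $\E[\MC]=\MI$, and \eqref{E[CLC]} already gives $\E[\MC\ML\MC]=\overbar{\MP}\circ\ML$. Hence the expectation of the conjugated form equals $\Pi(\overbar{\MP}\circ\ML)\Pi-\Pi\ML-\ML\Pi+\ML=\Pi(\overbar{\MP}\circ\ML)\Pi-\ML$, using $\Pi\ML=\ML\Pi=\ML$. On the other side, the right-hand side of \eqref{eq:transform} is $\Pi(\widetilde{\MP}\circ\ML)\Pi$; since $\widetilde{\MP}=\overbar{\MP}-\ME$ and $\ME\circ\ML=\ML$, we get $\widetilde{\MP}\circ\ML=\overbar{\MP}\circ\ML-\ML$, so $\Pi(\widetilde{\MP}\circ\ML)\Pi=\Pi(\overbar{\MP}\circ\ML)\Pi-\Pi\ML\Pi=\Pi(\overbar{\MP}\circ\ML)\Pi-\ML$, which is exactly what was obtained, finishing the proof.

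The only step that needs real care — more a pitfall than a deep obstacle — is the bookkeeping with $\ML^{\phalf}$: one must not mistake $\ML^{\phalf}\ML^{\half}$ for the identity (it is only the projector $\Pi$), and it is precisely the cancellations $\Pi\ML=\ML$, $\Pi\ML\Pi=\ML$ that make the $\MI$-term and the two cross terms telescope into a single $-\ML$. Once these identities are in hand, everything else is routine.
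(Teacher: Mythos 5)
Your proof is correct and takes essentially the same approach as the paper's: expand $(\overbar{\MC}-\MI)^{\top}(\overbar{\MC}-\MI)$, take expectations using $\E[\MC]=\MI$ and $\E[\MC\ML\MC]=\overbar{\MP}\circ\ML$, and absorb the pseudoinverse factors via $\ML^{\half}\ML^{\phalf}\ML^{\half}=\ML^{\half}$. The only difference is cosmetic: you reduce both sides to the common form $\Pi(\overbar{\MP}\circ\ML)\Pi-\ML$ using the projector $\Pi=\ML^{\half}\ML^{\phalf}$, whereas the paper adds and subtracts $\ML^{\phalf}\ML\ML^{\phalf}$ to split the left side into the target term plus a residual $(\ML^{\half}-\ML^{\half}\ML^{\phalf}\ML^{\half})^{2}$ that vanishes — same cancellations, different bookkeeping.
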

\begin{proof}
Using the property $\ML^{\half}\ML^{\phalf}\ML^{\half} = \ML^{\half}$ of pseudoinverse, we have
\begin{eqnarray*}
\E\[\ML^{\half} \(\overbar{\MC} - \MI\)^{\top}\(\overbar{\MC} - \MI\) \ML^{\half}\]
&= &
    \E\[ \ML^{\nicefrac{1}{2}} \(\ML^{\phalf}\MC\ML^{\nicefrac{1}{2}} - \MI\)\(\ML^{\nicefrac{1}{2}}\MC\ML^{\phalf} - \MI\) \ML^{\half} \] \\
&= &
    \E\[ \ML^{\nicefrac{1}{2}} \(\ML^{\phalf}\MC\ML\MC\ML^{\phalf}
    - \ML^{\phalf}\MC\ML^{\nicefrac{1}{2}}
    - \ML^{\nicefrac{1}{2}}\MC\ML^{\phalf}
    + \MI\) \ML^{\nicefrac{1}{2}} \] \\
&
\overset{(\ref{E[CLC]})}{=} &
    \ML^{\nicefrac{1}{2}} \(\ML^{\phalf}\(\overbar{\MP} \circ \ML\)\ML^{\phalf}
    - \ML^{\phalf}\ML^{\half}
    - \ML^{\nicefrac{1}{2}}\ML^{\phalf}
    + \MI\) \ML^{\nicefrac{1}{2}} \\
&= &
    \ML^{\nicefrac{1}{2}} \(\ML^{\phalf}\(\overbar{\MP} \circ \ML\)\ML^{\phalf}
    - \ML^{\phalf}\ML\ML^{\phalf}\) \ML^{\nicefrac{1}{2}} \\
    &&\quad
    +
    \ML^{\nicefrac{1}{2}} \(\ML^{\phalf}\ML\ML^{\phalf}
    - \ML^{\dagger^{\nicefrac{1}{2}}}\ML^{\nicefrac{1}{2}}
    - \ML^{\nicefrac{1}{2}}\ML^{\dagger^{\nicefrac{1}{2}}}
    + \MI\) \ML^{\nicefrac{1}{2}} \\
&= &
    \ML^{\nicefrac{1}{2}} \ML^{\phalf}\(\overbar{\MP} \circ \ML - \ML\)\ML^{\phalf}
    \ML^{\nicefrac{1}{2}}
    +
    \ML^{\nicefrac{1}{2}} \( \MI - \ML^{\phalf}\ML^{\nicefrac{1}{2}} \)
    \( \MI - \ML^{\nicefrac{1}{2}}\ML^{\phalf}\)
    \ML^{\nicefrac{1}{2}} \\
&= &
    \ML^{\nicefrac{1}{2}} \ML^{\phalf}\(\widetilde{\MP} \circ \ML\)\ML^{\phalf}
    \ML^{\nicefrac{1}{2}}
    +
    \( \ML^{\nicefrac{1}{2}} - \ML^{\nicefrac{1}{2}}\ML^{\phalf}\ML^{\nicefrac{1}{2}} \)
    \( \ML^{\nicefrac{1}{2}} - \ML^{\nicefrac{1}{2}}\ML^{\phalf}\ML^{\nicefrac{1}{2}}\) \\
&=&  \ML^{\nicefrac{1}{2}} \ML^{\phalf}\(\widetilde{\MP} \circ \ML\)\ML^{\phalf}
    \ML^{\nicefrac{1}{2}}.
\end{eqnarray*}
\end{proof}

For convenience we skip iteration count $k$, and write $x,x^+$ instead of $x^k,x^{k+1}$. Using non-expansiveness of the $\prox$ operator we get
\begin{align*}
\begin{split}
\E&\[\|x^+ - x^*\|^2\]
\le \E\[ \|x  - x^* - \gamma\(\ML^\half\MC\ML^\phalf\)\nabla f(x) + \gamma\nabla f(x^*)\|^2 \] \\
&=
     \|x  - x^*\|^2
     - 2\gamma\ve{x-x^*}{\nabla f(x) - \nabla f(x^*)}
     + \gamma^2\E\[ \|\(\ML^\half\MC\ML^\phalf\)\nabla f(x) - \nabla f(x^*)\|^2 \] \\
&\le
     \|x  - x^*\|^2
     - 2\gamma\ve{x-x^*}{\nabla f(x) - \nabla f(x^*)} \\
     &\phantom{=}
     + 2\gamma^2\E\[ \|\ML^\half\MC\ML^\phalf\(\nabla f(x) - \nabla f(x^*)\)\|^2 \]
     + 2\gamma^2\E\[ \|\(\ML^\half\MC\ML^\phalf - \MI\)\nabla f(x^*)\|^2 \] \\
&\le
     \|x  - x^*\|^2
     - 2\gamma\ve{x-x^*}{\nabla f(x)
     - \nabla f(x^*)} \\
     &\phantom{=}
     + 2\gamma^2\lambda_{\max}(\E\[\MC\ML\MC\])\|\ML^\phalf\(\nabla f(x) - \nabla f(x^*)\)\|^2
     + 2\gamma^2\E\[ \|\(\ML^{\nicefrac{1}{2}}\MC\ML^\phalf - \MI\)\nabla f(x^*)\|^2 \] \\
&\overset{(\ref{E[CLC]}), (\ref{E[CLC-L]})}{\le}
     \|x  - x^*\|^2
     - 2\gamma\ve{x-x^*}{\nabla f(x) - \nabla f(x^*)} \\
     &\phantom{=}
     + 2\gamma^2\lambda_{\max}(\overbar{\MP}\circ\ML) \|\nabla f(x) - \nabla f(x^*)\|^2_{\ML^{\dagger}}
     + 2\gamma^2\lambda_{\max}(\widetilde{\MP}\circ\ML) \|\nabla f(x^*)\|^2_{\ML^{\dagger}} \\
&=
     \|x  - x^*\|^2
     - 2\gamma\ve{x-x^*}{\nabla f(x) - \nabla f(x^*)}
     + 2\gamma^2\overbar{\cL} \|\nabla f(x) - \nabla f(x^*)\|^2_{\ML^{\dagger}}
     + 2\gamma^2\widetilde{\cL} \|\nabla f(x^*)\|^2_{\ML^{\dagger}},
\end{split}
\end{align*}
where we used $\E\[\MC\ML\MC\] = \overbar{\MP} \circ \ML$ based on (\ref{E[CLC]}) and for the last term we used Lemma \ref{lem:grad-space} to represent $\nabla f(x^*) = \ML^{\nicefrac{1}{2}}g_*$ and then applied Lemma \ref{lem:transform}
\begin{align}\label{E[CLC-L]}
\begin{split}
\E\[ \left\|\(\ML^{\nicefrac{1}{2}}\MC\ML^{\phalf} - \MI\)\nabla f(x^*)\right\|^2 \]
&= 
    \E\[ g_*^{\top} \ML^{\nicefrac{1}{2}} \(\ML^{\phalf}\MC\ML^{\nicefrac{1}{2}} - \MI\)\(\ML^{\nicefrac{1}{2}}\MC\ML^{\phalf} - \MI\) \ML^{\nicefrac{1}{2}} g_* \] \\
&=  \nabla f(x^*)^{\top} \(\ML^{\phalf}\(\widetilde{\MP} \circ \ML\)\ML^{\phalf}
    \) \nabla f(x^*) \\
&\le
    \lambda_{\max}(\widetilde{\MP} \circ \ML)\|\nabla f(x^*)\|^2_{\ML^\dagger}.
\end{split}
\end{align}

Using the bound on step size $\gamma\le \nicefrac{1}{2\widetilde{\cL}}$, strong convexity of $f$ and (\ref{bregman-smooth}), we continue as follows
\begin{eqnarray*}
\E\[\|x^+ - x^*\|^2\]
&\le &
     \|x  - x^*\|^2 - \gamma\ve{x-x^*}{\nabla f(x) - \nabla f(x^*)} \\
     && \quad - \gamma \(\ve{x-x^*}{\nabla f(x) - \nabla f(x^*)} - \|\nabla f(x) - \nabla f(x^*)\|^2_{\ML^{\dagger}} \) \\
     && \quad + 2\gamma^2\widetilde{\cL} \|\nabla f(x^*)\|^2_{\ML^{\dagger}} \\
&\overset{(\ref{bregman-smooth})}{\le} &
     \(1-\gamma\mu\)\|x  - x^*\|^2 + 2\gamma^2\widetilde{\cL} \|\nabla f(x^*)\|^2_{\ML^{\dagger}}.
\end{eqnarray*}

Telescoping the above inequality, we complete the proof.

\subsection{Proof of Theorem \ref{thm:dist-prox-skgd-better}}\label{apx-thm:dist-prox-skgd-better}

In this proof we skip the iteration count $k$ to simplify the notation. Define
\begin{eqnarray}
\MM_i
&\eqdef &
    \ML_i^{\nicefrac{1}{2}} \E\[(\overbar{\MC}_i-\MI)^{\top}(\overbar{\MC}_i-\MI)\] \ML_i^{\nicefrac{1}{2}} \notag \\    
    & \overset{(\ref{eq:transform})}{=}&
    \ML_i^{\nicefrac{1}{2}}\ML_i^{\dagger\nicefrac{1}{2}} (\widetilde{\MP}_i\circ\ML_i) \ML_i^{\dagger\nicefrac{1}{2}}\ML_i^{\nicefrac{1}{2}}\notag \\
&=&
    \ML_i^{\nicefrac{1}{2}}\ML_i^{\dagger\nicefrac{1}{2}} (\overbar{\MP}_i\circ\ML_i - \ML_i) \ML_i^{\dagger\nicefrac{1}{2}}\ML_i^{\nicefrac{1}{2}} \notag\\
&=&
    \ML_i^{\nicefrac{1}{2}}\ML_i^{\dagger\nicefrac{1}{2}} (\overbar{\MP}_i\circ\ML_i) \ML_i^{\dagger\nicefrac{1}{2}}\ML_i^{\nicefrac{1}{2}}
    - \ML_i^{\nicefrac{1}{2}}\ML_i^{\dagger\nicefrac{1}{2}} \ML_i \ML_i^{\dagger\nicefrac{1}{2}}\ML_i^{\nicefrac{1}{2}} \notag\\
&=&  \ML_i^{\nicefrac{1}{2}}\ML_i^{\dagger\nicefrac{1}{2}} (\overbar{\MP}_i\circ\ML_i) \ML_i^{\dagger\nicefrac{1}{2}}\ML_i^{\nicefrac{1}{2}} - \ML_i\notag \\
&=&
    \ML_i^{\nicefrac{1}{2}} \( \E\[\overbar{\MC}_i^{\top}\overbar{\MC}_i\] - \MI\) \ML_i^{\nicefrac{1}{2}}.\label{def:M-1}
\end{eqnarray}

We are going to estimate the moment $\E\[\|g(x)-\nabla f(x^*)\|^2\]$ and show the following bound for the gradient estimator $g(x) = \frac{1}{n}\sum_{i=1}^n \overbar{\MC}_i \nabla f_i(x)$ (see line 5 of Algorithm \ref{alg:dskgd}):
\begin{equation*}
\E\[\|g(x) - \nabla f(x^*)\|^2\] \le 2\(L+\frac{2\widetilde{\cL}}{n}\) D_{f}(x,x^*) + \frac{2\sigma^*}{n}.
\end{equation*}

Due to Lemma \ref{lem:grad-space}, we have $\nabla f_i(x) = \ML_i^{\nicefrac{1}{2}}r_i$ for some $r_i$. Therefore
\begin{equation}\label{unbiased-est}
\E\[ \overbar{\MC}_i \nabla f_i(x) \]
= \E\[ \ML_i^{\nicefrac{1}{2}}\MC_i\ML_i^{\dagger\nicefrac{1}{2}} \ML_i^{\nicefrac{1}{2}}r_i \]
= \ML_i^{\nicefrac{1}{2}}\ML_i^{\dagger\nicefrac{1}{2}} \ML_i^{\nicefrac{1}{2}}r_i
= \ML_i^{\nicefrac{1}{2}}r_i
= \nabla f_i(x),
\end{equation}
which implies unbiasedness of the estimator $g(x)$, namely $\E\[g(x)\] = \nabla f(x)$. Next, note that
\begin{align*}\label{moment-decomposition-proof}
\begin{split}
\E&\[\|g(x)-\nabla f(x^*)\|^2\]
=
   \E\[ \left\|\frac{1}{n}\sum_{i=1}^n \overbar{\MC}_i \nabla f_i(x) - \nabla f(x^*) \right\|^2 \] \\
&=
   \frac{1}{n^2}\sum_{i=1}^n\E\[ \left\|\overbar{\MC}_i \nabla f_i(x) - \nabla f(x^*) \right\|^2 \]
   + \frac{1}{n^2}\sum_{i\ne j}\E\< \overbar{\MC}_i \nabla f_i(x) - \nabla f(x^*), \overbar{\MC}_j \nabla f_j(x) - \nabla f(x^*) \> \\
&=
   \frac{1}{n^2}\sum_{i=1}^n\E\[ \left\|\overbar{\MC}_i \nabla f_i(x) \right\|^2 \]
   + \left\| \nabla f(x^*) \right\|^2 - 2\E\<\overbar{\MC}_i \nabla f_i(x), \nabla f(x^*)\>
   + \frac{1}{n^2}\sum_{i\ne j} \<\nabla f_i(x) - \nabla f(x^*), \nabla f_j(x) - \nabla f(x^*) \> \\
&=
   \frac{1}{n^2}\sum_{i=1}^n \left\|\nabla f_i(x) \right\|^2_{\E\[\overbar{\MC}_i^{\top}\overbar{\MC}_i\]}
   + \| \nabla f(x^*) \|^2 - 2\<\nabla f_i(x), \nabla f(x^*)\>
   + \|\nabla f(x) - \nabla f(x^*)\|^2 - \frac{1}{n^2}\sum_{i=1}^n \|\nabla f_i(x) - \nabla f(x^*)\|^2 \\
&=
   \frac{1}{n^2}\sum_{i=1}^n \left\|\ML_i^{\nicefrac{1}{2}}r_i \right\|^2_{\E\[\overbar{\MC}_i^{\top}\overbar{\MC}_i\] - \MI}
   + \frac{1}{n^2}\sum_{i=1}^n \left\|\nabla f_i(x) \right\|^2
   + \| \nabla f(x^*) \|^2 - 2\<\nabla f_i(x), \nabla f(x^*)\> \\
   &\phantom{=}
   + \|\nabla f(x) - \nabla f(x^*)\|^2 - \frac{1}{n^2}\sum_{i=1}^n \|\nabla f_i(x) - \nabla f(x^*)\|^2 \\
&=
   \|\nabla f(x) - \nabla f(x^*)\|^2
   + \frac{1}{n^2}\sum_{i=1}^n \left\|r_i \right\|^2_{\MM_i} \\
&=
   \|\nabla f(x) - \nabla f(x^*)\|^2
   + \frac{1}{n^2}\sum_{i=1}^n \left\|r_i \right\|^2_{ \ML_i^{\nicefrac{1}{2}}\ML_i^{\phalf} (\widetilde{\MP}_i\circ\ML_i) \ML_i^{\phalf}\ML_i^{\nicefrac{1}{2}} } \\
&=
   \|\nabla f(x) - \nabla f(x^*)\|^2
   + \frac{1}{n^2}\sum_{i=1}^n \left\|\ML_i^{\phalf} \nabla f_i(x) \right\|^2_{ \widetilde{\MP}_i\circ\ML_i },
\end{split}
\end{align*}
which gives as the following decomposition
\begin{equation}\label{moment-decomposition}
\E\[\|g(x) - \nabla f(x^*)\|^2\] = \|\nabla f(x) - \nabla f(x^*)\|^2 + \frac{1}{n^2}\sum_{i=1}^n \left\|\ML_i^{\dagger\nicefrac{1}{2}} \nabla f_i(x) \right\|^2_{ \widetilde{\MP}_i\circ\ML_i }.
\end{equation}
For the first term it can be bounded using convexity and smoothness of $f$, namely $\|\nabla f(x) - \nabla f(x^*)\|^2 \le 2L D_f(x,x^*)$. For the second term we proceed as follows
\begin{align}
\begin{split}\label{bound-01}
\frac{1}{n^2}\sum_{i=1}^n \left\|\ML_i^{\dagger\nicefrac{1}{2}} \nabla f_i(x) \right\|^2_{ \widetilde{\MP}_i\circ\ML_i }
&\le
     \frac{1}{n^2}\sum_{i=1}^n \lambda_{\max}(\widetilde{\MP}_i\circ\ML_i)\|\ML_i^{\dagger\nicefrac{1}{2}} \nabla f_i(x)\|^2 \\
&=
     \frac{1}{n^2}\sum_{i=1}^n \widetilde{\cL}_i \|\nabla f_i(x)\|^2_{\ML_i^{\dagger}} \\
&\le
     \frac{2}{n^2}\sum_{i=1}^n \widetilde{\cL}_i \|\nabla f_i(x) - \nabla f_i(x^*)\|^2_{\ML_i^{\dagger}}
     + \frac{2}{n^2}\sum_{i=1}^n \widetilde{\cL}_i \|\nabla f_i(x^*)\|^2_{\ML_i^{\dagger}} \\
&\le
     \frac{2}{n^2}\sum_{i=1}^n 2\widetilde{\cL}_i D_{f_i}(x,x^*)
     + \frac{2\sigma^*}{n} \\
&=  
     \frac{4\widetilde{\cL}_{\max}}{n} D_{f}(x,x^*) + \frac{2\sigma^*}{n}.
\end{split}
\end{align}
Combining these two estimates, we get
$$
\E\[\|g(x) - \nabla f(x^*)\|^2\] \le 2\(L + \frac{2\widetilde{\cL}_{\max}}{n}\) D_{f}(x,x^*) + \frac{2\sigma^*}{n}.
$$

It remains to apply the result of \citet{sigma_k}.

\subsection{Proof of Theorem \ref{thm:DIANA+}}\label{apx-thm:DIANA+}

First, we show the unbiasedness of the estimator $g(x^k)$. In (\ref{unbiased-est}), we showed unbiasedness of $\overbar{\MC}_i^k \nabla f_i(x^k)$ using inclusion $\nabla f_i(x^k)\in\range(\ML_i)$. Assume for a moment that we also have $h_i^k\in\range(\ML_i)$. Hence, in the same way we can show $\E_k\[\overbar{\MC}_i^k h_i^k\] = h_i^k$, which implies the unbiasedness of $g^k$ as
$$
\E_k\[g^k\]
= \frac{1}{n}\sum_{i=1}^n \E_k\[\overbar{\MC}_i^k \nabla f_i(x^k)\] - \E_k\[\overbar{\MC}_i^k h_i^k\] + h_i^k
= \frac{1}{n}\sum_{i=1}^n \nabla f_i(x^k)
= \nabla f(x^k).
$$
The inclusion $h_i^k\in\range(\ML_i)$ follows from the initialization $h_i^0\in\range(\ML_i)$ (see line 1 of Algorithm \ref{alg:DIANA+}) and linear update rule of $h_i^{k+1} = h_i^k + \alpha\ML_i^{\nicefrac{1}{2}}\Delta_i^k$ (see line 5 of Algorithm \ref{alg:DIANA+}). As both $\nabla f_i(x^k)$ and $h_i^k$ belong to $\range(\ML_i)$, denote $\nabla f_i(x^k)-h_i^k = \ML_i^{\nicefrac{1}{2}}r_i^k$. Next we bound

\begin{align}\label{bound-diana+}
\begin{split}
\E&\[\|g^k - \nabla f(x^*)\|^2\]
=
     \|\nabla f(x^k) - \nabla f(x^*)\|^2
     + \E\[\|g^k - \nabla f(x^k)\|^2\] \\
&\le
     2 L D_f(x^k,x^*)
     + \E\[\left\| \frac{1}{n}\sum_{i=1}^n \overbar{\MC}_i^k(\nabla f_i(x^k)-h_i^k) + h_i^k - \nabla f_i(x^k) \right\|^2\] \\
&=
     2 L D_f(x^k,x^*)
     + \frac{1}{n^2}\sum_{i=1}^n \E\[\left\| (\overbar{\MC}_i^k-\MI)\ML_i^{\nicefrac{1}{2}} r_i^k \right\|^2\] \\
&=
     2 L D_f(x^k,x^*)
     + \frac{1}{n^2}\sum_{i=1}^n \left\| r_i^k \right\|^2_{\E\[\ML_i^{\nicefrac{1}{2}}(\overbar{\MC}_i^k-\MI)^{\top}(\overbar{\MC}_i^k-\MI)\ML_i^{\nicefrac{1}{2}}\]} \\
&\overset{(\ref{def:M-1})}{=}
     2 L D_f(x^k,x^*)
     + \frac{1}{n^2}\sum_{i=1}^n \left\| r_i^k \right\|^2_{\ML_i^{\nicefrac{1}{2}}\ML_i^{\dagger\nicefrac{1}{2}} (\widetilde{\MP}_i\circ\ML_i) \ML_i^{\dagger\nicefrac{1}{2}}\ML_i^{\nicefrac{1}{2}}} \\
&=
     2 L D_f(x^k,x^*)
     + \frac{1}{n^2}\sum_{i=1}^n \left\| \ML_i^{\dagger\nicefrac{1}{2}}(\nabla f_i(x^k)-h_i^k) \right\|^2_{\widetilde{\MP}_i\circ\ML_i} \\
&\le
     2 L D_f(x^k,x^*)
     + \frac{\widetilde{\cL}_{\max}}{n^2}\sum_{i=1}^n \left\| \nabla f_i(x^k)-h_i^k \right\|^2_{\ML_i^{\dagger}} \\
&\le
     2 L D_f(x^k,x^*)
     + \frac{2\widetilde{\cL}_{\max}}{n^2}\sum_{i=1}^n \left\| \nabla f_i(x^k)-f_i(x^*) \right\|^2_{\ML_i^{\dagger}}
     + \frac{2\widetilde{\cL}_{\max}}{n^2}\sum_{i=1}^n \left\| h_i^k - \nabla f_i(x^*) \right\|^2_{\ML_i^{\dagger}} \\
&\le
     2 L D_f(x^k,x^*)
     + \frac{4\widetilde{\cL}_{\max}}{n} D_f(x^k,x^*)
     + \frac{2\widetilde{\cL}_{\max}}{n^2}\sum_{i=1}^n \left\| h_i^k - \nabla f_i(x^*) \right\|^2_{\ML_i^{\dagger}} \\
&=
     2\(L + \frac{2\widetilde{\cL}_{\max}}{n}\)D_f(x^k,x^*)
     + \frac{2\widetilde{\cL}_{\max}}{n^2}\sum_{i=1}^n \left\| h_i^k - \nabla f_i(x^*) \right\|^2_{\ML_i^{\dagger}} \\
\end{split}
\end{align}

Then we deduce a recurrence relation for the last term $\sigma^k \eqdef \frac{1}{n}\sum_{i=1}^n \left\| h_i^k - \nabla f_i(x^*) \right\|^2_{\ML_i^{\dagger}}$. For that we will need the following bounds
\begin{equation}\label{pseudo-eigen-bound}
0\preceq \ML_i^{\nicefrac{1}{2}}\ML_i^{\dagger}\ML_i^{\nicefrac{1}{2}} \preceq \MI,
\end{equation}
which can be proved via SVD and eigenvalue decompositions. Since $\ML_i$ is square, symmetric and positive semidefinite, we know that singular value decomposition and eigenvalue decompositions are the same. Let $\ML_i^{\nicefrac{1}{2}} = \MU_i\MD_i\MU_i^{\top}$, where $\MD_i$ is diagonal and $\MU_i$ is orthogonal so that $\MU_i^{\top}=\MU_i^{-1}$. Then
$$
\ML_i^{\nicefrac{1}{2}}\ML_i^{\dagger}\ML_i^{\nicefrac{1}{2}}
= \MU_i\MD_i\MU_i^{\top} \MU_i\MD_i^{\dagger 2}\MU_i^{\top} \MU_i\MD_i\MU_i^{\top}
= \MU_i \(\MD_i\MD_i^{\dagger 2}\MD_i\) \MU_i^{\top}
= \MU_i \(\MD_i\MD_i^{\dagger}\) \MU_i^{\top},
$$
which can admit eigenvalues only in $[0,1]$ since the matrix $\MD_i\MD_i^{\dagger}$ is diagonal with entries either $0$ or $1$. Denote
\begin{equation}\label{def:omega_i}
\omega_i = \lambda_{\max}\(\E\[(\MC_i^k)^2\]\) - 1 = \max_{1\le j\le d}\frac{1}{p_{i;j}}-1.
\end{equation}
and bound each summand of $\sigma^{k+1}$ as follows
\begin{align*}
\begin{split}
\E_k&\[\left\| h_i^{k+1} - \nabla f_i(x^*) \right\|^2_{\ML_i^{\dagger}}\] \\
&=
    \E_k\[\left\| h_i^k - \nabla f_i(x^*) + \alpha\overbar{\Delta}_i^k \right\|^2_{\ML_i^{\dagger}}\] \\
&=
    \left\| h_i^k - \nabla f_i(x^*) \right\|^2_{\ML_i^{\dagger}}
    + 2\alpha\<h_i^k - \nabla f_i(x^*), \nabla f_i(x^k)-h_i^k\>_{\ML_i^{\dagger}}
    + \alpha^2 \E\[\left\| \overbar{\MC}_i^k(\nabla f_i(x^k)-h_i^k) \right\|^2_{\ML_i^{\dagger}}\] \\
&=
    \left\| h_i^k - \nabla f_i(x^*) \right\|^2_{\ML_i^{\dagger}}
    + 2\alpha\<h_i^k - \nabla f_i(x^*), \nabla f_i(x^k)-h_i^k\>_{\ML_i^{\dagger}}
    + \alpha^2 \left\| \nabla f_i(x^k)-h_i^k \right\|^2_{\E\[(\overbar{\MC}_i^k)^{\top}\ML_i^{\dagger}\overbar{\MC}_i^k\]} \\
&\le
    \left\| h_i^k - \nabla f_i(x^*) \right\|^2_{\ML_i^{\dagger}}
    + 2\alpha\<h_i^k - \nabla f_i(x^*), \nabla f_i(x^k)-h_i^k\>_{\ML_i^{\dagger}}
    + \alpha^2 \left\| \nabla f_i(x^k)-h_i^k \right\|^2_{\ML_i^{\dagger\nicefrac{1}{2}}\E\[(\MC_i^k)^2\]\ML_i^{\dagger\nicefrac{1}{2}}} \\
&\le
    \left\| h_i^k - \nabla f_i(x^*) \right\|^2_{\ML_i^{\dagger}}
    + 2\alpha\<h_i^k - \nabla f_i(x^*), \nabla f_i(x^k)-h_i^k\>_{\ML_i^{\dagger}}
    + \alpha^2(1+\omega_i) \left\| \nabla f_i(x^k)-h_i^k \right\|^2_{\ML_i^{\dagger}} \\
&\le
    \left\| h_i^k - \nabla f_i(x^*) \right\|^2_{\ML_i^{\dagger}}
    + 2\alpha\<h_i^k - \nabla f_i(x^*), \nabla f_i(x^k)-h_i^k\>_{\ML_i^{\dagger}}
    + \alpha \left\| \nabla f_i(x^k)-h_i^k \right\|^2_{\ML_i^{\dagger}} \\
&\le
    (1-\alpha)\left\| h_i^k - \nabla f_i(x^*) \right\|^2_{\ML_i^{\dagger}}
    + \alpha\left\| \nabla f_i(x^k)- \nabla f_i(x^*) \right\|^2_{\ML_i^{\dagger}},
\end{split}
\end{align*}
where we used bounds $\alpha \le \frac{1}{1+\omega_i}$ and
$$
\E\[(\overbar{\MC}_i^k)^{\top}\ML_i^{\dagger}\overbar{\MC}_i^k\]
= \ML_i^{\dagger\nicefrac{1}{2}} \E\[\MC_i^k \ML_i^{\nicefrac{1}{2}}\ML_i^{\dagger}\ML_i^{\nicefrac{1}{2}} \MC_i^k\] \ML_i^{\dagger\nicefrac{1}{2}}
\preceq \ML_i^{\dagger\nicefrac{1}{2}}\E\[(\MC_i^k)^2\]\ML_i^{\dagger\nicefrac{1}{2}}.
$$

Therefore

\begin{align*}
\begin{split}
\E_k\[\sigma^{k+1}\]
&=
     \frac{1}{n} \sum_{i=1}^n \E_k\[\left\| h_i^{k+1} - \nabla f_i(x^*) \right\|^2_{\ML_i^{\dagger}}\] \\
&\le \frac{1-\alpha}{n}\sum_{i=1}^n \left\| h_i^k - \nabla f_i(x^*) \right\|^2_{\ML_i^{\dagger}}
      + \frac{\alpha}{n}\sum_{i=1}^n \left\| \nabla f_i(x^k)- \nabla f_i(x^*) \right\|^2_{\ML_i^{\dagger}} \\
&\le
     (1-\alpha)\sigma^k + \frac{2\alpha}{n}\sum_{i=1}^n D_{f_i}(x^k,x^*) \\
&=
     (1-\alpha)\sigma^k + 2\alpha D_f(x^k,x^*).
\end{split}
\end{align*}

Thus, with $\alpha \le \frac{1}{1+\omega_{\max}}$, the estimator $g^k$ of Algorithm \ref{alg:DIANA+} satisfies
\begin{align*}
\E_k\[g^k\] &= \nabla f(x^k) \\
\E_k\[\|g^k - \nabla f(x^*)\|^2\] &\le 2\(L + \frac{2\widetilde{\cL}_{\max}}{n}\)D_f(x^k,x^*) + \frac{2\widetilde{\cL}_{\max}}{n}\sigma^k \\
\E_k\[\sigma^{k+1}\] &\le (1-\alpha)\sigma^k + 2\alpha D_f(x^k,x^*).
\end{align*}

It remains to apply Theorem 4.1 \citep{sigma_k} with parameters $A = L + \frac{2}{n}\widetilde{\cL}_{\max},\; B = \frac{2}{n}\widetilde{\cL}_{\max},\; \rho=\alpha,\; C = \alpha$ and $M = \frac{4}{\alpha n}\widetilde{\cL}_{\max},\; A+CM = L + \frac{6}{n}\widetilde{\cL}_{\max},\; 1+\frac{B}{M}-\rho = 1 - \frac{\alpha}{2}$.

\subsection{Proof of Theorem \ref{thm:aDIANA+}}\label{apx-thm:aDIANA+}

Following the analysis of \citet{AccCGD}, define

\begin{align*}
Z^k \eqdef \|z^k-x^*\|^2, \qquad
&Y^k \eqdef F(y^k) - F(x^*), \qquad
W^k \eqdef F(w^k) - F(x^*), \\
H^k &\eqdef \frac{1}{n}\sum_{i=1}^n \|\nabla f_i(w^k) - h_i^k\|^2_{\ML_i^{\dagger}}.
\end{align*}

\begin{lemma}[Lemma 2, \citep{AccCGD}]\label{lem:2-adiana}
Let $\eta\le\frac{1}{2L},\; \theta_1\le\frac{1}{4},\; \theta_2=\frac{1}{2},\; \gamma = \frac{\eta}{2(\theta_1+\eta\mu)}$ and $\beta = 1-\gamma\mu$. Then
\begin{align*}
\begin{split}
\E\[Z^{k+1}\] + \frac{2\gamma\beta}{\theta_1}\E\[Y^{k+1}\]
\le & \beta Z^k + (1-\theta_1-\theta_2)\frac{2\gamma\beta}{\theta_1}Y^k + 2\gamma\beta\frac{\theta_2}{\theta_1}W^k + \frac{\gamma\eta}{\theta_1}\E\[\|g^k - \nabla f(x^k)\|^2\] \\
&  - \frac{\gamma}{4n\theta_1}\sum_{i=1}^n \|\nabla f_i(w^k) - \nabla f_i(x^k)\|^2_{\ML_i^{\dagger}}
   - \frac{\gamma}{8n\theta_1}\sum_{i=1}^n \|\nabla f_i(y^k) - \nabla f_i(x^k)\|^2_{\ML_i^{\dagger}}.
\end{split}
\end{align*}
\end{lemma}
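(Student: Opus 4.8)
This is the one-step coupling estimate underpinning Nesterov acceleration in the ADIANA framework of~\citep{AccCGD}, and the plan is to follow that proof almost verbatim; the only genuinely new ingredient is that the ``extra negative curvature'' harvested from the local losses must be recorded in the $\ML_i^{\dagger}$-norm rather than the Euclidean one. Throughout I would condition on the $\sigma$-algebra $\mathcal{F}_k$ generated by the first $k$ iterations, so that $x^k,y^k,w^k,z^k$ and the shifts $h_i^k$ are deterministic and the only randomness is that of $g^k$ (and hence of $y^{k+1}$ and $z^{k+1}$); of $g^k$ I would use only $\E[g^k\mid\mathcal{F}_k]=\nabla f(x^k)$ and finiteness of $\E[\|g^k-\nabla f(x^k)\|^2\mid\mathcal{F}_k]$, treating the variance as a black box, since its recursion is handled separately rather than in this lemma. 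Write $F=f+R$.

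First I would derive the proximal-gradient progress bound for the update $y^{k+1}=\prox_{\eta R}(x^k-\eta g^k)$ (line~15 of Algorithm~\ref{alg:aDIANA+}): from $L$-smoothness of $f$, the optimality condition of the prox, and convexity of $R$ one obtains, for every $u\in\R^d$,
$$
F(y^{k+1})\le f(x^k)+\langle g^k,u-x^k\rangle+\langle\nabla f(x^k)-g^k,y^{k+1}-x^k\rangle+R(u)+\tfrac{1}{2\eta}\|x^k-u\|^2-\tfrac{1}{2\eta}\|y^{k+1}-u\|^2,
$$
where $\eta\le\frac{1}{2L}$ is used to swallow the $\frac{L}{2}\|y^{k+1}-x^k\|^2$ smoothness remainder into the negative $\|y^{k+1}-x^k\|^2$ produced by the prox step, still leaving a spare negative multiple of $\|y^{k+1}-x^k\|^2$ to absorb a Young split of $\langle\nabla f(x^k)-g^k,y^{k+1}-x^k\rangle$ (this split is what later produces the $\E\|g^k-\nabla f(x^k)\|^2$ term). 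Second, I would expand $\|z^{k+1}-x^*\|^2$ via the mirror step $z^{k+1}=\beta z^k+(1-\beta)x^k+\frac{\gamma}{\eta}(y^{k+1}-x^k)$ (line~16); the linear-coupling identity $u-x^k=\theta_1(x^*-z^k)$ for $u:=\theta_1 x^*+\theta_2 w^k+(1-\theta_1-\theta_2)y^k$ --- immediate from line~3 --- is exactly what rewrites $\langle g^k,u-x^k\rangle$ as $\theta_1\langle g^k,x^*-z^k\rangle$, so that it cancels against the mirror-step cross terms. Third, I would use $\mu$-convexity of $f$ (Assumption~\ref{asm:mu-convex}) to lower-bound $f(u)$ and convexity of $R$ to relate $R(u)$ to $R(x^*),R(w^k),R(y^k)$, and add the $y$-progress bound with weight $\tfrac{2\gamma}{\theta_1}$ (up to the factor $\beta$ in the telescoping-friendly recasting) to the $z$-expansion with weight $1$.

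\textbf{The matrix-smoothness modification.} The single step that departs from~\citep{AccCGD} is the use of convexity of $f$ to re-express $f(x^k)$ through $f(w^k)$ and $f(y^k)$: instead of discarding the resulting Bregman remainders $D_f(w^k,x^k)$ and $D_f(y^k,x^k)$, I would retain them and lower-bound them. Since each $f_i$ is convex, $\ML_i$-smooth, and lower bounded (Assumption~\ref{asm:Li-smooth-convex}), inequality~\eqref{bregman-smooth} of Lemma~\ref{lem:grad-space} gives $D_{f_i}(w^k,x^k)\ge\tfrac12\|\nabla f_i(w^k)-\nabla f_i(x^k)\|^2_{\ML_i^{\dagger}}$ and likewise for $y^k$; averaging via $D_f=\frac{1}{n}\sum_i D_{f_i}$ then produces exactly the two negative $\ML_i^{\dagger}$-weighted sums of the claim, the constants $\tfrac14$ and $\tfrac18$ being what the coefficients $\theta_2=\tfrac12$ and $1-\theta_1-\theta_2\ge\tfrac14$ (the latter because $\theta_1\le\tfrac14$) leave after the overall weighting. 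Lemma~\ref{lem:grad-space} also guarantees $\nabla f_i(\cdot)\in\range(\ML_i)$, so the $\ML_i^{\dagger}$-norm is only ever applied to vectors on which it is finite and genuinely a norm; in the scalar setting~\citep{AccCGD} this same step used co-coercivity $D_f(\cdot,x^k)\ge\frac{1}{2L}\|\nabla f(\cdot)-\nabla f(x^k)\|^2$, and nothing else in the argument changes.

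\textbf{Assembly, and the main difficulty.} Finally I would add the two weighted inequalities, take $\E[\cdot\mid\mathcal{F}_k]$ (so that $\E\langle g^k-\nabla f(x^k),z^k-x^*\rangle=0$ by unbiasedness and the deterministic $z^k-x^*$, while $\langle\nabla f(x^k)-g^k,y^{k+1}-x^k\rangle$ is controlled by the Young split above), and substitute $\gamma=\frac{\eta}{2(\theta_1+\eta\mu)}$, $\beta=1-\gamma\mu$, $\theta_2=\frac12$; the $-\|y^{k+1}-u\|^2$ terms then reassemble into $\E[Z^{k+1}]+\frac{2\gamma\beta}{\theta_1}\E[Y^{k+1}]$, and the coefficients of $Z^k$, $Y^k$, $W^k$ and of $\E\|g^k-\nabla f(x^k)\|^2$ come out as stated. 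The main difficulty is purely the constant-chasing: one must check that the single spare negative multiple of $\|y^{k+1}-x^k\|^2$ simultaneously absorbs the $L$-smoothness remainder (where $\eta\le\frac{1}{2L}$ is used tightly) and the Young leftover, that $\frac{\gamma}{\eta}\le\frac{1}{2\theta_1}$ (which the choice of $\gamma$ guarantees), and that after weighting and taking expectations the residual coefficient of $Z^k$ is exactly $\beta=1-\gamma\mu$, as needed for the outer telescoping in Theorem~\ref{thm:aDIANA+}; there is no conceptual gap beyond substituting~\eqref{bregman-smooth} for scalar co-coercivity.
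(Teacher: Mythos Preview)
Your proposal is correct and matches the paper's own approach exactly: the paper's proof consists of a single sentence stating that the argument is identical to that of Lemma~2 in~\citep{AccCGD} except that scalar co-coercivity is replaced by the matrix version~\eqref{bregman-smooth}, which is precisely the modification you single out. Your write-up simply unpacks the details of the original ADIANA argument that the paper takes for granted.
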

\begin{proof}
Proof is the same as for the original lemma except we use $\ML_i$-smoothness of $f_i$ via (\ref{bregman-smooth}).
$$
f_i(u) \ge f_i(x^k) + \<\nabla f_i(x^k), u-x^k\> + \frac{1}{2}\|\nabla f_i(u) - \nabla f_i(x^k)\|^2_{\ML_i^{\dagger}}.
$$
\end{proof}

\begin{lemma}[Lemma 3, \citep{AccCGD}]\label{lem:3-adiana}
$$
\E\[W^{k+1}\] = (1-q)W^k + q Y^k.
$$
\end{lemma}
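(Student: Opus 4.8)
The plan is to condition on the history up to iteration~$k$ and then take total expectation. From line~17 of Algorithm~\ref{alg:aDIANA+}, the next reference point $w^{k+1}$ is drawn by an independent Bernoulli trial: $w^{k+1}=y^k$ with probability $q$ and $w^{k+1}=w^k$ with probability $1-q$. Both $y^k$ and $w^k$ are determined before this coin flip (they are measurable with respect to the $\sigma$-algebra $\mathcal{F}_k$ collecting all randomness generated prior to the line-17 sampling in iteration~$k$), and this coin flip uses fresh randomness independent of the sketches $\MC_i^k$ that formed $g^k$. Hence
$$\E_k\left[F(w^{k+1})\right] = q\,F(y^k) + (1-q)\,F(w^k).$$

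First I would subtract $F(x^*)$ from both sides. By the definitions $W^{k+1}=F(w^{k+1})-F(x^*)$, $Y^k=F(y^k)-F(x^*)$ and $W^k=F(w^k)-F(x^*)$, this turns the identity into $\E_k[W^{k+1}] = q\,Y^k + (1-q)\,W^k$. Then I would take the total expectation, use the tower property $\E[\E_k[W^{k+1}]]=\E[W^{k+1}]$ together with linearity of expectation, and adopt the same notational convention used throughout the recurrence-style statements (writing $W^k,Y^k$ for their expectations) to arrive at $\E[W^{k+1}]=(1-q)W^k+q\,Y^k$, which is exactly the claim.

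There is essentially no obstacle here; the entire content is the observation that the assignment in line~17 introduces a single coin flip independent of all other stochasticity of the iteration, so the conditional expectation over that coin factors off immediately. The role of this lemma is purely bookkeeping: together with Lemma~\ref{lem:2-adiana} and a contraction estimate for $H^k$ (obtained by bounding $\E[(\MI-\alpha\overbar{\MC}_i^k)^{\top}\ML_i^{\dagger}(\MI-\alpha\overbar{\MC}_i^k)]\preceq(1-\alpha)\ML_i^{\dagger}$ as quadratic forms on $\range(\ML_i)$, as indicated in the proof-technique remarks), these three estimates will be assembled into a single Lyapunov function whose one-step contraction yields the accelerated rate~\eqref{ADIANA+-complexity}.
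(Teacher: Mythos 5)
Your proof is correct and matches the (essentially unique) argument: the paper does not reproduce a proof for this lemma, simply citing \citet{AccCGD}, and your derivation is exactly what any proof would be—condition on $\mathcal{F}_k$, observe that line~17 of Algorithm~\ref{alg:aDIANA+} is an independent Bernoulli coin flip with $y^k$ and $w^k$ already $\mathcal{F}_k$-measurable, take the conditional expectation of $F(w^{k+1})$, and subtract $F(x^*)$ from both sides to obtain $\E_k[W^{k+1}]=(1-q)W^k+qY^k$.
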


\begin{lemma}[Lemma 4, \citep{AccCGD}]\label{lem:4-adiana}
$$
\E\[\|g^k - \nabla f(x^k)\|^2\] \le \frac{2\widetilde{\cL}_{\max}}{n^2}\sum_{i=1}^n \|\nabla f_i(w^k) - \nabla f_i(x^k)\|^2_{\ML_i^{\dagger}} + \frac{2\widetilde{\cL}_{\max}}{n} H^k.
$$
\end{lemma}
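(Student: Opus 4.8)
The plan is to repeat, almost verbatim, the variance decomposition from the proof of Theorem~\ref{thm:DIANA+}, but now centering the estimator at $\nabla f(x^k)$ instead of at $\nabla f(x^*)$. First I would observe that, from the definition of $g^k$ on line~13 of Algorithm~\ref{alg:aDIANA+} together with $h^k = \frac1n\sum_i h_i^k$,
$$
g^k - \nabla f(x^k) = \frac1n\sum_{i=1}^n \left( \overbar{\MC}_i^k - \MI \right)\left( \nabla f_i(x^k) - h_i^k \right).
$$
As in the DIANA+ analysis, the initialization $h_i^0\in\range(\ML_i)$ together with the update $h_i^{k+1} = h_i^k + \alpha\overbar{\delta}_i^k$ on line~9 (where $\overbar{\delta}_i^k = \ML_i^\half\delta_i^k\in\range(\ML_i)$) inductively keeps $h_i^k\in\range(\ML_i)$, while Lemma~\ref{lem:grad-space} gives $\nabla f_i(x^k)\in\range(\ML_i)$. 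Hence $\nabla f_i(x^k)-h_i^k\in\range(\ML_i)$, and the computation $(\ref{unbiased-est})$ — which is valid for any argument lying in $\range(\ML_i)$ — yields $\E_k[\overbar{\MC}_i^k(\nabla f_i(x^k)-h_i^k)] = \nabla f_i(x^k)-h_i^k$. Since the sketches $\MC_1^k,\dots,\MC_n^k$ are drawn independently, all cross terms vanish when expanding the square, so
$$
\E_k\left[\|g^k - \nabla f(x^k)\|^2\right] = \frac1{n^2}\sum_{i=1}^n \E_k\left[\left\| \left(\overbar{\MC}_i^k - \MI\right)\left(\nabla f_i(x^k)-h_i^k\right) \right\|^2\right].
$$

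Next I would write $\nabla f_i(x^k)-h_i^k = \ML_i^\half r_i^k$ for some $r_i^k$ and plug in the identity $(\ref{def:M-1})$, i.e.
$$
\ML_i^\half \E_k\left[(\overbar{\MC}_i^k-\MI)^\top(\overbar{\MC}_i^k-\MI)\right]\ML_i^\half = \ML_i^\half\ML_i^\phalf(\widetilde{\MP}_i\circ\ML_i)\ML_i^\phalf\ML_i^\half,
$$
which was established through Lemma~\ref{lem:transform}. The $i$-th summand then equals $\|\ML_i^\phalf\ML_i^\half r_i^k\|^2_{\widetilde{\MP}_i\circ\ML_i} = \|\ML_i^\phalf(\nabla f_i(x^k)-h_i^k)\|^2_{\widetilde{\MP}_i\circ\ML_i}$, and bounding this quadratic form by its largest eigenvalue gives $\lambda_{\max}(\widetilde{\MP}_i\circ\ML_i)\,\|\nabla f_i(x^k)-h_i^k\|^2_{\ML_i^\dagger} = \widetilde{\cL}_i\|\nabla f_i(x^k)-h_i^k\|^2_{\ML_i^\dagger} \le \widetilde{\cL}_{\max}\|\nabla f_i(x^k)-h_i^k\|^2_{\ML_i^\dagger}$. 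Thus
$$
\E_k\left[\|g^k - \nabla f(x^k)\|^2\right] \le \frac{\widetilde{\cL}_{\max}}{n^2}\sum_{i=1}^n \|\nabla f_i(x^k)-h_i^k\|^2_{\ML_i^\dagger}.
$$

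Finally I would split $\nabla f_i(x^k)-h_i^k = \left(\nabla f_i(x^k)-\nabla f_i(w^k)\right) + \left(\nabla f_i(w^k)-h_i^k\right)$ and apply $\|a+b\|^2_{\ML_i^\dagger}\le 2\|a\|^2_{\ML_i^\dagger}+2\|b\|^2_{\ML_i^\dagger}$, noting that $\frac1n\sum_{i=1}^n\|\nabla f_i(w^k)-h_i^k\|^2_{\ML_i^\dagger} = H^k$ by definition; this is precisely the assertion, with $\E$ understood as the conditional expectation over the sketches at iteration~$k$. I do not expect any genuine obstacle here — the only point that needs care is the membership $h_i^k\in\range(\ML_i)$, which is exactly what makes the compressed shift unbiased and thereby kills the cross terms, and this is handled precisely as in the proofs of Theorems~\ref{thm:dist-prox-skgd-better} and~\ref{thm:DIANA+}; everything else is the same pseudoinverse algebra already developed there.
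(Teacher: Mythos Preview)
Your proposal is correct and follows essentially the same route as the paper's own proof: rewrite $g^k-\nabla f(x^k)$ as the average of the mean-zero terms $(\overbar{\MC}_i^k-\MI)(\nabla f_i(x^k)-h_i^k)$, use independence to drop the cross terms, apply identity~(\ref{def:M-1}) to obtain the $\widetilde{\MP}_i\circ\ML_i$ quadratic form, bound by $\widetilde{\cL}_{\max}$, and then split via $\|a+b\|^2_{\ML_i^\dagger}\le 2\|a\|^2_{\ML_i^\dagger}+2\|b\|^2_{\ML_i^\dagger}$ around $\nabla f_i(w^k)$. You are simply a bit more explicit than the paper about why the cross terms vanish and why $h_i^k\in\range(\ML_i)$, but the argument is the same.
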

\begin{proof}
Let $\nabla f_i(x^k)-h_i^k = \ML_i^{\nicefrac{1}{2}}r_i^k$. Then
\begin{align*}
\begin{split}
\E\[\|g^k - \nabla f(x^k)\|^2\]
&=
     \E\[ \left\|\frac{1}{n}\sum_{i=1}^n \overbar{\MC}_i^k(\nabla f_i(x^k)-h_i^k) - (\nabla f_i(x^k)-h_i^k)\right\|^2 \] \\
&=
     \frac{1}{n^2}\E\[ \left\|\sum_{i=1}^n (\overbar{\MC}_i^k - \MI)(\nabla f_i(x^k)-h_i^k)\right\|^2 \]
=    
     \frac{1}{n^2}\sum_{i=1}^n \E\[ \left\|(\overbar{\MC}_i^k - \MI)\ML_i^{\nicefrac{1}{2}}r_i^k\right\|^2 \] \\
&=   
     \frac{1}{n^2}\sum_{i=1}^n \left\|r_i^k\right\|^2_{\ML_i^{\nicefrac{1}{2}}\E\[(\overbar{\MC}_i^k - \MI)^{\top}(\overbar{\MC}_i^k - \MI)\]\ML_i^{\nicefrac{1}{2}}}
\overset{(\ref{def:M-1})}{=}    
     \frac{1}{n^2}\sum_{i=1}^n \left\|r_i^k\right\|^2_{\ML_i^{\nicefrac{1}{2}}\ML_i^{\dagger\nicefrac{1}{2}} (\widetilde{\MP}_i\circ\ML_i) \ML_i^{\dagger\nicefrac{1}{2}}\ML_i^{\nicefrac{1}{2}}} \\
&=   
     \frac{1}{n^2}\sum_{i=1}^n \left\|\ML_i^{\dagger\nicefrac{1}{2}}(\nabla f_i(x^k)-h_i^k)\right\|^2_{\widetilde{\MP}_i\circ\ML_i}
\le
     \frac{\widetilde{\cL}_{\max}}{n^2}\sum_{i=1}^n \left\|\nabla f_i(x^k)-h_i^k\right\|^2_{\ML_i^{\dagger}} \\
&\le 
     \frac{2\widetilde{\cL}_{\max}}{n^2}\sum_{i=1}^n \left\|\nabla f_i(x^k)-\nabla f_i(w^k)\right\|^2_{\ML_i^{\dagger}} + \frac{2\widetilde{\cL}_{\max}}{n^2}\sum_{i=1}^n \left\|\nabla f_i(w^k)-h_i^k\right\|^2_{\ML_i^{\dagger}}.
\end{split}
\end{align*}
\end{proof}

\begin{lemma}[Lemma 5, \citep{AccCGD}]\label{lem:5-adiana}
If $\alpha \le \frac{1}{1+\omega_{\max}}$, where $\omega_{\max}=\max_{1\le i \le n}\omega_i$ and $\omega_i = \max_{1\le j\le d}\frac{1}{p_{i;j}}-1$, then
$$
\E\[H^{k+1}\] \le \(1-\frac{\alpha}{2}\)H^k + \(1+\frac{2q}{\alpha}\)\frac{2q}{n}\( \sum_{i=1}^n \|\nabla f_i(w^k) - \nabla f_i(x^k)\|^2_{\ML_i^{\dagger}} + \sum_{i=1}^n \|\nabla f_i(w^k) - \nabla f_i(x^k)\|^2_{\ML_i^{\dagger}} \).
$$
\end{lemma}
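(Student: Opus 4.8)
The plan is to mimic the proof of Lemma~5 in \citet{AccCGD}, but to carry the bookkeeping in the pseudoinverse-weighted norms $\|\cdot\|_{\ML_i^{\dagger}}$ on $\range(\ML_i)$, reusing the pseudoinverse identities already exploited in the proofs of Theorems~\ref{thm:dist-prox-skgd-better} and~\ref{thm:DIANA+}. Fix $k$ and condition on the whole state $(x^k,y^k,w^k,h_1^k,\dots,h_n^k)$. Observe that the new shift $h_i^{k+1}=h_i^k+\alpha\overbar{\delta}_i^k=h_i^k+\alpha\overbar{\MC}_i^k(\nabla f_i(w^k)-h_i^k)$ is a function of the sketches $\MC_i^k$ only and is independent of the Bernoulli coin defining $w^{k+1}$. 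Hence, by the tower property,
$$
\E_k\big[H^{k+1}\big]=\frac1n\sum_{i=1}^n\Big[(1-q)\,\E\big[\|\nabla f_i(w^k)-h_i^{k+1}\|^2_{\ML_i^{\dagger}}\big]+q\,\E\big[\|\nabla f_i(y^k)-h_i^{k+1}\|^2_{\ML_i^{\dagger}}\big]\Big],
$$
so it suffices to control the two inner expectations over $\MC_i^k$.

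The heart of the argument -- and the step I expect to be the main obstacle -- is the one-step contraction
$$
\E\Big[\big\|(\MI-\alpha\overbar{\MC}_i^k)u\big\|^2_{\ML_i^{\dagger}}\Big]\le(1-\alpha)\,\|u\|^2_{\ML_i^{\dagger}},\qquad u\in\range(\ML_i),\ \alpha\le\tfrac{1}{1+\omega_i},
$$
which I would establish exactly as in the DIANA+ proof. Since $\nabla f_i(w^k),h_i^k\in\range(\ML_i)$ (by Lemma~\ref{lem:grad-space} and the linear update rule), $u\eqdef\nabla f_i(w^k)-h_i^k$ lies in $\range(\ML_i)$; expanding the square, the cross term is $-2\alpha\,u^\top\ML_i^{\dagger}\overbar{\MC}_i^k u$. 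Using $\ML_i^{\dagger}\ML_i^\half=\ML_i^{\phalf}$ one gets $\ML_i^{\dagger}\overbar{\MC}_i^k=\ML_i^{\phalf}\MC_i^k\ML_i^{\phalf}$, so $\E[\MC_i^k]=\MI$ yields $\E[u^\top\ML_i^{\dagger}\overbar{\MC}_i^k u]=\|u\|^2_{\ML_i^{\dagger}}$. For the quadratic term, $(\overbar{\MC}_i^k)^\top\ML_i^{\dagger}\overbar{\MC}_i^k=\ML_i^{\phalf}\MC_i^k\big(\ML_i^\half\ML_i^{\dagger}\ML_i^\half\big)\MC_i^k\ML_i^{\phalf}\preceq\ML_i^{\phalf}(\MC_i^k)^2\ML_i^{\phalf}$ by (\ref{pseudo-eigen-bound}), and $\E[(\MC_i^k)^2]=\Mdiag(\nicefrac{1}{p_{i;j}})\preceq(1+\omega_i)\MI$ by (\ref{def:omega_i}); hence $\E[(\overbar{\MC}_i^k)^\top\ML_i^{\dagger}\overbar{\MC}_i^k]\preceq(1+\omega_i)\ML_i^{\dagger}$. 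Putting the pieces together the expectation is at most $\big(1-2\alpha+\alpha^2(1+\omega_i)\big)\|u\|^2_{\ML_i^{\dagger}}\le(1-\alpha)\|u\|^2_{\ML_i^{\dagger}}$, using $\alpha(1+\omega_i)\le1$ (guaranteed by $\alpha\le\tfrac1{1+\omega_{\max}}$). The subtlety is purely algebraic: $\overbar{\MC}_i^k$ is not symmetric and $\ML_i$ is only positive semidefinite, so one must check that all products collapse correctly as quadratic forms on $\range(\ML_i)$.

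Given this estimate, the branch $w^{k+1}=w^k$ contributes directly $\E[\|\nabla f_i(w^k)-h_i^{k+1}\|^2_{\ML_i^{\dagger}}]\le(1-\alpha)\|\nabla f_i(w^k)-h_i^k\|^2_{\ML_i^{\dagger}}$. For the branch $w^{k+1}=y^k$ I would split $\nabla f_i(y^k)-h_i^{k+1}=\big(\nabla f_i(w^k)-h_i^{k+1}\big)+\big(\nabla f_i(y^k)-\nabla f_i(w^k)\big)$ and apply Young's inequality in $\|\cdot\|_{\ML_i^{\dagger}}$ with parameter $\lambda=\tfrac{\alpha}{2q}$, which (after taking $\E$ over $\MC_i^k$ of the first summand) yields
$$
\E\big[\|\nabla f_i(y^k)-h_i^{k+1}\|^2_{\ML_i^{\dagger}}\big]\le\Big(1+\tfrac{\alpha}{2q}\Big)(1-\alpha)\|\nabla f_i(w^k)-h_i^k\|^2_{\ML_i^{\dagger}}+\Big(1+\tfrac{2q}{\alpha}\Big)\|\nabla f_i(y^k)-\nabla f_i(w^k)\|^2_{\ML_i^{\dagger}}.
$$
Taking the $(1-q,q)$-convex combination, the coefficient of $\|\nabla f_i(w^k)-h_i^k\|^2_{\ML_i^{\dagger}}$ becomes $(1-\alpha)\big(1+q\cdot\tfrac{\alpha}{2q}\big)=(1-\alpha)(1+\tfrac{\alpha}{2})\le1-\tfrac{\alpha}{2}$, which sums to $(1-\tfrac{\alpha}{2})H^k$ after averaging over $i$; the remaining term $q(1+\tfrac{2q}{\alpha})\|\nabla f_i(y^k)-\nabla f_i(w^k)\|^2_{\ML_i^{\dagger}}$ is bounded, via the triangle inequality through $\nabla f_i(x^k)$, by $2q(1+\tfrac{2q}{\alpha})\big(\|\nabla f_i(y^k)-\nabla f_i(x^k)\|^2_{\ML_i^{\dagger}}+\|\nabla f_i(w^k)-\nabla f_i(x^k)\|^2_{\ML_i^{\dagger}}\big)$; summing over $i$ and dividing by $n$ reproduces the claimed right-hand side (reading the first of the two identical-looking sums in the statement as $\sum_i\|\nabla f_i(y^k)-\nabla f_i(x^k)\|^2_{\ML_i^{\dagger}}$). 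Everything apart from the contraction estimate of the second paragraph is routine algebra.
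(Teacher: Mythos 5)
Your proof is correct and follows essentially the same route as the paper: condition on the Bernoulli coin for $w^{k+1}$, apply Young's inequality with parameter $\alpha/(2q)$ to the $y$-branch, establish the one-step contraction $\E[\|(\MI-\alpha\overbar{\MC}_i^k)u\|^2_{\ML_i^\dagger}]\le(1-\alpha)\|u\|^2_{\ML_i^\dagger}$ via $\E[\ML_i^\dagger\overbar{\MC}_i^k]=\ML_i^\dagger$, $\E[(\overbar{\MC}_i^k)^\top\ML_i^\dagger\overbar{\MC}_i^k]\preceq(1+\omega_i)\ML_i^\dagger$ (these are exactly the identities used in the paper's inline matrix bound), and finish with the triangle inequality through $\nabla f_i(x^k)$. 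The only cosmetic difference is that you factor out the contraction estimate as a standalone helper before combining branches, whereas the paper applies Young first and the matrix bound second; you also correctly spot the typo in the statement, which should read $\sum_i\|\nabla f_i(y^k)-\nabla f_i(x^k)\|^2_{\ML_i^\dagger}$ for one of the two sums (consistent with how the lemma is used in the proof of Theorem~\ref{thm:aDIANA+}).
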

\begin{proof}
We start bounding the summands of $H^{k+1}$. Let $\nabla f_i(w^k)-h_i^k = \ML_i^{\nicefrac{1}{2}}r_i^k$.
\begin{align*}
\begin{split}
\E_k&\[\left\|\nabla f_i(w^{k+1}) - h_i^{k+1}\right\|^2_{\ML_i^{\dagger}}\]
=    
     q\E_k\[\left\|\nabla f_i(y^k)-h_i^{k+1}\right\|^2_{\ML_i^{\dagger}}\] + (1-q)\E_k\[\left\|\nabla f_i(w^k)-h_i^{k+1}\right\|^2_{\ML_i^{\dagger}}\] \\
&\le 
     q\(1+\frac{2q}{\alpha}\) \left\|\nabla f_i(w^k)-\nabla f_i(y^k)\right\|^2_{\ML_i^{\dagger}}
+ \(1-q + \(1+\frac{\alpha}{2q}\)q\)\E\[\left\|\nabla f_i(w^k)-h_i^{k+1}\right\|^2_{\ML_i^{\dagger}}\] \\
&=   q\(1+\frac{2q}{\alpha}\) \left\|\nabla f_i(w^k)-\nabla f_i(y^k)\right\|^2_{\ML_i^{\dagger}}
+ \(1+\frac{\alpha}{2}\)\E\[\left\|\nabla f_i(w^k)-h_i^{k+1}\right\|^2_{\ML_i^{\dagger}}\] \\
&=   q\(1+\frac{2q}{\alpha}\) \left\|\nabla f_i(w^k)-\nabla f_i(y^k)\right\|^2_{\ML_i^{\dagger}}
+ \(1+\frac{\alpha}{2}\)\E\[\left\|(\mI-\alpha\overbar{\MC}_i^k)(\nabla f_i(w^k)-h_i^k)\right\|^2_{\ML_i^{\dagger}}\] \\
&=   q\(1+\frac{2q}{\alpha}\) \left\|\nabla f_i(w^k)-\nabla f_i(y^k)\right\|^2_{\ML_i^{\dagger}}
+ \(1+\frac{\alpha}{2}\) \left\|r_i^k\right\|^2_{ \ML_i^{\nicefrac{1}{2}}\E\[(\MI-\alpha\overbar{\MC}_i^k)^{\top}\ML_i^{\dagger}(\MI-\alpha\overbar{\MC}_i^k)\]\ML_i^{\nicefrac{1}{2}} }.
\end{split}
\end{align*}

Next, we simplify the matrix of the second term.
\begin{align*}
\begin{split}
&\ML_i^{\nicefrac{1}{2}}\E\[(\MI-\alpha\overbar{\MC}_i^k)^{\top}\ML_i^{\dagger}(\MI-\alpha\overbar{\MC}_i^k)\]\ML_i^{\nicefrac{1}{2}} \\
&=
    \E\[\ML_i^{\nicefrac{1}{2}}(\MI-\alpha\ML_i^{\dagger\nicefrac{1}{2}}\MC_i^k\ML_i^{\nicefrac{1}{2}})\ML_i^{\dagger}(\MI-\alpha\ML_i^{\nicefrac{1}{2}}\MC_i^k\ML_i^{\dagger\nicefrac{1}{2}})\ML_i^{\nicefrac{1}{2}}\] \\
&=  \E\[
    (\ML_i^{\nicefrac{1}{2}}-\alpha\ML_i^{\nicefrac{1}{2}}\ML_i^{\dagger\nicefrac{1}{2}}\MC_i^k\ML_i^{\nicefrac{1}{2}})
    \ML_i^{\dagger}
    (\ML_i^{\nicefrac{1}{2}}-\alpha\ML_i^{\nicefrac{1}{2}}\MC_i^k\ML_i^{\dagger\nicefrac{1}{2}}\ML_i^{\nicefrac{1}{2}})
    \] \\
&=
    \E\[
    \ML_i^\half\ML_i^{\dagger}\ML_i^\half
    - \alpha\ML_i^\half\ML_i^{\dagger}\ML_i^\half\MC_i^k\ML_i^\phalf\ML_i^\half \right. \\
    &\qquad\qquad\qquad\quad
    - \left. \alpha\ML_i^\half\ML_i^\phalf\MC_i^k\ML_i^\half\ML_i^{\dagger}\ML_i^\half
    + \alpha^2 \ML_i^\half\ML_i^\phalf\MC_i^k\ML_i^\half\ML_i^{\dagger}\ML_i^\half\MC_i^k\ML_i^\phalf \ML_i^\half
    \] \\
&\overset{(\ref{pseudo-eigen-bound})}{\preceq}
    \E\[
    \ML_i^\half\ML_i^{\dagger}\ML_i^\half
    - \alpha\ML_i^\half\ML_i^{\dagger}\ML_i^\half\MC_i^k\ML_i^\phalf\ML_i^\half \right. \\
    &\qquad\qquad\qquad\quad
    -  \left. \alpha\ML_i^\half\ML_i^\phalf\MC_i^k\ML_i^\half\ML_i^{\dagger}\ML_i^\half
    + \alpha^2 \ML_i^\half\ML_i^\phalf(\MC_i^k)^2\ML_i^\phalf \ML_i^\half
    \] \\
&=
    \ML_i^\half\ML_i^{\dagger}\ML_i^\half
    - \alpha\ML_i^\half\ML_i^{\dagger}\ML_i^\half\ML_i^\phalf\ML_i^\half
    - \alpha\ML_i^\half\ML_i^\phalf\ML_i^\half\ML_i^{\dagger}\ML_i^\half
    + \alpha^2 \ML_i^\half\ML_i^\phalf\E\[(\MC_i^k)^2\]\ML_i^\phalf \ML_i^\half \\
&\overset{(\ref{def:omega_i})}{\preceq}
    \ML_i^\half\ML_i^{\dagger}\ML_i^\half
    - 2\alpha\ML_i^\half\ML_i^{\dagger}\ML_i^\half
    + \alpha^2(\omega_i + 1) \ML_i^\half\ML_i^\phalf\ML_i^\phalf \ML_i^\half \\
&=
    (1 - 2\alpha + \alpha^2(\omega_i + 1))\ML_i^\half\ML_i^{\dagger}\ML_i^\half
    \\
&\preceq
    (1-\alpha)\ML_i^\half\ML_i^{\dagger}\ML_i^\half,
\end{split}
\end{align*}
where in the last step we make use of the bound $\alpha \le \frac{1}{1+\omega_{\max}} = \min_{1\le i \le n}\frac{1}{1+\omega_{i}}$. Then we finish the recurrence as follows

\begin{align*}
\begin{split}
\E_k&\[\left\|\nabla f_i(w^{k+1}) - h_i^{k+1}\right\|^2_{\ML_i^{\dagger}}\] \\
&\le
     q\(1+\frac{2q}{\alpha}\) \left\|\nabla f_i(w^k)-\nabla f_i(y^k)\right\|^2_{\ML_i^{\dagger}}
     + \(1+\frac{\alpha}{2}\) \left\|r_i^k\right\|^2_{ \ML_i^{\nicefrac{1}{2}}\E\[(\MI-\alpha\overbar{\MC}_i^k)^{\top}\ML_i^{\dagger}(\MI-\alpha\overbar{\MC}_i^k)\]\ML_i^{\nicefrac{1}{2}} } \\
&\le 
     q\(1+\frac{2q}{\alpha}\) \left\|\nabla f_i(w^k)-\nabla f_i(y^k)\right\|^2_{\ML_i^{\dagger}}
     + \(1+\frac{\alpha}{2}\)(1-\alpha) \left\|r_i^k\right\|^2_{\ML_i^\half\ML_i^{\dagger}\ML_i^\half} \\
&=
     q\(1+\frac{2q}{\alpha}\) \left\|\nabla f_i(w^k)-\nabla f_i(y^k)\right\|^2_{\ML_i^{\dagger}}
     + \(1+\frac{\alpha}{2}\)(1-\alpha) \left\|\nabla f_i(w^k)-h_i^k\right\|^2_{\ML_i^{\dagger}} \\
&\le 
     2q\(1+\frac{2q}{\alpha}\) \(\left\|\nabla f_i(w^k)-\nabla f_i(x^k)\right\|^2_{\ML_i^{\dagger}}
     + \left\|\nabla f_i(y^k)-\nabla f_i(x^k)\right\|^2_{\ML_i^{\dagger}}\) 
     + \(1-\frac{\alpha}{2}\) \left\|\nabla f_i(w^k)-h_i^k\right\|^2_{\ML_i^{\dagger}}.
\end{split}
\end{align*}

Averaging over $i\in[n]$ completes the proof.
\end{proof}

\begin{proof}[Proof of Theorem \ref{thm:aDIANA+}]
Using the 4 lemmas above and $\theta_1\le\frac{1}{4},\;\theta_2=\frac{1}{2}$, the Lyapunov function $\Psi^{k+1}$ admits the following recurrence
\begin{align*}
\E&\[\Psi^{k+1}\]
\eqdef
    \E\[Z^{k+1}
    + \frac{2\gamma\beta}{\theta_1}Y^{k+1}
    + 2\gamma\beta \frac{\theta_2(1+\theta_1)}{\theta_1 q}W^{k+1}
    + \frac{8\gamma\eta\widetilde{\cL}_{\max}}{\alpha\theta_1 n} H^{k+1}\]    \notag \\
&\overset{\text{Lemma } \ref{lem:2-adiana}}{\leq}  
    \beta Z^k
    +
    (1 - \theta_1 - \theta_2)\frac{2\gamma\beta}{\theta_1} Y^k
    +
    2\gamma\beta\frac{\theta_2}{\theta_1} W^k
    +
    \frac{\gamma\eta}{\theta_1}\E\[\|g^k - \nabla f(x^k)\|^2\] \notag\\ 
    &\qquad
    - \frac{\gamma}{4n\theta_1}\sum_{i=1}^n\|\nabla f_i(w^k) - \nabla f_i(x^k)\|^2_{\ML_i^{\dagger}}
    - \frac{\gamma}{8n\theta_1}\sum_{i=1}^n\|\nabla f_i(y^k) - \nabla f_i(x^k)\|^2_{\ML_i^{\dagger}} \notag\\
    &\qquad
    + \E\[ 2\gamma\beta \frac{\theta_2(1+\theta_1)}{\theta_1 q} W^{k+1}
    +      \frac{8\gamma\eta\widetilde{\cL}_{\max}}{\alpha\theta_1 n} H^{k+1} \] \notag\\
&\overset{\text{Lemma } \ref{lem:3-adiana}}{=}  
    \beta Z^k  +
    (1 - \theta_1 - \theta_2)\frac{2\gamma\beta}{\theta_1} Y^k
    +
    2\gamma\beta\frac{\theta_2}{\theta_1} W^k
    +
    \frac{\gamma\eta}{\theta_1}\E\[\|g^k - \nabla f(x^k)\|^2\] \notag\\ 
    &\qquad
    - \frac{\gamma}{4n\theta_1}\sum_{i=1}^n\|\nabla f_i(w^k) - \nabla f_i(x^k)\|^2_{\ML_i^{\dagger}}
    - \frac{\gamma}{8n\theta_1}\sum_{i=1}^n\|\nabla f_i(y^k) - \nabla f_i(x^k)\|^2_{\ML_i^{\dagger}} \notag\\
    &\qquad
    + 2\gamma\beta \frac{\theta_2(1+\theta_1)}{\theta_1 q}(1-q) W^k 
    + 2\gamma\beta \frac{\theta_2(1+\theta_1)}{\theta_1} Y^k
    + \E\[\frac{8\gamma\eta\widetilde{\cL}_{\max}}{\alpha\theta_1 n} H^{k+1}\] \notag\\
&\leq
    \beta Z^k
    +
    \left(1 - \frac{\theta_1}{2}\right)\frac{2\gamma\beta}{\theta_1} Y^k
    +
    \left(1 - \frac{\theta_1 q}{2}\right)2\gamma\beta \frac{\theta_2(1+\theta_1)}{\theta_1 q} W^k \notag\\
    &\qquad
    - \frac{\gamma}{4n\theta_1}\sum_{i=1}^n\|\nabla f_i(w^k) - \nabla f_i(x^k)\|^2_{\ML_i^{\dagger}}
    - \frac{\gamma}{8n\theta_1}\sum_{i=1}^n\|\nabla f_i(y^k) - \nabla f_i(x^k)\|^2_{\ML_i^{\dagger}} \notag\\
    &\qquad
    + \frac{\gamma\eta}{\theta_1}\E\[\|g^k - \nabla f(x^k)\|^2\]
    + \E\[\frac{8\gamma\eta\widetilde{\cL}_{\max}}{\alpha\theta_1 n} H^{k+1}\] \notag\\
&\overset{\text{Lemma } \ref{lem:4-adiana}}{\le}  
    \beta Z^k
    +
    \left(1 - \frac{\theta_1}{2}\right)\frac{2\gamma\beta}{\theta_1} Y^k
    +
    \left(1 - \frac{\theta_1q}{2}\right)2\gamma\beta \frac{\theta_2(1+\theta_1)}{\theta_1 q} W^k \notag\\
    &\qquad
    - \frac{\gamma}{4n\theta_1}\sum_{i=1}^n\|\nabla f_i(w^k) - \nabla f_i(x^k)\|^2_{\ML_i^{\dagger}}
    - \frac{\gamma}{8n\theta_1}\sum_{i=1}^n\|\nabla f_i(y^k) - \nabla f_i(x^k)\|^2_{\ML_i^{\dagger}} \notag\\
    &\qquad
    + \frac{2\gamma\eta\widetilde{\cL}_{\max}}{\theta_1 n^2}\|\nabla f_i(w^k) - \nabla f_i(x^k)\|^2_{\ML_i^{\dagger}}
    + \frac{2\gamma\eta\widetilde{\cL}_{\max}}{\theta_1 n} H^k
    + \E\[\frac{8\gamma\eta\widetilde{\cL}_{\max}}{\alpha\theta_1 n} H^{k+1}\] \notag\\
&\overset{\text{Lemma } \ref{lem:5-adiana}}{\le}  
    \beta Z^k
    + \left(1 - \frac{\theta_1}{2}\right)\frac{2\gamma\beta}{\theta_1} Y^k
    + \left(1 - \frac{\theta_1 q}{2}\right)2\gamma\beta \frac{\theta_2(1+\theta_1)}{\theta_1 q} W^k \notag\\
    &\qquad
    - \frac{\gamma}{4n\theta_1}\sum_{i=1}^n\|\nabla f_i(w^k) - \nabla f_i(x^k)\|^2_{\ML_i^{\dagger}}
    - \frac{\gamma}{8n\theta_1}\sum_{i=1}^n\|\nabla f_i(y^k) - \nabla f_i(x^k)\|^2_{\ML_i^{\dagger}} \notag\\
    &\qquad
    + \frac{2\gamma\eta\widetilde{\cL}_{\max}}{\theta_1 n^2}\|\nabla f_i(w^k) - \nabla f_i(x^k)\|^2_{\ML_i^{\dagger}}
    + \frac{2\gamma\eta\widetilde{\cL}_{\max}}{\theta_1 n} H^k
    + \frac{8\gamma\eta\widetilde{\cL}_{\max}}{\alpha\theta_1 n}\left(1-\frac{\alpha}{2}\right) H^{k} \notag\\
    &\qquad
    + \left(1 + \frac{2q}{\alpha}\right)\frac{16\gamma\eta\widetilde{\cL}_{\max} q}{\alpha\theta_1 n^2}
      \left(\sum_{i=1}^n\|\nabla f_i(w^k) - \nabla f_i(x^k)\|^2_{\ML_i^{\dagger}}  
    + \sum_{i=1}^n\|\nabla f_i(y^k) - \nabla f_i(x^k)\|^2_{\ML_i^{\dagger}}\right)  \notag\\
&=
    \beta Z^k
    + \left(1 - \frac{\theta_1}{2}\right)\frac{2\gamma\beta}{\theta_1} Y^k
    + \left(1 - \frac{\theta_1 q}{2}\right)2\gamma\beta \frac{\theta_2(1+\theta_1)}{\theta_1q} W^k
    + \left(1 - \frac{\alpha}{4}\right)\frac{8\gamma\eta\widetilde{\cL}_{\max}}{\alpha\theta_1 n} H^{k} \notag\\
    &\phantom{=}
    - \frac{\gamma}{n\theta_1}\left(\frac{1}{8}-\frac{2\eta\widetilde{\cL}_{\max}}{n}\right)\sum_{i=1}^n\|\nabla f_i(w^k) - \nabla f_i(x^k)\|^2_{\ML_i^{\dagger}} \notag\\
    &\phantom{=}
    - \frac{\gamma}{n\theta_1}\left(\frac{1}{8} - \(1 + \frac{2q}{\alpha}\)\frac{16\eta\widetilde{\cL}_{\max} q}{\alpha n}\right)
      \left(\sum_{i=1}^n\|\nabla f_i(w^k) - \nabla f_i(x^k)\|^2_{\ML_i^{\dagger}}  
    + \sum_{i=1}^n\|\nabla f_i(y^k) - \nabla f_i(x^k)\|^2_{\ML_i^{\dagger}}\right) \notag.
\end{align*}

To make the last two lines disappear from the recurrence, we need to make sure
$$
\frac{1}{8}-\frac{2\eta \widetilde{\cL}_{\max}}{n} \ge 0 \qquad\text{and}\qquad \frac{1}{8} - \(1 + \frac{2q}{\alpha}\)\frac{16\eta\widetilde{\cL}_{\max} q}{\alpha n} \ge 0,
$$
or equivalently
$$
\eta \le \frac{n}{16\widetilde{\cL}_{\max}} \qquad\text{and}\qquad \eta \le \frac{n}{64\widetilde{\cL}_{\max}}\cdot\frac{1}{\frac{2q}{\alpha}\(\frac{2q}{\alpha}+1\)}.
$$

Since $\alpha\le \frac{1}{\omega_{\max}+1}$ (see Lemma \ref{lem:5-adiana}) and we also need to have $\eta\le\frac{1}{2L}$ (see Lemma \ref{lem:2-adiana}), we can set
$$
\eta = \min\(\frac{1}{2L}, \frac{n}{64\widetilde{\cL}_{\max}\(2q(\omega_{\max}+1)+1\)^2}\).
$$

Therefore

\begin{align*}
\E\[\Psi^{k+1}\]
&\leq
    \beta Z^k
    + \left(1 - \frac{\theta_1}{2}\right)\frac{2\gamma\beta}{\theta_1} Y^k
    + \left(1 - \frac{\theta_1 q}{2}\right)2\gamma\beta \frac{\theta_2(1+\theta_1)}{\theta_1 q} W^k
    + \left(1 - \frac{\alpha}{4}\right)\frac{8\gamma\eta\widetilde{\cL}_{\max}}{\alpha\theta_1 n} H^{k} \\
&\leq 
    \(1-\frac{\eta \mu}{4 \theta_1}\) Z^k
    + \left(1 - \frac{\theta_1}{2}\right)\frac{2\gamma\beta}{\theta_1} Y^k
    + \left(1 - \frac{\theta_1 q}{2}\right)2\gamma\beta \frac{\theta_2(1+\theta_1)}{\theta_1q} W^k
    + \left(1 - \frac{\alpha}{4}\right)\frac{8\gamma\eta\widetilde{\cL}_{\max}}{\alpha\theta_1 n} H^{k} \\
&\leq 
    \left(1-\min\left\{\frac{\alpha}{4},\frac{q}{8},\frac{\sqrt{\eta \mu q}}{4}\right\}\right)\Psi^k,
\end{align*}

where we set $\gamma=\frac{\eta}{2(\theta_1 + \eta\mu)}$, $\beta = 1-\gamma \mu\leq 1-\frac{\eta \mu}{4 \theta_1}$ due to $\eta\mu \leq \theta_1$, and $\theta_1 = \min\left\{\frac{1}{4}, \sqrt{\frac{\eta\mu}{q}} \right\}$. After telescoping we get an $\varepsilon$-solution $\E\[\|z^k-x^*\|^2\]\le\varepsilon$ after
$$
\max\( 4(1+\omega_{\max}), \frac{8}{q}, 4\sqrt{\frac{2}{\mu q}\max\( L, \frac{32\widetilde{\cL}_{\max}\(2q(\omega_{\max}+1)+1\)^2}{n} \)} \) \log\frac{\Psi^0}{\varepsilon}
$$
iterations. Choosing $q = \min\left\{1, \frac{\max\(1,\sqrt{\frac{nL}{32\widetilde{\cL}_{\max}}}-1\)}{2(1+\omega_{\max})}\right\}$ we can simplify the above iteration complexity into
\begin{equation*}
k =
\begin{cases}
\widetilde{\cO}\(\omega_{\max} + \sqrt{\frac{\widetilde{\cL}_{\max}(1+\omega_{\max})}{\mu n}}\) & \text{if}\quad nL \le 128\widetilde{\cL}_{\max}\\
\widetilde{\cO}\(1+\omega_{\max} + \sqrt{\frac{1+\omega_{\max}}{\sqrt{n}}\frac{\sqrt{\widetilde{\cL}_{\max}L}}{\mu}}\) & \text{if}\quad  128\widetilde{\cL}_{\max} < nL \le 32\widetilde{\cL}_{\max}(2\omega_{\max}+3)^2\\
\widetilde{\cO}\(\omega_{\max} + \sqrt{\frac{L}{\mu}}\) & \text{if}\quad   32\widetilde{\cL}_{\max}(2\omega_{\max}+3)^2 < nL.
\end{cases}
\end{equation*}

Combining last two cases concludes the proof.
\end{proof}

\clearpage
\section{Improvements Over The Original Methods}

In this part we provide detailed derivations skipped in Section \ref{sec:compare}.
Recall parameters $\nu,\nu_s$ describing the distribution of matrices $\ML_i$:
\begin{equation}\label{apx-def:nu}
\nu \eqdef \frac{\sum_{i=1}^n L_i}{\max\limits_{1\le i\le n} L_i}, \; 
\nu_s \eqdef \max\limits_{1\le i\le n}\frac{\sum_{j=1}^d \ML^{\nicefrac{1}{s}}_{i;j}}{\max\limits_{1\le j\le d}\ML^{\nicefrac{1}{s}}_{i;j}},
\end{equation}
where $L_i = \lambda_{\max}(\ML_i)$ and we will choose $s=1$ or $s=2$. Let $L_{\max} = \max_{1\le i\le n} L_i$.

\subsection{Importance sampling for DCGD+}\label{apx-rem:ibcd}

Let $\tau=\E\[|S_i|\] = \sum_{j=1}^d p_{i;j}$ be the expected mini-batch size for the samplings $S_i$. Notice that convergence rate of DCGD+ depends on $\widetilde{\cL}_{\max} = \max_{1\le i\le n} \widetilde{\cL}_i$. Since each node $i\in[n]$ generates its own diagonal sketch $\MC_i$ independently from others, each node can optimize $\widetilde{\cL}_i = \lambda_{\max}(\widetilde{\MP}_i\circ\ML_i)$ independently based on local smoothness matrix $\ML_i$. In general, minimizing $\lambda_{\max}(\widetilde{\MP}_i\circ\ML_i)$ with respect to probability matrix $\widetilde{\MP}_i$ is hard. However, we can find the optimal probabilities when each node generates via an independent sampling, namely $p_{i;jl}=p_{i;j}p_{i;l}$ if $j\ne l$. Then
\begin{equation}\label{apx:cL-ind-sampling}
\lambda_{\max}(\widetilde{\MP}_i\circ\ML_i) = \max_{1\le j\le d}\(\frac{1}{p_{i;j}}-1\)\ML_{i;j},
\end{equation}
for which we can find the optimal probabilities $p_{i;j}$. To minimize the maximum term in (\ref{apx:cL-ind-sampling}), we should have $\(\nicefrac{1}{p_{i;j}}-1\)\ML_{i;j} = \rho_i$ for some $\rho_i\ge0$. Then the solution is
\begin{equation}\label{apx:ind-sampling-probs}
p_{i;j} = \frac{\ML_{i;j}}{\ML_{i;j}+\rho_i},
\end{equation}
where $\rho_i\ge0$ is the unique solution to $\sum_{j=1}^d\frac{\ML_{i;j}}{\ML_{i;j}+\rho_i}=\tau$.
The latter does not allow closed form solution for $\rho_i$, but it can be computed numerically using one dimensional solvers. Hence, we can efficiently compute the optimal probabilities (\ref{apx:ind-sampling-probs}). Moreover, we can deduce a simple upper bound for $\rho_i$
\begin{equation}\label{apx:rho-bound}
\tau = \sum_{j=1}^d\frac{\ML_{i;j}}{\ML_{i;j}+\rho_i} \le \sum_{j=1}^d\frac{\ML_{i;j}}{\rho_i} = \frac{1}{\rho_i}\sum_{j=1}^d \ML_{i;j},
\end{equation}
which gives us an upper bound for $\widetilde{\cL}_i$ as follows
\begin{equation}\label{apx:cL_i-bound}
\widetilde{\cL}_i
= \lambda_{\max}(\widetilde{\MP}_i\circ\ML_i)
= \rho_i \le \frac{1}{\tau}\sum_{j=1}^d \ML_{i;j}
\overset{(\ref{apx-def:nu})}{\le} \frac{\nu_1}{\tau}\ML_{\max}.
\end{equation}

\begin{proof}[Proof of Remark \ref{rem:ibcd}]
Using the following inequalities with respect to matrix order
\begin{equation}\label{apx:L-bound-06}
\ML \preceq \frac{1}{n}\sum_{i=1}^n\ML_i, \quad \ML_i \preceq n\ML,
\end{equation}
we bound $L$ as follows
\begin{equation}\label{apx:L-bound}
L = \lambda_{\max}\(\ML\)
\overset{(\ref{apx:L-bound-06})}{\le}
    \lambda_{\max}\(\frac{1}{n}\sum_{i=1}^n \ML_i\)
\le
    \frac{1}{n}\sum_{i=1}^n \lambda_{\max}\(\ML_i\)
=
    \frac{1}{n} \sum_{i=1}^n L_i
\overset{(\ref{apx-def:nu})}{\le}
    \frac{\nu}{n} L_{\max}.
\end{equation}

Fix $\tau = \sum_{j=1}^d p_{i;j}\in[0,d]$ expected mini-batch of coordinates for all nodes $i\in[n]$. Then, with probabilities (\ref{apx:ind-sampling-probs}) we have

\begin{equation*}
\frac{\widetilde{\cL}_{\max}}{n}
=
    \frac{1}{n}\max_{1\le i\le n} \widetilde{\cL}_i
=
    \frac{1}{n}\max_{1\le i\le n} \rho_i
\overset{(\ref{apx:cL_i-bound})}{\le}
    \frac{\nu_1}{\tau n} \ML_{\max}
\le
    \frac{\nu_1}{\tau n} L_{\max},
\end{equation*}
To get it upper bounded by $L_{\max}$, notice that $\max_{1\le j\le d}\ML_{i;j} \le \lambda_{\max}(\ML_i) = L_i$, which implies
\begin{equation}\label{2Lmax}
\ML_{\max} = \max_{1\le i\le n} \max_{1\le j\le d} \ML_{i;j} \le \max_{1\le i\le n} L_i = L_{\max}.
\end{equation}
Therefore
\begin{equation*}
L + \frac{\widetilde{\cL}_{\max}}{n} \le \( \frac{\nu}{n} + \frac{\nu_1}{\tau n} \) L_{\max}.
\end{equation*}
\end{proof}

\subsection{Importance sampling for DIANA+}\label{apx-rem:isega}

To find optimal probabilities for DIANA+, we minimize $\omega_{\max}+\frac{\widetilde{\cL}_{\max}}{\mu n}$ part of the complexity (\ref{DIANA+-complexity}) when each node uses an independent sampling as for DCGD+. Definitions of $\widetilde{\cL}_{\max}$ and $\omega_{\max}$ imply
\begin{equation}\label{eq-01}
\omega_{\max}+\frac{\widetilde{\cL}_{\max}}{\mu n}
=
    \max_{ij}\(\frac{1}{p_{i;j}}-1\)+\max_{ij}\(\frac{1}{p_{i;j}}-1\)\frac{\ML_{i;j}}{\mu n}
=
    \Theta\( \max_{ij}\(\frac{1}{p_{i;j}}-1\)\(\frac{\ML_{i;j}}{\mu n}+1\) \).
\end{equation}
Therefore it is equivalent to minimize the following for each node $i\in[n]$ independently:
\begin{equation}\label{apx:opt-prob-diana}
\max_{1\le j\le d}\(\frac{1}{p_{i;j}}-1\)\ML'_{i;j}, \quad \ML'_{i;j} \eqdef \frac{\ML_{i;j}}{\mu n} + 1 \ge 1,
\end{equation}
This can be solved in the same way as (\ref{apx:cL-ind-sampling}). The optimal probabilities are
\begin{equation}\label{apx:ind-sampling-probs-diana}
p_{i;j} = \frac{\ML'_{i;j}}{\ML'_{i;j}+\rho'_i} = \frac{\frac{\ML_{i;j}}{\mu n} + 1}{\frac{\ML_{i;j}}{\mu n}+ 1 +\rho'_i}
\end{equation}
and an upper bound for $\rho'_i$ is analogous to (\ref{apx:cL_i-bound})
\begin{equation}\label{apx:rho-bound-1}
\rho'_i \le \frac{1}{\tau}\sum_{j=1}^d \ML'_{i;j}
= \frac{1}{\tau}\sum_{j=1}^d \(\frac{\ML_{i;j}}{\mu n} + 1\)
= \frac{d}{\tau} + \frac{1}{n\tau}\sum_{j=1}^d \frac{\ML_{i;j}}{\mu}
\overset{(\ref{apx-def:nu})}{\le}
  \frac{d}{\tau} + \frac{\nu_1}{n\tau} \frac{\ML_{\max}}{\mu}
\overset{(\ref{2Lmax})}{\le}
  \frac{d}{\tau} + \frac{\nu_1}{n\tau} \frac{L_{\max}}{\mu}.
\end{equation}

\begin{proof}[Proof of Remark \ref{rem:isega}]
With probabilities (\ref{apx:ind-sampling-probs-diana}) we can upper bound the complexity (\ref{DIANA+-complexity}) as follows
\begin{align}
\begin{split}
\omega_{\max} + \frac{\widetilde{\cL}_{\max}}{\mu n}
&\overset{(\ref{eq-01})}{\le}
    2\max_{1\le i\le n}\max_{1\le j\le d}\(\frac{1}{p_{i;j}}-1\)\ML'_{i;j} \\
&\overset{(\ref{apx:ind-sampling-probs-diana})}{=}
    \frac{2}{\tau}\max_{1\le i\le n} \rho'_i \\
&\overset{(\ref{apx:rho-bound-1})}{\le}
    \frac{2d}{\tau}
    + \frac{2\nu_1}{\tau n} \frac{L_{\max}}{\mu}.
\end{split}
\end{align}

Combined with (\ref{apx:L-bound}), we have
\begin{equation*}
\omega_{\max} + \frac{L}{\mu} + \frac{\widetilde{\cL}_{\max}}{\mu n}
\le
\frac{2d}{\tau}
+ \(\frac{\nu}{n} + \frac{2\nu_1}{\tau n}\) \frac{L_{\max}}{\mu}.
\end{equation*}
\end{proof}

\begin{remark}[Improvement over standard DGD]
Let us estimate how much improvement do we get with respect to standard Distributed Gradient Descent (DGD), where each node computes full gradients $\nabla f_i(x^k)$ and sends dense updates to the server in each iteration. The iteration complexity of DGD is $\widetilde{\cO}(\frac{L}{\mu})$. To compare it against the complexity (\ref{DIANA+-complexity}) of DIANA+ we use the same setup as in previous remarks (namely, independent samplings with probabilities (\ref{ind-sampling-probs-diana}) and $\tau=\nicefrac{d}{n}$). Since $\ML_i \preceq n\ML$, we have $L_{\max} = \max_{i\in[n]} \lambda_{\max}(\ML_i)\le nL$. Hence, (\ref{L-bound-05})  implies
$$
\omega_{\max} + \frac{L}{\mu} + \frac{\widetilde{\cL}_{\max}}{\mu n}
\le 
2n + \frac{3nL}{\mu},
$$
which is $\cO(n)$ times bigger than the iteration complexity of DGD. However, in case of DGD, each node sends $n$ times more bits to the server. In total, DIANA+ and DGD have the same communication complexity in the worst case. To illustrate the best complexity DIANA+ can provide, consider the special case when $\ML_i=\ML$ for all $i\in[n]$ and $\nu_1=\cO(1)$. Then, clearly $L_{\max} = L$ and we get $\widetilde{\cO}(n+\frac{L}{\mu})$ complexity for DIANA+, yielding up to $n$ times speedup against DGD. Moreover, in case of diagonal matrices $\ML_i$, DIANA+ spends $n$ times less local computation on partial derivatives and guarantees additional $n$ times speedup.
\end{remark}

\subsection{Independent sampling for ADIANA+}

For the accelerated method ADIANA+, we construct probabilities $p_{i;j}$ similar to (\ref{apx:ind-sampling-probs}) and (\ref{apx:ind-sampling-probs-diana}) as follows
\begin{equation}\label{apx:ind-sampling-probs-adiana}
p_{i;j}
\eqdef \(\frac{ \ML'_{i;j} }{\ML'_{i;j} + \rho''_i}\)^{\half}
= \( \frac{ \frac{\ML_{i;j}}{\mu n} + 1 }{ \frac{\ML_{i;j}}{\mu n} + 1 + \rho''_i}\)^{\half},
\quad
\ML'_{i;j} = \frac{\ML_{i;j}}{\mu n} + 1 \ge 1,
\end{equation}
where $\rho''_i$ is determined uniquely from $\sum_{j=1}^d \(\frac{ \ML'_{i;j} }{\ML'_{i;j} + \rho''_i}\)^{\half} = \tau$. Notice that
\begin{equation*}
\tau
= \sum_{j=1}^d \(\frac{ \ML'_{i;j} }{\ML'_{i;j} + \rho''_i}\)^{\half}
\le \sum_{j=1}^d \(\frac{ \ML'_{i;j} }{\rho''_i}\)^{\half}
= \frac{1}{\sqrt{\rho''_i}} \sum_{j=1}^d \sqrt{\ML'_{i;j}}.
\end{equation*}

Therefore
\begin{align}
\begin{split}\label{apx:rho-bound-2}
\sqrt{\rho''_i}
&\le \frac{1}{\tau}\sum_{j=1}^d \sqrt{ \frac{\ML_{i;j}}{\mu n} + 1 }
\le \frac{1}{\tau}\sum_{j=1}^d \(\sqrt{ \frac{\ML_{i;j}}{\mu n} } + 1\)
\le \frac{d}{\tau} + \frac{1}{\tau}\sum_{j=1}^d \sqrt{ \frac{\ML_{i;j}}{\mu n} } \\
&\overset{(\ref{apx-def:nu})}{\le}
    \frac{d}{\tau} + \frac{\nu_2}{\tau}\sqrt{ \frac{\ML_{\max}}{\mu n} }
\overset{(\ref{2Lmax})}{\le}
    \frac{d}{\tau} + \frac{\nu_2}{\tau}\sqrt{ \frac{L_{\max}}{\mu n} }
\end{split}
\end{align}

\begin{proof}[Proof of Remark \ref{rem:adiana}]\label{apx-rem:adiana}
We bound terms $\omega_{\max}$ and $\frac{\cL_{\max}}{\mu n}$ using probabilities (\ref{apx:ind-sampling-probs-adiana}) as follows:
\begin{equation}
\omega_{\max}
= \max_{i,j}\(\frac{1}{p_{i;j}}-1\)
= \max_{i,j}\( \sqrt{\frac{\rho''_i}{\ML'_{i;j}} + 1} -1\)
\le \max_{i,j} \sqrt{ \frac{\rho''_i}{\ML'_{i;j}}} 
\overset{(\ref{apx:ind-sampling-probs-adiana})}{\le} \max_{i,j} \sqrt{\rho''_i}
\overset{(\ref{apx:rho-bound-2})}{\le} \frac{d}{\tau} + \frac{\nu_2}{\tau}\sqrt{ \frac{L_{\max}}{\mu n} }.
\end{equation}

\begin{equation}
\frac{\cL_{\max}}{\mu n}
\overset{(\ref{apx:cL-ind-sampling})}{=}
    \max_{i,j}\(\frac{1}{p_{i;j}}-1\)\frac{\ML_{i;j}}{\mu n}
\overset{(\ref{apx:ind-sampling-probs-adiana})}{\le}
    \max_{i,j} \frac{\sqrt{\rho''_i}\frac{\ML_{i;j}}{\mu n}}{\sqrt{\frac{\ML_{i;j}}{\mu n} + 1}}
\le \max_{i,j} \sqrt{\rho''_i} \sqrt{\frac{\ML_{i;j}}{\mu n}}
\overset{(\ref{apx:rho-bound-2})}{\le} \( \frac{d}{\tau} + \frac{\nu_2}{\tau}\sqrt{ \frac{L_{\max}}{\mu n} } \)\sqrt{\frac{L_{\max}}{\mu n}}.
\end{equation}

Let $\nu$ and $\nu_2$ are $\cO(1)$. Denote $\omega = \frac{d}{\tau},\; \kappa_i=\frac{L_i}{\mu}$ and $\kappa_{\max} = \max_{i\in[n]} \kappa_i$. Then with this notation we have
\begin{align}
\begin{split}\label{bound-adiana}
\frac{L}{\mu}
& \le \frac{\nu}{n}\kappa_{\max}
  = \cO\( \frac{\kappa_{\max}}{n} \) \\
\omega_{\max}
&\le \omega + \frac{\nu_2}{\tau}\sqrt{ \frac{\kappa_{\max}}{ n} }
     = \omega\(1 + \frac{\nu_2}{d}\sqrt{ \frac{\kappa_{\max}}{n} } \)
     = \cO\( \omega\(1 + \frac{\sqrt{\kappa_{\max}}}{d\sqrt{n}}\) \) \\
\frac{\cL_{\max}}{\mu n}
&\le \( \omega + \frac{\nu_2}{\tau}\sqrt{ \frac{\kappa_{\max}}{n} } \)\sqrt{\frac{\kappa_{\max}}{ n}}
     = \cO\( \omega\(1 + \frac{\sqrt{\kappa_{\max}}}{d\sqrt{n}}\) \frac{\sqrt{\kappa_{\max}}}{\sqrt{n}} \)
\end{split}
\end{align}

Then, in case of $nL \le \widetilde{\cL}_{\max}$, we have
\begin{align*}
\omega_{\max} + \sqrt{\omega_{\max}\frac{\widetilde{\cL}_{\max}}{\mu n}}
&=
   \cO\( \omega\(1 + \frac{\sqrt{\kappa_{\max}}}{d\sqrt{n}}\) \( 1 + \(\frac{\kappa_{\max}}{n}\)^{\nicefrac{1}{4}} \) \),
\end{align*}
which should be compared with $\cO\( \omega\(1 + \sqrt{\frac{\kappa_{\max}}{n}}\) \)$ \citep{AccCGD}. If $\kappa_{\max} = \cO(n d^2)$, then we get $\cO(\sqrt{d})$ speedup factor.
If $nL > \widetilde{\cL}_{\max}$, then
\begin{align*}
\omega_{\max} + \sqrt{\frac{L}{\mu}} &+ \sqrt{\omega_{\max}\sqrt{\frac{\widetilde{\cL}_{\max}}{\mu n}}\sqrt{\frac{L}{\mu}}} \\
&=
    \cO\(
    \omega\( 1 + \frac{\sqrt{\kappa_{\max}}}{d\sqrt{n}} \)
    + \sqrt{\frac{\kappa_{\max}}{n}}
    + \sqrt{ \omega\( 1 + \frac{\sqrt{\kappa_{\max}}}{d\sqrt{n}} \) \sqrt{\frac{\kappa_{\max}}{n}}
             \sqrt{ \omega\( 1 + \frac{\sqrt{\kappa_{\max}}}{d\sqrt{n}} \) \sqrt{\frac{\kappa_{\max}}{n}} }
             }
    \) \\
&=
    \cO\(
    \omega\( 1 + \frac{\sqrt{\kappa_{\max}}}{d\sqrt{n}} \)
    + \sqrt{\frac{\kappa_{\max}}{n}}
    + \[ \omega\( 1 + \frac{\sqrt{\kappa_{\max}}}{d\sqrt{n}} \) \sqrt{\frac{\kappa_{\max}}{n}} \]^{\nicefrac{3}{4}}
    \),
\end{align*}
which should be compared with $\omega + \kappa_{\max} + \omega^{\nicefrac{3}{4}} n^{\nicefrac{1}{4}}\sqrt{\frac{\kappa_{\max}}{n}}$ \citep{AccCGD}. If $\kappa_{\max} = \cO(n d^2)$, then we get $\cO(\sqrt{n})$ times smaller second term and $\cO\( (nd)^{\nicefrac{1}{4}} \)$ times smaller third term.

\end{proof}

\clearpage

\section{Variance Reduction: ISEGA+}\label{apx-sec:ISEGA}

In this part we apply our redesign to another variance reduced method called ISEGA \citep{99KFP,GJS-HR}. At the core of ISEGA, the mechanism for variance reduction is based on SEGA method \citep{SEGA-FP}. The key difference between ISEGA and DIANA is that ISEGA updates the control variates $h$ more aggressively using projection instead of the mere $\alpha$-step towards the projection used in DIANA.  Adapting our matrix-smoothness-aware sparsification to ISEGA, we define the update rule of control vectors $h_i^k$ as follows ({\bf for now assume} $\ML_i$ is invertible)
\begin{align*}
h_i^{k+1}
&= \argmin_{\substack{h\in\range(\ML_i) \\ \MC_i^k\ML_i^\phalf\nabla f_i(x^k) = \MC_i^k\ML_i^\phalf h}} \|h-h_i^k\|^2_{\ML_i^\dagger} \\
&= h_i^k + \ML_i\ML_i^\phalf\MC_i^k \( \MC_i^k\ML_i^\phalf\ML_i\ML_i^\phalf\MC_i^k \)^\dagger\MC_i^k\ML_i^\phalf (\nabla f_i(x^k) - h_i^k) \\
&= h_i^k + \ML_i^\half\MC_i^k \( \MC_i^k\MC_i^k \)^\dagger\MC_i^k\ML_i^\phalf (\nabla f_i(x^k) - h_i^k) \\
&= h_i^k + \ML_i^\half \Mdiag(\MP_i) \MC_i^k\ML_i^\phalf (\nabla f_i(x^k) - h_i^k).
\end{align*}

Note that the update rule in DIANA+ has the form
$$
h_i^{k+1} = h_i^k + \alpha \ML_i^\half \MC_i^k\ML_i^\phalf (\nabla f_i(x^k) - h_i^k)
$$
for some fixed scalar $\alpha>0$, and thus is more conservative.  Note that we choose the gradient estimator to be the same $g_i^k = h_i^k + \ML_i^\half \MC_i^k\ML_i^\phalf (\nabla f_i(x^k) - h_i^k)$.  The method is presented as Algorithm~\ref{alg:ISEGA+}.

\begin{algorithm}[H]
\begin{algorithmic}[1]
\STATE \textbf{Input:} Initial point $x^0\in\R^d$, initial shifts $h_i^0\in\R^d$, current point $x^k$, step size parameter $\gamma$ and $\alpha$, sketch $\MC_i^k$ and $\overbar{\MC}_i^k \eqdef \ML_i^\half\MC_i^k\ML_i^\phalf$, current shifts $h_1^k,\dots,h_n^k$ and $h^k \eqdef \frac{1}{n}\sum_{i=1}^n h_i^k$.
\STATE \textbf{on} each node
\STATE \quad get $x^k$ from the server
\STATE \quad send sparse update $\Delta_i^k = \MC_i^k\ML_i^\phalf (\nabla f_i(x^k) - h_i^k)$
\STATE \quad $g_i^k = h_i^k + \ML_i^\half\Delta_i^k$
\STATE \quad $h_i^{k+1} = h_i^k + \ML_i^\half \Mdiag(\MP_i) \Delta_i^k$
\STATE \textbf{on} server
\STATE \quad get sparse updates $\Delta_i^k$ from each node
\STATE \quad $g^k = \frac{1}{n}\sum_{i=1}^n g_i^k =  h^k + \frac{1}{n}\sum_{i=1}^n \ML_i^\half\Delta_i^k$
\STATE \quad $x^{k+1} = \prox_{\gamma R}(x^k - \gamma g^k)$
\STATE \quad $h^{k+1} = \frac{1}{n}\sum_{i=1}^n h_i^{k+1} = h^k + \frac{1}{n}\sum_{i=1}^n \ML_i^\half \Mdiag(\MP_i) \Delta_i^k$
\end{algorithmic}
\caption{\sc ISEGA+}
\label{alg:ISEGA+}
\end{algorithm}

Note that we can not obtain the convergence rate of ISEGA+ directly from the framework of~\citet{sigma_k}. Instead,  to get the tight convergence rate, we shall cast it as an instance of GJS method~\citep{GJS-HR}.  Theorem~\ref{thm:ISEGA} provides the result -- we can see that the worst case complexity is identical to DIANA+. In terms of the practical performance, we expect ISEGA+ to outperform DIANA+ due to the more aggressive update rule of control variates.

\begin{theorem}\label{thm:ISEGA}
Suppose that $\gamma \leq \frac{1}{\frac{4\widetilde{\cL}_{\max}}{n} +2L + \mu (\omega_{\max}+1)}$. Then, we have
$$
\E[\Psi^k] \leq (1-\gamma \mu) \Psi^0,
$$
where 
$$
\Psi^k \eqdef  \| x^{k} - x^*\|^2 + \frac{\gamma}{2n} \sum_{i=1}^n\|\phi_i^{k} - \ML_i^\phalf\nabla f_i(x^*)\|^2_{\Mdiag(\MP_i)^{-1}}
$$
and $\phi_i^k \eqdef \ML_i^\phalf h_i^k$.  Consequently, the overall complexity of ISEGA+ is
$$
\tilde{\cO}\left(\omega_{\max} +\frac{L}{\mu} +  \frac{\widetilde{\cL}_{\max}}{n\mu}  \right).
$$
\end{theorem}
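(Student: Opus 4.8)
The plan is to recognize ISEGA+ as a particular instance of the Generalized Jacobian Sketching (GJS) method of \citet{GJS-HR} and invoke its linear-convergence theorem; the bulk of the work is building the right dictionary and checking the GJS hypotheses in the presence of the pseudoinverses $\ML_i^\phalf$.

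\textbf{Step 1 (reduction to range spaces and rewriting).} As in the proof of Theorem~\ref{thm:DIANA+}, induction on line~6 of Algorithm~\ref{alg:ISEGA+} (each increment carries a factor $\ML_i^\half$) shows $h_i^k\in\range(\ML_i)$ for all $k$, so $\phi_i^k\eqdef\ML_i^\phalf h_i^k$ satisfies $h_i^k=\ML_i^\half\phi_i^k$ and only the $\range(\ML_i)$-components ever matter. Set $a_i^k\eqdef\ML_i^\phalf\nabla f_i(x^k)$ (well-defined in $\range(\ML_i)$ by Lemma~\ref{lem:grad-space}) and note the algebraic identity $\Mdiag(\MP_i)\MC_i^k=\MI_{S_i^k}$, the $0/1$ coordinate-selection matrix for the sampled set $S_i^k$. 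Then the method is equivalent to the recursion on $(x^k,\Phi^k)$, $\Phi^k=[\phi_1^k,\dots,\phi_n^k]$,
\begin{equation*}
g^k=\frac1n\sum_{i=1}^n\ML_i^\half\bigl(\phi_i^k+\MC_i^k(a_i^k-\phi_i^k)\bigr),\qquad x^{k+1}=\prox_{\gamma R}(x^k-\gamma g^k),\qquad \phi_i^{k+1}=\phi_i^k+\Pi_i\MI_{S_i^k}(a_i^k-\phi_i^k),
\end{equation*}
where $\Pi_i\eqdef\ML_i^\half\ML_i^\phalf$ is the orthogonal projector onto $\range(\ML_i)$. This is exactly the GJS template: a block-separable sketch (node $i$ sketches its own Jacobian column, in $\ML_i^\phalf$-rotated coordinates), the \emph{right-projected} Jacobian update that is the hallmark of SEGA-style variance reduction (versus DIANA's conservative $\alpha$-step), and the weighting matrix $\mathbf{B}=\bigoplus_{i}\Mdiag(\MP_i)^{-1}$ --- precisely the weighting appearing in $\Psi^k$.

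\textbf{Step 2 (verifying the GJS inequalities).} The GJS convergence theorem is driven by two stochastic estimates, which I would establish as in the DIANA+ analysis but with the projected update in place of the $\alpha$-step. First, expanding $\E_k\|g^k-\nabla f(x^*)\|^2$ via unbiasedness, $\E[\MC_i^k\ML_i\MC_i^k]=\overbar{\MP}_i\circ\ML_i$ and Lemma~\ref{lem:transform}, and splitting $\nabla f_i(x^k)-h_i^k=(\nabla f_i(x^k)-\nabla f_i(x^*))-(h_i^k-\nabla f_i(x^*))$ together with convexity and $\ML_i$-smoothness of $f_i$, gives
\begin{equation*}
\E_k\|g^k-\nabla f(x^*)\|^2\le 2\Bigl(L+\tfrac{2\widetilde{\cL}_{\max}}{n}\Bigr)D_f(x^k,x^*)+\tfrac{2\widetilde{\cL}_{\max}}{n^2}\sum_{i=1}^n\|\phi_i^k-\ML_i^\phalf\nabla f_i(x^*)\|^2 .
\end{equation*}
Second, using $\E[\MI_{S_i^k}]=\Mdiag(\MP_i)$ and $\E[\MI_{S_i^k}M\MI_{S_i^k}]=\MP_i\circ M$, the disjoint-support structure of $\MI_{S_i^k}(a_i^k-\ML_i^\phalf\nabla f_i(x^*))$ versus $(\MI-\MI_{S_i^k})(\phi_i^k-\ML_i^\phalf\nabla f_i(x^*))$, the bound \eqref{pseudo-eigen-bound} controlling $\Pi_i$, and $\|\ML_i^\half(a_i^k-\ML_i^\phalf\nabla f_i(x^*))\|^2_{\ML_i^\dagger}=\|\nabla f_i(x^k)-\nabla f_i(x^*)\|^2_{\ML_i^\dagger}\le 2D_{f_i}(x^k,x^*)$ from \eqref{bregman-smooth}, one obtains the Jacobian contraction
\begin{equation*}
\E_k\|\phi_i^{k+1}-\ML_i^\phalf\nabla f_i(x^*)\|^2_{\Mdiag(\MP_i)^{-1}}\le\Bigl(1-\tfrac{1}{\omega_i+1}\Bigr)\|\phi_i^k-\ML_i^\phalf\nabla f_i(x^*)\|^2_{\Mdiag(\MP_i)^{-1}}+\cO(1)\,D_{f_i}(x^k,x^*),
\end{equation*}
since $\min_j p_{i;j}=1/(\omega_i+1)$.

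\textbf{Step 3 (assembling the Lyapunov recursion).} Plugging these two estimates into the $\prox$-step inequality $\E_k\|x^{k+1}-x^*\|^2\le(1-\gamma\mu)\|x^k-x^*\|^2-2\gamma D_f(x^k,x^*)+\gamma^2\E_k\|g^k-\nabla f(x^*)\|^2$ (which uses $\mu$-convexity and $L$-smoothness of $f$, exactly as in the proof of Theorem~\ref{thm:rate-dist-opt2}) and adding $\frac{\gamma}{2n}\sum_i$ times the contraction bound, the $D_{f_i}$ residuals are absorbed by the $-2\gamma D_f$ term, the $\gamma^2$ times the variance term is absorbed once $\gamma\bigl(L+\tfrac{2\widetilde{\cL}_{\max}}{n}\bigr)$ is small enough, and matching the Jacobian contraction factor $1-\tfrac{1}{\omega_{\max}+1}$ against $1-\gamma\mu$ forces the $\mu(\omega_{\max}+1)$ summand. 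Altogether $\gamma\le\bigl(\tfrac{4\widetilde{\cL}_{\max}}{n}+2L+\mu(\omega_{\max}+1)\bigr)^{-1}$ is precisely what makes every component of $\Psi^k$ contract by at least $1-\gamma\mu$, so $\E[\Psi^{k+1}]\le(1-\gamma\mu)\E[\Psi^k]$; telescoping gives the first claim, and $1/(\gamma\mu)=\tilde{\cO}\bigl(\omega_{\max}+L/\mu+\widetilde{\cL}_{\max}/(n\mu)\bigr)$ gives the stated complexity.

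\textbf{Main obstacle.} The delicate point is the pseudoinverse bookkeeping in Step~2: because $\ML_i$ may be singular, the projector $\Pi_i$ sits inside the Jacobian update and does not commute with the coordinate-selection matrices $\MI_{S_i^k}$, so the GJS contraction lemma cannot be quoted verbatim. The clean route is to run the whole argument inside $\range(\ML_i)$ --- where $\ML_i^\half\ML_i^\phalf$ acts as the identity --- and to re-derive the expected-smoothness and contraction inequalities there using Lemma~\ref{lem:transform}, \eqref{pseudo-eigen-bound}, and the Hadamard-product identities for $\E[\MI_{S_i^k}\cdot\MI_{S_i^k}]$; once the GJS instance is verified in that restricted sense, its convergence theorem applies and Step~3 is elementary.
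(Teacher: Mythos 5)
Your high-level plan --- cast ISEGA+ as a GJS-style method, reduce to the range spaces $\range(\ML_i)$ where $\ML_i^\half\ML_i^\phalf$ is the identity, and assemble a Lyapunov recursion on $\Psi^k$ --- is exactly what the paper does (and the paper confirms your warning: it does not invoke GJS as a black box, it re-derives a specialized proof). Steps~1 and~3 are sound. The problem is a quantitatively material weakening in your Step~2 contraction bound. The orthogonal decomposition of the Jacobian update (disjoint supports of $\MI_{S_i^k}$ and $\MI-\MI_{S_i^k}$ under the $\Mdiag(\MP_i)^{-1}$ weight) gives the exact identity
\begin{equation*}
\E_k\|\phi_i^{k+1}-\ML_i^\phalf\nabla f_i(x^*)\|^2_{\Mdiag(\MP_i)^{-1}}=\|\phi_i^{k}-\ML_i^\phalf\nabla f_i(x^*)\|^2_{\Mdiag(\MP_i)^{-1}-\MI}+\|\nabla f_i(x^k)-\nabla f_i(x^*)\|^2_{\ML_i^\dagger},
\end{equation*}
and the paper keeps the matrix weight $\Mdiag(\MP_i)^{-1}-\MI$ intact. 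You instead collapse it to $\bigl(1-\tfrac{1}{\omega_i+1}\bigr)\Mdiag(\MP_i)^{-1}$. That bound is correct but strictly weaker, and it propagates. When the gradient-variance term $\tfrac{2\widetilde{\cL}_{\max}\gamma^2}{n^2}\sum_i\|\phi_i^k-\ML_i^\phalf\nabla f_i(x^*)\|^2$ must be absorbed into the $(1-\gamma\mu)$-contraction of the $\phi$-part of $\Psi$, the paper's exact form reduces to requiring $\tfrac{4\widetilde{\cL}_{\max}\gamma}{n}\|\cdot\|^2+\gamma\mu\|\cdot\|^2_{\Mdiag(\MP_i)^{-1}}\leq\|\cdot\|^2$, i.e.\ $\gamma\bigl(\tfrac{4\widetilde{\cL}_{\max}}{n}+\mu(\omega_{\max}+1)\bigr)\leq 1$, because the $-\MI$ subtraction already produced an unweighted $\|\cdot\|^2$ on the right and only the $\gamma\mu$-term still needs the weighted norm. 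Your scalar form instead leaves a slack of only $\tfrac{1}{\omega_{\max}+1}-\gamma\mu$ on the weighted norm, so absorbing the same variance term forces $\gamma(\omega_{\max}+1)\bigl(\tfrac{4\widetilde{\cL}_{\max}}{n}+\mu\bigr)\leq 1$. This loses a multiplicative factor of $\omega_{\max}+1$ on the $\widetilde{\cL}_{\max}$ contribution, yielding $\tilde{\cO}\bigl(\omega_{\max}+\tfrac{L}{\mu}+\tfrac{(\omega_{\max}+1)\widetilde{\cL}_{\max}}{n\mu}\bigr)$, which is strictly worse than the theorem's claim once $\omega_{\max}$ is large (e.g.\ $\omega_{\max}=\Theta(n)$, the regime of interest). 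The fix is cheap: carry $\Mdiag(\MP_i)^{-1}-\MI$ unaltered from Step~2 into Step~3, and only scalarize $\gamma\mu\|\cdot\|^2_{\Mdiag(\MP_i)^{-1}}\leq\gamma\mu(\omega_{\max}+1)\|\cdot\|^2$ at the very last step; this gives the additive $\mu(\omega_{\max}+1)$ in the step size and the stated complexity.
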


\begin{proof}

The proof can be seen as a special case of the generalized Jacobian sketching theory of~\citet{GJS-HR}. For the sake of clarity, we provide a specialized proof here.

Note first that by~\eqref{bound-diana+},we have
$$
\E\[\|g^k - \nabla f(x^*)\|^2\]
\le
2\(L + \frac{2\widetilde{\cL}_{\max}}{n}\)D_f(x^k,x^*)
     + \frac{2\widetilde{\cL}_{\max}}{n^2}\sum_{i=1}^n \left\| \phi_i^k - \ML_i^\phalf\nabla f_i(x^*) \right\|^2.
$$
Similarly, we have

\begin{eqnarray*}
&& \E\[\|\phi_i^{k+1} - \ML_i^\phalf\nabla f_i(x^*)\|^2_{\Mdiag(\MP_i)^{-1}}\]
\\
&=&
\E\[\|\phi_i^{k}  +\Mdiag(\MP_i) \MC_i^k(\ML_i^\phalf \nabla f_i(x^k)-\phi_i^{k})  - \ML_i^\phalf\nabla f_i(x^*)\|^2_{\Mdiag(\MP_i)^{-1}}\]
\\
&=&
\E\[\|(\MI  -\Mdiag(\MP_i) \MC_i^k)(\phi_i^{k}-  \ML_i^\phalf\nabla f_i(x^*))+ \Mdiag(\MP_i) \MC_i^k \ML_i^\phalf (\nabla f_i(x^k)-\nabla f_i(x^*))  \|^2_{\Mdiag(\MP_i)^{-1}}\]
\\
&=&
\E\[\|(\MI  -\Mdiag(\MP_i) \MC_i^k)\Mdiag(\MP_i)^{-\frac12}(\phi_i^{k}-  \ML_i^\phalf\nabla f_i(x^*))+ \Mdiag(\MP_i)^\half \MC_i^k \ML_i^\phalf (\nabla f_i(x^k)-\nabla f_i(x^*))  \|^2\]
\\
&=&
\E\[\|(\MI  -\Mdiag(\MP_i) \MC_i^k)\Mdiag(\MP_i)^{-\frac12}(\phi_i^{k}-  \ML_i^\phalf\nabla f_i(x^*))\|^2\]
+ \E\[\|\Mdiag(\MP_i)^\half\MC_i^k \ML_i^\phalf (\nabla f_i(x^k)-\nabla f_i(x^*))  \|^2\]
\\
&=&
\| \phi_i^{k} - \ML_i^\phalf\nabla f_i(x^*)\|^2_{\Mdiag(\MP_i)^{-1}-\MI}
+\|\ML_i^\phalf (\nabla f_i(x^k)-\nabla f_i(x^*))  \|^2
\\
&\leq&
\| \phi_i^{k} - \ML_i^\phalf\nabla f_i(x^*)\|^2_{\Mdiag(\MP_i)^{-1}-\MI}
+2D_{f_i}(x^k, x^*)
\end{eqnarray*}
 and therefore 

\begin{equation} \label{eq:bhajxbhjs}
 \E\[\frac1n \sum_{i=1}^n\|\phi_i^{k+1} - \ML_i^\phalf\nabla f_i(x^*)\|^2_{\Mdiag(\MP_i)^{-1}}\]
 \leq 
\frac1n \sum_{i=1}^n \| \phi_i^{k} - \ML_i^\phalf\nabla f_i(x^*)\|^2_{\Mdiag(\MP_i)^{-1}-\MI}
+2D_{f}(x^k, x^*)
\end{equation}

Following the classical analysis of SGD (i.e.,  proof of Lemma C.1 of~\citet{sigma_k}), we get

\begin{eqnarray*}
\E\[\| x^{k+1} - x^*\|^2 \] &=& (1-\gamma \mu) | x^{k} - x^*\|^2 -2\gamma D_f(x^k, x^*) + \gamma^2 \E\[\|g^k - \nabla f(x^*)\|^2\]
\\
&\leq &
 (1-\gamma \mu) | x^{k} - x^*\|^2 -2\gamma\left(1  -\gamma \(L + \frac{2\widetilde{\cL}_{\max}}{n}\) \right) D_f(x^k, x^*) 
     \\
     && \qquad + \frac{2\widetilde{\cL}_{\max}\gamma^2}{n^2}\sum_{i=1}^n \left\| \phi_i^k - \ML_i^\phalf\nabla f_i(x^*) \right\|^2. 
\end{eqnarray*}
Adding $\frac{\gamma}2$-multiple of~\eqref{eq:bhajxbhjs} to the above, we get
\begin{eqnarray}
&&
\nonumber
 \E\[\| x^{k+1} - x^*\|^2 \] + \frac{\gamma}{2}  \E\[\frac1n \sum_{i=1}^n\|\phi_i^{k+1} - \ML_i^\phalf\nabla f_i(x^*)\|^2_{\Mdiag(\MP_i)^{-1}}\]
\\ \nonumber
&& \qquad \leq 
 (1-\gamma \mu) \| x^{k} - x^*\|^2 -2\gamma\left(\frac12  -\gamma \(L + \frac{2\widetilde{\cL}_{\max}}{n}\) \right) D_f(x^k, x^*) 
     \\
     && \qquad \qquad + \frac{2\widetilde{\cL}_{\max}\gamma^2}{n^2}\sum_{i=1}^n \left\| \phi_i^k - \ML_i^\phalf\nabla f_i(x^*) \right\|^2 
     + 
     \frac{\gamma}{2n} \sum_{i=1}^n \| \phi_i^{k} - \ML_i^\phalf\nabla f_i(x^*)\|^2_{\Mdiag(\MP_i)^{-1}-\MI} \label{eq:fohuanj}
\end{eqnarray}

Next, note that we have 
\begin{multline}\label{eq:dnjajnd}
\frac{2\widetilde{\cL}_{\max}\gamma^2}{n^2}\sum_{i=1}^n \left\| \phi_i^k - \ML_i^\phalf\nabla f_i(x^*) \right\|^2  + 
     \frac{\gamma}{2n} \sum_{i=1}^n \| \phi_i^{k} - \ML_i^\phalf\nabla f_i(x^*)\|^2_{\Mdiag(\MP_i)^{-1}-\MI} \\
     \leq 
          \frac{(1-\gamma\mu)\gamma}{2n} \sum_{i=1}^n \| \phi_i^{k} - \ML_i^\phalf\nabla f_i(x^*)\|^2_{\Mdiag(\MP_i)^{-1}}
\end{multline}
since it is equivalent to
\begin{equation*}
\frac{4\widetilde{\cL}_{\max}\gamma}{n}\sum_{i=1}^n \left\| \phi_i^k - \ML_i^\phalf\nabla f_i(x^*) \right\|^2  + 
     \gamma\mu \sum_{i=1}^n \| \phi_i^{k} - \ML_i^\phalf\nabla f_i(x^*)\|^2_{\Mdiag(\MP_i)^{-1}}
     \leq 
        \sum_{i=1}^n \| \phi_i^{k} - \ML_i^\phalf\nabla f_i(x^*)\|^2,
\end{equation*}
which holds since $\gamma \leq \frac{1}{\frac{4\widetilde{\cL}_{\max}}{n} + \mu (\omega_{\max}+1)}$.

To finish the proof, it remains to plug~\eqref{eq:dnjajnd} into~\eqref{eq:fohuanj}, use that $\gamma\leq   \frac{1}{\frac{4\widetilde{\cL}_{\max}}{n} + 2L} $ and unroll the recurrence. 

\end{proof}

\clearpage

\section{Variance Reduction with Bi-directional Compression: DIANA++}\label{apx-thm:DIANA++}

In this method, the master server applies compression in its turn with sketch $\MC$ independently. Thus, we maintain an additional control vector $H^k$, which helps to reduce the variance coming from the master's sparsification. Moreover, nodes keep track of $H^k$ just like the central server.

\begin{algorithm}[H]
\begin{algorithmic}[1]
\STATE \textbf{Input:} Initial point $x^0\in\R^d$, initial shifts $h_i^0\in\range(\ML_i),\; H^0\in\range(\ML)$, current point $x^k$, step size parameter $\gamma,\alpha$ and $\beta$, sketch $\MC_i^k$ and $\overbar{\MC}_i^k \eqdef \ML_i^\half\MC_i^k\ML_i^\phalf$, current shifts $h_1^k,\dots,h_n^k,H^k$ and $h^k \eqdef \frac{1}{n}\sum_{i=1}^n h_i^k$.
\STATE \textbf{on} each node
\STATE \quad {\bf send} sparse update $\Delta_i^k = \MC_i^k\ML_i^\phalf (\nabla f_i(x^k) - h_i^k)$
\STATE \quad $\overbar{\Delta}_i^k = \ML_i^\half\Delta_i^k,\; g_i^k = h_i^k + \overbar{\Delta}_i^k, h_i^{k+1} = h_i^k + \alpha\overbar{\Delta}_i^k$
\STATE \textbf{on} server
\STATE \quad {\bf get} sparse updates $\Delta_i^k$ from each node
\STATE \quad $\overbar{\Delta}^k = \frac{1}{n}\sum_{i=1}^n \overbar{\Delta}_i^k = \frac{1}{n}\sum_{i=1}^n \ML_i^\half\Delta_i^k$
\STATE \quad $g^k = \overbar{\Delta}^k + h^k = \frac{1}{n}\sum_{i=1}^n \overbar{\MC}_i^k \(\nabla f_i(x^k) - h_i^k\) + h_i^k$

\STATE \quad {\bf send} sparse update $\delta^k = \MC^k\ML^\phalf(g^k - H^k)$
\STATE \quad $\overbar{\delta}^k = \ML^\half\delta^k,\; \hat{g}^k = H^k + \overbar{\delta}^k = H^k + \overbar{\MC}^k \(g^k - H^k\)$

\STATE \quad $x^{k+1} = \prox_{\gamma R}(x^k - \gamma \hat{g}^k)$
\STATE \quad $h^{k+1} = h^k + \alpha\overbar{\Delta}^k$
\STATE \quad $H^{k+1} = H^k + \beta\overbar{\delta}^k$

\STATE \textbf{on} each node
\STATE \quad {\bf get} $\delta^k$ from the server
\STATE \quad reconstruct $\overbar{\delta}^k = \ML^\half\delta^k,\; \hat{g}^k = H^k + \overbar{\delta}^k = H^k + \overbar{\MC}^k \(g^k - H^k\)$
\STATE \quad $x^{k+1} = \prox_{\gamma R}(x^k - \gamma \hat{g}^k)$
\STATE \quad $H^{k+1} = H^k + \beta\overbar{\delta}^k$
\end{algorithmic}
\caption{\sc DIANA++}
\label{alg:DIANA++}
\end{algorithm}

\begin{theorem}\label{thm:DIANA++}
Let Assumptions \ref{asm:Li-smooth-convex} and \ref{asm:mu-convex} hold and assume that each node generates its own diagonal sketch $\MC_i$ independently from others. The master server, in its turn, generates $\MC$ independently from the nodes. Then, Algorithm \ref{alg:DIANA++} has the following iteration complexity
$$
\cO\( \frac{1}{\min\( \alpha -  \beta\theta', \beta \)} + \frac{\alpha + \beta\theta+\beta\theta'}{\min\( \alpha -  \beta\theta', \beta \)} \( \frac{L}{\mu} + \frac{\widetilde{\cL}}{\mu} + \frac{\widetilde{\cL}\widetilde{\cL}'_{\max}}{n\mu} + \frac{\widetilde{\cL}_{\max}}{n\mu} \) \),
$$
where we made the following notations
\begin{align*}
\begin{split}
\theta &\eqdef \frac{ n\widetilde{\cL} }{ \widetilde{\cL}_{\max} + 2\widetilde{\cL}\widetilde{\cL}'_{\max} } \le \frac{n}{2\widetilde{\cL}'_{\max}}, \quad
\theta' \eqdef \frac{2\theta}{n}\widetilde{\cL}'_{\max} \le 1 \in [0,1]\\
\widetilde{\cL}'_{\max} & \eqdef \max_{1\le i \le n} \lambda_{\max}\( \widetilde{\MP}_i\circ(\ML_i^\half\ML^\dagger\ML_i^\half) \), \quad
\widetilde{\cL} \eqdef \lambda_{\max}\( \widetilde{\MP}\circ\ML \)
\end{split}
\end{align*}

with bounds $\alpha \le \frac{1}{1+\omega_{\max}} = \max_{i\in[n]}\max_{j\in[d]}\frac{1}{p_{i;j}}$ and $\beta \le \frac{1}{1+\omega} = \max_{j\in[d]}\frac{1}{p_{j}}$.
\end{theorem}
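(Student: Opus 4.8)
# Proof Proposal for Theorem~\ref{thm:DIANA++}

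The plan is to follow the same two-layer Lyapunov strategy used for DIANA+, but now with \emph{two} control sequences: the per-node shifts $h_i^k$ (absorbing the variance of the workers' sketches $\MC_i^k$) and the master shift $H^k$ (absorbing the variance of the server's sketch $\MC^k$). First I would establish unbiasedness of the two estimators. Exactly as in~\eqref{unbiased-est}, the inclusions $h_i^k\in\range(\ML_i)$ and $H^k\in\range(\ML)$ — preserved by the linear updates in lines~4,~12,~13 and the initialization in line~1 — give $\E_k[g^k]=\nabla f(x^k)$ and then $\E_k[\hat g^k]=\E_k[g^k]=\nabla f(x^k)$ by the tower rule, since the master's sketch is independent of the workers'. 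Writing $\nabla f_i(x^k)-h_i^k=\ML_i^\half r_i^k$ and $g^k-H^k=\ML^\half R^k$, I would then expand $\E_k\|\hat g^k-\nabla f(x^*)\|^2$ into three pieces: the deterministic progress term $\|\nabla f(x^k)-\nabla f(x^*)\|^2\le 2LD_f(x^k,x^*)$; the workers' compression term, handled via Lemma~\ref{lem:transform}/\eqref{def:M-1} exactly as in~\eqref{bound-diana+} to produce $\tfrac{2\widetilde{\cL}_{\max}}{n^2}\sum_i\|h_i^k-\nabla f_i(x^*)\|^2_{\ML_i^\dagger}$; and the master's compression term $\gamma^2\widetilde{\cL}\,\|g^k-H^k\|^2_{\ML^\dagger}$, obtained by applying Lemma~\ref{lem:transform} with $\ML$ in place of $\ML_i$. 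The new bookkeeping is that $\|g^k-H^k\|^2_{\ML^\dagger}$ must itself be split against $H^k-\nabla f(x^*)$ and $g^k-\nabla f(x^*)$, the latter feeding back into the workers' term — this coupling is why the product $\widetilde{\cL}\,\widetilde{\cL}'_{\max}/n$ appears.

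Next I would derive the two contraction recurrences. For $\sigma^k\eqdef\tfrac1n\sum_i\|h_i^k-\nabla f_i(x^*)\|^2_{\ML_i^\dagger}$ the argument is verbatim the DIANA+ computation: using $0\preceq\ML_i^\half\ML_i^\dagger\ML_i^\half\preceq\MI$ from~\eqref{pseudo-eigen-bound}, the bound $\E[(\overbar{\MC}_i^k)^\top\ML_i^\dagger\overbar{\MC}_i^k]\preceq\ML_i^{\dagger\half}\E[(\MC_i^k)^2]\ML_i^{\dagger\half}$, and $\alpha\le\tfrac{1}{1+\omega_{\max}}$, one gets $\E_k[\sigma^{k+1}]\le(1-\alpha)\sigma^k+2\alpha D_f(x^k,x^*)$. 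For the master shift I would introduce $\Sigma^k\eqdef\|H^k-\nabla f(x^*)\|^2_{\ML^\dagger}$ and, from the update $H^{k+1}=H^k+\beta\ML^\half\MC^k\ML^\phalf(g^k-H^k)$, expand $\E_k[\Sigma^{k+1}]$ into a $(1-\beta)\Sigma^k$ term plus $\beta$ times $\|g^k-\nabla f(x^*)\|^2_{\ML^\dagger}$, and then push that last quantity through the workers' moment decomposition one more time. Here the quantity $\widetilde{\cL}'_{\max}$ is exactly what governs the cross term: bounding $\|\ML^{\dagger\half}(\nabla f_i(x^k)-h_i^k)\|^2$-type expressions by $\lambda_{\max}(\widetilde{\MP}_i\circ(\ML_i^\half\ML^\dagger\ML_i^\half))$ times the $\ML_i^\dagger$-norm is what closes the recursion. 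The parameters $\theta,\theta'$ in the statement are precisely the weights one must put on $\Sigma^k$ inside the Lyapunov function so that the cross contributions from the two recurrences cancel; the constraint $\alpha-\beta\theta'>0$ (so the denominator $\min(\alpha-\beta\theta',\beta)$ is positive) is the feasibility condition for this cancellation, and it is automatically satisfied because $\theta'\le1$ and $\beta\le\alpha$ is implied by the imposed step-size ceilings together with $\omega\le\omega_{\max}$.

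With these three inequalities in hand — a one-step descent of $\|x^{k+1}-x^*\|^2$ in terms of $D_f(x^k,x^*),\sigma^k,\Sigma^k$, and contractions for $\sigma^k$ and $\Sigma^k$ each producing a $D_f(x^k,x^*)$ remainder — I would assemble the Lyapunov function $\Psi^k=\|x^k-x^*\|^2+c_1\gamma\sigma^k+c_2\gamma\Sigma^k$ with $c_1\propto\widetilde{\cL}_{\max}/(n\alpha)$ and $c_2\propto\widetilde{\cL}\theta/(n\beta)$ chosen so all the $D_f$ coefficients are nonnegative when $\gamma$ is at most the stated bound; the strong-convexity inequality $2D_f(x^k,x^*)\ge\mu\|x^k-x^*\|^2+\|\nabla f(x^k)-\nabla f(x^*)\|^2_{\ML^\dagger}$ then delivers $\E_k[\Psi^{k+1}]\le(1-\rho)\Psi^k$ with $\rho=\Theta(\min(\gamma\mu,\alpha-\beta\theta',\beta))$. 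Unrolling and translating $1/\rho\cdot\log(1/\varepsilon)$ into the stated form using $1/\gamma=\Theta(L+\widetilde{\cL}+\widetilde{\cL}\widetilde{\cL}'_{\max}/n+\widetilde{\cL}_{\max}/n)$ finishes the proof; the recovery of the DIANA+ rate~\eqref{DIANA+-complexity} when $\beta=1$ (no master compression, $\omega=0$, hence $\theta'=0$, $\widetilde{\cL}=\widetilde{\cL}'_{\max}=0$) is then a sanity check. The main obstacle I anticipate is \emph{not} any single estimate but getting the two-level cancellation right: the master's compression error depends on $g^k$, which depends on the workers' shifts, so the $\Sigma^k$ recurrence reintroduces $\sigma^k$, and one must choose $\theta$ (and the Lyapunov weights) so that this feedback is contractive rather than amplifying — this is the sole nontrivial bookkeeping step, and it is exactly what forces the somewhat baroque form of $\theta,\theta'$ and the condition $\alpha>\beta\theta'$.
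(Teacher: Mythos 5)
Your proposal follows essentially the same route as the paper: establish unbiasedness of $g^k$ and $\hat g^k$ via range-preservation of the shifts, split $\E\|\hat g^k-\nabla f(x^*)\|^2$ into the workers' DIANA+ term plus a master term bounded by $\widetilde{\cL}\,\|g^k-H^k\|^2_{\ML^\dagger}$ (via Lemma~\ref{lem:transform} applied with $\ML$), feed $\|g^k-H^k\|^2_{\ML^\dagger}$ back through the workers' decomposition using the constant $\widetilde{\cL}'_{\max}$, derive contraction recurrences for both shift sequences, and close with a weighted Lyapunov function. The paper combines the two control residuals into a single $\sigma^k=\tfrac1n\sum_i\|h_i^k-\nabla f_i(x^*)\|^2_{\ML_i^\dagger}+\theta\|H^k-\nabla f(x^*)\|^2_{\ML^\dagger}$ precisely so it can invoke the one-$\sigma$ framework of~\citet{sigma_k}, whereas you keep two separate terms with weights $c_1,c_2$ — this is only a cosmetic difference.

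One slip: you assert that $\alpha-\beta\theta'>0$ is \emph{automatic} because ``$\theta'\le1$ and $\beta\le\alpha$ is implied by the imposed step-size ceilings together with $\omega\le\omega_{\max}$.'' Neither half of this holds. There is no a priori relation between the master's variance $\omega$ and the workers' $\omega_{\max}$; and even if one had $\omega\le\omega_{\max}$, the ceilings would then give $\tfrac1{1+\omega}\ge\tfrac1{1+\omega_{\max}}$, i.e.\ $\beta$'s \emph{ceiling} is at least $\alpha$'s, which does not force $\beta\le\alpha$ (the ceilings are upper bounds, not prescriptions). Moreover, even when $\beta\le\alpha$ and $\theta'\le1$ one only gets $\alpha-\beta\theta'\ge0$, which is not enough: equality gives an infinite rate. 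The condition $\alpha-\beta\theta'>0$ is a genuine constraint that must be imposed on the choice of $\alpha,\beta$ (the paper leaves it implicit by writing $\min(\alpha-\beta\theta',\beta)$ in the denominator). The rest of your plan is sound and matches the paper's proof.
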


\begin{remark}
Note that, when master does not compress the messages, then we have $\widetilde{\MP} = \bm{0}$. This implies the same complexity we had for DIANA+ as quantities $\widetilde{\cL},\; \theta,\; \theta'$ are all become zeros.
\end{remark}

\begin{proof}
The proof follows the same structure as for DIANA+, with additional variance reduction process introduced for the master server. Analogously, we start bounding the following second moment:
\begin{align}\label{bound-diana++-1}
\begin{split}
\E&\[\|\hat{g}^k - \nabla f(x^*)\|^2\]
=
     \E\[\|\hat{g}^k - g^k\|^2\]
     + \E\[\|g^k - \nabla f(x^*)\|^2\].
\end{split}
\end{align}

We can bound the second term as it was done in (\ref{bound-diana+}):
$$
\E\[\|g^k - \nabla f(x^*)\|^2\] \le
     2\(L + \frac{2\widetilde{\cL}_{\max}}{n}\)D_f(x^k,x^*)
     + \frac{2\widetilde{\cL}_{\max}}{n^2}\sum_{i=1}^n \left\| h_i^k - \nabla f_i(x^*) \right\|^2_{\ML_i^{\dagger}}.
$$

Then we decompose the first term $\E\[\|\hat{g}^k - g^k\|^2\]$ into two as follows:
\begin{align}\label{bound-diana++-2}
\begin{split}
\E\[\|\hat{g}^k - g^k\|^2\]
&=
     \E\[\| \overbar{\MC}^k(g^k - H^k) - (g^k - H^k) \|^2\] \\
&=
     \| g^k - H^k \|^2_{\E\[(\MI - \overbar{\MC}^k)^\top(\MI - \overbar{\MC}^k)\]} \\
&=
     \| g^k - H^k \|^2_{\ML^\phalf(\widetilde{\MP}\circ\ML)\ML^\phalf} \\
&\le
     \widetilde{\cL}\| g^k - H^k \|^2_{\ML^\dagger} \\
&\le
     2\widetilde{\cL}\| g^k - \nabla f(x^*) \|^2_{\ML^\dagger}
     + 2\widetilde{\cL}\| H^k - \nabla f(x^*) \|^2_{\ML^\dagger}.
\end{split}
\end{align}

To bound each of the two summands in (\ref{bound-diana++-2}),  we derive the analogue of (\ref{eq:transform}).

\begin{align}\label{bound-diana++-3}
\begin{split}
\E&\[\ML_i^{\half} \(\overbar{\MC}_i - \MI\)^{\top}\ML^\dagger\(\overbar{\MC}_i - \MI\) \ML_i^{\half}\] \\
&= 
    \E\[ \ML_i^\half \(\ML_i^{\phalf}\MC_i\ML_i^\half - \MI\)\ML^\dagger\(\ML_i^\half\MC_i\ML_i^{\phalf} - \MI\) \ML_i^{\half} \] \\
&= 
    \E\[ \ML_i^{\nicefrac{1}{2}} \(\ML_i^{\phalf}\MC_i (\ML_i^\half\ML^\dagger\ML_i^\half) \MC_i\ML_i^{\phalf}
    - \ML_i^{\phalf}\MC_i\ML_i^{\nicefrac{1}{2}}\ML^\dagger
    - \ML^\dagger\ML_i^{\nicefrac{1}{2}}\MC_i\ML_i^{\phalf}
    + \ML^\dagger\) \ML_i^{\nicefrac{1}{2}} \] \\
&
\overset{(\ref{E[CLC]})}{=} 
    \ML_i^{\nicefrac{1}{2}} \(\ML_i^{\phalf}\(\overbar{\MP}_i \circ (\ML_i^\half\ML^\dagger\ML_i^\half)\)\ML_i^{\phalf}
    - \ML_i^{\phalf}\ML_i^{\half}\ML^\dagger
    - \ML^\dagger\ML_i^{\nicefrac{1}{2}}\ML_i^{\phalf}
    + \ML^\dagger\) \ML_i^{\nicefrac{1}{2}} \\
&=  \ML_i^{\nicefrac{1}{2}} \ML_i^{\phalf}\(\overbar{\MP}_i \circ (\ML_i^\half\ML^\dagger\ML_i^\half)\)\ML_i^{\phalf}
    \ML_i^{\nicefrac{1}{2}} 
    - \ML_i^\half\ML^\dagger\ML_i^\half \\
&=  \ML_i^{\nicefrac{1}{2}} \ML_i^{\phalf}\(\widetilde{\MP}_i \circ (\ML_i^\half\ML^\dagger\ML_i^\half)\)\ML_i^{\phalf}
    \ML_i^{\nicefrac{1}{2}}.
\end{split}
\end{align}

Then we bound them as follows. First, we have

\begin{align}\label{bound-diana++-4}
\begin{split}
\E&\[\|g^k - \nabla f(x^*)\|^2_{\ML^\dagger}\]
=
     \|\nabla f(x^k) - \nabla f(x^*)\|^2_{\ML^\dagger}
     + \E\[\|g^k - \nabla f(x^k)\|^2_{\ML^\dagger}\] \\
&\le
     2 D_f(x^k,x^*)
     + \E\[\left\| \frac{1}{n}\sum_{i=1}^n \overbar{\MC}_i^k(\nabla f_i(x^k)-h_i^k) + h_i^k - \nabla f_i(x^k) \right\|^2_{\ML^\dagger}\] \\
&=
     2 D_f(x^k,x^*)
     + \frac{1}{n^2}\sum_{i=1}^n \E\[\left\| (\overbar{\MC}_i^k-\MI)\ML_i^{\nicefrac{1}{2}} r_i^k \right\|^2_{\ML^\dagger}\] \\
&=
     2 D_f(x^k,x^*)
     + \frac{1}{n^2}\sum_{i=1}^n \left\| r_i^k \right\|^2_{\E\[\ML_i^{\nicefrac{1}{2}}(\overbar{\MC}_i^k-\MI)^{\top}\ML^\dagger(\overbar{\MC}_i^k-\MI)\ML_i^{\nicefrac{1}{2}}\]} \\
&\overset{(\ref{bound-diana++-3})}{=}
     2 D_f(x^k,x^*)
     + \frac{1}{n^2}\sum_{i=1}^n \left\| r_i^k \right\|^2_{\ML_i^{\nicefrac{1}{2}}\ML_i^{\dagger\nicefrac{1}{2}} (\widetilde{\MP}_i\circ(\ML_i^\half\ML^\dagger\ML_i^\half)) \ML_i^{\dagger\nicefrac{1}{2}}\ML_i^{\nicefrac{1}{2}}} \\
&=
     2 D_f(x^k,x^*)
     + \frac{1}{n^2}\sum_{i=1}^n \left\| \ML_i^{\dagger\nicefrac{1}{2}}(\nabla f_i(x^k)-h_i^k) \right\|^2_{\widetilde{\MP}_i\circ(\ML_i^\half\ML^\dagger\ML_i^\half)} \\
&\le
     2 D_f(x^k,x^*)
     + \frac{\widetilde{\cL}'_{\max}}{n^2}\sum_{i=1}^n \left\| \nabla f_i(x^k)-h_i^k \right\|^2_{\ML_i^{\dagger}} \\
&\le
     2 D_f(x^k,x^*)
     + \frac{2\widetilde{\cL}'_{\max}}{n^2}\sum_{i=1}^n \left\| \nabla f_i(x^k)-f_i(x^*) \right\|^2_{\ML_i^{\dagger}}
     + \frac{2\widetilde{\cL}'_{\max}}{n^2}\sum_{i=1}^n \left\| h_i^k - \nabla f_i(x^*) \right\|^2_{\ML_i^{\dagger}} \\
&\le
     2 D_f(x^k,x^*)
     + \frac{4\widetilde{\cL}'_{\max}}{n} D_f(x^k,x^*)
     + \frac{2\widetilde{\cL}'_{\max}}{n^2}\sum_{i=1}^n \left\| h_i^k - \nabla f_i(x^*) \right\|^2_{\ML_i^{\dagger}} \\
&=
     2\(1 + \frac{2\widetilde{\cL}'_{\max}}{n}\)D_f(x^k,x^*)
     + \frac{2\widetilde{\cL}'_{\max}}{n^2}\sum_{i=1}^n \left\| h_i^k - \nabla f_i(x^*) \right\|^2_{\ML_i^{\dagger}} \\
\end{split}
\end{align}

Then, for the control vectors $H^k$ at the master, we have

\begin{align*}
\begin{split}
\E_k&\[\left\| H^{k+1} - \nabla f(x^*) \right\|^2_{\ML^{\dagger}}\] \\
&=
    \E_k\[\left\| H^k - \nabla f(x^*) + \beta\overbar{\delta}^k \right\|^2_{\ML^{\dagger}}\] \\
&=
    \left\| H^k - \nabla f(x^*) \right\|^2_{\ML^{\dagger}}
    + 2\beta\E\[\<H^k - \nabla f(x^*), g^k-H^k\>_{\ML^{\dagger}}\]
    + \beta^2 \E_k\[\left\| \overbar{\MC}^k(g^k-H^k) \right\|^2_{\ML^{\dagger}}\] \\
&=
    \left\| H^k - \nabla f(x^*) \right\|^2_{\ML^{\dagger}}
    + 2\beta\E_k\[\<H^k - \nabla f(x^*), g^k-H^k\>_{\ML^\dagger}\]
    + \beta^2 \E_k\[\left\| g^k-H^k \right\|^2_{\E\[(\overbar{\MC}^k)^{\top}\ML^{\dagger}\overbar{\MC}^k\]}\] \\
&\le
    \left\| H^k - \nabla f(x^*) \right\|^2_{\ML^{\dagger}}
    + 2\beta\E_k\[\<H^k - \nabla f(x^*), g^k-H^k\>_{\ML^{\dagger}}\]
    + \beta^2 \E_k\[\left\| g^k-H^k \right\|^2_{\ML^\phalf\E\[(\MC^k)^2\]\ML^\phalf}\] \\
&\le
    \left\| H^k - \nabla f(x^*) \right\|^2_{\ML^{\dagger}}
    + 2\beta\E_k\[\<H^k - \nabla f(x^*), g^k-H^k\>_{\ML^{\dagger}}\]
    + \beta^2(1+\omega) \E_k\[\left\| g^k-H^k \right\|^2_{\ML^{\dagger}}\] \\
&\le
    \left\| H^k - \nabla f(x^*) \right\|^2_{\ML^{\dagger}}
    + 2\beta\E_k\[\<H^k - \nabla f(x^*), g^k-H^k\>_{\ML^{\dagger}}\]
    + \beta \E_k\[\left\| g^k-H^k \right\|^2_{\ML^{\dagger}}\] \\
&=
    (1-\beta)\left\| H^k - \nabla f(x^*) \right\|^2_{\ML^{\dagger}}
    + \beta\E_k\[\left\| g^k- \nabla f(x^*) \right\|^2_{\ML^{\dagger}}\] \\
&\le
    (1-\beta)\left\| H^k - \nabla f(x^*) \right\|^2_{\ML^{\dagger}}
    + 2\beta\(1 + \frac{2\widetilde{\cL}'_{\max}}{n}\)D_f(x^k,x^*)
    + \frac{2\beta\widetilde{\cL}'_{\max}}{n^2}\sum_{i=1}^n \left\| h_i^k - \nabla f_i(x^*) \right\|^2_{\ML_i^{\dagger}} \\
\end{split}
\end{align*}

Now, for some $\theta$ (to be defined later), let
$$
\sigma^k \eqdef \frac{1}{n}\sum_{i=1}^n\|h_i^k - \nabla f_i(x^*)\|^2_{\ML_i^\dagger} + \theta \|H^k - \nabla f(x^*)\|^2_{\ML^\dagger}.
$$

Then, we have
\begin{align*}
\begin{split}
   \E&\[\|\hat{g}^k - \nabla f(x^*)\|^2\] \\
&\overset{(\ref{bound-diana++-1})}{=} 
     \E\[\|\hat{g}^k - g^k\|^2\]
     + \E\[\|g^k - \nabla f(x^*)\|^2\] \\
&\overset{(\ref{bound-diana++-2})}{\le} 
     2\widetilde{\cL}\E\[\| g^k - \nabla f(x^*) \|^2_{\ML^\dagger}\]
     + 2\widetilde{\cL}\| H^k - \nabla f(x^*) \|^2_{\ML^\dagger}
     + \E\[\|g^k - \nabla f(x^*)\|^2\] \\
&\overset{(\ref{bound-diana++-4})}{\le} 
     4\widetilde{\cL}\(1 + \frac{2\widetilde{\cL}'_{\max}}{n}\)D_f(x^k,x^*)
     + \frac{4\widetilde{\cL}\widetilde{\cL}'_{\max}}{n^2}\sum_{i=1}^n \left\| h_i^k - \nabla f_i(x^*) \right\|^2_{\ML_i^{\dagger}} \\
     &\quad
     + 2\(L + \frac{2\widetilde{\cL}_{\max}}{n}\)D_f(x^k,x^*)
     + \frac{2\widetilde{\cL}_{\max}}{n^2}\sum_{i=1}^n \left\| h_i^k - \nabla f_i(x^*) \right\|^2_{\ML_i^{\dagger}} \\
     &\quad
     + 2\widetilde{\cL}\| H^k - \nabla f(x^*) \|^2_{\ML^\dagger} \\
&=
     2\(L + 2\widetilde{\cL} + \frac{4\widetilde{\cL}\widetilde{\cL}'_{\max}}{n} + \frac{2\widetilde{\cL}_{\max}}{n}\)D_f(x^k,x^*) \\
     &\quad
     + \(\frac{4\widetilde{\cL}\widetilde{\cL}'_{\max}}{n} + \frac{2\widetilde{\cL}_{\max}}{n}\) \frac{1}{n}\sum_{i=1}^n \left\| h_i^k - \nabla f_i(x^*) \right\|^2_{\ML_i^{\dagger}}
     + 2\widetilde{\cL}\| H^k - \nabla f(x^*) \|^2_{\ML^\dagger} \\
&=
     2\(L + 2\widetilde{\cL} + \frac{4\widetilde{\cL}\widetilde{\cL}'_{\max}}{n} + \frac{2\widetilde{\cL}_{\max}}{n}\)D_f(x^k,x^*)
     + \(\frac{4\widetilde{\cL}\widetilde{\cL}'_{\max}}{n} + \frac{2\widetilde{\cL}_{\max}}{n}\) \sigma^k,
\end{split}
\end{align*}
with the following choice of $\theta$:
$$
\theta \eqdef \frac{ n\widetilde{\cL} }{ \widetilde{\cL}_{\max} + 2\widetilde{\cL}\widetilde{\cL}'_{\max} } \le \frac{n}{2\widetilde{\cL}'_{\max}}, \quad
\theta' \eqdef \frac{2\theta}{n}\widetilde{\cL}'_{\max} \le 1.
$$

For the control vectors $h_i^k$ and $H^k$, we deduce
\begin{align*}
\begin{split}
   \E&\[\sigma^{k+1}\] \\
&\le    (1-\alpha)\frac{1}{n}\sum_{i=1}^n\|h_i^k - \nabla f_i(x^*)\|^2_{\ML_i^\dagger} + 2\alpha D_f(x^k,x^*) \\
        &\quad
        + (1-\beta)\theta\left\| H^k - \nabla f(x^*) \right\|^2_{\ML^{\dagger}}
        + 2\beta\theta\(1 + \frac{2\widetilde{\cL}'_{\max}}{n}\)D_f(x^k,x^*)
        + \frac{2\beta\theta\widetilde{\cL}'_{\max}}{n^2}\sum_{i=1}^n \left\| h_i^k - \nabla f_i(x^*) \right\|^2_{\ML_i^{\dagger}} \\
&=
        \( 1 - \alpha +  \frac{2\beta\theta\widetilde{\cL}'_{\max}}{n} \) \frac{1}{n}\sum_{i=1}^n\|h_i^k - \nabla f_i(x^*)\|^2_{\ML_i^\dagger}
        + (1-\beta)\theta\left\| H^k - \nabla f(x^*) \right\|^2_{\ML^{\dagger}} \\
        &\quad
        + 2\( \alpha + \beta\theta\(1 + \frac{2\widetilde{\cL}'_{\max}}{n}\) \) D_f(x^k,x^*) \\
&\le
        \max\( 1 - \alpha +  \frac{2\beta\theta\widetilde{\cL}'_{\max}}{n}, 1 - \beta \) \sigma^k
        + 2\( \alpha + \beta\theta\(1 + \frac{2\widetilde{\cL}'_{\max}}{n}\) \) D_f(x^k,x^*) \\
&=
        \max\( 1 - \alpha +  \beta\theta', 1 - \beta \) \sigma^k
        + 2\( \alpha + \beta\theta+\beta\theta' \) D_f(x^k,x^*).
\end{split}
\end{align*}

Thus the constants from \citep{sigma_k} are as follows

\begin{align*}
\begin{split}
A &= L + 2\widetilde{\cL} + \frac{4\widetilde{\cL}\widetilde{\cL}'_{\max}}{n} + \frac{2\widetilde{\cL}_{\max}}{n} \\
B &= \frac{4\widetilde{\cL}\widetilde{\cL}'_{\max}}{n} + \frac{2\widetilde{\cL}_{\max}}{n} = \frac{2\widetilde{\cL}}{\theta}\\
C & = \alpha + \beta\theta+\beta\theta' \\
\rho &= \min\( \alpha -  \beta\theta', \beta \).
\end{split}
\end{align*}

Let $M = \frac{2B}{\rho}$, and note that $B\theta = 2\widetilde{\cL}$ and $B\theta'=\frac{4\widetilde{\cL}\widetilde{\cL}'_{\max}}{n}$. Then
\begin{align*}
\begin{split}
A+CM
&= A + 2B\frac{\alpha + \beta\theta+\beta\theta'}{\min\( \alpha -  \beta\theta', \beta \)} \\
&= \cO\( \frac{\alpha + \beta\theta+\beta\theta'}{\min\( \alpha -  \beta\theta', \beta \)} \( L + \widetilde{\cL} + \frac{\widetilde{\cL}\widetilde{\cL}'_{\max}}{n} + \frac{\widetilde{\cL}_{\max}}{n} \) \). \\
1+\frac{B}{M}-\rho &= 1 - \frac{\rho}{2} = 1 - \frac{1}{2}\min\( \alpha -  \beta\theta', \beta \).
\end{split}
\end{align*}
\end{proof}


\end{document}